\def\ps@pprintTitle{%
 \let\@oddhead\@empty
 \let\@evenhead\@empty
 \def\@oddfoot{}%
 \let\@evenfoot\@oddfoot}
\newtheorem{definition}{Definition}
\newtheorem{corollary}{Corollary}
\newtheorem{theorem}{Theorem}
\newtheorem{lemma}{Lemma}
\newtheorem{remark}{Remark}
\newcommand\rtext[1]{{\color{red}{#1}}}
\newcommand{\rawsys}{RAW-SYS\xspace}
\newcommand{\full}{\texttt{full data}\xspace}
\newcommand{\restricted}{\texttt{restricted domain}\xspace}
\newcommand{\nodata}{\texttt{no data}\xspace}
\newcommand{\lstmath}[1]{\text{\lstinline{#1}}}
\newcommand{\pres}[1]{\ensuremath{{}^{\bullet}#1}\xspace}
\newcommand{\posts}[1]{\ensuremath{#1^{\bullet}}\xspace}
\newcommand{\tuple}[1]{\ensuremath{\langle{#1}\rangle}}
\newcommand{\fire}[1]{\ensuremath{\stackrel{{#1}}{\rightarrow}}}
\newcommand{\tstep}[2]{\ensuremath{{#1}^{#2}}\xspace}
\newcommand{\dwnetbc}{\ensuremath{\Phi}}
\newcommand{\bcdwnet}{\ensuremath{\dwnetbc^-}}
\newcommand{\ournet}{DAW-net\xspace}
\newcommand{\ournets}{DAW-nets\xspace}
\newcommand{\pidRel}[2][]{\ensuremath{#2^{\wp\ifthenelse{\equal{#1}{}}{}{(#1)}}}\xspace}
\newcommand{\tup}[1]{\langle#1\rangle}            % tuple
\newcommand{\A}{\ensuremath{\mathcal{A}}\xspace} 
 \newcommand{\D}{\mathcal{D}}
 \newcommand{\F}{\ensuremath{\mathcal{F}}\xspace}
\newcommand{\G}{\mathcal{G}} 
\newcommand{\K}{\mathcal{K}}
 \newcommand{\V}{\mathcal{V}}
\newcommand{\set}[1]{\{ #1 \}}
\newcommand{\dlvk}{\texttt{DLV}$^\K$\xspace}
\newcommand{\writef}{\ensuremath{\textsf{wr}}}
\newcommand{\guardf}{\ensuremath{\textsf{gd}}}
\newcommand{\varset}{\ensuremath{\mathcal{V}}}
\newcommand{\domf}{\ensuremath{\textsf{dm}}}
\newcommand{\guardlang}[1][\mathcal{D}]{\ensuremath{\mathcal{L}(#1)}}
\newcommand{\ordf}{\ensuremath{\textsf{ord}}}
\newcommand{\img}{\ensuremath{img}}
\newcommand{\assign}{\eta}
\newcommand{\deff}{\textsf{def}}
\newcommand{\dmodel}{\ensuremath{\mathcal{D}}}
\newcommand{\nmodel}{\ensuremath{\mathcal{N}}}
\newcommand{\mrk}{M}
\newcommand{\adom}{\ensuremath{\textsf{adm}}}
\newcommand{\finite}[1]{\ensuremath{\overline{#1}}\xspace}
\newcommand{\wkfProj}{\ensuremath{\Pi}\xspace}
\newcommand{\dwnmodel}{\ensuremath{W}\xspace}
\newcommand{\cvalue}[1]{\texttt{#1}}
\newcommand{\Nat}{\mathbb{N}}
\newcommand{\clingo}{\textsc{clingo}\xspace}
\newcommand{\fastdw}{\textsc{fast-downward}\xspace}
\newcommand{\nuxmv}{\textsc{nuXmv}\xspace}
\newcommand{\lra}{\leftrightarrow}
\newcommand{\ra}{\rightarrow}
\newcommand{\fine}{\textsf{ended}}
\newcommand{\last}{\textsf{last}}
\newcommand{\ol}{\overline}
\newcommand{\enc}{\textsf{enc}}
\newcommand{\nextpre}[1]{\nxt{\textsf{pre}(#1)}}
\newcommand{\currpre}[1]{\textsf{pre}(#1)}
\newcommand{\smvts}{\textsc{smv}}
\newcommand{\smv}{\textsc{smv}\xspace}
\newcommand{\stvar}{\ensuremath{\text{stVar}}}
\newcommand{\udef}{\ensuremath{\text{undef}}}
\newcommand{\true}{\ensuremath{\textsc{true}}}
\newcommand{\false}{\ensuremath{\textsc{false}}}
\newcommand{\nxt}[1]{\ensuremath{\text{next($#1$)}}}
\newcommand{\clts}{\bclng}
\newcommand{\bclng}{\ensuremath{\textsc{bc}}\xspace}
\newcommand{\bctrans}{\ensuremath{\G_{\bclng}}\xspace}
\newcommand{\bcassign}[1]{\ensuremath{\nu_{\bclng}^{#1}}\xspace}
\newcommand{\naf}{\ensuremath{{\sim}}\xspace}
\newcommand{\nullv}{\text{\lstinline{null}}\xspace}
\newcommand{\flit}[2]{\ensuremath{{#1}{:}{#2}}\xspace}
\newcommand{\feqlit}[3]{\flit{#1}{{#2}{=}{#3}}\xspace}
\newcommand{\bcqlang}[1]{\ensuremath{{#1}^{\bclng}}\xspace}
\newcommand{\bdescr}{\ensuremath{\textsc{b}}\xspace}
\newcommand{\fdts}{\pddllng}
\newcommand{\pddllng}{\ensuremath{\textsc{pddl}}\xspace}
\DeclareMathOperator{\pddleff}{eff}
\DeclareMathOperator{\pddlpre}{pre}
\DeclareMathOperator{\pddlhead}{head}
\newcommand{\pddlqlang}[1]{\ensuremath{{#1}^{\pddllng}}\xspace}
\newcommand{\dwnetpddl}{\ensuremath{\Psi}}
\newcommand{\pddldwnet}{\ensuremath{\dwnetpddl^-}}
\newcommand{\smvlng}{\ensuremath{\textsc{smv}}\xspace}
\newcommand{\TS}{\ensuremath{\mathit{TS}}}
\newcommand{\RG}{\ensuremath{\mathit{RG}}}
\newcommand{\onlytechrep}[1]{}
\journal{arXiv}
\begin{document}

\begin{frontmatter}

%\title{Working with partial traces in workflow-nets with data: a planning based comparative approach}
\title{
  %%\btext{Verification of data-aware workflows via reachability: formalization and experiments}
  Solving reachability problems on data-aware workflows \\
  {\small (Currently under submission)}
}

\author[FBK]{Riccardo De Masellis}
\ead{demasellis@gmail.com}
\author[FBK]{Chiara Di Francescomarino\corref{cor2}}
\ead{dfmchiara@fbk.eu}
\author[FBK]{Chiara Ghidini}
\ead{ghidini@fbk.eu}
\author[FUB]{Sergio~Tessaris}
\ead{tessaris@inf.unibz.it}
\cortext[cor2]{Corresponding author}

\address[FBK]{Fondazione Bruno Kessler}
\address[FUB]{Free University of Bozen-Bolzano}

% For research notes, remove the comment character in the line below.
% \researchnote

\begin{abstract}
	Recent advances in the field of Business Process Management have brought about several suites able to model complex data objects along with the traditional control flow perspective. Nonetheless, when it comes to formal verification there is still the lack of effective verification tools on imperative data-aware process models and executions: the data perspective is often abstracted away and verification tools are often missing. 

	In this paper we provide a concrete framework for formal verification of reachability properties on imperative data-aware business processes.
        We start with an expressive, yet empirically tractable class of data-aware process models, an extension of Workflow Nets, and we provide a rigorous mapping between the semantics of such models and that of three important paradigms for reasoning about dynamic systems: Action Languages, Classical Planning, and Model-Checking.
%%
        % The mapping is based on a common interpretation of all such frameworks in terms of transition systems.
%%
        Then we perform a comprehensive assessment of the performance of three popular tools supporting the above paradigms in solving reachability problems for imperative data-aware business processes,
        which paves the way for a theoretically well founded and practically viable exploitation of formal verification techniques on data-aware business processes.
    \end{abstract}
% \tableofcontents
%\listoflabels
\end{frontmatter}

%!TEX root = ./main.tex

\section{Introduction}

% Current business analysis monitoring instruments

Recent advances in the field of Business Process Management have brought about several suites able to model complex data objects along with the traditional control flow perspective. Nonetheless, when it comes to formal verification, there is still a lack of effective tools on imperative data-aware process models and executions. Indeed, the data perspective is often either abstracted away due to the intrinsic difficulty of handling unbounded data, or investigated only on the theoretical side, providing decidability results for very expressive scenarios without actual verification tools (see \cite{De-Masellis:2017:aaai} for an in depth analysis). 
Automated Planning is one of the core areas of AI where theoretical investigations and concrete and robust tools have made possible the reasoning about dynamic systems and domains. 
In the last few years, links between Automated Planning and Business Process Management have started to emerge, either to exploit Automated Planning techniques to address specific problems of Business Process Management (BPM), such as the one of process alignment~\cite{DBLP:conf/aips/LeoniLM18,DBLP:conf/aaai/XuLZ17a,DBLP:conf/aips/GiacomoMMS16}, trace completion~\cite{DBLP:conf/bpm/MasellisFGT17} and process verification~\cite{De-Masellis:2017:aaai}, or to argue for a wider relationship between the two fields~\cite{DBLP:journals/jodsn/Marrella19}. 

Despite this growing interest, a systematic investigation of the actual usefulness of Automated Planing techniques, and of its variant components, is still lacking in the BPM context. In fact, all the work above relies on specific and ad hoc encodings of a given %specific
task at hand, therefore leveraging specific Planning formalisms, mostly referring either to classical or to action based planning. 
As a consequence, which variants among classical and non-classical planning models~\cite{Geffner:2013aa} may be better suited for which complex real-world BPM challenge still remains unstudied, as also noted in the final discussion of~\cite{DBLP:journals/jodsn/Marrella19}. This is not negligible, considering the articulated and vast area of different variants and approaches that constitute Automated Planning. 

In this paper we aim at 
%contributing
 filling this gap by providing a first comprehensive evaluation of different Automated Planning formalisms on a complex and still open challenge in Business Process Management: the provision of a theoretically sound and practically supported data-aware business process verification technique. 
We fulfill this objective by focusing on reachability problems for imperative data-aware business processes.
In particular we \begin{inparaenum}[\it(i)] 
\item exploit an expressive, yet empirically tractable class of data-aware process models, built by extending the well established Workflow Nets formalism~\cite{aalst_soundness:2010};
\item we establish a rigorous mapping between our data-aware process models and three important paradigms for reasoning about dynamic systems, namely Action Languages, Classical Planning, and Model Checking\footnote{While one may argue that Model Checking does not typically figure among Automated Planning techniques, its usage as an approach to planning is well known and supported by decision procedures and tools. See e.g., \cite{bddwork1}}, based on a common interpretation of the three dynamic systems in terms of transition systems which allows us to
\item perform a first comprehensive assessment of the performance of three popular tools supporting the above paradigms in computing reachability for imperative data-aware business processes\end{inparaenum}. The reasoning formalisms considered in this work exhibit different characteristics, whose impact in terms of reasoning with data-aware workflow net languages is not easily predictable without a robust evaluation. Therefore our work provides not only a solid contribution to a theoretically well founded and practically viable exploitation of Automated Planning to support formal verification on data-aware business processes, but it also paves the way to more general and rigorous investigations on the use of different planning formalisms for BPM. 

The paper is structured as follows. Section~\ref{sec:motivations} highlights the general motivations behind our work and introduces a running example; Section~\ref{sec:ourframework} provides some background on Workflow Nets~\cite{aalst_soundness:2010} and  describes the language of \ournets originally introduced in~\cite{DBLP:conf/bpm/MasellisFGT17}; Section~\ref{sec:dawnet-fts} describes the approach and the main steps for exploiting the notion of transition system as a bridge between \ournet and the chosen Action Languages, Classical Planning, and Model-Checking languages, while details of the specific encodings of the \ournets reachability problem in the three specific formalisms are provided in  Sections~\ref{sec:bc:encoding}, \ref{sect:pddl:enc}, and \ref{sect:nuxmv:enc}. Section~\ref{sec:trace_repair} introduces the problem of trace repair we use for the the empirical evaluation together with its recast in terms of reachability and lastly Section~\ref{sec:evaluation} presents an extensive evaluation of three well known solvers (i.e., \clingo, \fastdw and \nuxmv) available for the different formalisms both on synthetic and real life logs. While the evaluation does not indicate a clear ``winner'', it empirically shows that adding the data dimension dramatically affects the solvers' ability to return a result as well as their performance. Moreover, characteristics of the trace, such as its dimension, also have an important impact on which solver performs best. The code used and all datasets are publicly available (see Section~\ref{sec:evaluation} for details). The paper ends with related work and concluding remarks.

% Part of this work \todo{Shall we move this to the related work session?} has been published in~\cite{DBLP:conf/bpm/MasellisFGT17}. Compared to the early version, where the focus was on an ad hoc encoding of the problem of trace repair using a specific \emph{Action Language}, this paper introduces a more general encoding technique based on finite transition systems and an extensive evaluation of different solvers. The new technique highlights the core dynamic properties of Workflow Nets, and enables a seamless encoding of the decision problems under investigation in a variety of formal frameworks.

%!TEX root = ./main.tex

\section{Verification of Data-Aware Business Processes}
\label{sec:motivations}

Commercial and non-commercial BPM suites such as Bonita, Bizagi, YAWL and Camunda support nowadays the modelling of both control and data flow and provide some form of verification support. Following the analysis in~\cite{De-Masellis:2017:aaai}, they nonetheless offer no or very limited formal verification support when it comes to detect the interdependencies between data and control flow: for instance, they fail to report the critical issue in the process in Figure~\ref{fig:imgs_sampleData}. Although this process never terminates, due to the writing of variable $y$ by activity \textbf{T2}, when verified by the tools above this is not revealed, and the process is labeled as potentially able to terminate. Indeed YAWL \cite{VanDerAalst2005} offers verification features limited to the control flow  and thus it wrongly reports that such a process can always reach the termination state. All other tools instead only offer a simulation environment (i.e., no formal verification) that checks whether the process passes through all the sequence flows, without taking into account data.

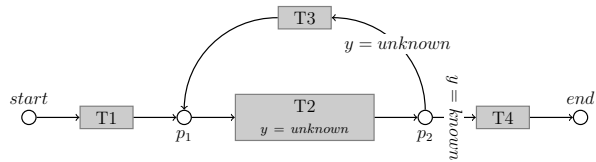
\begin{figure*}[t]
  \centering
    \tikzstyle{place}=[circle,draw=black,thick]
    \tikzstyle{transition}=[text width=1cm,text centered,rectangle,draw=black!50,fill=black!20,thick]
    \scalebox{0.57}{\begin{tikzpicture}[font=\large]
      \node (start) [place,label=above:$start$] {};
      \node[transition] (t1)  [right=1cm of start, anchor=west] {T1};
      \node[place,label=below:$p_1$] (p1) [right=1cm of t1, anchor=west] {};
      \node[transition,text width=3cm] (t2)  [right=1cm of p1, anchor=west] {T2\\ {\small $y=unknown$}};
      \node[transition] (t3)  [above=1.5cm of t2, anchor=south] {T3};
      \node[place,label=below:$p_2$] (p2) [right=1cm of t2] {};
      \node[transition] (t4)  [right=1cm of p2, anchor=west] {T4};
      \node[place,label=above:$end$] (end)  [right=1cm of t4] {};
      \draw [->,thick] (start.east) -- (t1.west);
      \draw [->,thick] (t1.east) -- (p1.west);
      \draw [->,thick] (p1.east) -- (t2.west);
      \draw [->,thick] (t2.east) to (p2.west);
	  \draw [->,thick] (p2.east) to  node[fill=white,inner sep=1pt, rotate=-90] {{$y=known$}} (t4.west);
	  \draw [->,thick] (p2.north) to [bend right=45] node[fill=white,inner sep=1pt, midway] {{$y=unknown$}} (t3.east);
	  \draw [->,thick] (t3.west) to [bend right=45] (p1.north);
	  \draw [->,thick] (t4.east) to  (end.west);
    \end{tikzpicture}}
  \caption{A Simple Business Process Model.}
  \label{fig:imgs_sampleData}
\end{figure*}

If we move from actual tools to theoretical frameworks the situation improves. Indeed we may encode the example above in, e.g., Colored Petri Nets (CPN) as it offers verification support that takes into account conditions on labels and some interaction between activities and data. Nonetheless, CPN suffers of usability issues as by encoding data directly within the Net, it is difficult to couple it with tools that are based on common data models such as the relational model or attribute-value pairs.
In order to exploit the verification features of CPN the user would have to manually encode the data flow within the Net and if this may not prove convoluted for examples such as the one in Figure~\ref{fig:imgs_sampleData}, the complexity can escalate with more elaborated examples such as the one illustrated in Figure~\ref{fig:imgs_SampleWFnet}, which we use as running example throughout the paper.  

\begin{figure*}[h]
  \centering
    \tikzstyle{place}=[circle,draw=black,thick]
    \tikzstyle{transition}=[text width=2cm,text centered,rectangle,draw=black!50,fill=black!20,thick]
    \scalebox{0.5}{\begin{tikzpicture}[font=\large]
      \node (start) [place,label=above:$start$] {};
      % \node at ( 0,1) [place] {};
%       \node at ( 0,0) [place] {};
      \node[transition,label=above:$T1$, text width=2cm] (t1)  [right=.5cm of start, anchor=west] {ask application documents};
      \node[place,label=right:$p_1$] (p1) [right=1cm of t1, anchor=west] {};
      \node[transition,label=above:$T2$] (t2)  [above right=1.5cm of p1, anchor=west] {send student application};
      \node[transition,label=above:$T3$] (t3)  [below right=1.5cm of p1, anchor=west] {send worker application};
      \node[place,label=above:$p_2$] (p2) [right=.5cm of t2] {};
      \node[place,label=below:$p_3$] (p3) [right=.5cm of t3] {};
      \node[transition,text width=1.5cm,label=above:$T4$] (t4)  [right=.5cm of p2, anchor=west] {fill student request};
      \node[transition,text width=1.5cm,label=above:$T5$] (t5)  [right=.5cm of p3, anchor=west] {fill worker request};
      \node[place,label=left:$p_4$] (p4) [right=6.5cm of p1, anchor=west] {};
      \node[transition,label=above:$T7$] (t7)  [right=1.5cm of p4] {senior officer approval};
	  \node[transition,label=above:$T6$] (t6)  [above=1cm of t7] {local officer approval};
      \node[transition,label=above:$T8$] (t8)  [below=1cm of t7] {bank committee approval};
      \node[place,label=above right:$p_5$] (p5) [right=.5cm of t7, anchor=west] {};
      \node[transition,label=above:$T9$,text width=.5cm] (t9)  [right=.5cm of p5] {\quad};
      \node[place,label=above:$p_6$] (p6) [above right=.8cm of t9] {};
      \node[place,label=below:$p_7$] (p7) [below right=.8cm of t9] {};
      \node[transition,label=above:$T10$,text width=2.1cm] (t10)  [right=.5cm of p6] {send approval to customer};
      \node[transition,label=below:$T11$,text width=2.1cm] (t11)  [right=.5cm of p7] {store approval in branch};
      \node[place,label=above:$p_8$] (p8)  [right=.5cm of t10] {};
      \node[place,label=below:$p_9$] (p9)  [right=.5cm of t11] {};
      \node[transition,text width=1cm,label=above:$T12$] (t12)  [right=5cm of t9] {issue loan};
      \node[place,label=above:$end$] (end)  [right=.5cm of t12] {};
      \draw [->,thick] (start.east) -- (t1.west);
      \draw [->,thick] (t1.east) -- (p1.west);
      \draw [->,thick] (p1.north) to [bend left=45] node[fill=white,inner sep=1pt, near start] {{$loanType=s$}} (t2.west);
      \draw [->,thick] (p1.south) to [bend right=45] node[fill=white,inner sep=1pt, near start, swap] {{$loanType=w$}} (t3.west);
      \draw [->,thick] (t2.east) to (p2.west);
      \draw [->,thick] (t3.east) to (p3.west);
      \draw [->,thick] (p2.east) to (t4.west);
      \draw [->,thick] (p3.east) to (t5.west);
      \draw [->,thick] (t4.east) to [bend left=20] (p4.north);
      \draw [->,thick] (t5.east) to [bend right=20] (p4.south);
      \draw [->,thick] (p4.north) to [bend left=30] node[fill=white,inner sep=1pt, midway,text width=1cm] {{$request\leq 5k$}} (t6.west);
      \draw [->,thick] (p4.south) to [bend right=30] node[fill=white,inner sep=1pt,midway,swap,text width=1cm] {{$request\geq 100k$}} (t8.west);
      \draw [->,thick] (p4.east) to node[fill=white,inner sep=1pt, midway,swap] {else} (t7.west);
      \draw [->,thick] (t6.east) to [bend left=20] (p5.north);
      \draw [->,thick] (t8.east) to [bend right=20] (p5.south);
      \draw [->,thick] (t7.east) to (p5.west);
      \draw [->,thick] (p5.east) to (t9.west);
      \draw [->,thick] (t9.east) to (p6.west);
      \draw [->,thick] (t9.east) to (p7.west);
      \draw [->,thick] (p6.east) to (t10.west);
      \draw [->,thick] (p7.east) to (t11.west);
      \draw [->,thick] (t10.east) to (p8.west);
      \draw [->,thick] (t11.east) to (p9.west);
      \draw [->,thick] (p8.east) to  (t12.west);
      \draw [->,thick] (p9.east) to  (t12.west);
      \draw [->,thick] (t12.east) to  (end.west);
    \end{tikzpicture}}
  \caption{The LoanRequest Process Model.}
  \label{fig:imgs_SampleWFnet}
\end{figure*}
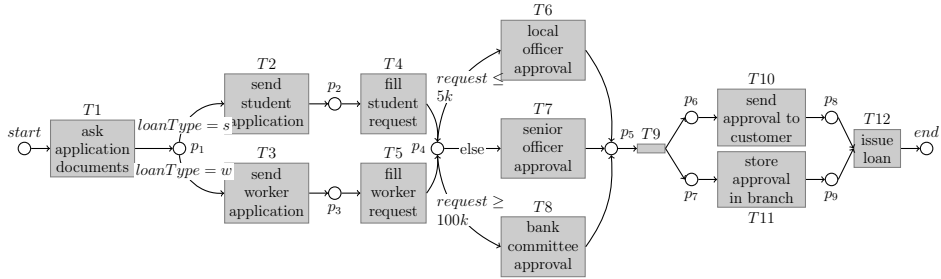
 
This (data-aware) workflow, modelling a simple Loan Application process, is composed of 12 tasks (\emph{T1-T12}). The application starts with task \emph{T1} where the customer asks for the application forms. Then, depending on whether the customer is asking for a student loan ($loanType=s$) or a worker loan ($loanType=w$) the process continues with a mutually exclusive choice between the sequence of activities \emph{T2-T4} and \emph{T3-T5}. Once the appropriate (student or worker) request is filled, another mutually exclusive choice is presented for the execution: if the amount requested in lower than $5k$, then task \emph{T6} is executed and only a local approval is needed; if the amount requested is greater than $100k$, then the request needs to be approved by the bank committee (task \emph{T8}); in all the other cases the process continues with task \emph{T7}. Once the request is approved by means of the appropriate approval task, two parallel activities are executed: the approval is sent to the customer (task \emph{T10}), and the approval documents are stored in the branch (task \emph{T11}). Finally the loan is issued (task \emph{T12}). 
Concerning the interaction between activities and data, task \emph{T1} assigns variable $loanType$ to $s$ or $w$; furthermore task \emph{T4} is empowered with the ability to write the variable $request$ with a value smaller or equal than $30k$ (being this the maximum amount of a student loan). Similarly, task \emph{T5} is in charge of writing the variable $request$ up to $500k$, which is the maximum amount for a worker.

Given the example above one may check different properties which can be reduced to verifying a \emph{reachability} problem \cite{Aalst97}: whether the process admits at least one execution (that is, it can always reach the end from start), or whether termination is guaranteed from start if we also impose the execution of certain tasks. In our example it is easy to see that the interplay of the control and data flow would prevent any execution from start to end that includes both \emph{T2} and \emph{T8} for instance, as \emph{T2} can write the variable $request$ with a value smaller or equal than $30k$ while \emph{T8} is executed only if $request$ has a value greater that $100k$. While these properties may be verifiable by exploiting the power of CPN or of extensions of Workflow-Nets (WF-Nets) \cite{sidorovastahletal:2011}, the ad hoc encoding of the relational data model that is nowadays widely used to express data objects (in BPM suites and beyond) in the these formalisms is among the reasons for their failure to have a significant impact on practical tools. 

In the remaining of the paper we provide a first theoretically sound and empirically extensive investigation on the exploitation of automated planning techniques to compute reachability in data-aware business processes. The reason we focus on reachability is that,
as extensively illustrated in~\cite{Aalst97}, a number of crucial properties that define a \emph{sound} Business Process can be formulated as reachability properties. These include, e.g., whether, no matter how the process evolves from its beginning, it is always possible to reach its end, or if there are no dead tasks, namely all of them could eventually fire.
Consequently, planning is a natural paradigm for solving reachability problems. Besides, it comes with solid tools that can be directly exploited to support practical applications and lastly it treats data objects as first class citizens, thus enabling a natural encoding of data-aware business processes based on common data models such as the relational data model or attribute-value pairs.

%Consequently, planning is analogously simple but many fold: (i) it is a formalism that easily fits to the computing of reachability; (ii) it comes with solid tools that can be directly exploited to support practical applications; and (iii) it treats data objects as first class citizens: thus it enables a natural encoding of data-aware business processes based on common data models such as the relational data model or attribute-value pairs.

%The reason for adding \emph{T9} will become clearer in Section XX \todo{sergio: I don't get this} and we can simply disregard this task here.

% Process tasks are modeled in PNs as transitions while arcs and places constraint their ordering.
% %%
% For instance, the process in Figure~\ref{fig:imgs_SampleWFnet}\footnote{For the sake of simplicity we only focus here on the, so-called, happy path, that is the successful granting of the loan.}
% % Consider, for instance, the process of a \emph{Loan Request} depicted in Figure~\ref{fig:imgs_SampleWFnet}\footnote{For the sake of simplicity we only focus here on the, so-called, happy path, that is the successful granting of the loan}. The 10 tasks that compose the process are modeled by transitions. This process
% exemplifies how PNs can be used to model parallel and mutually exclusive choices, typical of business processes: sequences \emph{T2;T4}-\emph{T3;T5} and transitions \emph{T6-T7-T8} are indeed placed on mutually exclusive paths. Transitions \emph{T10} and \emph{T11} are instead placed on parallel paths. Finally, \emph{T9} is needed to prevent connections between nodes of the same type.

\section{The Framework: \ournet}
\label{sec:ourframework}
In this section we first provide the necessary background on Petri Nets and WF-nets (Section~\ref{sec:WF-net}) and then suitably extend WF-nets to represent data and their evolution as transitions are performed (Section~\ref{sec:ournet}).

\subsection{The Workflow Nets modeling language} % (fold)
\label{sec:WF-net}

Petri Nets~\cite{Petri:1962aa} (PN) is a modeling language for the description of distributed systems that has widely been applied to the description and analysis of business processes~\cite{vanderaalst:1998}. 
The classical PN is a directed bipartite graph with two node types, called \emph{places} and \emph{transitions}, connected via directed arcs. Connections between two nodes of the same type are not allowed. 

\begin{definition}[Petri Net]
	\label{def:PN}
  A \emph{Petri Net} is a triple $\tuple{P,T,F}$ where $P$ is a set of \emph{places}; $T$ is a set of \emph{transitions}; $F\subseteq (P \times T) \cup (T \times P)$ is the flow relation describing the arcs between places and transitions (and between transitions and places).
\end{definition}
The \emph{preset} of a transition $t$ is the set of its input places: $\pres{t} = \{p \in P \mid (p,t) \in F\}$. The \emph{postset} of $t$ is the set of its output places: $\posts{t} = \{p \in P \mid (t,p) \in F\}$. Definitions of pre- and postsets of places are analogous.
Places in a PN may contain a discrete number of marks called tokens. Any distribution of tokens over the places, formally represented by a total mapping  $M : P\mapsto \mathbb{N}$, represents a configuration of the net called a \emph{marking}.  

\begin{figure}
  \centering
    \fbox{\includegraphics[width=.25\linewidth]{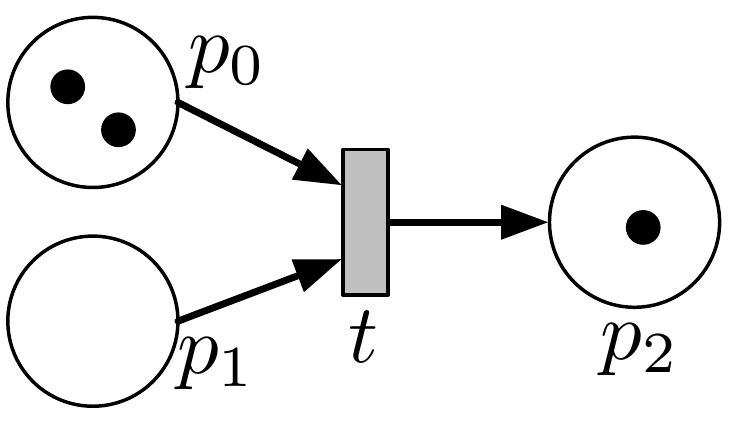}}
  \caption{A Petri Net.}
  \label{fig:imgs_PNsample}
\end{figure}
PNs come with a graphical notation where places are represented by means of circles, transitions by means of rectangles and tokens by means of full dots within places. Figure~\ref{fig:imgs_PNsample} depicts a PN with a marking $M(p_0)=2$, $M(p_1)=0$, $M(p_2)=1$. The preset and postset of $t$ are $\{p_0, p_1\}$ and $\{p_2\}$, respectively.  

Process tasks are modeled in PNs as transitions while arcs and places constrain their ordering.
For instance, the process in Figure~\ref{fig:imgs_SampleWFnet}
% Consider, for instance, the process of a \emph{Loan Request} depicted in Figure~\ref{fig:imgs_SampleWFnet}\footnote{For the sake of simplicity we only focus here on the, so-called, happy path, that is the successful granting of the loan}. The 10 tasks that compose the process are modeled by transitions. This process
exemplifies how PNs can be used to model parallel and mutually exclusive choices, typical of business processes: sequences \emph{T2;T4} and \emph{T3;T5} are placed on mutually exclusive paths; the same holds for transitions \emph{T6}, \emph{T7}, and \emph{T8}. Transitions \emph{T10} and \emph{T11} are instead placed on parallel paths. Finally, \emph{T9} is needed to prevent connections between nodes of the same type.

The expressivity of PNs exceeds, in the general case, what is needed to model business processes, which typically have a well-defined starting point and a well-defined ending point. This imposes syntactic restrictions on PNs, that result in the following definition of a workflow net (WF-net)~\cite{vanderaalst:1998}.
%DECISO CON SERGIO DI LASCIARE PERDERE FREE-CHOICE. COMPLICATO DA CAPIRE E NON CI SERVE.
% 
\begin{definition}[WF-net]
A PN $\tuple{P,T,F}$ is a WF-net if it has a single source place start, a single sink place $end$, and every place and every transition is on a path from $start$ to $end$, i.e., for all $n\in P\cup T$, $(start,n)\in F^*$ and $(n,end)\in F^*$, where $F^*$ is the reflexive transitive closure of $F$.
\end{definition}

A marking in a WF-net represents the \emph{workflow state} of a process execution.
We distinguish two special markings: the \emph{initial marking} $M_s$, which has one token in the $start$ place ($M_s(start)=1$) and all others places are empty ($\forall p \in P\setminus\set{start}.M_s(p)=0$), and the \emph{final marking} $M_e$, which has one token in the $end$ place ($M_e(end)=1$) and all others places are empty ($\forall p \in P\setminus\set{end}.M_e(p)=0$).

The semantics of a PN/WF-net, and in particular the notion of \emph{valid firing}, defines how transitions route tokens through the net so that they correspond to a process execution. 

\begin{definition}[Valid Firing]
	\label{def:Firing}
A firing of a transition $t\in T$ from $M$ to $M'$ is \emph{valid}, in symbols $M \fire{t} M'$, iff
\begin{enumerate}
  \item $t$ is enabled in $M$, i.e., $\{ p\in P\mid M(p)>0\}\supseteq \pres{t}$; and
  \item the marking $M'$ is such that for every $p\in P$:
  \begin{displaymath}
	  \small
    M'(p) =
    \begin{cases}
      M(p)-1 & \text{if $p\in \pres{t}\setminus\posts{t}$}\\
      M(p)+1  & \text{if $p\in \posts{t}\setminus\pres{t}$}\\
      M(p) & \text{otherwise}
    \end{cases}
  \end{displaymath}
\end{enumerate}
\end{definition}
Condition 1.~states that a transition is enabled if all its input places contain at least one token; condition 2.~states that when $t$ fires it consumes one token from each of its input places and produces one token in each of its output places. 

%\todordminline{Attenzione! Ora un case termina sempre nel marking finale!!! Se ci serve riferirci ad un case parziale, allora parlare di sequence of valid firing.}

Notationally, $t_1t_2t_3 . . . t_{k}$ is a sequence of valid firings that leads from $M_0$ to $M_k$ iff $M_0 \fire{t_1} M_1 \fire{t_2} M_2 \fire{t_3} \ldots \fire{k-1} M_{k-1} \fire{t_k} M_k$. In this paper we use the term \emph{case} of a WF-Net to denote a sequence of valid firings 
$M_0 \fire{t_1} M_1 \fire{t_2} M_2 \fire{t_3} \ldots \fire{k-1} M_{k-1} \fire{t_k} M_k$ 
such that $M_0=M_s$ is the initial marking and $M_k=M_e$ is the final marking. A case is thus a sequence of valid firings that connect the initial to the final marking through the PN.

From now on we concentrate on \emph{safe} nets, which generalise the class of \emph{structured workflows} and are the basis for best practices in process modeling~\cite{kiepuszewskihofstedeetal:2013}. The notion of safeness is defined in terms of $k$-boundedness (see also~\cite{aalst_soundness:2010}).

\begin{definition}[$k$-boundedness and safeness]
A marking of a PN is $k$-bound if the number of tokens in all places is at most $k$. A PN is $k$-bound if the initial marking is $k$-bound and the marking of all cases is $k$-bound. If $k=1$, the PN is safe.  
\end{definition}
It is important to notice that our approach can be seamlessly generalized to other classes of PNs, as long as they are $k$-bound. 

\paragraph{Reachability on Petri Nets}
  Given a PN and a set of ``goal'' markings $G$ for its places, the reachability problem amounts to check whether there is a sequence of valid firings from the initial marking $M_s$ of the PN to any of the markings in $G$.

\subsection{The \ournet modeling language} % (fold)
\label{sec:ournet}

\ournet \cite{DBLP:conf/bpm/MasellisFGT17} is a data-aware extension of WF-nets which enriches them with the capability of reasoning on data. \ournet extends WF-nets by providing: (i) a model for representing data; (ii) a way to make decisions on actual data values; and (iii) a mechanism to express modifications to data.
It does it by introducing: 
\begin{itemize}
\item a set of variables taking values from possibly different domains (addressing (i));
\item queries on such variables used as transitions preconditions (addressing (ii))
\item variables updates and deletion in the specification of net transitions (addressing (iii)).
\end{itemize}
\ournet follows the approach of state-of-the-art WF-nets with data~\cite{sidorovastahletal:2011,de_leoni:2013}, from which it borrows the above concepts, extending them by allowing reasoning on actual data values as better explained in Section~\ref{sec:related_works}. 

Throughout the section we use the WF-net in Figure~\ref{fig:imgs_SampleWFnet}  extended with data as a running example.

\subsubsection{Data Model}
\label{sec:datamodel}
The data model of WF-nets take inspiration from the data model of the IEEE XES standard for describing event logs, which represents data as a set of variables. Variables take values from specific sets on which a partial order can be defined. As customary, we distinguish between the data model, namely the intensional level, from a specific instance of data, i.e., the extensional level.

\begin{definition}[Data model]\label{def:dmodel}
A \emph{data model} is a tuple $\D = (\V, \Delta, \domf, \ordf)$ where:
\begin{compactitem}
\item $\V$ is a possibly infinite set of variables;
\item $\Delta = \set{\Delta_1, \Delta_2, \ldots}$ is a possibly infinite set of domains (not necessarily disjoint);
\item $\domf: \V \rightarrow \Delta$ is a total and surjective function which associates to each variable $v$ its domain $\Delta_i$;
\item $\ordf$ is a partial function that, given a domain $\Delta_i$, if $\ordf(\Delta_i)$ is defined, then it returns a \emph{partial order} (reflexive, antisymmetric and transitive) $\leq_{\Delta_i} \subseteq \Delta_i \times \Delta_i$.\footnote{To simplify the model we assume that if $\{(o,o'), (o', o)\}\subseteq \ordf(\Delta_i)$ then $o = o'$.}
\end{compactitem}
\end{definition}

A data model for the loan example is $\V = \{loanType$, $request$, $loan\}$, $\domf(loanType)=\set{\cvalue{w}, \cvalue{s}}$, $\domf(request) = \Nat$, $\domf(loan) = \Nat$, with $\domf(loan)$ and $\domf(loanType)$ being total ordered by the natural ordering $\leq$ in $\Nat$.

An actual instance of a data model is simply a partial function associating values to variables.
\begin{definition}[Assignment]
Let $\D = \tuple{\V, \Delta, \domf, \ordf}$ be a data model. An \emph{assignment} for variables in $\V$ is a \emph{partial} function $\assign: \V \rightarrow \bigcup_i\Delta_i$ such that for each $v \in \V$, if $\assign(v)$ is defined, i.e., $v \in \img(\eta)$ where $\img$ is the image of $\eta$, then we have $\assign(v) \in \domf(v)$.
\end{definition}

Instances of the data model are accessed through a boolean query language, which notably allows for equality and comparison. As will become clearer in Section~\ref{sec:our-net}, queries are used as \emph{guards}, i.e., preconditions for the execution of transitions.

\begin{definition}[Query language - syntax]\label{def:dawnet:qlang}
Given a data model, the language $\guardlang$ is the set of formulas $\Phi$ inductively defined according to the following grammar:
$$\Phi \quad := \quad true \mid \deff(v) \mid v = t_2 \mid t_1 \leq t_2 \mid \neg \Phi_1 \mid \Phi_1 \land \Phi_2$$
where $v \in \V$ and $t_1, t_2 \in \V \cup \bigcup_i \Delta_i$.
\end{definition}

Examples of queries of the loan scenarios are $request \leq \cvalue{5k}$ or $loanType=\cvalue{w}$.
Given a formula $\Phi$ or a term $t$, and an assignment $\assign$, we write $\Phi[\assign]$ (respectively $t[\assign]$) for the formula $\Phi'$ (or the term $t'$) where each occurrence of variable $v$ for which $\assign$ is defined is replaced by $\assign(v)$ (i.e.\ unassigned variables are not substituted).

\begin{definition}[Query language - semantics]
Given a data model $\dmodel$, an assignment $\assign$ and a query $\Phi \in \guardlang$ we say that $\dmodel, \assign$ satisfies $\Phi$, written $D, \assign \models \Phi$ inductively on the structure of $\Phi$ as follows:
\begin{compactitem}
\item $\dmodel, \assign \models true$;
\item $\dmodel, \assign \models \deff(v)$ iff $v[\assign] \neq v$;
\item $\dmodel, \assign \models t_1 = t_2$ iff $t_1[\assign], t_2[\assign] \not \in \V$ and $t_1[\assign] \equiv t_2[\assign]$;
\item $\dmodel, \assign \models t_1 \leq t_2$ iff $t_1[\assign], t_2[\assign] \in \Delta_i$ for some $i$ and $\ordf(\Delta_i)$ is defined and $t_1[\assign] \leq_{\Delta_i} t_2[\assign]$;
\item $\dmodel, \assign \models \neg \Phi$ iff it is not the case that $\dmodel, \assign \models \Phi$;
\item $\dmodel, \assign \models \Phi_1 \land \Phi_2$ iff $\dmodel, \assign \models \Phi_1$ and $\dmodel, \assign \models \Phi_2$.
\end{compactitem} 
\end{definition}

Intuitively, $\deff$ can be used to check if a variable has an associated value or not (recall that assignment $\eta$ is a partial function); equality has the intended meaning and $t_1 \leq t_2$ evaluates to true iff $t_1$ and $t_2$ are values belonging to the same domain $\Delta_i$, such a domain is ordered by a partial order $\leq_{\Delta_i}$ and $t_1$ is actually less or equal than $t_2$ according to $\leq_{\Delta_i}$.

\begin{lemma}\label{lemma:query:adom}
    Let $\Phi \in \guardlang$, and $\adom(\Phi)$ be its active domain -- i.e.\ the set of constants appearing in $\Phi$. Given an assignment $\assign$ we consider the data model $\dmodel'$ as the restriction of $\dmodel$ w.r.t.\ $\adom(\Phi)\cup \img(\assign)$, then
    \begin{equation*}
        \dmodel, \assign \models \Phi \text{ iff } \dmodel', \assign \models \Phi 
    \end{equation*}
\end{lemma}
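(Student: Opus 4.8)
The plan is to proceed by structural induction on $\Phi$. Before starting, I would fix the meaning of ``restriction of $\dmodel$ w.r.t.\ a set $S$ of constants'': the restricted model $\dmodel_{S}$ keeps the variables $\V$ unchanged, replaces each domain $\Delta_i$ by $\Delta_i\cap S$, sets $\domf$ accordingly, and replaces each order $\leq_{\Delta_i}$ by its restriction $\leq_{\Delta_i}\cap((\Delta_i\cap S)\times(\Delta_i\cap S))$ whenever $\ordf(\Delta_i)$ is defined. The only facts about this construction I will use are: (a) a constant $c\in S$ belongs to $\Delta_i$ in $\dmodel$ iff it belongs to $\Delta_i\cap S$ in $\dmodel_{S}$; and (b) for $c,c'\in\Delta_i\cap S$, the two orders agree. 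The key preliminary observation, which also dictates how to strengthen the hypothesis, is that satisfaction of $\Phi$ under $\assign$ can only ``mention'' the constants in $\adom(\Phi)\cup\img(\assign)$: after substituting $\assign$, every term $t[\assign]$ occurring in $\Phi$ is either an unassigned variable (still in $\V$) or a constant drawn from $\adom(\Phi)\cup\img(\assign)$.

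I would therefore prove the slightly more general claim that for every $S\supseteq\adom(\Phi)\cup\img(\assign)$ we have $\dmodel,\assign\models\Phi$ iff $\dmodel_{S},\assign\models\Phi$; the lemma is the instance $S=\adom(\Phi)\cup\img(\assign)$. This generalisation is precisely what makes the inductive step for $\land$ go through, since $\adom(\Phi_i)\subseteq\adom(\Phi_1\land\Phi_2)$ and so the chosen $S$ remains a legitimate superset for each conjunct.

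For the base cases: $true$ is immediate; $\deff(v)$ holds iff $v[\assign]\neq v$, a condition referring only to $\assign$ and not to the data model, so both sides agree; and $t_1=t_2$ holds iff $t_1[\assign],t_2[\assign]\notin\V$ and $t_1[\assign]\equiv t_2[\assign]$, which is a purely syntactic identity of substituted terms and again does not depend on domains or orders. The substantive base case is $t_1\leq t_2$. Here satisfaction (in either model) forces $t_1[\assign]$ and $t_2[\assign]$ to be constants lying in a common ordered domain; being constants appearing in $\Phi$ or produced by $\assign$, they lie in $S$. Using fact (a) their common domain membership is preserved when passing between $\dmodel$ and $\dmodel_{S}$, and using fact (b) the comparison agrees; I would spell out both directions, but each is a one-line consequence of (a) and (b). The Boolean connectives $\neg$ and $\land$ then follow directly from the induction hypothesis applied with the same $S$.

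The only real obstacle is the bookkeeping on active domains in the $\land$ case: a naive induction would hand us the two conjuncts relativised to their own (smaller) active domains rather than to $\adom(\Phi_1\land\Phi_2)$, and one would have to argue these coincide. Strengthening the statement to arbitrary supersets $S$, as above, removes this obstacle cleanly. A secondary point to get right is that domains need not be disjoint, so a constant may live in several domains at once; but since restriction intersects every domain with the same set $S$, membership in each domain (and hence the existence of a witnessing common domain for $\leq$) is preserved, so non-disjointness causes no difficulty.
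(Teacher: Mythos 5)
Your proof is correct and takes the same route the paper indicates: the paper gives no detailed argument, stating only that the lemma ``can be easily proved by structural induction on the formula'' because the query language has no quantification, which is exactly the structural induction you carry out. Your strengthening to arbitrary supersets $S\supseteq\adom(\Phi)\cup\img(\assign)$ and the isolation of $t_1\leq t_2$ as the only case that actually touches the data model are sensible details the paper leaves implicit.
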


The last lemma ca be easily proved by structural induction on the formula, and it makes the queries independent of the actual ``abstract'' domains enabling the use of finite subsets. Intuitively, the property holds because the query language doesn't include quantification over the variables.

\subsubsection{Data-aware net}\label{sec:our-net}

Data-AWare nets (\ournet) are obtained by combining the data model just introduced with a WF-net, and by formally defining how transitions are guarded by queries and how they update/delete data. 

\begin{definition}[\ournet]\label{def:our-net}
  A \emph{\ournet} is a tuple $\langle$$\dmodel,$ $\nmodel,$  $\writef,$~$\guardf$$\rangle$ where:
\begin{compactitem}
    \item $\nmodel=\tuple{P,T,F}$ is a WF-net;
    \item $\dmodel=\tuple{\V, \Delta, \domf, \ordf}$ is a data model;
    \item $\writef: T \mapsto \bigcup_{\V'\subseteq \V} (\V' \mapsto 2^{\domf(\varset)})$, where $\domf(\varset)=\bigcup_{v \in \varset} \domf(v)$, is a function that associates each transition to a (\emph{partial}) function mapping variables to a finite subset of their domain; that is satisfying the property that for each $t$, $\writef(t)(v)\subseteq\domf(v)$ for each $v$ in the domain of $\writef(t)$.
    \item $\guardf: T \mapsto \guardlang$ is a function that associates a guard to each transition.
  \end{compactitem}
\end{definition}

Function $\guardf$ associates a guard, namely a query, to each transition. The intuitive semantics is that a transition $t$ can fire if its guard $\guardf(t)$ evaluates to true (given the current assignment of values to data). Examples are $\guardf(T6)=request \leq \cvalue{5k}$ and $\guardf(T8)= \neg(request \leq \cvalue{99999})$.

Function $\writef$ is used to express how a transition $t$ modifies data: after the firing of $t$, each variable $v$ in the domain of $\writef(t)$ can take any value among a specific finite subset of $\domf(v)$. We have three different cases:
\begin{itemize}
 \item $\emptyset\subset \writef(t)(v) \subseteq \domf(v)$: $t$ nondeterministically assigns a value from $\writef(t)(v)$ to $v$;
  \item $\writef(t)(v) = \emptyset$: $t$ deletes the value of $v$ (hence making $v$ undefined);
  \item $v \not\in dom(\writef(t))$: value of $v$ is not modified by $t$.
\end{itemize}
Notice that by allowing $\writef(t)(v) \subseteq \domf(v)$ in the first bullet above we enable the specification of restrictions for specific tasks. E.g., $\writef(T4):\set{request} \mapsto \set{\cvalue{0} \ldots \cvalue{30k}}$ says that $T4$ writes the $request$ variable and intuitively that students can request a maximum loan of $\cvalue{30k}$, while $\writef(T5):\set{request} \mapsto \set{\cvalue{0} \ldots \cvalue{500k}}$ says that workers can request up to $\cvalue{500k}$.

The intuitive semantics of $\guardf$ and $\writef$ is formalized in the notion of \ournet valid firing, which is based on the definition of \ournet state, and state transition. 
A \ournet state includes both the state of the WF-net, namely its marking, and the state of data, namely the assignment. 

\begin{definition}[\ournet state]
A \emph{state} of a \ournet $\tuple{\dmodel, \nmodel, \writef, \guardf}$ is a pair $(\mrk,\assign)$ where $\mrk$ is a marking for $\tuple{P,T,F}$ and $\assign$ is an assignment for $\dmodel$.
\end{definition}

Analogously to traditional WF-nets, we distinguish two special states: the \emph{initial state} $(\mrk_s, \eta_s)$, which is such that $M_s$ is the initial marking of $\nmodel$ and $\eta_s$ is empty, i.e., $dom(\assign_0)=\emptyset$; and the \emph{final states} $(\mrk_e, \eta_e)$, such that $\mrk_e$ is the final marking of~$\nmodel$ (no conditions on assignment $\eta_e$).

\begin{definition}[\ournet Valid Firing]\label{def:dwnet:firing}
Given a \ournet $\tuple{\dmodel, \nmodel, \writef, \guardf}$, a firing of a transition $t\in T$ is a \emph{valid firing} from $(\mrk,\assign)$ to $(\mrk',\assign')$, written as $(\mrk,\assign)\fire{t}(\mrk',\assign')$, iff 
% conditions 1.\ and 2.\ of Def.~\ref{def:Firing} holds for $\mrk$ and $\mrk'$ (i.e., if $\mrk$ and $\mrk'$ it is a WF-Net valid firing)  and
  \begin{enumerate}
	  \item $\mrk\fire{t}\mrk'$ is a WF-Net valid firing
  % \item $t$ is enabled in $\mrk$, i.e., $\{ p\in P\mid \mrk(p)>0\}\supseteq \pres{t}$ and
  \item $\dmodel, \assign \models \guardf(t)$, 
  % \item $\mrk'$ is such that for every $p\in P$:
  % \begin{displaymath}
  %   \mrk'(p) =
  %   \begin{cases}
  %     \mrk(p)-1 & \text{if $p\in \pres{t}\setminus\posts{t}$}\\
  %     \mrk (p)+1  & \text{if $p\in \posts{t}\setminus\pres{t}$}\\
  %     \mrk (p) & \text{otherwise}
  %   \end{cases}
  % \end{displaymath}
  \item assignment $\assign'$ is such that, if $\textsc{wr} = \set{v \mid \writef(t)(v)\neq\emptyset}$, $\textsc{del}  = \set{ v \mid \writef(t)(v)=\emptyset}$:
  \begin{itemize}
  	\item its domain $dom(\assign') = dom(\assign)\cup \textsc{wr} \setminus \textsc{del}$;
        \item for each $v\in dom(\assign')$:
        \begin{displaymath}
          \assign'(v) =
          \begin{cases}
            d \in \writef(t)(v) & \text{if $v\in \textsc{wr}$}\\
            \assign(v)  & \text{otherwise.}
          \end{cases}
        \end{displaymath}
  \end{itemize}
\end{enumerate}
\end{definition}

Conditions 2.~and 3.\ extend the notion of valid firing of WF-nets imposing additional pre- and postconditions on data, and in particular preconditions on $\assign$ and postconditions on $\assign'$. Specifically, 2.\ says that for a transition $t$ to be fired its guard $\guardf(t)$ must be satisfied by the current assignment $\assign$. Condition 3.\ constrains the new state of data: the domain of $\assign'$ is defined as the union of the domain of $\eta$ with variables that are written ($\textsc{wr}$), minus the set of variables that must be deleted ($\textsc{del}$). Variables in $dom(\assign')$ can indeed be grouped in three sets depending on the effects of $t$: (i) $\textsc{old} = dom(\eta) \setminus \textsc{wr}$: variables whose value is unchanged after $t$; (ii) $\textsc{new} = \textsc{wr} \setminus dom(\eta)$: variables that were undefined but have a value after $t$; and (iii) $\textsc{overwr} = \textsc{wr} \cap dom(\eta)$: variables that did have a value and are updated with a new one after $t$.
The final part of condition 3.\ says that each variable in $\textsc{new} \cup \textsc{overwr}$ takes a value in $\writef(t)(v)$, while variables in $\textsc{old}$ maintain the old value $\eta(v)$.

Similarly to regular PNs, a \emph{case} of a \ournet is a sequence of \ournet valid firings 
$(\mrk_0,\assign_0)\fire{t_0} (\mrk_1,\assign_1) \fire{t_1} \ldots \fire{t_{k-1}} (\mrk_{k-1},\assign_{k-1})\fire{t_k}(\mrk_k,\assign_k) $ 
such that $(\mrk_0,\assign_0)$ is the initial state and $(\mrk_k,\assign_k)$ is a final state.

Note that, since we assumed a finite set of transitions and $\writef(t)$ is finite for any transition $t$, for any model $W$ we can consider its active domain $\adom(W)$ as the set of all constants in $\writef(t)$ and $\guardf(t)$ for all the transitions $t$ in $W$. Moreover, given the Lemma~\ref{lemma:query:adom}, we can consider the restriction of the model to the finite set of states defined by the active domain.

\begin{definition}[Finite \ournet models]\label{def:finite:dawnet}
   Let $W = \tuple{\dmodel, \nmodel, \writef, \guardf}$ be a \ournet with $\dmodel=\tuple{\V, \Delta, \domf, \ordf}$ and $\nmodel=\tuple{P,T,F}$, $\adom(W)$ be its active domain. We consider the data model $\dmodel_W$ as the restriction of $\dmodel$ w.r.t.\ $\adom(W)$.
   
   The finite version of $W$, denoted as \finite{W}, is defined as $\tuple{\dmodel_W, \nmodel, \writef, \guardf}$.
\end{definition}

By looking at the definition of valid firing and using Lemma~\ref{lemma:query:adom} it's easy to realise that the set of states of \finite{W} is closed w.r.t.\ the relation induced by the valid firing. As we'll see later on, this enables us to focus on finite \ournet models.

\paragraph{Reachability on \ournets}
  Analogously to PN, given a \ournet $W$ and a set of goal states $G$, the reachability problem in \ournets amounts to check whether there is a path from the initial state $(\mrk_s, \assign_s)$ of $W$ to any of the goal states in $G$.

  Although such a definition is quite general, in practical applications one may be interested in the reachability of a specific marking $M_g$ of $W$: in this case, the goal is the set of states of $W$ with $M_g$ as the first component and any assignment in the second (by virtue of Definition~\ref{def:finite:dawnet} and Lemma~\ref{lemma:query:adom}, goal sets defined in this way are always finite).
In fact, in this paper we focus on what we call \emph{clean termination}: the reachability, from the initial state, of any of the final states $(\mrk_e, \assign_e)$, namely no tokens in any place except in the sink, which should instead contain a (single) one, and any assignment $\assign_e$.
We remark, however, that our technique is general and allows for solving reachability of any set of goal states.

% subsection _ournet (end)
%%

%!TEX root = ./main.tex

\section{Reachability: from theory to practice}
\label{sec:dawnet-fts}

In this section we illustrate our approach to verify reachability properties of \ournets by exploiting state-of-the-art techniques and tools. 
% Indeed, one of the contributions of this work is to test the performances of different paradigms and solutions able to solve planning problems in our setting. %% which is characterized to be data-intensive.
%%
The paradigms we selected are: action languages, classical planning, and model checking. The specific tools\footnote{Hereafter we will often refer to these tools as solvers.} we used are \clingo, \fastdw, and \nuxmv respectively, which in turn are paired with the representation languages \bclng, \pddllng, and \smv briefly summarised later in the paper.

In the following sections, given a \ournet $W$, we show how to encode $W$ in the specific language used by each paradigm/tools and formulate the reachability problem as a termination condition.
We then prove that such encodings are sound and complete, i.e., if a property is verified within one of the encodings, then it is indeed valid for the original \ournet, otherwise it is not.
Notably, we do so by providing the semantics of the above tools in terms of transition systems, which provide an intermediate representation that could support the extension of this work to new paradigms/tools whose semantics can be expressed likewise.
This also allows us to use the same structure for the sound and completeness proofs for the encodings, whose main conceptual steps are now described.

In order to exploit the notion of transition systems as a bridge between \ournet and the three chosen paradigms, we first need to 
connect transition systems and \ournets. To do so, we observe that the behaviour of a \ournet is possibly nondeterministic, as in each state, in general, more than one transition can fire: thus, while cases define \emph{a} possible \ournet evolution, a transition system, called \emph{reachability graph}, captures \emph{all} possible evolutions.

\begin{definition}[\ournet reachability graph] \label{def:fin-reach-graph}
Let $W = \tuple{\dmodel, \nmodel, \writef, \guardf}$ be a \ournet with $\dmodel=\tuple{\V, \Delta, \domf, \ordf}$ and $\nmodel=\tuple{P,T,F}$, then its reachability graph is a transition system $\RG_{W}=(T, \ol{S}, \ol{s}_0, \ol{\delta})$ where:
\begin{itemize}
\item $\ol{S}$ is the set of states, each representing a \ournet $(\mrk, \assign)$ state;
\item $\ol{s}_0 = (\mrk_s, \assign_s)$ is the initial state;
\item $\ol{S}$ and $\ol{\delta} \subseteq \ol{S} \times T \times \ol{S}$ are defined by induction as follows:
\begin{compactitem}
\item $\ol{s}_0 \in \ol{S}$;
\item if $(\mrk, \assign) \in \ol{S}$ and $(\mrk, \assign) \fire{t} (\mrk',\assign')$ is a valid firing then $s'=(\mrk',\assign') \in \ol{S}$ and $(s, t, s') \in \ol{\delta}$ (and we write $s \fire{t} s' \in \ol{\delta}$).
\end{compactitem}
\end{itemize}
\end{definition}

Let us consider the finite version of $W$ as defined in Definition~\ref{def:finite:dawnet}; clearly the initial state is among its states and, since the set of its states is closed w.r.t.\ the relation induced by the valid firings, we get the following: 

\begin{lemma}\label{lemma:dawnet:adom}
  Let $W = \tuple{\dmodel, \nmodel, \writef, \guardf}$ be a \ournet model and \finite{W} its finite version, then their reachability graphs are identical.\end{lemma}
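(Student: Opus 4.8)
The plan is to exploit the fact that $W$ and $\finite{W}$ differ \emph{only} in their data model ($\dmodel$ versus its restriction $\dmodel_W$), while sharing the very same net $\nmodel$, write function $\writef$, and guard function $\guardf$. Consequently both reachability graphs carry the same transition-label set $T$ and, since the initial assignment $\assign_s$ is empty, the same initial state $\ol{s}_0=(\mrk_s,\assign_s)$. The only clause of the valid-firing relation (Definition~\ref{def:dwnet:firing}) that mentions the data model is the guard test $\dmodel,\assign\models\guardf(t)$ in condition~2; conditions~1 and~3 refer solely to $\nmodel$ and $\writef$. Hence it suffices to show that the guard test yields the same answer under $\dmodel$ and under $\dmodel_W$ for every assignment that can actually arise along the inductive construction of the graph.

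First I would establish the invariant that every reachable assignment takes values inside $\adom(W)$, i.e.\ $\img(\assign)\subseteq\adom(W)$. This is a straightforward induction following the generation rules of Definition~\ref{def:fin-reach-graph}: the base case is immediate since $\img(\assign_s)=\emptyset$, and in the inductive step condition~3 of valid firing sets each $\assign'(v)$ either to the inherited value $\assign(v)$ (covered by the hypothesis) or to some $d\in\writef(t)(v)$, which lies in $\adom(W)$ by the definition of the active domain. The same argument applies verbatim in $\finite{W}$.

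Next I would show that the two valid-firing relations coincide on states whose assignment satisfies the invariant. Fix such a state $(\mrk,\assign)$ and a transition $t$. Since $\guardf(t)$ contributes its constants to $\adom(W)$ we have $\adom(\guardf(t))\subseteq\adom(W)$, and by the invariant $\img(\assign)\subseteq\adom(W)$, so $\adom(\guardf(t))\cup\img(\assign)\subseteq\adom(W)$. Applying Lemma~\ref{lemma:query:adom} to $\dmodel$ and, independently, to $\dmodel_W$, both guard evaluations reduce to evaluation over the restriction of the respective model to $\adom(\guardf(t))\cup\img(\assign)$; because restricting $\dmodel$ first to $\adom(W)$ and then to the smaller set $\adom(\guardf(t))\cup\img(\assign)$ yields the same data model as restricting $\dmodel$ directly to that smaller set, these two restrictions are identical. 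Therefore $\dmodel,\assign\models\guardf(t)$ iff $\dmodel_W,\assign\models\guardf(t)$, and since conditions~1 and~3 are unaffected, $(\mrk,\assign)\fire{t}(\mrk',\assign')$ holds in $W$ exactly when it holds in $\finite{W}$.

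Finally I would close the argument by a simultaneous induction on the number of firing steps: at depth $0$ both graphs contain only $\ol{s}_0$; assuming the reachable state sets and their induced transitions coincide up to depth $n$ and all carry assignments with image in $\adom(W)$, the preceding paragraph guarantees that the one-step successors generated in $W$ and in $\finite{W}$ are identical, and the invariant propagates to depth $n+1$. Taking the union over all $n$ yields the same $\ol{S}$ and $\ol{\delta}$, hence $\RG_W=\RG_{\finite{W}}$. I expect the main obstacle to be the bookkeeping in the third step, namely verifying that the two successive restrictions of $\dmodel$ collapse to a single restriction so that Lemma~\ref{lemma:query:adom} can be chained across $\dmodel$ and $\dmodel_W$; everything else is routine induction that merely transports the active-domain invariant along valid firings.
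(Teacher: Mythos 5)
Your argument is correct and follows essentially the same route the paper takes: the paper justifies the lemma by observing that the state set of $\finite{W}$ contains the initial state and is closed under valid firings, which rests on exactly the active-domain invariant and the chained application of Lemma~\ref{lemma:query:adom} that you spell out. Your version merely makes explicit the induction and the collapsing of the two successive domain restrictions, which the paper leaves implicit.
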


In virtue of the above lemma, in the rest of the paper when we mention the states of a \ournet $W$ we intend the states of  $\finite{W}$, that is, we restrict to the finite set of states defined by the active domain.

As we are going to show in the following sections, the semantics of the chosen paradigms can be captured by means of transition systems as well. Therefore, given a \ournet $W$, let us denote with $\bclng(W)$, $\pddllng(W)$ and $\smvlng(W)$ the encodings of $W$ in the respective languages and with $\TS_{\bclng(W)}$, $\TS_{\pddllng(W)}$ and $\TS_{\smvlng(W)}$ the transition systems generated by the above encodings.
A \emph{path} of a transition system $\RG_{W}$ is a sequence of states obtained by starting from the initial state of $\RG$ and traversing its transitions, thus it represents a possible evolution or behaviour (indeed, each path of $\RG$ is a trace of $W$ by definition). If we can show that for each path in $\TS_\ast$, with $\ast \in \set{\bclng(W), \pddllng(W), \smvlng(W)}$, there is an \emph{equivalent} (see later) path in $\RG$, then the encoding of the reachability problem for $W$ in the $\ast$ language is sound, as the tool only generates \ournet behaviours. Likewise, if for each path in $\RG$ there is a path in $\TS_\ast$, then the encoding is complete as all \ournet behaviours are captured by $\TS_\ast$. This property is called \emph{trace equivalence}~\cite{mc-book}, and is of utmost importance for us as trace-equivalent transition systems are indistinguishable by linear properties, which include reachability properties. In other words, if $\RG$ is trace equivalent with $\TS_{\ast}$, a reachability property in $\TS_{\ast}$ is true if and only if it is true in $\RG$, which allows us to use the above tools for solving our trace repair problem.

In order to formalise trace equivalence we shall first define what does it mean for two paths, generated by different transition system, to be equivalent.

Let $\pi : s_0 \fire{t_1} s_1 \fire{t_2} \ldots \fire{t_{n}} s_n$ and $\pi' : s_0' \fire{t_1'} s_1' \fire{t_2'} \ldots \fire{t_{n}'} s_n'$ be paths generated by $RG$ and $TS_{\ast}$ respectively. We say that $\pi$ and $\pi'$ are \emph{equivalent} if $s_0 = s_0'$ and for each $i \in \set{1, \ldots, n}$ we have that $s_i = s_i'$ and $t_i = t_i'$.
Technically however, states of $\TS_{\ast}$ encode information on the state of the net $(\mrk, \assign)$ in different ways, and possibly also use some additional information for technical reasons.
Let us then call $\enc_{\ast}(\mrk, \assign)$ the encoding of $(\mrk, \assign)$, and $\enc_{\ast}(t)$ the encoding of $t$ in the language $\ast \in \set{\clts, \fdts, \smvts}$. Then $\pi$ is equivalent to $\pi'$ (w.r.t. $\enc_{\ast}$) if $\enc_{\ast}(s_0) = s_0'$ and for each $i \in \set{1, \ldots, n}$ we have that $\enc_{\ast}(s_i) = s_i'$ and $\enc_{\ast}(t_i) = t_i'$.
%\todordminline{With queries?}

\begin{definition}[Trace equivalence]\label{def:trace-equivalence}
	Let $\RG$ be a \ournet reachability graph and $TS_{\ast}$ a transition system, then $\RG$ and $\TS_{\ast}$ are \emph{trace equivalent} iff there is an injective function $\enc_{\ast}$ from the states and transitions of $\RG$ to those of $\TS_{\ast}$ s.t.
	\begin{compactenum}
	\item for each path $s_0 \fire{t_1} s_1 \fire{t_2} \ldots \fire{t_{n}} s_n$ of $\RG$, $$\enc_{\ast}(s_0) \fire{\enc_{\ast}(t_1)} \enc_{\ast}(s_1) \fire{\enc_{\ast}(t_2)} \ldots \fire{\enc_{\ast}(t_{n})} \enc_{\ast}(s_n)$$ is a path of $\TS_{\ast}$; and
	\item for each path $s_0' \fire{t_1'} s_1' \fire{t_2'} \ldots \fire{t_{n}'} s_n'$ of $\TS_{\ast}$ there is a path $s_0 \fire{t_1} s_1 \fire{t_2} \ldots \fire{t_{n}} s_n$ of $\RG$ s.t.\ $\enc_{\ast}(s_0) = s'_0$, and $\enc_{\ast}(s_i) = s'_i$ $\enc_{\ast}(t_i) = t'_i$ for $1\leq i\leq n$.
	\end{compactenum}

\end{definition}

Recall that the reachability problem expresses whether, given a set of goal states $G$ of $W$, at least one of those can be reached from the initial one $(M_{s}, \eta_{s})$. Given Definition~\ref{def:fin-reach-graph}, it immediately follows that if $\RG$ and $\TS_{\ast}$ are trace equivalent, then, if a state is reachable in $\RG$, it is also reachable in $\TS_{\ast}$ by performing the very same transitions and vice-versa, thus they satisfy the same reachability properties.

% Although it would be possible to verify different reachability properties, i.e.\ different sets of final states, in this paper we focus on the so called \emph{clean termination}\todo{is that the proper name?}. That is, any of the states in which there are no tokens in all the places except the sink, which should instead contain a (single) one. \btext{Formally, the set of goal states $G$ will be the states of $W$ with $M_e$ as first component and any assignment in the second.}

Note that, in order to exploit the specific algorithms to verify reachability in the ``target'' system, we need also to show that the corresponding language for specifying the termination property is expressive enough to capture all and only the states corresponding to the final states.

% However, proving trace equivalence is quite convoluted. For this reason is customary to prove a stronger property, called bisimilarity~\cite{mc-book}, well-known to imply the former. 
% %%
% States of two transition systems are in \emph{bisimulation relation} if: \begin{compactenum}[\it(i)] \item they satisfy the same \emph{local} properties and \item every step of one can be matched by a step in the other and vice-versa.
%   \end{compactenum}
% %%
% We thus prove trace equivalence by showing that there exists such a relation between $RG$ and $TS_{\ast}$ and the initial states of $RG$ and $TS_\ast$ are there included.

% More concretely, the main steps to prove soundness and completeness of the three encodings with $RG$ are thus similar. To make the presentation more modular, we therefore give here the shared details and leave the specifics in the subsections, one for each encoding. We start from the definition of \emph{local bisimilarity}, which capture the fact that two states 

In the next sections, for each language $\ast \in \set{\clts, \fdts, \smvts}$ we 
prove that $\RG$ and $\TS_{\ast}$ are trace equivalent, and therefore they can be used to solve the above mentioned reachability problem. These sections are organised in three parts: in the first one the specific formalism is introduced, then the following ones provide the encoding and the proofs.

%%% Local Variables:
%%% mode: latex
%%% TeX-master: "main.tex"
%%% save-place: t
%%% End:
%!TEX root = ./main.tex

\section{The encoding in Action Languages}\label{sec:bc:encoding}

In this section we will show how the transition system underlying \ournet can be represented by means of the so called \emph{action languages}~\cite{lif99}, in order to use the ASP based solver \clingo to verify reachability properties. In our work we will be focusing on the \bclng language as introduced in~\cite{lee_action_2013}, but the same idea can be applied to different frameworks (e.g.\ \dlvk~\cite{eiter_dlvk:2003}).

\subsection{The \mbox{\normalfont \textsc{bc}} action language}

An action description \bdescr in the language \bclng includes a finite set of symbols: \emph{fluent}, and \emph{action constants}. Each fluent constant is associated to a finite set, of cardinality greater or equal than 2, called the domain. Boolean fluents have the domain equal to $\{\true, \false\}$. An atom is an expression of the form $f = v$, where $f$ is a fluent constant, and $v$ is an element of its domain.

A \bclng description \bdescr contains \emph{static} and \emph{dynamic laws}; the former are 
expressions of the type
\begin{equation}
	\label{eq:staticrule}
	% \begin{lstlisting}
 	  \text{\lstinline|$A_0\ $ \ if \ $A_1,\ldots, A_n\ $ ifcons \ $A_{n+1},\ldots, A_m$|}
% 	\end{lstlisting}
\end{equation}
while the second are of the form
\begin{equation}
	\label{eq:dynamicrule}
	% \begin{lstlisting}
  	\text{\lstinline|$A_0\ $ after \ $A'_1,\ldots, A'_n\ $  ifcons \ $A_{n+1},\ldots, A_m$|}
	%\end{lstlisting}
\end{equation}
with $0 \leq n\leq m$, where $A_0, A_{1},\ldots, A_m$ are atoms, and $A'_1,\ldots, A'_n$ can be atoms or action constants.

Intuitively, static laws assert that each state satisfying $A_1,\ldots, A_n$ and where $A_{n+1},\ldots, A_m$ can be assumed satisfies $A_0$ as well. In dynamic laws, instead, $A'_1,\ldots, A'_n$ must be satisfied in the preceding state. An action constant is satisfied if the action is being ``executed'' in a specific state.

Initial states can be specified with statements of the form
\begin{equation}
	\label{eq:initial}
	% \begin{lstlisting}
	\text{\lstinline| initially \ $A$|}
	%\end{lstlisting}
\end{equation}
where $A$ is an atom, that is, an expression of the form $f=v$.

Semantics of a \bclng description $\bdescr$ is given in terms of a translation into a sequence $P_\ell(\bdescr)$ of ASP programs where atoms are of the form $i : A$ with $0\leq i\leq \ell$, and $A$ being a \bclng fluent or action constant. Each static law \eqref{eq:staticrule} is translated as the set of ASP rules with strong negation
\begin{displaymath}
  i : A_0 \leftarrow i: A_1,\ldots, i : A_n, \naf \neg (i:A_{n+1}),\ldots, \naf \neg (i:A_m)
\end{displaymath}
where $0\leq i \leq\ell$. Each dynamic law \eqref{eq:dynamicrule} is translated as
\begin{displaymath}
  i+1 : A_0 \leftarrow i: A_1,\ldots, i : A_n, \naf \neg (i+1:A_{n+1}),\ldots, \naf \neg (i+1:A_m)
\end{displaymath}
where $0\leq i< \ell$. If $A_0$ is \lstinline|false| the corresponding rules are constraints.

Initial state constraints of the form \lstinline{initially f=v} force the value of fluent \lstinline{f} in the first state and are therefore translated as
\begin{displaymath}
  0:f=v
\end{displaymath}

In addition to the rules corresponding to static / dynamic laws and initial constraints, the ASP program contains rules to: \begin{inparaenum}[\it(i)] \item set the initial state by nondeterministically assigning the values of fluents
\[0:f=v \lor \neg (0:f=v)\]
for each fluent $f$ and elements $v$ of its domain; \item force total knowledge on action execution
\[i:a \lor \neg (i:a)\]
for each action constant $a$ and $i<\ell$; and finally \item impose existence and uniqueness of values for the fluents\end{inparaenum}
\begin{align*}
  &\leftarrow \naf (i:f=v_1), \ldots, \naf (i:f=v_k) \\
  \neg i:f=v &\leftarrow i:f=w
\end{align*}
for any fluent $f$, $v, v_1, \ldots, v_k, w$ elements of its domain, $v\neq w$, and $i\leq\ell$.

Given a stable model $M_\ell$ of $P_\ell(\bdescr)$, we indicate as $\bcassign{i}(M_\ell)$ the variable assignments in the corresponding state $0\leq i\leq\ell$:
\begin{displaymath}
    \bcassign{i}(M_\ell) = \{ f\mapsto o \mid f\in\F, \feqlit{i}{f}{o}\in M_\ell \}
\end{displaymath}
Note that the above rules on fluents guarantee that $\bcassign{i}(M_\ell)$ is well defined: total and functional.

Although not originally introduced in~\cite{lee_action_2013}, we introduce the implicit transition system induced by a \bclng description in order to relate it with the \ournet reachability graph as explained in Section~\ref{sec:dawnet-fts}.

\begin{definition}[\bclng transition system]\label{def:bc:trans}
    Let \bdescr be a \bclng description over the set \F of fluent, and \A of action constants. Let $S_\bdescr$ be the set of all \emph{total} assignments for fluents in \F satisfying the domain restrictions. Then its transition system $\TS_{\bclng(\bdescr)}=(\A, S, S_0, \delta)$ is defined as:
    \begin{itemize}
        \item $S_0\subseteq S_\bdescr$ is the set of initial states
        \[ S_0 = \{ \bcassign{0}(M_0) \mid \text{$M_0$ is a stable model of $P_0(\bdescr)$}\}\]
        \item $S\subseteq S_\bdescr$ and $\delta\subseteq S_\bdescr\times 2^\A \times S_\bdescr$ are the minimum sets s.t.:
        \begin{compactitem}
        \item $S_0 \subseteq S$;
        \item if $M_{\ell+1}$ is a stable model of $P_{\ell+1}(\bdescr)$ for some $\ell\geq 0$, and $\bcassign{\ell}(M_{\ell+1})\in S$; then $\bcassign{\ell+1}(M_{\ell+1})\in S$ and $(\bcassign{\ell}(M_{\ell+1}),\{a\in\A \mid \flit{\ell}{a}\in M_{\ell+1} \},\bcassign{\ell+1}(M_{\ell+1}))\in \delta$
        \end{compactitem}
     \end{itemize}
\end{definition}

While states of a \bclng transition system can be easily compared with \ournet reachability graph states, \bclng enables concurrent and ``empty'' transitions ($\delta\subseteq S_\bdescr\times 2^\A \times S_\bdescr$); therefore the encoding should take that into account (see Section~\ref{sec:bcencoding} for details).

The solver
%tool
 we use in Section \ref{sec:evaluation} to compute reachability on problems expressed with the \bclng language is the Coala compiler and \clingo ASP solver~\cite{gebserCoalaCompilerAction2010}. Within the tool, the final states to define the reachability problem can be specified using a set of (possibly negated) atoms using \lstinline|finally| statements analogous to the \lstinline|initially| previously described in Equation~\eqref{eq:initial}.

\subsection{Encoding \ournets in \mbox{\normalfont \textsc{bc}}}
\label{sec:bcencoding}

Given a \ournet $W$, its encoding in the \bclng, denoted as $\bclng(W)$, introduces:\begin{inparaenum}[\it(i)] \item a fluent for each variable in $W$, with the domain corresponding to the variable domain plus a special constant \nullv (representing the fact that a variable is not assigned); \item a boolean fluent for each place; and \item an action constant for each transition\end{inparaenum}. To simplify the notation we will use the same constant names. Moreover, without loss of generality we assume that guards are in conjunctive normal form.\footnote{We do not expect huge guards, therefore we ignore the potential exponential blowup due to the conversion to CNF of arbitrary queries.}

Let $\langle\dmodel,\nmodel,\writef,\guardf\rangle$ be a \ournet as defined in Definition~\ref{def:our-net}, where $\nmodel=\tuple{P,T,F}$ and $\dmodel=\tuple{\V, \Delta, \domf, \ordf}$. Let $\V'\subseteq\V$ be the (finite) set of variables appearing in the model, and $\adom(v)$ be the \emph{active domain} of $v\in\V'$, that is, $\adom(v) = \bigcup_{t\in T} \writef(t)(v)$.
The encoding $\bclng(W)$ starts by representing in \bclng key properties of transactions in a \ournet:

\begin{itemize}
	\item All fluents are ``inertial'', that is their value propagates through states unless their value is changed by the transitions:
\begin{align}
&\text{\lstinline|v=o after v=o ifcons v=o|} & & \text{for $\lstmath{v}\in\V'$ and $\lstmath{o}\in \adom(v)\cup\{\nullv\}$}\label{eq:bcEnc:var:inertia}\\
&\text{\lstinline|p=o after p=o ifcons p=o|} & & \text{for $\lstmath{p}\in P$ and $\lstmath{o}\in \{\lstmath{true}, \lstmath{false}\}$} \label{eq:bcEnc:inertiap}
\end{align}

\item Transitions modify the value of places and variables; for each transition $\text{\lstinline|t|}\in T$:
{\small
\begin{align}
&\text{\lstinline|p=false after t|} & & \text{for all \lstinline|p|}\in \pres{\lstmath{t}}\setminus\posts{\lstmath{t}} \label{eq:bcEnc:inp_update} \\
&\text{\lstinline|p=true after t|} & & \text{for all \lstinline|p|}\in\posts{\lstmath{t}}\setminus\pres{\lstmath{t}} \label{eq:bcEnc:outp_update}\\
&\text{\lstinline|v=d after t ifcons v=d|} & & \text{for all (\lstinline|v|,\lstinline|d|)}\in \writef(\lstmath{t}) \\
&\text{\lstinline|v=null after t|} & & \text{for all \lstmath{v} s.t.\ } \writef(\lstmath{t})(\lstmath{v}) = \emptyset \label{eq:bcEnc:var:del} \\
&\text{\lstinline|false after t ifcons v=d|} & & 
\begin{minipage}{5cm}
\text{for all $\lstmath{v}\in\V'$ s.t.\ $\writef(\lstmath{t})(\lstmath{v})\neq\emptyset$,}\\
\text{\qquad $\lstmath{d}\in \{\nullv\}\cup\adom(\lstmath{v})\setminus\writef(\lstmath{t})(\lstmath{v})$}
\end{minipage}
 \label{eq:bcEnc:var:comp}
\end{align}
}
Note that equation \eqref{eq:bcEnc:var:comp} ensures that after the execution of a transition the value of a variable modified by it must be among the legal values.

\item Transitions cannot be executed in parallel:
\begin{align}
&\text{\lstinline|false after t, s|} & & \text{for (\lstmath{t}, \lstinline|s|)}\in T\times T \text{ and \lstinline|t|${}\neq{}$\lstinline|s|}
\end{align}

\item Transitions cannot be executed unless input places are active:
\begin{align}
&\text{\lstmath{false after t, p=false}} & & \text{for \lstinline|t|}\in T \text{ and \lstinline|p|}\in \pres{\lstmath{t}}\label{eq:bcEnc:inplaces}
\end{align}

\item Initially, all the places but the start one are false and variables are unassigned
\begin{align}
&\lstmath{initially start=true}\\
&\lstmath{initially p=false} & & \text{for \lstinline|p|}\in P \text{ and \lstinline|p|}\neq \lstmath{start}\\
&\lstmath{initially v=null} & & \text{for \lstinline|v|}\in \V'
\end{align}

\item In \bclng a state progression does not require an actual action. To disallow this scenario we need to track states that have been reached without the execution of an action. This is achieved by including an additional boolean fluent \lstmath{trans} that is true if a state has been reached after a transition:
\begin{align}
&\lstmath{trans=true after t} & & \text{for $\lstmath{t}\in T$}\label{law:bcenc:trans} \\
&\lstmath{initially trans=true}
\end{align}
\end{itemize}

We now describe the encoding in \bclng of \ournet queries. Due to the rule-based essence of \bclng we consider a Disjunctive Normal Form characterisation of the \ournet queries defined in Definition~\ref{def:dawnet:qlang}): for any \ournet query $\Phi$ there is an equivalent normal form $\ol{\Phi}$\footnote{$\ol{\Phi}$ is not unique.} s.t.\ $\ol{\Phi}=\bigvee_{i=1}^k t_1^i\land\ldots \land t_{\ell_i}^i$ where each term $t_j^i$ is either of the form $v = o$ or $\neg\deff(v)$.
To prove the existence of such formula $\ol{\Phi}$ we consider the disjunctive normal form of $\Phi$, and to each term not in the form $v = o$ or $\neg\deff(v)$ we apply the following equivalences that are valid when considering the finite domain data model $\dmodel_W$ (which we can consider without loss of generality because of Lemma~\ref{lemma:dawnet:adom}):
\begin{align*}
  v_1 = v_2 &\equiv \bigvee_{o\in \domf(v_1)\cap\domf(v_2)} v_1 = o \land v_2 = o \\
  \neg (v_1 = v_2) &\equiv \bigvee_{(o_1,o_2)\in \domf(v_1)\times\domf(v_2), o_1\neq o_2} v_1 = o_1 \land v_2 = o_2 \\
  \neg (v = o ) &\equiv \bigvee_{o\in \domf(v)\setminus\{o\}} v = o \\
   v \leq o &\equiv \bigvee_{(o',o)\in \ordf(v)} v = o \\ 
   o \leq v &\equiv \bigvee_{(o,o')\in \ordf(v)} v = o \\ 
   v_1 \leq v_2 &\equiv \bigvee_{(o_1,o_2)\in \ordf(v_1)\cap\ordf(v_2)} v_1 = o_1 \land v_2 = o_2 \\ 
   \deff(v) &\equiv \bigvee_{o\in \domf(v)} v = o
\end{align*}

Given a \ournet query $\Phi$, and its Disjunctive Normal Form characterisation  $\ol{\Phi}=\bigvee_{i=1}^k t_1^i\land\ldots \land t_{\ell_i}^i$, we define the translation $\bcqlang{\Phi}$ of $\Phi$ as $\bigvee_{i=1}^k \bcqlang{t_1^i}\land\ldots \land \bcqlang{t_{\ell_i}^i}$; where $\bcqlang{v = o} \mapsto \lstmath{v = o}$ and $\bcqlang{\neg\deff(v)} \mapsto \lstmath{v = null}$.\footnote{This rather inefficient explicit representation is convenient for the proofs, and it can be optimised in the actual ASP implementation via grounding techniques and explicit representation of the orderings by means of ad hoc predicates.}

Having encoded the queries we can now encode the fact that transitions cannot be executed unless the guards are satisfied. This condition is encoded as a set of dynamic constraints preventing the execution unless at least one of the clauses of the guard is not satisfied.
For each transition $t$ s.t.\ $\guardf(t) \not\equiv true$ we consider $\bcqlang{\neg\guardf(t)}=\bigvee_{i=1}^k \bcqlang{t_1^i}\land\ldots \land \bcqlang{t_{\ell_i}^i}$ and include the following dynamic constraints:
\begin{equation}
    \begin{aligned}
        &\text{\lstinline!false after t, !} \bcqlang{t_1^1}, \ldots, \bcqlang{t_{\ell_1}^1}\\
        &\ldots \\
        &\text{\lstinline!false after t, !} \bcqlang{t_1^k}, \ldots, \bcqlang{t_{\ell_k}^k}
    \end{aligned}
\end{equation}
The set of final states for the reachability problem can be specified as the ones in which all place fluents are \false\ but the one corresponding to the sink:
\begin{equation}\label{eq:bcenc:final}	
\begin{aligned}
&\lstmath{finally sink=true}\\
&\lstmath{finally p=false} & & \text{for \lstinline|p|}\in P \text{ and \lstinline|p|}\neq \lstmath{sink}
\end{aligned}
\end{equation}

\subsection{Correctness and completeness of the {\normalfont\bclng} encoding}

Let $\bclng(\dwnmodel)$ be the \bclng encoding of a \ournet \dwnmodel. In this section we are going to show that $\TS_{\bclng(\dwnmodel)}$ and the reachability graph of \dwnmodel are trace equivalent as per Definition~\ref{def:trace-equivalence}. This section reports all the main  steps involved, while further technical details and proofs are presented in~\ref{apx:bc:encoding}.

To simplify the proofs we first relate sequences of valid firings in \dwnmodel\ of arbitrary length $\ell$ to stable models of $P_\ell(\bclng(\dwnmodel))$. In fact, the definition of $\TS_{\bclng(\dwnmodel)}$ (Definition~\ref{def:bc:trans}) is based on the notion of stable models.

To map sequences of valid firings into stable models we observe that stable models $P_\ell(\bclng(\dwnmodel))$ include both the fluent assignments and the information regarding the action constants that ``cause'' a specific state transition; therefore we split the mapping in two parts to separate these two distinct aspects:
\begin{definition}\label{def:dwnet:bc:map}
    Let $\rho = (\mrk_0,\assign_0)\fire{t_0}(\mrk_1,\assign_1)\fire{t_2}\ldots\fire{t_{\ell-1}}(\mrk_\ell,\assign_\ell)$ be a sequence of valid firings in \dwnmodel (with $\ell\geq 0$).\footnote{We consider $(\mrk_0,\assign_0)$ a sequence of size $0$.} The mapping $\dwnetbc_i(\rho)$ (for $0\leq i\leq \ell$) is defined as following:
    \begin{align*}
        \dwnetbc_i^\nu(\rho) = {}&\{ i:p=\true, \neg(i:p=\false)\mid p\in P, \mrk_i(p) > 0 \} \cup {} \\
        &\{ i:p=\false, \neg(i:p=\true)\mid p\in P, \mrk_i(p) = 0 \} \cup {} \\
        &\{ i:v=o, \neg(i:v=\nullv) \mid v\in\V', \assign_i(v) = o \} \cup {} \\
        &\{ i:v=\nullv \mid v\in\V', \assign_i(v) \text{ is undefined}\} \cup {} \\
        &\{ \neg(i:v=o) \mid v\in\V', o\in\adom(v), \assign_i(v) \neq o \text{ or } \assign_i(v) \text{ is undefined}\} \cup {} \\
        &\{ i:\lstmath{trans}=\true, \neg(i:\lstmath{trans}=\false)\} \\
        \dwnetbc_i^\tau(\rho) = {}&
        \begin{cases}
            \emptyset & \text{for $i\geq\ell$}\\
            \{ i:t_{i}\} \cup \{ \neg(i:t)\mid t\in T, t\neq t_{i} \} & \text{for $i<\ell$}
        \end{cases}\\
        \dwnetbc_i(\rho) = {}& \dwnetbc_i^\nu(\rho) \cup \dwnetbc_i^\tau(\rho)
    \end{align*}
\end{definition}

Note that Definition~\ref{def:dwnet:bc:map}, together with Definition~\ref{def:bc:trans}, enable us to say that the firing $(\mrk_i,\assign_i)\fire{t_i}(\mrk_{i+1},\assign_{i+1})$ in $\rho$ corresponds to the $\TS_{\bclng(\dwnmodel)}$ transition $\bcassign{i}(\dwnetbc_i^\nu(\rho))\fire{\{t\mid i:t \in \dwnetbc_i^\tau(\rho)\}}\bcassign{{i+1}}(\dwnetbc_{i+1}^\nu(\rho))$. Moreover, the definition of $\dwnetbc_i^\nu(\cdot)$ depends only on the state $(\mrk_i,\assign_i)$ and not on the selected path $\rho$, which provides just the ``witness'' for being a state of the reachability graph.

Before proving completeness and correctness we show that the states of the two transition systems preserve the satisfiability of $\guardlang[\dmodel_{\dwnmodel}]$ queries:

\begin{restatable}{lemma}{lemmaBCquery}\label{lemma:bc:query}
  Let  $\dwnmodel$ be a \ournet model, and $\rho = (\mrk_0,\assign_0)\fire{t_0}(\mrk_1,\assign_1)\fire{t_2}\ldots\fire{t_{\ell-1}}(\mrk_\ell,\assign_\ell)$ be a sequence of valid firings of \dwnmodel (with $\ell\geq 0$).
  For every state $(\mrk_i, \assign_i)$ ($0\leq i\leq \ell$) and query $\Phi\in\guardlang[\dmodel_{\dwnmodel}]$
    \begin{equation*}
        \dmodel_i, \assign_i \models \Phi \text{ iff }  \dwnetbc_i(\rho)\models i:\bcqlang{\Phi}
    \end{equation*}
  where $i:\bcqlang{\Phi}$ is the formula where $i:\cdot$ is placed in front of each term $\bcqlang{t}$, and $\dwnetbc_i(\rho)\models i:\bcqlang{t}$ iff $i:\bcqlang{t}\in\dwnetbc_i(\rho)$.
\end{restatable}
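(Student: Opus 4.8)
The plan is to prove the biconditional by first rewriting $\Phi$ into the disjunctive normal form $\ol{\Phi}$ that underlies the definition of $\bcqlang{\Phi}$, and then matching the two satisfaction relations clause by clause, bottoming out at the two atomic term shapes $v=o$ and $\neg\deff(v)$. First I would fix the data model: reading $\dmodel_i$ as the finite data model $\dmodel_{\dwnmodel}$ (which does not actually depend on $i$), Lemma~\ref{lemma:query:adom} guarantees that evaluating $\Phi$ over $\dmodel_{\dwnmodel}$ agrees with evaluating it over the abstract model, so the rewriting equivalences displayed immediately before the statement are available as genuine logical equivalences over $\dmodel_{\dwnmodel}$. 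Applying them to $\Phi$ yields $\ol{\Phi}=\bigvee_{i=1}^k t_1^i\land\ldots\land t_{\ell_i}^i$, every term of which has the form $v=o$ or $\neg\deff(v)$, and in particular $\dmodel_{\dwnmodel},\assign_i\models\Phi$ iff $\dmodel_{\dwnmodel},\assign_i\models\ol{\Phi}$.

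Next I would exploit the fact that $\ol{\Phi}$ and $\bcqlang{\Phi}$ share exactly the same propositional structure: $\bcqlang{\Phi}$ is obtained from $\ol{\Phi}$ by replacing each term $t$ with $\bcqlang{t}$ and prefixing $i:$, leaving the $\lor$ and $\land$ connectives untouched. Since $\models$ on the left distributes over $\land$ and $\lor$ in the standard way, and since the relation $\dwnetbc_i(\rho)\models i:(\cdot)$ unfolds over $\lor$ and $\land$ into the corresponding disjunction/conjunction of the atomic membership tests $i:\bcqlang{t}\in\dwnetbc_i(\rho)$, a short induction on the $\lor/\land$ structure reduces the whole statement to the atomic terms. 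Concretely it then suffices to show, for each term $t$ occurring in $\ol{\Phi}$, that $\dmodel_{\dwnmodel},\assign_i\models t$ iff $i:\bcqlang{t}\in\dwnetbc_i(\rho)$.

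The two base cases are discharged by unfolding the query semantics against the definition of $\dwnetbc_i^\nu(\rho)$. For $t=(v=o)$ we have $\bcqlang{v=o}=\lstmath{v=o}$; the query semantics give $\dmodel_{\dwnmodel},\assign_i\models v=o$ exactly when $\assign_i(v)$ is defined and equal to $o$, and by the third line of $\dwnetbc_i^\nu(\rho)$ the literal $i:v=o$ lies in $\dwnetbc_i(\rho)$ under precisely the same condition $\assign_i(v)=o$. Here I would also note that assignments only ever take values in $\adom(v)$, so the correspondence holds uniformly regardless of whether $o\in\adom(v)$. For $t=\neg\deff(v)$ we have $\bcqlang{\neg\deff(v)}=\lstmath{v=null}$; $\dmodel_{\dwnmodel},\assign_i\models\neg\deff(v)$ holds iff $\assign_i(v)$ is undefined, which is exactly the condition under which the fourth line of $\dwnetbc_i^\nu(\rho)$ places $i:v=\nullv$ in $\dwnetbc_i(\rho)$. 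Both base cases match, completing the reduction.

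I expect the main obstacle to lie not in the mechanical base cases but in Step~1: justifying that passing to $\ol{\Phi}$ preserves satisfaction at the fixed assignment $\assign_i$. This requires reading the rewriting equivalences as genuine equivalences over the finite data model $\dmodel_{\dwnmodel}$ (so that they hold at $\assign_i$ in particular), and it leans on Lemma~\ref{lemma:query:adom} to ensure the restriction to the finite active domain leaves the truth value of $\Phi$ unchanged. Some care is also warranted because $\ol{\Phi}$ is not unique; the argument uses only that \emph{some} such normal form exists and is equivalent to $\Phi$, which is precisely what the construction preceding the statement supplies.
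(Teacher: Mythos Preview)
Your proposal is correct and follows essentially the same approach as the paper: pass to the DNF normal form $\ol{\Phi}$, reduce via the shared propositional structure to the two atomic term shapes, and discharge each base case by unfolding the query semantics against Definition~\ref{def:dwnet:bc:map}. The paper's own proof is terser---it simply asserts ``without loss of generality $\Phi$ is in DNF'' and jumps to the base cases---whereas you spell out more carefully why the passage to $\ol{\Phi}$ is justified (via Lemma~\ref{lemma:query:adom} and the displayed equivalences over $\dmodel_{\dwnmodel}$) and make the $\lor/\land$ induction explicit; but the underlying argument is the same.
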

The proof is by induction on \guardlang[\dmodel_{\dwnmodel}].

We are now ready to state the completeness of \bclng encoding.

\begin{restatable}[Completeness of \bclng encoding]{lemma}{lemmaBCencComplete}\label{lemma:bcenc:complete}
    Let \dwnmodel be a \ournet and $\bclng(\dwnmodel)$ its encoding, then for any $\ell\geq 0$ if $\rho$ is a sequence of valid firings of \dwnmodel of length $\ell$, then $\bigcup_{i=0}^\ell\dwnetbc_i(\rho)$ is a stable model of $P_\ell(\bclng(\dwnmodel))$.
\end{restatable}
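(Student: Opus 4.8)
The plan is to show that $M := \bigcup_{i=0}^\ell \dwnetbc_i(\rho)$ is an answer set of $P_\ell(\bclng(\dwnmodel))$ by checking the two defining conditions of the stable-model semantics: that $M$ is a consistent model of every rule of $P_\ell(\bclng(\dwnmodel))$, and that $M$ is a \emph{minimal} model of the Gelfond--Lifschitz reduct $P_\ell(\bclng(\dwnmodel))^M$. First I would record consistency. Since $\rho$ is a sequence of valid firings, each intermediate $(\mrk_i,\assign_i)$ is a well-defined \ournet state, so at every step each place fluent carries exactly one truth value and each data fluent carries exactly one value of its domain or $\nullv$; hence $M$ never contains both $i{:}A$ and $\neg(i{:}A)$, and the existence/uniqueness rules for fluents are satisfied, with every literal $\neg(i{:}f{=}v)$ in $M$ obtainable from the actual value atom $i{:}f{=}w$ ($w\neq v$) via the uniqueness rule.

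Next I would verify that $M$ satisfies all rules, proceeding by rule family. The transition-effect rules \eqref{eq:bcEnc:inp_update}--\eqref{eq:bcEnc:outp_update} and the variable write/delete rules are satisfied because, by construction, $\dwnetbc_\ell^\tau$ fires exactly the one transition $t_i$ at each step $i<\ell$, and Definition~\ref{def:dwnet:firing} makes $(\mrk_{i+1},\assign_{i+1})$ reflect precisely the token moves and variable updates these rules impose. The inertia rules \eqref{eq:bcEnc:var:inertia}--\eqref{eq:bcEnc:inertiap} hold because unchanged places and variables keep their value (condition~3 of Definition~\ref{def:dwnet:firing}). For the constraints: the ``no parallel transitions'' and ``input places active'' constraints hold since $\dwnetbc_\ell^\tau$ fires a single enabled transition per step (condition~1 of valid firing); the domain-compatibility constraint \eqref{eq:bcEnc:var:comp} is never triggered because written variables receive legal values in $\writef(t_i)(v)$; and crucially the guard constraints are satisfied by Lemma~\ref{lemma:bc:query}, since $\dmodel_i,\assign_i\models\guardf(t_i)$ entails that no clause of $\bcqlang{\neg\guardf(t_i)}$ is satisfied in $\dwnetbc_i(\rho)$. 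Finally the initial-state facts match $\dwnetbc_0(\rho)$, because $(\mrk_0,\assign_0)=(\mrk_s,\eta_s)$ has a token only in $start$ and empty assignment.

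The hard part will be establishing minimality, i.e.\ that $M$ is the reconstruction of the reduct and admits no proper submodel, in the presence of the default-negation (ifcons) bodies and the disjunctive choice rules. Here I would exploit the fact, noted after Definition~\ref{def:dwnet:bc:map}, that $\dwnetbc_i^\nu$ depends only on the state $(\mrk_i,\assign_i)$, which yields a clean time-stratified argument. The key observation is that the reduct neutralises each inertia/ifcons rule for value $o$ exactly when $\neg(i{+}1{:}f{=}o)\notin M$, which by the definition of $\dwnetbc_{i+1}^\nu$ happens precisely when $o$ is the value of $f$ at step $i{+}1$: thus inertia survives in the reduct only for genuinely maintained values, while changed values are instead produced by the surviving transition-effect rule. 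I would then argue by induction on the time index that every positive atom of $M$ is founded---level-$0$ atoms by the initial-state facts, action atoms by the choice disjuncts, and each level-$(i{+}1)$ value atom either by an effect rule from $i{:}t_i$ or by the reduced inertia rule from its level-$i$ counterpart---and every strong-negation literal by the uniqueness rules.

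Because all these derivations strictly increase the time index, the support relation of the reduct is acyclic, so its least model is reconstructed layer by layer and coincides with $M$; no proper subset survives, since dropping a positive value atom violates the existence constraint and dropping a $\neg(i{:}t)$ or $\neg(i{:}f{=}v)$ literal falsifies the corresponding choice/uniqueness rule whose only other disjunct is absent from $M$. I anticipate that the main obstacle, and the place requiring the most care, is precisely this interaction between the reduct of the ifcons inertia laws and the mapping $\dwnetbc^\nu$; the remaining checks are routine given Lemma~\ref{lemma:bc:query} and Definition~\ref{def:dwnet:firing}.
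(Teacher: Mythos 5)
Your proposal is correct in substance and reaches the same two checkpoints as the paper --- rule-by-rule satisfaction via the valid-firing conditions and Lemma~\ref{lemma:bc:query}, followed by minimality of the Gelfond--Lifschitz reduct --- but it organises the minimality argument differently. The paper proceeds by induction on $\ell$ and invokes the \emph{Splitting Sets} theorem of Lifschitz and Turner: the atoms with time index at most $\ell$ form a splitting set whose bottom is exactly $P_\ell(\bclng(\dwnmodel))$, so the inductive step only has to show that the new layer $\dwnetbc_\ell^\tau\cup\dwnetbc_{\ell+1}^\nu$ is a stable model of the partial evaluation of the freshly added rules. You instead give a direct, global argument: consistency, satisfaction of every rule family, and then a foundedness argument exploiting the fact that every derivation in the reduct strictly increases the time index, so the support relation is acyclic and the model is pinned down layer by layer. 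The two routes encode the same underlying observation (time stratification); the splitting-set version buys modularity and delegates the bookkeeping to a known theorem, while yours is more self-contained but must handle the whole ground program at once, including the disjunctive choice rules, for which ``least model of the reduct'' is not quite the right notion --- you correctly fall back on showing no proper submodel exists. One small imprecision: the existence constraints $\leftarrow \naf(i{:}f{=}v_1),\ldots,\naf(i{:}f{=}v_k)$ are \emph{deleted} in the reduct (since $M$ contains one of the $i{:}f{=}v_j$), so they cannot be what blocks dropping a positive value atom; the correct blocker is the surviving fact, effect rule, or reduced inertia rule with that atom in its head, which your foundedness sentence already supplies, so nothing essential is missing.
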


To prove correctness we first prove few properties about the reachability graph $\TS_{\bclng(\dwnmodel)}$.

\begin{restatable}{lemma}{lemmaDWNetBCmap}\label{lemma:dwnet:bc:map}
    Let $\TS_{\bclng(\dwnmodel)}=(\A, S, S_0, \delta)$, then
    \begin{enumerate}
        \item $S_0 = \{ s_0 \}$;
        \item if $(s,A,s')\in \delta$ then $|A| = 1$.
   \end{enumerate}   
 \end{restatable}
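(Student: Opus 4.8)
The plan is to reason directly about the stable models of the ASP programs $P_\ell(\bclng(\dwnmodel))$, since by Definition~\ref{def:bc:trans} both $S_0$ and $\delta$ are defined in terms of them. The two claims are essentially independent: the first concerns $P_0(\bclng(\dwnmodel))$ alone, while the second concerns which action atoms $\flit{\ell}{a}$ can appear in a stable model of $P_{\ell+1}(\bclng(\dwnmodel))$ at level $\ell$.

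For the first claim I would observe that \emph{every} fluent of the encoding --- one per place, one per variable of $\V'$, and the fluent \lstmath{trans} --- is fixed at time $0$ by an \lstmath{initially} statement (\lstmath{start} true, all other places false, every variable $\nullv$, and \lstmath{trans} true). In the translation each such statement becomes a fact $\feqlit{0}{f}{v}$, and together with the uniqueness rules $\neg\feqlit{0}{f}{w}\leftarrow\feqlit{0}{f}{v}$ (for $w\neq v$) this pins down the value of every fluent at time $0$, so that the choice rules $\feqlit{0}{f}{v}\lor\neg\feqlit{0}{f}{v}$ are resolved in exactly one way. Since at $\ell=0$ there are no dynamic laws, no action-choice rules, and no guard constraints to satisfy, the forced assignment is consistent and yields a stable model; moreover $\bcassign{0}(M_0)$ reads only fluent atoms, hence it is the same for every stable model $M_0$. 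Thus $S_0$ is the singleton containing exactly the encoding of the \ournet initial state $(\mrk_s,\assign_s)$, which is $s_0$.

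For the second claim I would split $|A|=1$ into two inequalities, where $A=\{a\in\A\mid\flit{\ell}{a}\in M_{\ell+1}\}$. The upper bound $|A|\leq 1$ follows from the mutual-exclusion laws \lstinline|false after t, s| (for $t\neq s$), whose translation is the constraint $\leftarrow \flit{\ell}{t}, \flit{\ell}{s}$: no stable model of $P_{\ell+1}(\bclng(\dwnmodel))$ can then contain two distinct action atoms at level $\ell$. The lower bound $|A|\geq 1$ is where the fluent \lstmath{trans} does its work, and this is the step I expect to be the main obstacle, as it hinges on \lstmath{trans} being \emph{non-inertial}. Unlike places and variables (which carry the inertia laws \eqref{eq:bcEnc:inertiap} and \eqref{eq:bcEnc:var:inertia}), the only rule whose head is $\feqlit{\ell+1}{\lstmath{trans}}{\true}$ is the translation of \eqref{law:bcenc:trans}, namely $\feqlit{\ell+1}{\lstmath{trans}}{\true}\leftarrow\flit{\ell}{t}$, and there is no rule at all with head $\feqlit{\ell+1}{\lstmath{trans}}{\false}$. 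Consequently, if $A=\emptyset$ then neither value of \lstmath{trans} is derivable at level $\ell+1$, which violates the existence constraint $\leftarrow \naf\feqlit{\ell+1}{\lstmath{trans}}{\true}, \naf\feqlit{\ell+1}{\lstmath{trans}}{\false}$, contradicting that $M_{\ell+1}$ is a stable model. Hence at least one action atom occurs at level $\ell$, and with the upper bound we obtain $|A|=1$.

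I would close by noting that the only semantic facts used are the translation clauses for \lstmath{initially}, for static/dynamic laws and \lstmath{false}-headed constraints, together with the existence/uniqueness rules for fluents, all of which are fixed in the description of $P_\ell(\bdescr)$; no appeal to the \ournet side is needed. The delicate point throughout is to track which atoms are \emph{derivable} (have a supporting rule) rather than merely consistent, since stable-model reasoning is about minimal supported models --- and this is precisely what makes the non-inertiality of \lstmath{trans} force a genuine action at each step.
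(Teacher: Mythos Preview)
Your proposal is correct and follows essentially the same approach as the paper's proof: for part~1 you use that the \lstinline|initially| facts fix every fluent at time~$0$, and for part~2 you combine the mutual-exclusion constraints with the non-inertiality of \lstmath{trans} together with the fluent existence constraint. The only cosmetic difference is that the paper first argues that $\feqlit{i}{\lstmath{trans}}{\true}$ is in every stable model and then observes its sole support at level $\ell{+}1$ is an action atom, whereas you go directly to the contradiction from $A=\emptyset$; the underlying reasoning is the same.
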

By using this lemma, given a \ournet model $\dwnmodel$, we can compare the transition system corresponding to its reachability graph $\RG_W=(T, \ol{S}, \ol{s}_0, \ol{\delta})$ to $\TS_{\bclng(\dwnmodel)}$ (Definition~\ref{def:bc:trans}). This because it guarantees that there is a single initial state, and the transitions (the $\delta$ of Definition~\ref{def:bc:trans}) contain exactly one action each.\footnote{\bclng allows transitions without or with multiple parallel actions.} The following definition introduces the mapping from stable models to sequences of valid firings.

\begin{definition}
    For \emph{any} stable model $M_\ell$ of $P_\ell(\bclng(\dwnmodel))$ with $\ell\geq 0$ we also define the ``inverse'' relation $\bcdwnet_i(M_\ell) = (\mrk_i, \assign_i)$ for $0\leq i\leq \ell$:
    \begin{align*}
        \mrk_i = & \{ (p, 1)\mid p\in P, \bcassign{i}(M_\ell)(p)=\true \}\cup {} \\
        & \{ (p, 0)\mid p\in P, \bcassign{i}(M_\ell)(p)=\false  \} \\
        \assign_i = & \{ (v, o)\mid v\in V', \bcassign{i}(M_\ell)(v)=o, o\neq\nullv \}
    \end{align*}
    and the transition $\tau_i(M_\ell)$ for $0\leq i< \ell$:
    \begin{align*}
      \tau_i(M_\ell) = a && \text{s.t.\ $(i:a)\in M_\ell$}
    \end{align*}
    where $a$ is an action constant in $P_\ell(\bclng(\dwnmodel))$.
\end{definition}

\begin{restatable}[Correctness of \bclng encoding]{lemma}{lemmaBCencCorrect}\label{lemma:bcenc:correct}
    Let \dwnmodel be a \ournet and $\bclng(\dwnmodel)$ its encoding, then for any $\ell\geq 0$ if $M_\ell$ is a stable model of $P_\ell(\bclng(\dwnmodel))$, then 
        \begin{equation*}
          \rho = \bcdwnet_0(M_\ell) \fire{\tau_0(M_\ell)}\bcdwnet_1(M_\ell) \fire{\tau_1(M_\ell)}\ldots\fire{\tau_{\ell-1}(M_\ell)}\bcdwnet_\ell(M_\ell)  
        \end{equation*}
        is a sequence of valid firings of \dwnmodel of length $\ell$.
\end{restatable}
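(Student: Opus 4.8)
The statement is the converse of the completeness Lemma~\ref{lemma:bcenc:complete}, now reading a firing sequence \emph{out of} a stable model through the inverse maps $\bcdwnet_i$ and $\tau_i$. The plan is to fix an arbitrary stable model $M_\ell$ of $P_\ell(\bclng(\dwnmodel))$ and to check, for every $0\leq i<\ell$, that the extracted step $\bcdwnet_i(M_\ell)\fire{\tau_i(M_\ell)}\bcdwnet_{i+1}(M_\ell)$ satisfies the three conditions of Definition~\ref{def:dwnet:firing}. These conditions are \emph{local}, involving only the atoms of $M_\ell$ indexed by the two time points $i$ and $i+1$, so no induction on $\ell$ is required: it suffices to read off the rules active at step $i$ and to use the two defining properties of a stable model, namely that $M_\ell$ satisfies every rule (in particular every constraint) and that every atom it contains is supported. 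First I would settle well-definedness. Because the existence and uniqueness rules on fluents make $\bcassign{i}(M_\ell)$ total and functional, every place fluent has a unique Boolean value and every variable fluent a unique value in $\adom(v)\cup\{\nullv\}$; hence $\bcdwnet_i(M_\ell)=(\mrk_i,\assign_i)$ is a genuine \ournet state, with $\mrk_i$ a safe marking and $\assign_i$ a partial assignment into the correct domains. That $\tau_i(M_\ell)$ is a \emph{single} well-defined transition follows from the no-parallel constraint \lstmath{false after t, s} (for $t\neq s$), which permits at most one action, together with the \lstmath{trans} mechanism \eqref{law:bcenc:trans} that rules out action-free steps; this is precisely what Lemma~\ref{lemma:dwnet:bc:map} records, so I would invoke it to conclude that each step carries exactly one action.

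Fixing $i<\ell$ and abbreviating $t=\tau_i(M_\ell)$, I would next discharge the WF-net firing and the guard. For enabledness, constraint \eqref{eq:bcEnc:inplaces} forces every input place of $t$ to be marked at step $i$; since $\mrk_i$ is $0/1$-valued this is exactly $\pres{t}\subseteq\{p\mid \mrk_i(p)>0\}$. For the marking update I would argue by supportedness: the only rules able to justify a place value at $i+1$ are the effect laws \eqref{eq:bcEnc:inp_update} and \eqref{eq:bcEnc:outp_update} and the place-inertia law \eqref{eq:bcEnc:inertiap}, and reading these off reproduces verbatim the three cases of Definition~\ref{def:Firing}, so $\mrk_i\fire{t}\mrk_{i+1}$ is a WF-net valid firing. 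For the guard, the dynamic constraints assembled from $\bcqlang{\neg\guardf(t)}$ forbid any clause of the negated guard from holding at step $i$; feeding this through Lemma~\ref{lemma:bc:query} yields $\dmodel,\assign_i\models\guardf(t)$, which is condition~2 of Definition~\ref{def:dwnet:firing}.

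The delicate part, and the step I expect to be the main obstacle, is the assignment update (condition~3), because inertia and the write effects compete over the same variable and only the minimality of the stable model adjudicates between them. I would split $\V'$ according to whether $v$ is deleted, written, or untouched by $t$, so as to reconstruct the sets $\textsc{del}$, $\textsc{wr}$ and $\textsc{old}/\textsc{new}/\textsc{overwr}$ of Definition~\ref{def:dwnet:firing}. For a deleted variable the definite law \eqref{eq:bcEnc:var:del} forces $v=\nullv$ at $i+1$; through the uniqueness rule this yields, at step $i+1$, $\neg(v=o)$ for every $o\neq\nullv$, defeating the \lstmath{ifcons} guard of the variable-inertia law \eqref{eq:bcEnc:var:inertia}, so deletion wins and $v\notin dom(\assign_{i+1})$. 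For a written variable, totality forces some value, the completeness constraint \eqref{eq:bcEnc:var:comp} eliminates every value outside $\writef(t)(v)$, and the write law \lstmath{v=d after t ifcons v=d} supplies the support for the admissible ones; the care here is to show, from supportedness and minimality of $M_\ell$, that exactly one admissible value is chosen and that the old value is not spuriously carried over by inertia. For an untouched variable only \eqref{eq:bcEnc:var:inertia} applies, copying the value unchanged. Collecting the three cases reconstructs both $dom(\assign_{i+1})=dom(\assign_i)\cup\textsc{wr}\setminus\textsc{del}$ and the value equations of condition~3, which completes the validity of the firing at step $i$; ranging over all $i<\ell$ then shows that $\rho$ is a sequence of valid firings of length $\ell$.
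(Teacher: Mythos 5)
Your proof is, in substance, the paper's: both verify the conditions of Definition~\ref{def:dwnet:firing} for each extracted step by reading off the constraints and the effect/inertia rules of the encoding, using Lemma~\ref{lemma:dwnet:bc:map} for well-definedness (exactly one action per step, total and functional fluent valuations) and Lemma~\ref{lemma:bc:query} for the guards; your treatment of the assignment update (deletion forced by the rule from \eqref{eq:bcEnc:var:del} plus uniqueness, admissibility enforced by the constraint from \eqref{eq:bcEnc:var:comp} plus totality, inertia for untouched variables) matches the paper's item~4 essentially clause for clause. The only structural divergence is that the paper argues by induction on $\ell$ whereas you observe that the firing conditions are local to two consecutive time points and check them directly for every $i<\ell$. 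That observation is sound -- the paper's inductive step is nothing more than the same local check applied to the last firing -- so the reorganization costs nothing for the validity of the individual firings.

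What you have dropped, however, is the content of the paper's base case: from the \lstmath{initially} facts $0:start=\true$, $0:p=\false$ for $p\neq start$, and $0:v=\nullv$, the paper concludes that $\bcdwnet_0(M_\ell)$ is precisely the initial state $(\mrk_s,\assign_s)$. The literal wording of the lemma (``a sequence of valid firings'') does not force the sequence to start there, but this fact is needed where the lemma is used: condition~2 of Definition~\ref{def:trace-equivalence} in Theorem~\ref{th:bc-sound-comp} requires every path of $\TS_{\bclng(\dwnmodel)}$ to be matched by a path of the reachability graph, and paths of $\RG$ start at $\ol{s}_0$ by definition. Add one sentence reading the initial state off the \lstmath{initially} facts (they pin down every fluent at time $0$, so every stable model agrees with $(\mrk_s,\assign_s)$ there) and your argument covers everything the paper's proof does.
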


We are now ready to state the main result of this section, namely trace equivalence between $\bctrans(\bclng(\dwnmodel))$ and $RG=(T, \ol{S}, \ol{s}_0, \ol{\delta})$. We do that by instantiating the abstract notion of $\enc_{\ast}$ introduced in Section~\ref{sec:dawnet-fts} to \bclng, and by using the completeness and correctness lemmata~\ref{lemma:bcenc:complete} and \ref{lemma:bcenc:correct}.

\begin{theorem}\label{th:bc-sound-comp}
  Let $\dwnmodel$ be a \ournet model and $\RG=(T, \ol{S}, \ol{s}_0, \ol{\delta})$ its reachability graph, then $\TS_{\bclng(\dwnmodel)}$ and $\RG$ are trace equivalent.
\end{theorem}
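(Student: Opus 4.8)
The plan is to exhibit the injective map $\enc_{\bclng}$ required by Definition~\ref{def:trace-equivalence} and then verify its two path-transfer conditions using the completeness and correctness lemmata. On states I would set $\enc_{\bclng}(\mrk,\assign)$ to be the total fluent assignment sending each place fluent $p$ to $\true$ iff $\mrk(p)>0$, each variable fluent $v$ to $\assign(v)$ when defined and to $\nullv$ otherwise, and the auxiliary fluent $\lstmath{trans}$ to $\true$; this is exactly $\bcassign{i}(\dwnetbc_i^\nu(\rho))$, which by the remark following Definition~\ref{def:dwnet:bc:map} depends only on the state $(\mrk,\assign)$ and not on the witnessing firing sequence $\rho$, so the map is well defined. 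On transitions I would send a \ournet transition $t$ to the singleton action set $\{t\}$; by Lemma~\ref{lemma:dwnet:bc:map}(2) every transition of $\TS_{\bclng(\dwnmodel)}$ carries exactly one action, so this lands in the correct codomain. Injectivity is then immediate: two distinct states of $\RG$ differ in some place marking or in some variable value, and since the net is safe the boolean place fluents and the variable fluents record these faithfully, so their images differ; distinct transitions keep distinct names.

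For condition~(1) I would take an arbitrary path $s_0\fire{t_1}s_1\cdots\fire{t_n}s_n$ of $\RG$, which by Definition~\ref{def:fin-reach-graph} is a sequence $\rho$ of valid firings starting from the initial state $(\mrk_s,\assign_s)$. Completeness (Lemma~\ref{lemma:bcenc:complete}) yields that $\bigcup_{i=0}^n\dwnetbc_i(\rho)$ is a stable model of $P_n(\bclng(\dwnmodel))$, and the remark following Definition~\ref{def:dwnet:bc:map} turns each firing $(\mrk_i,\assign_i)\fire{t_{i+1}}(\mrk_{i+1},\assign_{i+1})$ into the transition $\bcassign{i}(\dwnetbc_i^\nu(\rho))\fire{\{t_{i+1}\}}\bcassign{i+1}(\dwnetbc_{i+1}^\nu(\rho))$ of $\TS_{\bclng(\dwnmodel)}$. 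Chaining these is precisely the $\enc_{\bclng}$-image of the original path, and its first state is forced by the initial laws to be the unique element of $S_0$ (Lemma~\ref{lemma:dwnet:bc:map}(1)); hence it is a path of $\TS_{\bclng(\dwnmodel)}$.

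For condition~(2) I would start from a path $s_0'\fire{A_1}s_1'\cdots\fire{A_n}s_n'$ of $\TS_{\bclng(\dwnmodel)}$. By Lemma~\ref{lemma:dwnet:bc:map}, $s_0'$ is the unique initial state and each $A_j=\{t_j\}$ is a singleton; moreover, since exactly one action fires at every step, law~\eqref{law:bcenc:trans} together with the initial constraint forces $\lstmath{trans}=\true$ in every $s_i'$. The step flagged here is to recover a single stable model $M_n$ of $P_n(\bclng(\dwnmodel))$ with $\bcassign{i}(M_n)=s_i'$ and $\tau_i(M_n)=t_{i+1}$; Correctness (Lemma~\ref{lemma:bcenc:correct}) then produces the firing sequence $\rho=\bcdwnet_0(M_n)\fire{\tau_0(M_n)}\cdots\fire{\tau_{n-1}(M_n)}\bcdwnet_n(M_n)$, which is a path of $\RG$ because its initial state is $(\mrk_s,\assign_s)$. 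Since $\bcdwnet_i$ reads back exactly the marking and assignment encoded in $s_i'$ and $\enc_{\bclng}$ re-encodes them, with the $\lstmath{trans}=\true$ component matching by the observation above, we obtain $\enc_{\bclng}(\bcdwnet_i(M_n))=s_i'$ and $\enc_{\bclng}(\tau_i(M_n))=A_{i+1}$, which is exactly the matching required by Definition~\ref{def:trace-equivalence}(2).

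The main obstacle I anticipate is precisely the recovery step in condition~(2): passing from a path of $\TS_{\bclng(\dwnmodel)}$ --- whose transitions are, by Definition~\ref{def:bc:trans}, each justified by a possibly different stable model at its own horizon --- to one global stable model of $P_n(\bclng(\dwnmodel))$ realizing the entire path. This ``splicing'' relies on the Markovian shape of the \bclng-to-ASP translation: dynamic laws relate only consecutive steps $i$ and $i+1$, static laws and the existence/uniqueness constraints are intra-step, and initial laws constrain only step $0$; consequently the stable models of $P_n$ are exactly the length-$n$ chains of one-step transitions, so that per-step witnesses compose into a horizon-$n$ witness. I expect this locality argument to be the technically delicate ingredient, and to be where the detailed proofs of~\ref{apx:bc:encoding} are spent; everything else reduces to bookkeeping through the mappings $\dwnetbc_i$ and $\bcdwnet_i$.
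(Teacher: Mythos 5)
Your proposal matches the paper's proof in all essentials: the same encoding $\enc_{\clts}$ (place fluents read off the marking, variable fluents read off the assignment with \nullv for undefined ones, each transition sent to the singleton action set), the same key observations that $\enc_{\clts}((\mrk_i,\assign_i))=\bcassign{i}(\dwnetbc_i^\nu(\rho))$ and $\enc_{\clts}(\bcdwnet_i(M_\ell))=\bcassign{i}(M_\ell)$, and the same reduction to Lemmata~\ref{lemma:bcenc:complete} and~\ref{lemma:bcenc:correct} together with Lemma~\ref{lemma:dwnet:bc:map}. (You are in fact slightly more careful than the paper in adding the $\lstmath{trans}$ fluent to the image, which is needed for $\enc_{\clts}$ to land in the \emph{total} assignments that constitute states of $\TS_{\bclng(\dwnmodel)}$.) The one point where you go beyond the paper --- which is silent here --- is the recovery step in condition~(2). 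Your route, splicing the per-transition witness stable models into a single stable model of $P_n(\bclng(\dwnmodel))$, is workable but requires proving the composition property you only sketch (the splitting-set decomposition used in~\ref{apx:bc:encoding} is exactly the tool for it). It can, however, be bypassed entirely: induct on the length of the path in $\TS_{\bclng(\dwnmodel)}$; the last transition $(s_{n-1}',A_n,s_n')$ is witnessed by \emph{some} stable model $M_{\ell+1}$ with $\bcassign{\ell}(M_{\ell+1})=s_{n-1}'$, Lemma~\ref{lemma:bcenc:correct} turns its final step into a valid firing out of $\bcdwnet_\ell(M_{\ell+1})$, and since $\bcdwnet_\ell(M_{\ell+1})$ depends only on $s_{n-1}'$ while validity of a firing depends only on its source state, that firing can simply be appended to the inductively constructed path of $\RG$. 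No global stable model is needed.
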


\begin{proof}
  To demonstrate trace equivalence we define the mapping $\enc_{\clts}(\cdot)$ as follows: for each $(\mrk,\assign)$
  \begin{align*}
    \enc_{\clts}((\mrk,\assign)) = {}&\{ (p,\true)\mid (p,1)\in\mrk \}\cup {} \\
    & \{(p,\false)\mid (p,0)\in\mrk\}\cup {} \\
    & \{(v,\nullv)\mid v \text{ not in the domain of }\assign\}\cup {} \\
    & \assign
  \end{align*}
   and $\enc_{\clts}(t) = \{t\}$ for each transition in $T$.
   
  Note that, by construction, for each sequence of valid firings $(\mrk_0,\assign_0)\fire{t_0}(\mrk_1,\assign_1)\fire{t_2}\ldots\fire{t_{\ell-1}}(\mrk_\ell,\assign_\ell)$,   $\enc_{\clts}((\mrk_i,\assign_i))=\bcassign{i}(\dwnetbc_i^\nu(\rho))$, and for any stable model $M_\ell$ of $P_\ell(\bclng(\dwnmodel))$ $\enc_{\clts}(\bcdwnet_i(M_\ell)) = \bcassign{i}(M_\ell)$.
  Therefore the proof is a consequence of the  lemmata \ref{lemma:bcenc:complete} and \ref{lemma:bcenc:correct}.
\end{proof}

The trace equivalence demonstrated by Theorem~\ref{th:bc-sound-comp} enables the use of ASP based planning to verify the existence of a plan satisfying the termination conditions expressed in Equation~\eqref{eq:bcenc:final}. 
Moreover, it is immediate to verify that the set of final states as characterised in Equation \eqref{eq:bcenc:final} corresponds to all and only the final states of the \ournet model mapped via $\enc_{\clts}(\cdot)$.

%In the case that the model is a trace workflow $\Wtau$, then the plan would exhibit a valid completion of the trace $\tau$; while the failure of finding such a plan would imply that the trace do not admit any completion.

%!TEX root = ./main.tex

\section{The encoding in Classical Planning}\label{sect:pddl:enc}

Automated planning has been widely employed as a reasoning framework to verify reachability properties of finite transition systems. It is therefore natural to include it among the verification tools that we considered for our evaluation.

In order to be as general as possible without committing ourselves to a specific planner we based our encoding on the de facto standard, namely PDDL~\cite{ghallab_pddl_1998}, however to simplify the formal discussion we will adopt the \emph{state-variable} representation~\cite{ghallab_automated_2016} instead of the more common \emph{classical}. There is no loss of generality, since it is known that state-variable planning domains can be translated to classical planning domains with at most a linear increase in size.

\subsection{State-Variable Representation of Planning Problems}

A detailed description of the planning formalism is outside the scope of this paper and the reader is referred to the relevant literature -- e.g.~\cite{ghallab_automated_2016} -- however in this section we briefly outline the main concepts for our setting.

Given a set of objects $B$, a \emph{state variable} is a symbol $v$ and an associated domain $Range(v)\subseteq B$.\footnote{In this context we restrict to state variables as $0$-ary functions.} The (finite) set of all the variables is denoted by $X$, a \emph{variable assignment} over $X$ maps each variable $v$ to one of the objects in $Range(v)$. In addition, we will consider a set of \emph{rigid relations} $R$ over $B$ that represent non-temporal relations among elements of the domain.

An \emph{atom} is a term of the form \true, $r(z_1,\ldots, z_n)$, or $v = z_1$; where $r\in R$, $v\in X$, and $z_i$ are either constants or parameters.\footnote{We use the term \emph{parameter} to denote mathematical variables, to avoid confusion with state variables.} A \emph{literal} is a positive or negated atom.

An \emph{action template} is a tuple $\alpha = (\pddlhead(\alpha), \pddlpre(\alpha), \pddleff(\alpha))$: $\pddlhead(\alpha)$ is an expression $act(z_1,\ldots,z_k)$ where $act$ is an (unique) action name and $z_i$ are parameters with an associated finite $Range(z_i)\subseteq B$, $\pddlpre(\alpha)$ is a set of literals (the preconditions), and $\pddleff(\alpha)$ is a set of assignments $v\leftarrow z$ where $z$ is a constant or parameter. All parameters in $\alpha$ must be included in $\pddlhead(\alpha)$. A \emph{state-variable action}  (or just action) is a ground template action where the parameters are substituted by constants from their corresponding ranges.

\begin{definition}\label{def:pddl:domain}
    A \emph{classical planning domain} is a triple $\Sigma = (S, A, \gamma)$ where
    \begin{itemize}
        \item $S$ the set of variable assignments over $X$;
        \item $A$ is the set of all actions;
        \item $\gamma : S'\times A' \rightarrow S$ with $S'\times A' = \{ (s,a)\in S\times A \mid \pddlpre(a) \text{ is satisfied in } S\}$ and
        \begin{align*}
            \gamma(s,a) = {}& \{ (v,w) \mid v \leftarrow w \in \pddleff(a) \}\\
            & \{ (v,w)\in s \mid \forall w'\in B\; v \leftarrow w' \not\in \pddleff(a) \}.
        \end{align*}
    \end{itemize}
\end{definition}

\begin{definition}
    A \emph{plan} is a finite sequence of actions $\pi = \langle a_1, \ldots, a_n \rangle$, or the empty plan $\langle \rangle$. A plan is \emph{applicable} to a state $s_0$ if there are states $s_1,\ldots, s_n$ s.t.\ $\gamma(s_{i-1}, a_i) = s_i$ (empty plans are always applicable).
    
    If $\pi$ is applicable to $s$, we extend the notation for $\gamma$ to include plans $\gamma(s, \pi)$: where $\gamma(s, \langle\rangle) = s$ and $\gamma(s, \pi.a) = \gamma(\gamma(s, \pi), a)$.\footnote{$\langle a_1,\ldots, a_n\rangle.a$ is defined as $\langle a_1,\ldots, a_n, a\rangle$.}
\end{definition}

\begin{definition}\label{def:pddl:plan}
    A \emph{state-variable planning problem} is a triple $P = (\Sigma, s_o, g)$ where $\Sigma$ is a planning domain, $s_0$ the initial state, and $g$ a set of ground literals called the goal.
    
    A \emph{solution} for $P$ is a plan $\pi$ s.t.\ $\gamma(s_0, \pi)$ satisfies $g$.
\end{definition}

To simplify the encoding introduced in Section~\ref{sec:pddl-encoding} we will consider an extended language for the precondition of action templates. This extended language enables the use of boolean combinations as well as atoms in which state variables can be used in place of constants, that is, $v = v'$, or $r(v_1,\ldots, v_n)$. Planning domains using the extended language can be encoded into classical planning domains by considering the disjunctive normal form of the original preconditions and introducing a new template for each conjunct.\footnote{This transformation might introduce an exponential blow-up of the domain due to the DNF transformation. However, this can be prevented in the overall conversion from \ournet by imposing the DNF restriction in the original model as well.}
The extended syntax concerning state variables can be dealt with by introducing a (new) parameter for each variable in the precondition appearing within a rigid relation or in the left hand side of an equality. This new parameter $z_v$ for the variable $v$ will be substituted to the original occurrence of $v$ and the new term $v = z_v$ is included in the precondition.

The above definitions can be re-casted for the extended language, and plans for the extended and corresponding classical planning domains can be related via a surjection projecting the ``duplicated'' template (introduced by the DNF conversion) into the original ones.

\subsection{Encoding \ournets in {\normalfont\pddllng}}
\label{sec:pddl-encoding}

The main idea behind the encoding of a \ournet $W$ in $\pddllng$, denoted with $\pddllng(W)$, is similar to the one presented in Section~\ref{sec:bcencoding}, where state variables are used to represent both places and \ournet variables, and actions represent transitions. The main differences lay in the fact that: (i) nondeterminism in the assignments must be encoded by means of grounding of action schemata; and (ii) the language for preconditions does not include disjunction, and thus more than a schema may correspond to a single transition.

Let $\dwnmodel = \langle\dmodel,\nmodel,\writef,\guardf\rangle$ be a \ournet as defined in Definition~\ref{def:our-net}, where $\nmodel=\tuple{P,T,F}$ and $\dmodel=\tuple{\V, \Delta, \domf, \ordf}$. $\V'\subseteq\V$ be the (finite) set of variables appearing in the model, and $\adom(v)$ be the \emph{active domain} of $v\in\V'$, that is, $\adom(v) = \bigcup_{t\in T} \writef(t)(v)$.
The encoding $\pddllng(\dwnmodel)$ is defined by introducing: a state variable for each $v\in \V'$, with domain corresponding to the domain of $v$ plus a special constant \nullv; and a boolean state variable for each place $p\in P$. To simplify the notation we will use the same constant names.
In addition to the state variables we introduce a rigid binary relation $\ordf$ to encode the partial ordering of Definition~\ref{def:dmodel}:
\begin{align}
    \ordf &= \bigcup_{\Delta_i \in \Delta} \{ (o, o')\in \Delta_i^2\mid o\leq_{\Delta_i} o'\}
\end{align}
and unary relations $wr_{t,v}$ to encode the range $\writef(t)(v)$ of Definition~\ref{def:our-net}:
\begin{equation}\label{eq:pddl:parameter:domain}
 \begin{aligned}
    wr_{t,v} &= \{ o \mid o\in\writef(t)(v) \} && \text{for all $t\in T$ and $v\in\V$ s.t.\ $\writef(t)(v)\neq\emptyset$} \\
    wr_{t,v} &= \{ \nullv \} && \text{for all $t\in T$ and $v\in\V$ s.t.\ $\writef(t)(v)=\emptyset$}
\end{aligned}
\end{equation}
   
Note that Lemma~\ref{lemma:dawnet:adom} guarantees the finiteness of $\ordf$, while $\writef(t)(v)$ are finite by definition.

Given a \ournet query $\Phi$  (Definition~\ref{def:dawnet:qlang}) we denote with $\pddlqlang{\Phi}$ its translation in the \pddllng preconditions, defined as:
\begin{equation}\label{def:enc:pddlqlang}
\begin{aligned}
    \pddlqlang{true} &= \true \\
    \pddlqlang{\deff(v)} &= \neg (v = \nullv ) \\
    \pddlqlang{v = t_2} &=  \neg (v = \nullv ) \land (v = t_2) \\
    \pddlqlang{t_1 \leq t_2} &=  \ordf(t_1, t_2) \\
    \pddlqlang{\neg \Phi_1} &=  \neg \pddlqlang{\Phi_1}\\
    \pddlqlang{\Phi_1 \land \Phi_2} &=  \pddlqlang{\Phi_1}\land \pddlqlang{\Phi_2} \\
    \end{aligned}  
\end{equation}

Analogously to the previous section we introduce an action template for each transition in $\dwnmodel$ where preconditions, in addition to the guard, include the marking of the network (token in the places). The selection of the actual value to be assigned to the updated variables is selected by means of action parameters (one for each variable).
Formally, 
	% \label{def:pddl:transition:template}
 let $t$ be a transition in $\dwnmodel$, and $\{{v_1}, \ldots, {v_k}\}$ the variables in the domain of $\writef(t)$; then $\pddllng(\dwnmodel)$ includes the action template $\alpha_t = (\pddlhead(\alpha_t), \pddlpre(\alpha_t), \pddleff(\alpha_t))$ defined as follows:
\begin{equation}\label{eq:pddl:transition:template}
\begin{aligned}
    \pddlhead(\alpha_t) &= t(z_{v_1}, \ldots, z_{v_k}) && \\
    \pddlpre(\alpha_t) &= \pddlqlang{\guardf(t)} \land \bigwedge_{v\in\{{v_1}, \ldots, {v_k}\}} wr_{t,v}(z_v) \land \bigwedge_{p\in \pres{t}} (p = \true)\\
    \pddleff(\alpha_t) &= \bigwedge_{v\in\{{v_1}, \ldots, {v_k}\}} (v = z_v )\land\bigwedge_{p\in \pres{t}\setminus\posts{t}} (p = \false) \land \bigwedge_{p\in\posts{t}} (p = \true) 
\end{aligned}
\end{equation}

Analogously to the \bclng case, the goal of the planning problem is described by the set of ground literals corresponding to the state in which all place fluents are \false\ but the one corresponding to the sink:
\begin{equation}\label{eq:pddl:final}	
\begin{aligned}
&sink = \true\\
&p=\false & & \text{for }p\in P \setminus \{sink\}
\end{aligned}
\end{equation}

\subsection{Correctness and completeness of the {\normalfont\pddllng} encoding}

To show that the transition systems underlying a \ournet $W$ and its encoding $\pddllng(W)$ are equivalent we define a bijective mapping between the states of the two systems and we show that for each valid firing in $W$ there is a ground action ``connecting'' the image of the states in $\gamma$ and vice-versa.

Let $(\mrk,\assign)$ be an arbitrary state of $\dwnmodel$, we define the mapping $\dwnetpddl$ to states of  $\pddllng(\dwnmodel)$ as:
\begin{equation}\label{def:dawnet:pddl:map}
    \begin{aligned}
        \dwnetpddl(\mrk,\assign) = {} & \{ v\mapsto v[\assign] \mid v\in\V, v[\assign]\neq v \} \cup {}\\
        & \{ v\mapsto \nullv \mid v\in\V, v[\assign] = v \} \cup {} \\
        & \{ p\mapsto \true \mid p\in P, \mrk(p)>0 \} \cup {} \\
        & \{ p\mapsto \false \mid p\in P, \mrk(p)=0 \}
    \end{aligned}
\end{equation}
Since we restrict to 1-safe nets, it is easy to show that $\dwnetpddl$ is a bijection; moreover:
\begin{lemma}\label{lemma:pddl:query}
  Let  $\dwnmodel$ be a \ournet model, for every state $(\mrk, \assign)$ of $\dwnmodel$ and query $\Phi\in\guardlang[\dmodel_{\dwnmodel}]$
    \begin{equation*}
        \dmodel, \assign \models \Phi \text{ iff }  \dwnetpddl(\mrk,\assign)\models \pddlqlang{\Phi}
    \end{equation*}
\end{lemma}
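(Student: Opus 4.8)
The plan is to prove the equivalence by structural induction on the query $\Phi\in\guardlang[\dmodel_{\dwnmodel}]$, following the same pattern used for the \bclng encoding in Lemma~\ref{lemma:bc:query}. The backbone of the argument is a preliminary observation relating, for each term $t\in\V\cup\bigcup_i\Delta_i$, the substituted term $t[\assign]$ appearing in the semantics to the value that $\dwnetpddl(\mrk,\assign)$ assigns to $t$. Writing $\widehat t$ for that value ($\widehat t = t$ when $t$ is a constant, and $\widehat t = \dwnetpddl(\mrk,\assign)(v)$ when $t$ is a variable $v$), I would record that: a constant satisfies $\widehat t = t = t[\assign]$; a variable $v$ that is defined under $\assign$ satisfies $\widehat v = \assign(v) = v[\assign]$; and an undefined variable $v$ satisfies $v[\assign]=v\in\V$ while $\widehat v = \nullv$. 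The only mismatch between the two evaluations is thus confined to undefined variables, where the semantics retains the variable symbol (so $t[\assign]\in\V$) whereas the encoding substitutes the fresh constant \nullv. Throughout I rely on the standing assumption that \nullv belongs to no domain $\Delta_i$.

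For the base cases I would reason as follows. The case $true$ is immediate on both sides. For $\deff(v)$, satisfaction means $v[\assign]\neq v$, i.e.\ $v$ is defined, which by the observation is equivalent to $\widehat v\neq\nullv$, i.e.\ to $\dwnetpddl(\mrk,\assign)\models\neg(v=\nullv)=\pddlqlang{\deff(v)}$. The equality $v=t_2$ is the first delicate case: its translation $\neg(v=\nullv)\land(v=t_2)$ carries the guard $\neg(v=\nullv)$ precisely to block the spurious agreement that would otherwise arise when both $v$ and a variable $t_2$ are undefined and hence both mapped to \nullv. I would split on whether $v$ (and, if it is a variable, $t_2$) are defined: when both evaluate to genuine domain elements the observation makes the encoded equality coincide with $v[\assign]\equiv t_2[\assign]$ and $v[\assign],t_2[\assign]\notin\V$, matching the semantics exactly; when either is undefined, the semantics fails because the relevant term lies in $\V$, and the encoding fails because either the guard $\neg(v=\nullv)$ is violated or the equality reduces to comparing a value against \nullv.

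For $t_1\leq t_2$ with translation $\ordf(t_1,t_2)$, I would unfold the definition of the rigid relation $\ordf=\bigcup_{\Delta_i\in\Delta}\{(o,o')\in\Delta_i^2\mid o\leq_{\Delta_i}o'\}$, reading $\ordf(t_1,t_2)$ in the extended precondition language as membership $(\widehat{t_1},\widehat{t_2})\in\ordf$. When both terms evaluate to domain elements, this membership is by construction equivalent to the existence of a common domain $\Delta_i$ with $\ordf(\Delta_i)$ defined in which $\widehat{t_1}\leq_{\Delta_i}\widehat{t_2}$, which is exactly the semantic condition for $t_1\leq t_2$; when a term is an undefined variable, its value is \nullv, which lies in no $\Delta_i$, so both sides fail. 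The inductive cases $\neg\Phi_1$ and $\Phi_1\land\Phi_2$ are then routine and purely propositional, since the translation commutes with the boolean connectives and the semantics of $\neg$ and $\land$ agree on the query language and on the \pddllng preconditions; they follow directly from the induction hypothesis.

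The main obstacle I anticipate is not the propositional bookkeeping but the careful treatment of undefined variables in the two atomic relational cases $v=t_2$ and $t_1\leq t_2$. The crux is to verify that the encoding's use of the single sentinel constant \nullv never creates nor destroys a satisfaction relative to the partial-function semantics of $\assign$, and in particular that the guard $\neg(v=\nullv)$ in the equality translation together with the exclusion of \nullv from every $\Delta_i$ jointly reconstruct the ``both terms must be genuine, same-domain values'' side conditions built into the query-language semantics. Establishing the bijectivity of $\dwnetpddl$ (already noted in the text for $1$-safe nets) is only needed to make the statement well posed and plays no essential role in the induction.
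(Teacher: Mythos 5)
Your proof is correct and takes the same route as the paper, which simply states that the lemma follows by structural induction on $\guardlang[\dmodel_{\dwnmodel}]$ using the translation equations for the base cases. Your additional care with the \nullv sentinel, the guard $\neg(v=\nullv)$ in the equality case, and the membership reading of the rigid relation $\ordf$ fills in exactly the details the paper leaves implicit.
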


\begin{proof}
    The statement can be easily proved by induction on \guardlang[\dmodel_{\dwnmodel}] formulae and by using Equation~\eqref{def:enc:pddlqlang} for base cases.
\end{proof}

Now we can show that valid firings of \ournet corresponds to transitions in the planning domain

\begin{restatable}{lemma}{lemmaPDDMenc}\label{lemma:pddlenc}
    Let $\dwnmodel$ a \ournet model and $\pddllng(\dwnmodel)$ its encoding into a planning domain:
    \begin{enumerate}
        \item if $(\mrk,\assign)\fire{t}(\mrk',\assign')$ is a valid firing of $\dwnmodel$, then there is a ground action $a_t$ of $\alpha_t$ s.t.\ $(\dwnetpddl(\mrk,\assign), a_t, \dwnetpddl(\mrk',\assign'))\in \gamma$;
        \item if there is a ground action $a_t$ of $\alpha_t$ s.t.\ $(s, a_t, s')\in \gamma$, then $\pddldwnet(s)\fire{t}\pddldwnet(s')$ is a valid firing of $\dwnmodel$.
    \end{enumerate}
\end{restatable}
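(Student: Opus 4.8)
The plan is to prove the two directions separately, in both cases leaning on the bijection $\dwnetpddl$ (with inverse $\pddldwnet$) already established before the statement, on Lemma~\ref{lemma:pddl:query} to transfer the guard, and on a case analysis of how places and variables are updated, matched against the firing rule of Definitions~\ref{def:Firing} and~\ref{def:dwnet:firing}.

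For direction~1 I would start from a valid firing $(\mrk,\assign)\fire{t}(\mrk',\assign')$ and exhibit the witnessing ground action $a_t$. Writing $\{v_1,\dots,v_k\}$ for the domain of $\writef(t)$, I instantiate the template parameters by $z_{v_i}\mapsto\assign'(v_i)$ when $\assign'(v_i)$ is defined and $z_{v_i}\mapsto\nullv$ otherwise; by condition~3 of Definition~\ref{def:dwnet:firing} this value always lies in $wr_{t,v_i}$ (a member of $\writef(t)(v_i)$ when that set is nonempty, and exactly $\nullv$ when $\writef(t)(v_i)=\emptyset$), so the conjunct $wr_{t,v_i}(z_{v_i})$ of the precondition holds at $\dwnetpddl(\mrk,\assign)$. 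The conjunct $\pddlqlang{\guardf(t)}$ holds by condition~2 together with Lemma~\ref{lemma:pddl:query}, and the conjuncts $p=\true$ for $p\in\pres{t}$ hold because $t$ is enabled in $\mrk$ (condition~1); hence $(\dwnetpddl(\mrk,\assign),a_t)\in S'\times A'$. It then remains to verify $\gamma(\dwnetpddl(\mrk,\assign),a_t)=\dwnetpddl(\mrk',\assign')$: on the variables this is immediate from the effect $v=z_v$ for $v\in\{v_1,\dots,v_k\}$ and from inertia of $\gamma$ on the rest, matching the three regions \textsc{new}, \textsc{overwr}, \textsc{old} of condition~3; on the places it is obtained by comparing the effect (which sets $\false$ on $\pres{t}\setminus\posts{t}$ and $\true$ on $\posts{t}$) with the increments and decrements prescribed by Definition~\ref{def:Firing}.

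Direction~2 is the converse: given $(s,a_t,s')\in\gamma$ I set $(\mrk,\assign)=\pddldwnet(s)$ and $(\mrk',\assign')=\pddldwnet(s')$, which is well defined since $\dwnetpddl$ is a bijection, and check the three conditions of Definition~\ref{def:dwnet:firing}. Enabledness follows from the place conjuncts of $\pddlpre(a_t)$; the guard holds by Lemma~\ref{lemma:pddl:query} applied to $s=\dwnetpddl(\mrk,\assign)$; and the data update is read off $\gamma$ as before, using that $z_v\in wr_{t,v}$ forces either $z_v\in\writef(t)(v)$ (a genuine new value, so $v\in\textsc{wr}$) or $z_v=\nullv$ (a deletion, so $v\in\textsc{del}$) according to whether $\writef(t)(v)$ is empty, while variables outside $\writef(t)$ are left inert.

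The delicate step, and the one I expect to be the real obstacle, is reconciling the two notions of place update: Definition~\ref{def:Firing} manipulates token \emph{counts} ($\mrk'(p)=\mrk(p)\pm 1$), whereas $\pddleff(\alpha_t)$ merely sets the boolean place variables in $\posts{t}$ to $\true$ and those in $\pres{t}\setminus\posts{t}$ to $\false$. These coincide only under $1$-safeness, so I would make explicit that the statement is read over the $1$-safe states handled by $\dwnetpddl$ (equivalently, the reachable states populating $\RG_{\dwnmodel}$), on which firing preserves safeness: there every $p\in\posts{t}\setminus\pres{t}$ already satisfies $\mrk(p)=0$, so ``set to $\true$'' realises ``$\mrk(p)+1$'', and dually $\mrk(p)=1$ on $\pres{t}$ makes ``set to $\false$'' realise ``$\mrk(p)-1$''. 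With this in hand the place components of $\gamma(\dwnetpddl(\mrk,\assign),a_t)$ and of $\dwnetpddl(\mrk',\assign')$ agree on the four regions $\pres{t}\setminus\posts{t}$, $\posts{t}\setminus\pres{t}$, $\pres{t}\cap\posts{t}$, and $P\setminus(\pres{t}\cup\posts{t})$, which closes both directions.
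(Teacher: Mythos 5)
Your proposal is correct and follows essentially the same route as the paper's proof: the same parameter instantiation $z_{v}\mapsto\assign'(v)$ (or $\nullv$ for deletions) as witness in direction~1, the same use of Lemma~\ref{lemma:pddl:query} for the guard, and the same unwinding of $\gamma$ against the conditions of Definition~\ref{def:dwnet:firing} in both directions. The one point you flag as delicate — reconciling token arithmetic with boolean place effects — is handled in the paper by the same appeal to $1$-safeness, merely relegated to a footnote, so your more explicit treatment is a presentational difference rather than a different argument.
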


The proof is in~\ref{apx:pddl:enc}.

To compare the PDDL encoding of $\dwnmodel$ with the \ournet reachability graph $\RG_\dwnmodel=(T, \ol{S}, \ol{s}_0, \ol{\delta})$ (introduced in Definition~\ref{def:fin-reach-graph}) we consider the transition system $\TS_{\pddllng(\dwnmodel)}$ derived from the planning domain $\pddllng(\dwnmodel)$.

\begin{definition}\label{def:pddl:lts}
    Let $\dwnmodel$ a \ournet model. The transition system $\TS_{\pddllng(\dwnmodel)} = (T, S_{\pddllng}, s_0, \delta_{\pddllng})$ is defined as follows:
    \begin{compactitem}
        \item $s_0 = \dwnetpddl(\ol{s}_0)$
        \item $S_{\pddllng}$ is the minimum set that includes $s_0$ and if $s\in S_{\pddllng}$ and $(s, a_t, s')\in\gamma$ then $s'\in S_{\pddllng}$
        \item $\delta_{\pddllng} = \{ (s,t,s')\in S_{\pddllng}\times T \times S_{\pddllng}\mid (s, a_t, s')\in\gamma, a_t \text{ is a ground action of }\alpha_t \}$
    \end{compactitem}
\end{definition}

\begin{theorem}
    Let $\dwnmodel$ be a \ournet model, then $\RG_\dwnmodel$ and $\TS_{\pddllng(\dwnmodel)}$ are trace equivalent.
\end{theorem}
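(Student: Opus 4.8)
The plan is to instantiate the abstract encoding $\enc_{\ast}$ of Definition~\ref{def:trace-equivalence} for $\ast = \fdts$ by taking $\enc_{\fdts}((\mrk,\assign)) = \dwnetpddl(\mrk,\assign)$ on states and $\enc_{\fdts}(t) = t$ on transitions, and then to propagate the two directions of Lemma~\ref{lemma:pddlenc} along entire paths by induction on their length. Injectivity of $\enc_{\fdts}$, required by Definition~\ref{def:trace-equivalence}, is immediate: the state component is the restriction of the bijection $\dwnetpddl$ already established before Lemma~\ref{lemma:pddl:query}, and the transition component is the identity on $T$. Note that the transitions of $\TS_{\pddllng(\dwnmodel)}$ are labelled by single elements of $T$ (Definition~\ref{def:pddl:lts}), so this identity labelling is exactly what is needed.

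For condition~1 of Definition~\ref{def:trace-equivalence} I would show, by induction on $n$, that the image of any path $\ol{s}_0 \fire{t_1} s_1 \fire{t_2}\ldots\fire{t_n} s_n$ of $\RG_\dwnmodel$ is a path of $\TS_{\pddllng(\dwnmodel)}$. The base case is the identity $\enc_{\fdts}(\ol{s}_0) = \dwnetpddl(\ol{s}_0) = s_0$, which is precisely the initial state fixed in Definition~\ref{def:pddl:lts}. In the inductive step each edge $s_{i}\fire{t_{i+1}} s_{i+1}$ is a valid firing, so Lemma~\ref{lemma:pddlenc}(1) supplies a ground action $a_{t_{i+1}}$ of $\alpha_{t_{i+1}}$ with $(\dwnetpddl(s_i), a_{t_{i+1}}, \dwnetpddl(s_{i+1}))\in \gamma$; by Definition~\ref{def:pddl:lts} this gives $\dwnetpddl(s_{i+1})\in S_{\pddllng}$ and $(\dwnetpddl(s_i), t_{i+1}, \dwnetpddl(s_{i+1}))\in \delta_{\pddllng}$, extending the image path as required.

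For condition~2 I would symmetrically reconstruct, for each path $s_0' \fire{t_1} s_1'\fire{t_2}\ldots\fire{t_n} s_n'$ of $\TS_{\pddllng(\dwnmodel)}$, a preimage path of $\RG_\dwnmodel$ using $\pddldwnet = \dwnetpddl^{-1}$. Again by induction, $\pddldwnet(s_0') = \ol{s}_0$, and every edge $(s_i', t_{i+1}, s_{i+1}')\in \delta_{\pddllng}$ arises, by Definition~\ref{def:pddl:lts}, from some ground action $a_{t_{i+1}}$ of $\alpha_{t_{i+1}}$ with $(s_i', a_{t_{i+1}}, s_{i+1}')\in \gamma$; Lemma~\ref{lemma:pddlenc}(2) then makes $\pddldwnet(s_i')\fire{t_{i+1}}\pddldwnet(s_{i+1}')$ a valid firing. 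Since these firings issue from $\ol{s}_0$, Definition~\ref{def:fin-reach-graph} guarantees that each reconstructed state lies in $\ol{S}$, so the sequence is genuinely a path of $\RG_\dwnmodel$; bijectivity of $\dwnetpddl$ yields $\enc_{\fdts}(\pddldwnet(s_i')) = s_i'$ and $\enc_{\fdts}(t_{i+1}) = t_{i+1}$, so its image is exactly the original path. The two inductions together establish trace equivalence.

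The technical heart is wholly contained in Lemma~\ref{lemma:pddlenc}, whose (appendix) proof must match guards via Lemma~\ref{lemma:pddl:query}, token flow, and the nondeterministic writes against the $\gamma$ semantics and the template $\alpha_t$ of Equation~\eqref{eq:pddl:transition:template}; once that lemma is granted, the theorem reduces to the routine double induction above. The only subtlety at this level is that a single label $t$ in $\delta_{\pddllng}$ abstracts away the particular ground action, i.e.\ the specific parameter values $z_{v_1},\ldots,z_{v_k}$ chosen for the written variables, so the correspondence is at the granularity of the transition $t$ while the nondeterministic choice is recorded in the successor state. This is exactly why $\enc_{\fdts}$ may act as the identity on $T$ and let the state encoding $\dwnetpddl$ carry all the data, and it is the one place where I would be careful to confirm that distinct ground actions of the same $\alpha_t$ collapse to the same labelled edge without losing any reachable successor.
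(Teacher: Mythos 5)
Your proposal is correct and follows essentially the same route as the paper: the same instantiation of $\enc_{\fdts}$ as $\dwnetpddl$ on states and the identity on transitions, the same induction on path length, and the same reliance on the two directions of Lemma~\ref{lemma:pddlenc} combined with Definition~\ref{def:pddl:lts} to move between valid firings and $\gamma$-transitions. Your added remarks on injectivity and on distinct ground actions of $\alpha_t$ collapsing onto one labelled edge are sound elaborations of points the paper leaves implicit.
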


\begin{proof}\label{th:pddl:sound-comp}
	We define $\enc_{\fdts}(\cdot)$ as the restriction of $\dwnetpddl$ (as defined in Definition~\ref{def:dawnet:pddl:map}) to the set of states of $\RG_\dwnmodel$, and the identity w.r.t.\ the transitions. We show that $\enc_{\fdts}(\cdot)$ satisfies the properties of Definition~\ref{def:trace-equivalence} by induction on the length of the paths.
	
	The base case of a path of length 0 is satisfied because $s_0 = \enc_{\fdts}(\ol{s}_0) = \dwnetpddl(\ol{s}_0)$ is a state in $\TS_{\pddllng(\dwnmodel)}$ by Definition~\ref{def:pddl:lts}.
	
    For the inductive step we just need to consider the last element of the paths:
    \begin{enumerate}
    	\item If $s_0 \fire{t_1} s_1 \fire{t_2} \ldots s_{n-1}\fire{t} s_n$ is a path in $\RG_\dwnmodel$, then $s_{n-1}\fire{t} s_n$ is a valid firing, therefore by Lemma~\ref{lemma:pddlenc} there is a ground action $a_t$ of $\alpha_t$ s.t.\ $(\dwnetpddl(s_{n-1}), a_t, \dwnetpddl(s_n))\in \gamma$. By Definition~\ref{def:pddl:lts} $(\dwnetpddl(s_{n-1}), t, \dwnetpddl(s_n))$ is a transition of $\TS_{\pddllng(\dwnmodel)}$, and so is $(\enc_{\fdts}(s_{n-1}), \enc_{\fdts}(t), \enc_{\fdts}(s_n))$.
        \item If $\{(s_0,t_1,s_1),\ldots,(s_{n-1},t,s_n)\}\subseteq\delta_{\pddllng}$ then there is a ground action $a_t$ of $\alpha_t$ s.t.\ $(s_{n-1}, a_t, s_n)\in\gamma$; therefore $\pddldwnet(s_{n-1})\fire{t}\pddldwnet(s_n)$ is a valid firing of $\dwnmodel$ by Lemma~\ref{lemma:pddlenc}. So $\pddldwnet(s_{n-1})$ and $\pddldwnet(s_n)$ are states of $\RG_\dwnmodel$ and $\enc_{\fdts}(\pddldwnet(s_{i})) = s_{i}$ by construction.
    \end{enumerate}
\end{proof}

The trace equivalence demonstrated by Theorem~\ref{th:pddl:sound-comp} enables the use of classical planning tools to verify the existence of a plan satisfying the termination conditions expressed in Equation~\ref{eq:pddl:final}.
Moreover, it is immediate to verify that the set of states as characterised by the goal in Equation~\ref{eq:pddl:final} corresponds to all and only the final states of the \ournet model mapped via $\enc_{\fdts}(\cdot)$.

%In the case that the model is a trace workflow $\Wtau$, then the plan would exhibit a valid completion of the trace $\tau$; while the failure of finding such a plan would imply that the trace do not admit any completion.

%!TEX root = ./main.tex

\section{The encoding in \nuxmv}
\label{sect:nuxmv:enc}

Model checking~\cite{mc-book} tools take as input a specification of system executions and a temporal property to be checked on such executions. The output is ``true'' when the system satisfies the property, otherwise ``false'' is returned as well as a \emph{counterexample} showing one (among the possibly many) evolution that falsifies the formula. In particular, reachability properties can be easily expressed as liveness formulas of the kind: ``Eventually one of the states satisfying some property is visited.''
In \nuxmv, the system is specified in an input language inspired by \textsc{smv}~\cite{McMillan-book} and the properties in a temporal language, such as linear-time temporal logic (\textsc{ltl}, see~\cite{mc-book} Chapter 5, for an introduction).

Let $W$ be a \ournet and $\RG_W$ its reachability graph. We show how to encode $\RG_W$ in the \nuxmv input language and the reachability of a set of goal states as a \textsc{ltl} formula such that, when processed by \nuxmv, an execution reaching one of the final states is generated in the form of a counterexample, if any.

% \todo[inline]{Should I keep this?}
% When the \ournet is a trace workflow $\Wtau$, then the counterexample is a possible completion of $\tau$.

\subsection{The \nuxmv input language}

Many of the available tools for traditional model checking, including \nuxmv, deal with infinite-executions systems, and hence they require systems transition relation to be left-total. On the contrary, our 
%\oursystem
 models are based on workflow-nets which by definition represent process which usually terminates. Also, in \nuxmv transitions have no label. % Thus, our \nuxmv encoding must meet this requisite while the $RG$ as in Definition~\ref{fin-reach-graph} obviously not, thus making the bisimulation relation definition between the two convoluted.
For this reason, we adapt the reachability graph so as to have infinite paths and no transition labels, and we call such a modified reachability graph $\RG_W^\sim$. This kind of transformations are customary using tools requiring infinite-executions to describe finite-execution ones. Despite $\RG_W^\sim$ being semantically different from $\RG_W$, it is also possible to tweak the formal properties to be verified in such a way that the adaptation is sound and complete. From here on we abstract from such technical subtleties and refer to~\cite{DeGiacomoDMM14} for the details.

Intuitively, $\RG_W^\sim$ is obtained from $\RG_W$ by: \begin{inparaenum}[\it(i)] \item moving transition labels in the source state; \item adding a fresh self-looping (with transition $\fine$) state $s_\epsilon$ and \item adding a ($\last$) transition from every sink state of $\RG_W$ to $s_\epsilon$, as the following definition formalizes.\end{inparaenum}

\begin{definition} \label{def:inf-reach-graph}
Let $\RG_W=(T, \ol{S}, \ol{s}_0, \ol{\delta})$ be a reachability graph of a \ournet. We build a transition system $\RG_W^\sim = (T \cup \set{\last, \fine}, S, S_0, \delta)$ as follows:
\begin{itemize}
\item the set of states are triples $(\mrk, \assign, t)$ with $t \in T \cup \set{\last, \fine}$ where we distinguish the special state $s_{\epsilon} = (\mrk_\epsilon, \assign_\epsilon, \fine)$ where $\mrk_\epsilon$ and $\assign_\epsilon$ are arbitrary;
\item set $S_0$ is built as follows: if there exists $(\mrk_0, \assign_0) \fire{t} (\mrk', \assign') \in \ol{\delta}$ then $S_0 = \set{ (\mrk_0, \assign_0, t) \mid \exists t, \mrk', \assign'.(\mrk_0, \assign_0) \fire{t} (\mrk', \assign') \in \ol{\delta} }$, otherwise $S_0 = \set{(\mrk_0, \assign_0, \last)}$.
\item $S$ and $\delta \subseteq S \times S$ are the least sets s.t.:
\item $S_0 \cup \set{s_\epsilon} \subseteq S$;
\item $s_{\epsilon} \fire{} s_\epsilon \in \delta$;
\item if $(\mrk, \assign) \fire{t} (\mrk', \assign') \in \ol{\delta}$ then:
\begin{itemize}
\item if $(\mrk', \assign'), \fire{t'}, (\mrk'', \assign'') \in \ol{\delta}$ then $s=(\mrk, \assign, t) \fire{} s'=(\mrk', \assign', t') \in \delta$ and $s, s' \in S$;
\item otherwise $s=(\mrk, \assign, t) \fire{} s'=(\mrk', \assign', \last) \in \delta$, $(s', s_\epsilon) \in \delta$ and $s, s' \in S$;
\end{itemize}
\item if $\ol{\delta} = \emptyset$ then $(\mrk_0, \assign_0, \last) \fire{} s_\epsilon \in \delta$ (notice that if $\ol{\delta} = \emptyset$ then $S_0 = \set{(\mrk_0, \assign_0, \last)}$).
\end{itemize}
\end{definition}

% \begin{theorem}
% Let $\varphi$ be a formula and let $\tr{}$ be the translation as in \cite{DeGiacomoMDM14}, but with $\hat{tr} \not = \fine$ instead of $end$. Then $\ol{\G} \models_f \varphi$ iff $\G \models \tr{\varphi}$ where $\models_f$ is the model checking problem of testing wether $s_0$ belongs to the set of states satisfying $\varphi$ (\emph{all} linear executions starting from $s_0$ satisfies $\varphi$), while $\models$ is the model checking problem of testing wether $S_0$ is a subset of the set of states satisfying $\tr{\varphi}$.
% \end{theorem}
% \begin{proof}
% Immediate consequence of the results in \cite{DeGiacomoMDM14} by noticing that each finite path $\pi_f$ in $\ol{\G}$ have been transformed in an infinite path $\pi_f\cdot \set{\fine}^{\omega}$ in $\G$.
% \end{proof}

We now describe from a very high-level perspective the syntax and semantics of \nuxmv, i.e., how a transition system $\TS_{\smvlng(m)}$ is built given syntactic description of a so-called \emph{module} $m$. In general, \nuxmv allows for several modules descriptions, but in our case one it is enough to represent $\RG_W^\sim$.
%%
% The tool support the synchronous evolutions of several \emph{modules}, each one defining a specific process. In our case, we only have the MAIN module, i.e., $\G$.

A module is specified by means of several sections and, in particular:
\begin{compactitem}
\item A VAR section where a set of variables and their domains are defined. Each complete assignment of values to variables is a state of the module.
\item A INIT section where a formula representing the initial state(s) of the system is defined.
  \item A TRANS section where the transition relation between states is specified in a declarative way by means of a list of ordered rules $\tup{r_1, \ldots, r_n}$ in a switch-case block\footnote{There are several equivalent ways to describe a transition relation in \nuxmv, we just choose one.}. Each rule $r_i$ has the form $(pre_i, post_i)$, where $pre_i$ is (a formula representing) a precondition on the current state and $post_i$ is (a formula representing) a postcondition on the next state (and possibly the current state). In \nuxmv, postconditions are expressed by means of primed variables, namely $\hat{v}'= \mathit{value}$ means that in the next state variable $\hat{v}$ is assigned to $\mathit{value}$. Intuitively, state $q'$ is a successor of $q$ iff $i$ is the smallest index such that $q$ models $pre_i$ and $(q, q')$ models $post_i$. With slight notational abuse we write $q \models \varphi$ when the current state (recall that a state is an assignment of values to variables) satisfies a formula containing only unprimed variables and we write $(q, q') \models \varphi'$ when the pair current state, next state satisfies a formula containing both primed and unprimed variables.
  \end{compactitem}

  \begin{definition} \label{def:nuxmv-ts}
Let $m$ be a \nuxmv module and $\stvar$ be the set of variables in its VAR section. Module $\mathit{m}$ defines a transition system $\TS_{\smvlng(m)}=(\stvar, Q, Q_0, \rho)$ where:
\begin{itemize}
\item $Q$ is the set of states, each representing a \emph{total} assignment of domain values to variables;
\item the initial states $Q_0$ are those satisfying the formula in the INIT section;
\item states $Q$ and transition function $\rho \subseteq Q \times Q$ are defined by mutual induction as follows:
\begin{itemize}
\item $Q_0 \subseteq Q$;
\item if $q \in Q$ and $i$ is the smallest index such that $q \models pre_i$ (such an $i$ is required to exist by \nuxmv), then $(q, q') \in \rho$ and $q'$ is such that $(q, q') \models post_i$.
\end{itemize}
\end{itemize}
\end{definition}

\subsection{Encoding of \ournet in \nuxmv}

We present our encoding of $W$ in \nuxmv for each section of the module $\smvlng(W)$.
  \paragraph{VAR section} We have:
\begin{compactitem}
\item one boolean variable $\hat{p} \in \hat{P}$ for each place $p \in P$;
\item one $\hat{tr}$ variable which ranges over $T \cup \set{\last, \fine}$;
\item one variable $\hat{v} \in \hat{\V}$ for each $v \in \V$ ranging over $\domf(v) \cup \set{\udef}$ where $\udef$ it is a special value not contained in any of the $\domf(v)$.
\end{compactitem}

Let $\enc_{\smvts}(\mrk, \assign)$ be the encoding for \ournet markings and assignments to formulas in \nuxmv of the kind:
\[
\bigwedge_{p \in P}\hat{p}= \set{\true, \false} \bigwedge_{v \in \V}\hat{v_i}=d 
\]
where $\hat{p}=\true$ iff $\mrk(p)=1$ and $\hat{p}=\false$ otherwise; $\hat{v_i}=d$ iff $\assign(v)$ is defined; and $\assign(v)=d$ or $d=\udef$ if $\assign(v)$ is undefined.
Besides, let $\enc_{\smvts}(t)$ be the encoding of \ournet transition $t$ to the \nuxmv formula $\hat{tr} = t$, for $t \in T \cup \set{\last, \fine}$.

\begin{remark} \label{rem:enc-bijection}
The pair of functions $\enc_{\smvts}(\mrk, \assign)$ and $\enc_{\smvts}(t)$ are a bijection between states $S$ of $\RG_W^\sim$ and states $Q$ of $\TS_{\smvlng(W)}$.
\end{remark}

\paragraph{INIT section}
We have:
%The init section is the following formula:
\[
\begin{array}{l}
\enc_{\smvts} (\mrk_0, \assign_0) \land {} \\
\hat{tr} \not = \fine \land{} \\
\bigwedge_{t \in T} \hat{tr}=t \ra \currpre{t} \land{} \\
\hat{tr}=\last \ra (\bigwedge_{t \in T} \neg \currpre{t})
\end{array}
\]
where $\currpre{t}$ is a formula representing the preconditions for executing $t$ according to the semantics of the \ournet: it thus consists of guard $\guardf(t)$ as well as preconditions on the input set $\pres{t}$.

\paragraph{TRANS section} % We define a single CASE block comprised by an \emph{ordered} list of rules $\tup{r_1, \ldots, r_n}$ each of which has the form $(pre_i, post_i)$ where $pre_i$ is a boolean expression mentioning the current value of $\hat{\V}$ variables and $post_i$ is a boolean expression mentioning the current value of $\hat{\V}$ variables and possibly their value in the successive state trough the $\nxt{\cdot}$ operator. 
We have:
\begin{compactitem}
%%%%%%%%%%%%%
\item one rule for each $t \in T$ which has:
\[pre := \enc_{\smvts}(t) %% \land \forall \overline{t} \in \pres{t}. \overline{t}=\true \land \guardf(t)=\true
\]
\[ \begin{array}{ll}
post := & \forall p \in \pres{t}\setminus\posts{t}. \nxt{\hat{p}} = \false \land {} \\
& \forall p \in \posts{t}\setminus\pres{t}. \nxt{\hat{p}} = \true \land {} \\
& \nxt{\hat{p}}=\hat{p} \text{ otherwise } \land {} \\
& \forall v \in \domf(\writef(t)) \land \writef(t)(v) \not = \emptyset. \nxt{\hat{v}} \in \writef(t)(v) \land {} \\
& \forall v \in \domf(\writef(t)) \land \writef(t)(v) = \emptyset. \nxt{\hat{v}}=\udef \land {}\\
& \nxt{\hat{v}}=\hat{v} \text{ otherwise} \land {} \\
& \bigwedge_{t \in T} \nxt{\hat{tr}}=t \ra \nextpre{t} \land{} \\
& \nxt{\hat{tr}}=\last \ra (\bigwedge_{t \in T} \neg \nextpre{t}) \land{} \\
& \nxt{\hat{tr}}=\fine \ra \hat{tr}=\last
\end{array}
\]
where $\nextpre{t}$ is likewise formula $\currpre{t}$ but with all its variables primed (thus referring to the next state). Intuitively, the postcondition contains: \begin{inparaenum} [\it(i)] \item a condition on the (next) values of pre- and post-sets of $t$; \item a condition on the (next) values of the variables according to function $\writef$ of the \ournet and \item conditions on the (next) value of variable $\hat{tr}$: next value of $\hat{tr}$ is $t$, only if the (next) state is consistent with the preconditions for executing $t$. \end{inparaenum}

%%%%%%%%%%%%
\item a rule for $\hat{tr}=\last$:
\[
pre := \enc_{\smvts}(\last)
\]

\[\begin{array}{ll}
post := 
& \nxt{\hat{tr}}=\fine \land {} \\
& \nxt{\enc_{\smvts}(\mrk_\epsilon, \assign_\epsilon)}
\end{array}
\]

\item a rule for looping in the ended state with $\hat{tr}=\fine$:
\[
pre := \enc_{\smvts}(\fine)
\]
%%%%%
\[\begin{array}{ll}
post := 
& \nxt{\hat{tr}}=\fine \land {} \\
& \nxt{\enc_{\smvts}(\mrk_\epsilon, \assign_\epsilon)}
\end{array}
\]
\end{compactitem}

We notice that since $\bigcup pre_i$ form a partition for all possible values of $\hat{tr}$ we have that for each assignment to $\stvar$, i.e., state $q$, there exists a \emph{unique} $i$ such that $q \models pre_i$. In other words, the \emph{ordering} of rules does not matter.

\subsection{Correctness and completeness of the \nuxmv encoding}

As introduced in Section~\ref{sec:dawnet-fts}, we prove soundness and completeness of the encoding by proving trace equivalence between $\RG_W^\sim$ and the transition system $\TS_{\smvlng(W)}$ generated by module $\smvlng(W)$. A standard technique to prove trace equivalence amounts, in fact, to provide a bisimulation relation between states of $\RG_W^\sim$ and $\TS_{\smvlng(W)}$ as the latter implies the former \cite{mc-book}.
A \emph{bisimulation relation} has a co-inductive definition which says that states of the two transition systems are bisimilar if:
\begin{inparaenum}[\it(i)]
\item they satisfy the same \emph{local} properties;
  \item every transition from one state can be mimicked by the other state and their arrival states are again in bisimulation (and vice-versa).
\end{inparaenum}
If every initial states of $\RG_W^\sim$ is in bisimulation with an initial state of $\TS_{\smvlng(W)}$, then they are trace equivalent, as any trace start from an initial state.

\begin{definition} \label{def:bisim}
$\RG_W^\sim=(T, S, S_0, \delta)$ be the reachability graph of a \ournet and $\TS_{\smvlng(W)}=(\stvar, Q, q_0, \rho)$ be the \nuxmv transition system of its encoding. We define \emph{bisimulation relation} $B \subseteq S \times Q$ as follows: $(s=(M, \assign, t), q) \in B$ implies:
\begin{enumerate}
\item $q = \enc_{\smvts}(\mrk, \assign) \land \enc_{\smvts}(t)$;
\item for all $ s \fire{} s \in \delta$ there exists $q' \in Q$ such that $q \fire{} q' \in \rho$ and $(s', q') \in B$; and
\item for all $q \fire{} q \in \rho$ there exists $s' \in S$ such that $s \fire{} s' \in \delta$ and $(s', q') \in B$.
\end{enumerate}
\end{definition}

We prove that there exists a bisimulation between $\RG_W^\sim$ and $\TS_{\smvlng(W)}$ by showing one.

\begin{restatable}{theorem}{thNuxmvSoundComp}\label{th:nuxmv-sound-comp}
%%\begin{theorem}
  Let $R \subseteq S \times Q$ be such that $((s=(M, \assign, t), q) \in R)$ iff $q = \enc_{\smvts}(\mrk, \assign) \land \enc_{\smvts}(t)$. Then:
  \begin{enumerate}
  \item $R$ is a bisimulation relation;
  \item for each $s_0 \in S_0$ there exists $q_0 \in Q_0$ such that $(s_0, q_0) \in R$ and vice versa.
    \end{enumerate}
  \end{restatable}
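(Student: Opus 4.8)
The plan is to recognise that $R$ is exactly the graph of the bijection of Remark~\ref{rem:enc-bijection}, so that establishing the bisimulation reduces to checking that this bijection transports the transition relation of $\RG_W^\sim$ onto $\rho$ and back. Condition~1 of Definition~\ref{def:bisim} is immediate, since membership in $R$ is \emph{defined} to be $q=\enc_{\smvts}(\mrk,\assign)\land\enc_{\smvts}(t)$. Before the two simulation conditions I would isolate one auxiliary fact, an \emph{enabledness correspondence}: for every $t\in T$ and every state $(\mrk,\assign)$ with image $q=\enc_{\smvts}(\mrk,\assign)$, we have $q\models\currpre{t}$ iff $t$ is enabled in $(\mrk,\assign)$ (i.e.\ $\dmodel,\assign\models\guardf(t)$ and all places of $\pres{t}$ are marked), and analogously $(q,q')\models\nextpre{t}$ iff $t$ is enabled in the state encoded by $q'$. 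This follows by unfolding the construction of $\currpre{t}$ (the guard together with the input-place conditions) and a query-satisfaction argument entirely parallel to Lemma~\ref{lemma:pddl:query}.

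For the forward condition~2 I would take $(s,q)\in R$ with $s=(\mrk,\assign,t)$ and a transition $s\fire{}s'\in\delta$, and set $q'=\enc_{\smvts}(s')$, so that $(s',q')\in R$ holds automatically and only $q\fire{}q'\in\rho$ remains. Since the preconditions $pre_i$ of the TRANS section partition the states according to the value of $\hat{tr}$ (as noted at the end of the encoding), the firing rule is uniquely fixed by $t$, and I would split on $t$. If $t\in T$ then $(\mrk,\assign)\fire{t}(\mrk',\assign')\in\ol{\delta}$ is a valid firing, and the post-condition of rule $t$ imposes the token moves (false on $\pres{t}\setminus\posts{t}$, true on $\posts{t}$, unchanged elsewhere) and the variable updates (a value of $\writef(t)(v)$, or $\udef$ when $\writef(t)(v)=\emptyset$, or unchanged) exactly as Definition~\ref{def:dwnet:firing} requires; the clauses governing $\nxt{\hat{tr}}$ are met by the label carried by $s'$ using the enabledness correspondence. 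If $t=\last$ the only outgoing edge goes to $s_\epsilon$, matched by the $\hat{tr}=\last$ rule; and if $t=\fine$ the self-loop on $s_\epsilon$ is matched by the $\hat{tr}=\fine$ rule.

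The backward condition~3 is symmetric: given $q\fire{}q'\in\rho$ I set $s=\enc_{\smvts}^{-1}(q)$ and $s'=\enc_{\smvts}^{-1}(q')$, well defined by Remark~\ref{rem:enc-bijection}, and show $s\fire{}s'\in\delta$ by the same case split on the value of $\hat{tr}$. The delicate case is $\hat{tr}=t\in T$, where I must know that $t$ is genuinely fireable in $q$ so that the induced firing lies in $\ol{\delta}$. This is guaranteed by an invariant holding in every reachable \nuxmv state, namely $\hat{tr}=t\Rightarrow\currpre{t}$ for $t\in T$ and $\hat{tr}=\last\Rightarrow\bigwedge_{t\in T}\neg\currpre{t}$, which I would prove by induction along $\rho$: it holds in $Q_0$ by the INIT formula and is propagated by the look-ahead clauses $\nxt{\hat{tr}}=t\Rightarrow\nextpre{t}$ (respectively the $\last$ clause) present in every post-condition. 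Granting this invariant together with the enabledness correspondence, the transition induces a valid firing and the label stored in $s'$ is a transition enabled there, which is precisely how $\delta$ is built in Definition~\ref{def:inf-reach-graph}.

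For part~2 I would unfold the INIT formula: its conjuncts force the marking and assignment to be those of $\enc_{\smvts}(\mrk_0,\assign_0)$ and $\hat{tr}$ to be either some $t\in T$ that is enabled or $\last$ precisely when no transition is enabled; by the enabledness correspondence the set of satisfying assignments is exactly $\enc_{\smvts}(S_0)$ with $S_0$ as in Definition~\ref{def:inf-reach-graph}, so the bijection restricts to a bijection between $S_0$ and $Q_0$ and both directions follow. I expect the backward direction, and specifically the reachability invariant $\hat{tr}=t\Rightarrow\currpre{t}$, to be the main obstacle: soundness hinges on no reachable \nuxmv state carrying a transition label that is not actually fireable, and this is exactly what the look-ahead constraints on $\nxt{\hat{tr}}$ are designed to enforce. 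A secondary subtlety is aligning the nondeterministic writes $\nxt{\hat{v}}\in\writef(t)(v)$ with the nondeterministic assignment of Definition~\ref{def:dwnet:firing}; since both range over the same finite set, each concrete target $q'$ selects a matching valid firing and conversely.
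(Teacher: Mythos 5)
Your proof is correct and follows essentially the same route as the paper's: condition~1 of the bisimulation is definitional, the simulation conditions are checked by the same case split on $t \in T$, $t=\last$, $t=\fine$ against the corresponding TRANS rules, and part~2 is read off the INIT section. The only difference is that you make explicit two facts the paper leaves implicit --- the enabledness correspondence between $\currpre{t}$ and fireability, and the reachability invariant $\hat{tr}=t\Rightarrow\currpre{t}$ needed for the backward direction, which the paper dispatches with ``analogous considerations hold'' --- so your write-up is, if anything, more complete on that side.
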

  % \end{theorem}

\begin{proof}
We first prove \emph{1}.
Let $s'=(\mrk', \assign', t')$ be such that $s \fire{} s' \in \delta$, then we have to prove that there exists a $q'$ such that $q \fire{} q' \in \rho$ and $(s', q') \in R$. We have three cases:
\begin{compactitem}
\item $t \in T$: then, by inspecting the corresponding rule in the TRANS section, it is immediate to see that $q$ and $s$ are such that $\mrk'(p) \lra \hat{p}$ (such values are deterministic). Let us now consider the value of a generic variable $v \in \V$ assigned nondeterministically by $\assign'$. We have again three cases:\begin{inparaenum}[(i)] \item $v \in \domf(\writef(t))$ and $\writef(t)(v) \not = \emptyset$, hence its new value will be in $\writef(t)(v)$; \item $\domf(\writef(t))$ and $\writef(t)(v) = \emptyset$, hence its new value will be $\udef$; or \item $v \not \in \domf(\writef(t))$ and its new value will be the same as the old one\end{inparaenum}. Such cases are coded in conjunction in the rule: let us assume the first one applies with value $v'=c$, by Definition~\ref{def:nuxmv-ts} then there exists $q'$ such that $(q, q') \models post_i$ and $q'(v)=c$; for the other two cases the choice is deterministic, either $q'(v)=\udef$ or $q'(v)=q(v)$. It follows that there exists a $q'$ that agrees on $\enc_{\smvts}(\mrk', \assign')$. It remains to show that there exists a $q'$ such that also $q'(\hat{tr})=t'$: if $t' \in T$, then it means that $\mrk'$ and $\assign'$ satisfy the preconditions of the firing of $t'$, meaning that $t \ra \nextpre{t}$ is satisfied, hence there exists such a $q'$ (there exists at least one, the choice of $t'$ is nondeterministic); if $t'=\last$ then, from Definition~\ref{def:inf-reach-graph} we have that no $t'', \mrk'', \assign''$ such that $(\mrk', \assign') \fire{t''} (\mrk'', \assign'')$ exist, i.e., $\bigwedge_{t \in T} \neg \nextpre{t}$ holds, hence $q'(\hat{tr})=\last$. Notice that we assumed $t \in T$, hence it is never the case that $t'=\fine$. Since $q'$ agrees with $s'$ on the markings, the values of variables and the transition, it follows that $(s', q') \in R$.

\item $t=\last$, then from Definition~\ref{def:inf-reach-graph} we have that $s'=s_\epsilon$ and $t'=\fine$. Since $q(\hat{tr})=\last$, then rule $t=\last$ applies in the encoding and from its definition, $(s', q') \in R$ immediately follows.

\item $t=\fine$, then from Definition~\ref{def:inf-reach-graph} we have that $s'=s_\epsilon$ and $t'=\last$. Since $q(\hat{tr})=\fine$, then rule $t=\fine$ applies in the encoding and from its definition, $(s', q') \in R$ immediately follows.
\end{compactitem}

Now, let $q'$ such that $q \fire{} q' \in \rho$. We have to prove that there exists a $s' = (\mrk', \assign', t')$ such that $s \fire{} s' \in \delta$ and $q' = \enc_{\smvts}(\mrk', t') \land \enc_{smvts}(t)$.
We have three cases: $t \in T$, $t = \last$ and $t = \fine$. Analogous considerations \rtext{to}
of
 the previous case hold, as the rules in the encoding are not only sound, but also complete.

We now show \emph{2}. Given Remark~\ref{rem:enc-bijection}, it is enough to show that $(s, q) \in R$ implies $s \in S_0 \leftrightarrow q \in Q_0$.
We first prove the left to right direction, i.e., that $(s=(\mrk_0, \assign_0, t), q) \in R$ implies $q \in Q_0$, namely that formula $\enc_{\smvts}(\mrk_0, \assign_0) \land \enc_{\smvts}(t)$ satisfies the first conjunct of the INIT section. By definition of INIT, the first conjunct is satisfied, thus it remains to prove $\enc_{\smvts}(t)$. By Definition~\ref{def:fin-reach-graph} and Definition~\ref{def:inf-reach-graph} either (1) $t \in T$ or (2) $t=\last$. If (1) then there is no valid firing in $(\mrk, \assign)$, which entails that $\bigwedge_{t\in T} \neg \currpre{t}$ and if (2), then $t$ can fire in $(\mrk_0, \assign_0)$ hence its preconditions are satisfied, which means $\enc_{\smvts}(\mrk_0, \assign_0)$ satisfies $\currpre{t}$. In both cases the formula in the INIT section is satisfied and $q \in Q_0$.

The right to left direction in analogous by noticing that $\currpre{}$ trivially encodes the preconditions for firing $t$.
\end{proof}

We conclude the section by observing that, by virtue of Remark~\ref{rem:enc-bijection}, the formula $\Phi := \hat{end} = \true \bigwedge_{p \in P\setminus\set{end}}\hat{p}= \false$ captures all and only states in $\TS_{\smvlng(W)}$ that correspond to final states in the DAW-net.
Therefore, when \nuxmv checks the \textsc{ltl} formula: ``a $\Phi$-state is never visited'', it returns ``true'' if no final state can be reached and ``false'' and a path to a final state otherwise.

% \begin{corollary}
% Lemma~\ref{lemma:init-states-bisim} guarantees that each initial state of the reachability graph is simulated by a state in the nuXmv transition system and vice versa, hence the two transition systems are bisimilar which entails that for every temporal formula (also reachability formulas) $\phi$, $\phi$ holds in one if and only if it holds in the other.
% \end{corollary}

% \endinput

%%% Local Variables:
%%% mode: latex
%%% TeX-master: "main.tex"
%%% save-place: t
%%% End:

\section{Trace completion} % (fold)
\label{sec:trace_repair}

%
% Event logs are crucial in the analysis of processes, and they can be exploited to verify conformance, or to enhance and optimise models. In many cases, however, these logs are subject to data quality problems, resulting in \emph{incorrect} or \emph{missing} events.
In order to evaluate the proposed encodings and the scalability of the three chosen solvers,
%Planning systems,
 in this paper we focus on the problem of \textbf{repairing execution traces that contain missing entries} (hereafter shortened in trace completion).  
The need for trace completion is motivated in depth in~\cite{Rogge-Solti2013}, where missing entities are described as a frequent cause of low data quality in event logs. 

The starting point of approaches to trace completion are (partial) \emph{execution traces} and the knowledge captured in \emph{process models}. To illustrate the idea of trace completion, and why this problem becomes interesting in data-aware business processes, 	 let us consider the simple process model of the \emph{Loan Request} (LR) depicted in  Figure~\ref{fig:imgs_SampleWFnet}, page~\pageref{fig:imgs_SampleWFnet}.

A sample trace\footnote{In this work we are interested in (reconstructing) the ordered sequence of events that constitutes a trace, and thus we ignore timestamps as attributes. In the illustrative examples we present the events in a trace ordered according to their execution time.}
 that logs the execution of a LR instance is:
\begin{equation}
\label{eq:trace-complete-nodata1}
\{T1,T3, T5, T7, T9, T10, T11, T12\}.
\end{equation}
%\todocginline{Controllate la footnote sui timestamps}
In this trace all the executed activities have been explicitly logged. Consider now the (partial) execution trace 
\begin{equation}
\label{eq:trace-nodata}
\{T3, T7\}.
\end{equation}
which only records the execution of tasks \emph{T3} and \emph{T7}. 
By aligning the trace to the model using the techniques presented in~\cite{Rogge-Solti2013,Di-Francescomarino-C.:2015aa}, we can exploit the events stored in the trace and the ordering of activities (i.e., control flow) specified in the model to reconstruct two possible completions: one is the trace in \eqref{eq:trace-complete-nodata1}; the other is: 
% \begin{align}
    %     \label{eq:trace-complete-nodata2}
$\set{T1, T3, T5, T7, T9, T11, T10, T12}$
% \end{align}
which only differs from \eqref{eq:trace-complete-nodata1} for the ordering of the parallel activities. 

Consider now a different scenario in which the partial trace reduces to \{$T7$\}. In this case, by using the control flow in Figure~\ref{fig:imgs_SampleWFnet} we are not able to reconstruct whether the loan is a student loan or a worker loan. This increases the number of possible completions and therefore lowers the usefulness of trace repair. Assume nonetheless that the event log conforms to the XES standard and stores some observed data attached to $T7$ (enclosed in square brackets): 
%\todo[inline]{Warning: we don't handle ``observations''.}
\begin{equation}
\label{eq:trace-data}
\{T7[request=\cvalue{60k},loan=\cvalue{50k}]\}
\end{equation}
Since the process model is able to specify how transitions can read and write variables, and furthermore some constraints on how they do it, the scenario changes completely. In our case, we know that transition \emph{T4} is empowered with the ability to write the variable $request$ with a value smaller or equal than $\cvalue{30k}$ (being this the maximum amount of a student loan). Using this fact, and the fact that the request examined by \emph{T7} is greater than $\cvalue{30k}$, we can understand that the execution trace has chosen the path of the worker loan. Moreover, since the model specifies that variable \emph{loanType} is written during the execution of \emph{T1}, when the applicant chooses the type of loan she is interested to, we are able to infer that \emph{T1} sets variable \emph{loanType} to $\cvalue{w}$. 

\begin{figure}[t]
  \centering
    \includegraphics[width=.8\textwidth]{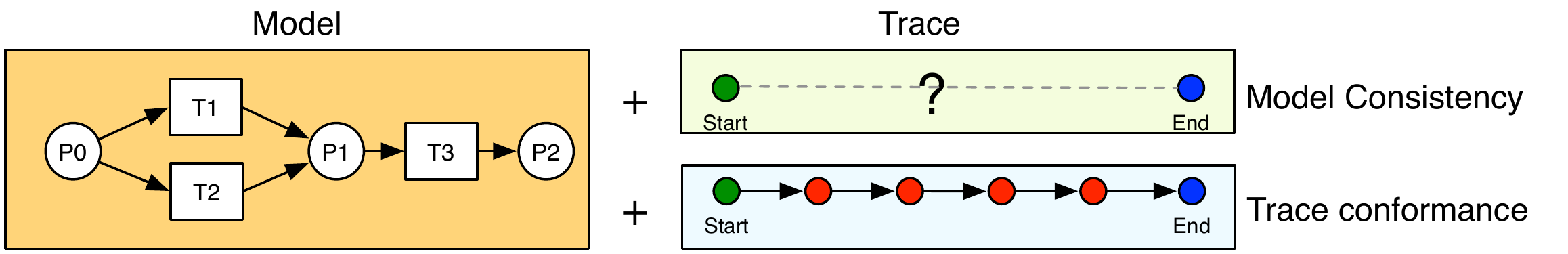}
  \caption{Corner-case incomplete execution traces.}
  \label{fig:imgs_Incompleteness}
\end{figure}
This example, besides illustrating the idea of trace completion, demonstrates why taking into account data (both in the process model and in the execution trace) is essential to accomplish this task.  
Also, as already observed in \cite{Chesanietal:2018}, by considering corner cases of partial traces, such as empty traces and complete traces, the ability to produce a complete trace compliant with the process model can be also exploited to compute model consistency and trace conformance,
%compliance,
 respectively (see Figure~\ref{fig:imgs_Incompleteness}).

% subsection trace_repair (end)

% subsection beyond_trace_repair (end)

%%% Local Variables:
%%% mode: latex
%%% TeX-master: "main.tex"
%%% save-place: t
%%% End:

\subsection{Trace completion as reachability}
\label{sec:encoding:traces}

Having introduced the problem of trace completion we now show that we can reduce it to a reachability problem in a modified \ournet that takes into account both the original model and the given (partial) trace. Here we formalise the completion problem and its encoding into reachability. Full details and proofs are contained in~\ref{apx:encoding:traces}.

A \emph{trace} is a sequence of observed \emph{events}, each one associated to the transition it refers to. Events can also be associated to data payloads in the form of attribute/variable-value pairs. Among these variables we can distinguish those updated by its execution (see Definition~\ref{def:our-net}) and those observed. Executions \eqref{eq:trace-complete-nodata1}, \eqref{eq:trace-nodata}, and \eqref{eq:trace-data} at page \pageref{eq:trace-data} provide examples of traces without and with data payloads for our running example.\footnote{We remark the conceptual and practical difference between a case and a trace: a case is a semantic object intuitively representing an execution/behaviour of the business process model while a trace is asyntactic object that comes from the real world of which we would test the soundness by providing a possible repair.}

%%
%%%%%%%%%%
\begin{definition}[Trace] %\footnote{Differente da BPM}
Let $W = \tuple{\dmodel, \nmodel, \writef, \guardf}$ be a \ournet.
An event of $W$ is a tuple \tuple{t,w, w^d} where $t\in T$ is a transition, $w \in \bigcup_{\varset'\subseteq\varset} ({\varset'}\mapsto\domf(\varset))$ is a partial function that represents the variables written or observed by the execution of $t$, and $w^d\subseteq\varset$ the set of variables deleted (undefined) or observed to be undefined by the execution of $t$. Obviously, $w^d$ and the domain of $w$ do not share any variable.

A trace of $W$ is a finite sequence of events
$\tau = (e_1,\ldots, e_n)$. In the following we indicate the $i$-th
event of $\tau$ as $\tstep{\tau}{i}$. Given a set of transitions $T$, the set of traces is inductively
defined as follows:
\begin{compactitem}
\item the empty trace is a trace;
\item if $\tau$ is a trace and $e$ an event, then $\tau \cdot e$
  is a trace.
\end{compactitem}
\end{definition}
%%%%%%%%%

Notice that the definition above constrains the transitions neither to be complete nor correctly ordered w.r.t.\ $W$. What we would like to check is indeed if $\tau$ can be completed so as to represent a valid case of $W$.
Intuitively, a trace $\tau$ is \emph{compliant} with a case $C$ of a \ournet $W$ if $C$ contains all the occurrences of the transitions observed in $\tau$
(with the corresponding variable updates) in the right order, as the following definition formalises.

\begin{definition}[Trace Compliance] \label{def:compliance} An event \tuple{t',w, w^d} is \emph{compliant} with a (valid) firing $(\mrk,\assign)\fire{t}(\mrk',\assign')$ iff $t = t'$, $dom(w)\subseteq dom(\assign') \subseteq \varset \setminus w^d$, and for all $v\in dom(w)$ $w(v) = \assign'(v)$.

A trace $\tau = (e_1,\ldots, e_\ell)$, $\ell > 0$, is \emph{compliant} with a sequence of valid firings $$(\mrk_0,\assign_0)\fire{t_1}(\mrk_1,\assign_1)\ldots (\mrk_{k-1},\assign_{k-1})\fire{t_k}(\mrk_{k},\assign_{k})$$ iff there is an injective mapping $\gamma$ between $[1\ldots \ell]$ and $[1\ldots k]$ such that:\footnote{If the trace is empty then $\ell=0$ and $\gamma$ is empty.}
  \begin{align}
    \forall i,j \text{ s.t. } 1\leq i < j \leq \ell \quad \gamma(i) < \gamma(j)\\
    \forall i \text{ s.t. } 1\leq i \leq \ell \quad  e_i \text{ is compliant with } (\mrk_{\gamma(i-1)},\assign_{\gamma(i-1)})\fire{t_{\gamma(i)}}(\mrk_{\gamma(i)},\assign_{\gamma(i)})
  \end{align}
When $\ell = 0$ we impose that $\tau$ is compliant with any sequence of valid firings.
  
We say that a trace $\tau$ is compliant with a \ournet $W$ if there exists a case $C$ of $W$ such that $\tau$ is compliant with $C$.
\end{definition}

Although the previous definition is general we notice that, since the last state of a case $C$ must be a final state, $\tau$ being compliant with $W$ means that it can be completed so as to represent a \emph{successful} execution of the process $W$.

%%
% f the process does not have this structure, it can be reduced to it by replacing the original start/sink places with a normal place and introducing new transitions as illustrated in the left hand side of Figure~\ref{fig:trace-compilation} by arrows labeled with (1). This change would not modify the behaviour of the net: any sequence of firing valid for the original workflow can be extended by the firing of the additional transitions and viceversa.
%%

\begin{figure*}[t]
 \begin{center}
  \includegraphics[width=.7\linewidth]{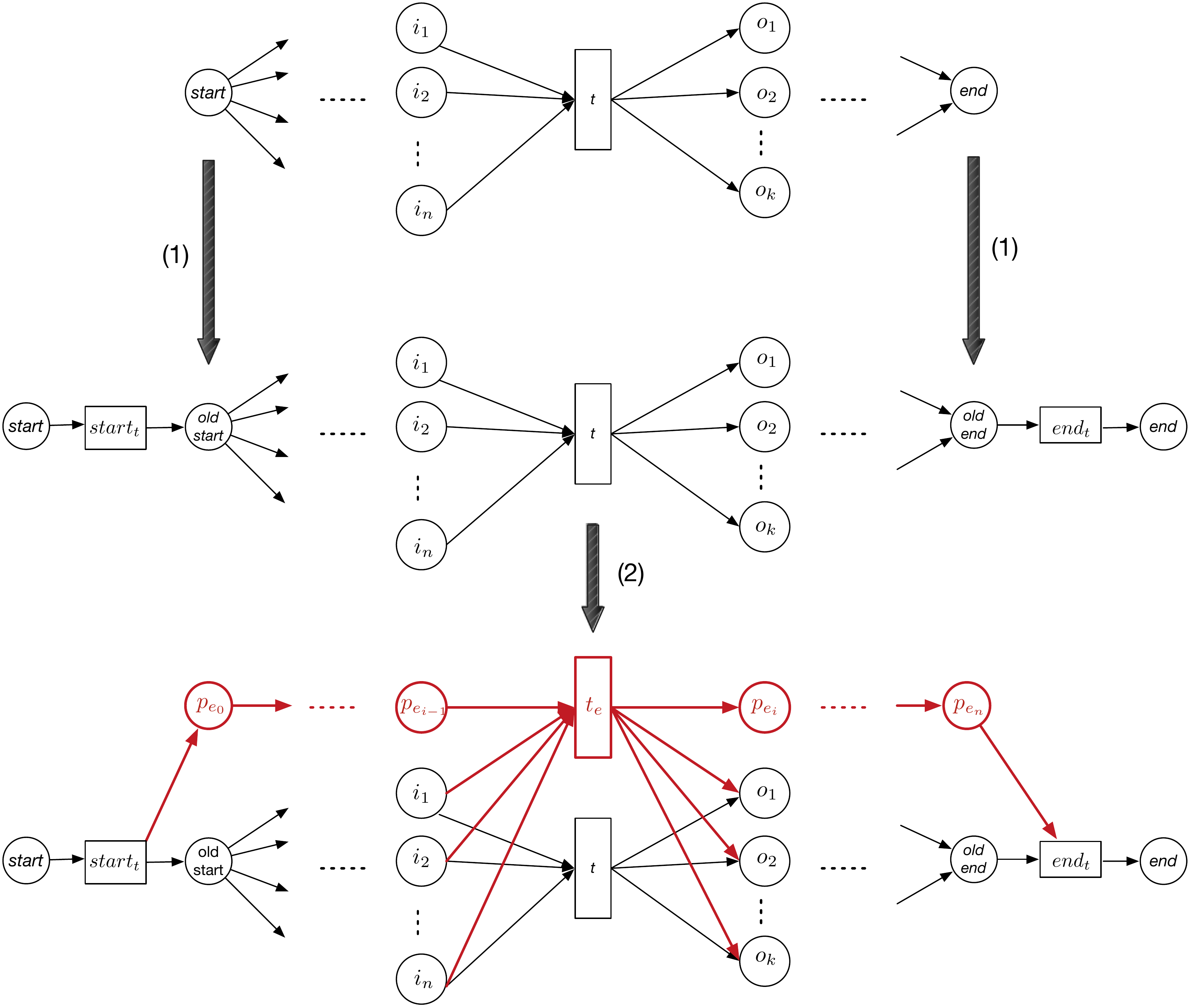}
\end{center}
 \caption{Outline of the trace ``injection''.}
 \label{fig:trace-compilation}
\end{figure*}

To simplify the presentation we assume (without loss of generality) that \ournet models start with the special transition $start_t$ and terminate with the special transition $end_t$. Both have a single input and output places: $start_t$ input is the start place, and $end_t$ output place is the sink. Both the transitions have empty guards (always satisfiable) and don't modify variables; moreover, start and sink places shouldn't have any incoming and outgoing edge respectively.
Every process can be reduced to such a structure as informally illustrated in the top part of of Figure~\ref{fig:trace-compilation} by the arrows labeled with (1). Note that this change would not modify the behaviour of the net: any sequence of firing valid for the original net can be extended by the firing of the additional transitions and vice versa.

We now illustrate the main idea behind our approach by means of the bottom part of Figure~\ref{fig:trace-compilation}: we consider the observed events as additional transitions (in red) and we suitably ``inject'' them in the original \ournet. By doing so, we obtain a new model where we aim at forcing tokens to activate the red transitions when events are observed in the trace. When, instead, there is no red counterpart, i.e., there is missing information in the trace, the tokens are free to move in the black part of the model.
%By doing so, we obtain a new model where, intuitively, tokens are forced to activate the red transitions of \ournet, as those represent happened events, while we leave them to move in the black part, that is the original process, when there is no red counterpart, meaning that there is missing information in the trace.
%%

More precisely, for each event $e$ corresponding to the execution of a transition $t$ with some data payload, we introduce a new transition $t_e$ in the model such that:
\begin{itemize}
  \item $t_e$ is placed in exclusive or with the original transition $t$;
  \item $t_e$ includes an additional input place connected to the preceding event and an additional output place which connects it to the next event;
  \item $\guardf(t_e)$ is a conjunction of the original $\guardf(t)$ with the conditions that state that the variables in the event data payload that are not modified by the transition are equal to the observed ones;
  \item $\writef(t_e)$ updates the values corresponding the variables updated by the event payload, i.e.\ if the event sets the value of $v$ to $d$, then $\writef(t_e)(v) = \set{d}$; if the event deletes the variable $v$, then $\writef(t_e)(v) = \emptyset$.
  %i.e.\ if the event wrote value $d$ in $v$ then $\writef(t_e)(v) = \set{d}$, while for deleted variables $\writef(t_e)(v) = \emptyset$.
  \end{itemize}
  
 Note that an event $\tuple{t,w,w^d}$ cannot be compliant with any firing unless $w(v)\in \writef(t)(v)$ for each variable in $dom(w)\cap dom(\writef(t))$, and $w^d\cap dom(\writef(t))\subseteq \{v \mid (v,\emptyset) \in \writef(t)\}$. Since these properties can be verified just by comparing the events to the model, then in the following we assume that they are both satisfied for all the events in a trace.

Given a \ournet $W$, and a trace $\tau$, the formal definition of the ``injection'' of $\tau$ into $W$, denoted with $W^\tau$, is given below.
  \begin{definition}[Trace workflow]\label{def:trace:workflow}
Let $W = \tuple{\dmodel, \nmodel = \tuple{P,T,F}, \writef, \guardf}$ be a \ournet and $\tau = (e_1,\ldots, e_n)$ -- where $e_i = \tuple{t_i,w_i, w_i^d}$ -- a trace of $W$. The \emph{trace workflow} $W^\tau = \tuple{\dmodel, \nmodel^\tau = \tuple{P^\tau,T^\tau,F^\tau}, \writef^\tau, \guardf^\tau}$ is defined as follows:
\begin{itemize}
  \item $P^\tau = P \cup \set{p_{e_0}}\cup\set{p_{e}\mid e\in \tau}$, with $p_{e_0}$, $p_e$ new places; 
  \item $T^\tau = T \cup \set{t_e\mid e\in \tau},$ with $t_e$ new transitions; 
  \item $F^\tau = F \cup {}$  \\
	    \hspace*{.8cm} $\set{(t_{e_i},p)\mid i=1\ldots n, (t_i,p)\in F}\cup\set{(p,t_{e_i})\mid i=1\ldots n, (p,t_i)\in F} \cup {}$\\
		\hspace*{.8cm} $\set{(t_{e_i},p_{e_i})\mid i=1\ldots n}\cup\set{(p_{e_{i-1}},t_{e_i})\mid i=1\ldots n} \cup \set{(start_t,p_{e_0}), (p_{e_n},end_t)}$
  \item $\writef^\tau(t) = {}
  \begin{cases}
    \set{v \mapsto \{w_i(v)\}\mid v\in dom(w_i)\cap dom(\writef(t_i))}\cup {}\\\;\set{v\mapsto\writef(t_i)(v)\mid v\in dom(\writef(t_i))\setminus dom(w_i)} & \text{for $t = t_{e_i}$}\\
    \writef(t) & \text{for $t\in T$}
  \end{cases}$
 \item $\guardf^\tau(t) = {} 
  \begin{cases}
    \guardf(t_i)\land \bigwedge_{v\in dom(w_i)\setminus dom(\writef(t_i))} v = w_i(v) \land {}\\\; \bigwedge_{v\in w_i^d\setminus dom(\writef(t_i))} \neg\deff(v) & \text{for $t = t_{e_i}$}\\
    \guardf(t) & \text{otherwise}
  \end{cases}$
\end{itemize}
\end{definition}

% The resulting so-called \emph{trace workflow} modify the original workflow so that the only cases (i.e., sequences of valid firings) of this new workflow are those that satisfy the observed trace. This, so called, \emph{trace workflow} includes new transitions (filled transitions in Figure~\ref{fig:trace-compilation}) corresponding to the events observed in the trace and making sure that they can be activated only in the right order and behave as the ``original'' transition.
% %
% For each event $e$ corresponding to a transition $t$ setting and deleting the values of some variables we introduce a new transition $t_e$ such that:
% \begin{compactitem}
%   \item it is placed in ``parallel'' with the original transition;
%   \item it includes an additional input place connected to the output place of the preceding event, and an additional output place which connects it to the next event;
%   \item its guard is the same as the original transition; and
%   \item the $\writef(t_e)$ specifies exactly the variables and values of the event, i.e.\ for variables $v$ set to value $d$ then $\writef(t_e)(v) = \set{d}$, while for deleted ones $\writef(t_e)(v) = \emptyset$.
% \end{compactitem}

It is immediate to see that $W^\tau$ is a strict extension of $W$ (only new nodes are introduced) and, since all newly introduced nodes are in a path connecting the start and sink places, it is a \ournet, whenever the original one is a \ournet net.
	
\begin{figure}[h]
  \centering
    \includegraphics[width=.8\textwidth]{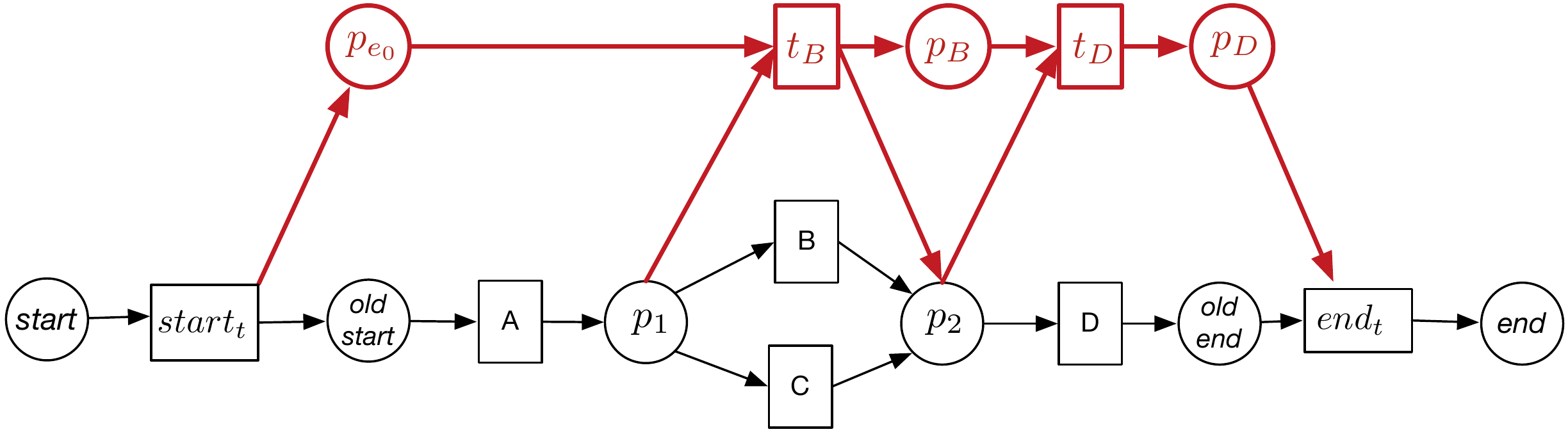}
  \caption{A sample of trace workflow obtained by adding trace $\langle B,D\rangle$.}
  \label{fig:imgs_traceinjectionsample}
\end{figure}

By looking at this definition, at its graphical illustration at the bottom of Figure~\ref{fig:trace-compilation}, and also at the simple example in Figure~\ref{fig:imgs_traceinjectionsample} it is easy to see why the red transitions must be preferred over the black ones in all the sequences of valid firings that go from $start$ to $end$ (i.e., in all cases). Intuitively, transition $end_t$ needs a token in the red place $p_{e_n}$ to fire, in addition to the token in the original end of the initial net. By construction all the red places are only connected to red transitions. Thus, to properly terminate, when choosing between $t_e$ and $t$, $t_e$ must be preferred so that it produces in output all the tokens for the original black output places plus the one for the new red place. Note that whenever $t$ was able to fire in the original net, $t_e$ can fire as well. Indeed it is easy to see that, by construction, the token generated by $start_t$ in $p_{e_0}$ is only consumed and produced by red transitions, and propagated into red places.
  
%%
% Besides, $W^\tau$ is such that any of its terminating cases is compliant with $\tau$. % Intuitively this is due to the fact each observed event in $\tau$ is associated to one of the new places and these new places mark the fact that the transition observed in the event is part of the case. \boh{Tokens can only enter in these new places once and before ending up in a place enabling one of the event transitions all the preceding ones (w.r.t.\ the trace) had to be fired.}

We now prove the correctness and completeness of the approach by showing that $W^\tau$ characterises all and only the cases $C$ of $W$ to which $\tau$ is compliant with. We focus here only on the main intuitive steps of the proof, whose technical details are reported in full in~\ref{apx:encoding:traces}. 

Roughly speaking what we need to show is that: 
\begin{itemize}
	\item (correctness) if $C$ is an arbitrary case of $W^\tau$, then there is a ``corresponding'' case $C'$ of $W$ such that $\tau$ is compliant with $C'$; and
	\item (completeness) if $\tau$ is compliant with a case $C$ of $W$, then there is a case $C'$ in $W^\tau$ ``corresponding'' to $C$. 
\end{itemize}

A fundamental step for our proof is therefore a way to relate cases from $W^\tau$ to the original \ournet $W$. To do this we introduce a projection function $\wkfProj_\tau$ that maps elements from sequences of valid firings (cases) of the enriched \ournet $W^\tau$ to sequences of valid firings (cases) composed of elements from the original \ournet $W$. 
The formal definition of $\wkfProj_\tau$ is given in Definition~\ref{def:trace:workflow:project} of~\ref{apx:encoding:traces}. In essence $\wkfProj_\tau$ maps newly introduced transitions $t_e$ to the original transition $t$ corresponding to the event $e$, and projects away the newly introduced places in the markings.

Once defined the projection function $\wkfProj_\tau$ the correctness and completeness statement can be formally formulated as follows: 

\begin{restatable}{theorem}{thTraceWorkflow}
	\label{Th:traceworkflow}
  Let $W$ be a \ournet and $\tau = (e_1,\ldots, e_n)$ a trace; then $W^\tau$ characterises all and only the cases of $W$ compatible with $\tau$. That is
  \begin{compactitem}
    \item[$\Rightarrow$] if $C$ is a case of $W^\tau$ then $\tau$ is compliant with the case of $W$ $\wkfProj_\tau(C)$; and 
    \item[$\Leftarrow$] if $\tau$ is compliant with the case of $W$ $C$ then there is a case $C'$ of $W^\tau$ s.t.\ $\wkfProj_\tau(C') = C$.
  \end{compactitem}
\end{restatable}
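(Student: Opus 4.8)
The plan is to prove the two implications separately, both pivoting on a structural analysis of the ``red'' chain of places $p_{e_0}, p_{e_1}, \ldots, p_{e_n}$ and transitions $t_{e_1}, \ldots, t_{e_n}$ introduced by Definition~\ref{def:trace:workflow}, together with the projection $\wkfProj_\tau$. The crucial preliminary observation, which I would isolate as a lemma, is that these red places can only be marked or emptied by the special transitions $start_t$, $end_t$ and the red transitions, and that they are chained so that $t_{e_i}$ consumes from $p_{e_{i-1}}$ and produces into $p_{e_i}$. Since the net is safe and $start_t$ (the unique consumer of the token in the start place, which has no incoming edge) fires exactly once, the single token deposited in $p_{e_0}$ can travel only forward along the chain; hence in any case of $W^\tau$ the red transitions fire \emph{exactly once each and in the order} $t_{e_1},\ldots,t_{e_n}$, and $end_t$ can fire only after $t_{e_n}$ has. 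A second preliminary fact I would record is that, because $\guardf^\tau(t_{e_i}) = \guardf(t_i)\land(\ldots)$ and $\writef^\tau(t_{e_i})$ merely restricts $\writef(t_i)$ (fixing the written variables in $dom(w_i)$ to $w_i(v)$ and leaving the rest as in $\writef(t_i)$), any valid firing of $t_{e_i}$ projects under $\wkfProj_\tau$ to a valid firing of $t_i$ in $W$.

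For the forward direction ($\Rightarrow$) I would take an arbitrary case $C$ of $W^\tau$. Its final marking puts a token in the sink, so $end_t$ fires; by the chain lemma all red transitions fire once, in order. I would then apply $\wkfProj_\tau$: black transitions project to themselves and red ones to their originals, and since on the black places the pre- and postsets of $t_{e_i}$ coincide with those of $t_i$, the projected sequence $\wkfProj_\tau(C)$ is a valid case of $W$ (using the second preliminary fact). Finally I would define the compliance map $\gamma$ by sending trace index $i$ to the position at which $t_{e_i}$ fired; monotonicity and injectivity are immediate from the ordering lemma, and compliance of $e_i = \tuple{t_i, w_i, w_i^d}$ with the corresponding firing follows by unfolding $\writef^\tau(t_{e_i})$ and $\guardf^\tau(t_{e_i})$: the written variables of the event are pinned to $w_i(v)$ in the post-assignment, while the observed-but-unwritten variables (those in $dom(w_i)\setminus dom(\writef(t_i))$ and $w_i^d\setminus dom(\writef(t_i))$) are constrained by the extra guard conjuncts and are left unchanged by the firing, so the observation persists in the post-state.

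For the backward direction ($\Leftarrow$) I would start from a case $C$ of $W$ with compliance witness $\gamma$, and rebuild a case $C'$ of $W^\tau$ by replacing, for each $i$, the firing of $t_i$ at position $\gamma(i)$ with a firing of $t_{e_i}$ (keeping every other firing, including other black occurrences of the same $t_i$, untouched) and threading the red token. The verification that $C'$ is valid proceeds by induction along the sequence: the black input places of $t_{e_i}$ are available exactly when those of $t_i$ were; its red input $p_{e_{i-1}}$ is available because $t_{e_{i-1}}$ fired earlier (as $\gamma(i-1)<\gamma(i)$); its guard holds because $\guardf(t_i)$ held and the extra conjuncts are precisely the compliance conditions on the unmodified variables (which equal their post-values); and the nondeterministic write can reproduce the assignment chosen in $C$, being forced to $w_i(v)$ where the event writes and free to copy $C$'s choice otherwise. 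Because $t_{e_n}$ fires, $p_{e_n}$ is marked and $end_t$ can fire, so $C'$ reaches a final state, and $\wkfProj_\tau(C') = C$ holds by construction.

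The main obstacle I anticipate is the token-flow/ordering lemma underpinning the forward direction: making rigorous, from safeness and the acyclic chain structure of the red places, that the red transitions must fire exactly once and in the prescribed order, and that no interleaving with the still-available black transitions can break this. A secondary subtlety, relevant to both directions, is the bookkeeping between the data conditions of Definition~\ref{def:compliance}, which are stated on the \emph{post}-firing assignment, and the conjuncts of $\guardf^\tau(t_{e_i})$, which are \emph{pre}-conditions; this is reconciled by observing that the variables carrying those conjuncts lie outside $dom(\writef(t_i))$ and are therefore unchanged by the firing.
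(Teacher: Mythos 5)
Your proposal is correct and follows essentially the same route as the paper: the paper's Lemma on the chained red places (each $t_{e_i}$ requires a prior $t_{e_{i-1}}$ and fires at most once when the start place holds a single token) is exactly your ordering lemma, its projection-validity lemma matches your second preliminary fact, and its correctness/completeness lemmata perform the same $\gamma$-construction and firing-replacement you describe, with the forward direction likewise closed by observing that a case must fire $t_{e_n}$ to mark the sink. Your handling of the pre- vs post-condition bookkeeping for the data payload also matches the paper's argument.
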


The proof of correctness follows from two foundamental properties of $W^\tau$ and the projection function $\wkfProj_\tau$ respectively. The first property, ensured by construction of $W^\tau$, states that $\tau$ is compliant with all cases of $W^\tau$. The second property, instead, states that any case for $W^\tau$ can be replayed on $W$ trough the projection $\wkfProj_\tau$, by mapping the new transitions $t_e$ into the original ones $t$, as shown by Lemma~\ref{lemma:validFiringMap} in~\ref{apx:encoding:traces}.  
Given these two properties, it is possible to prove that, if $C$ is a case
%whenever the case $C$
 of $W^\tau$, then $\tau$ is compatible with it and with its projection $\wkfProj_\tau(C)$ on $W$ (See Lemma~\ref{lemma:wkf:trace:encoding:crct} in~\ref{apx:encoding:traces}).
The proof of completeness instead is based in the fact that we can build a case for $W^\tau$ starting from the case of $W$ with which $\tau$ is compliant, by substituting the occurrences of firings corresponding to events in $\tau$ with the newly introduced transitions (See Lemma~\ref{lemma:wkf:trace:encoding:cmpl} in~\ref{apx:encoding:traces}).

Theorem~\ref{Th:traceworkflow} provides the main result of this section and is the basis for the reduction of the trace completion for $W$ and $\tau$ to the reachability problem for $W^\tau$: indeed, to check if $\tau$ is compliant with $W$ all we have to do is finding a case of $W^{\tau}$.

\begin{corollary}[Trace completion as reachability] \label{cor:reach-problem}
  Let $W$ be a \ournet and $\tau$ a trace; then $\tau$ is compliant with $W$ iff there is a final state $(\mrk_e, \eta')$ of $W^\tau$ that can be reached from the initial state $(\mrk_s, \eta_s)$.
\end{corollary}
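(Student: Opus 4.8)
The plan is to derive the corollary directly from Theorem~\ref{Th:traceworkflow} by unfolding the definitions of \emph{case}, of \emph{clean termination}, and of ``$\tau$ compliant with $W$''. The key observation I would record first is that, since $W^\tau$ is itself a \ournet (as argued immediately after Definition~\ref{def:trace:workflow}), the phrase ``a final state $(\mrk_e, \eta')$ of $W^\tau$ is reachable from the initial state $(\mrk_s, \eta_s)$'' is, by the reachability/clean-termination definition given at the end of Section~\ref{sec:ournet}, \emph{equivalent} to ``there exists a case of $W^\tau$''. Indeed, a case is by definition a sequence of valid firings that starts in the initial state and ends in a final state, which is exactly a witnessing path for the reachability of $(\mrk_e,\eta')$. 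With this equivalence in hand, the corollary reduces to the claim that $\tau$ is compliant with $W$ if and only if $W^\tau$ admits a case, and each direction is then one arrow of Theorem~\ref{Th:traceworkflow}.

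For the ``only if'' direction I would assume $\tau$ is compliant with $W$. By Definition~\ref{def:compliance} this gives a case $C$ of $W$ with which $\tau$ is compliant. The ($\Leftarrow$) arrow of Theorem~\ref{Th:traceworkflow} then produces a case $C'$ of $W^\tau$ with $\wkfProj_\tau(C') = C$; in particular $W^\tau$ admits a case, so by the equivalence above a final state of $W^\tau$ is reachable from $(\mrk_s,\eta_s)$. For the ``if'' direction I would assume some final state $(\mrk_e,\eta')$ of $W^\tau$ is reachable from $(\mrk_s,\eta_s)$, i.e.\ $W^\tau$ admits a case $C'$. The ($\Rightarrow$) arrow of Theorem~\ref{Th:traceworkflow} guarantees that $\tau$ is compliant with the case $\wkfProj_\tau(C')$ of $W$, whence, by Definition~\ref{def:compliance}, $\tau$ is compliant with $W$. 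This closes both directions.

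The only care-point I anticipate is the bookkeeping around markings: I must verify that the initial and final markings named in the corollary are genuinely those of $W^\tau$, so that ``case of $W^\tau$'' lines up precisely with the hypotheses of Theorem~\ref{Th:traceworkflow}. This is immediate from Definition~\ref{def:trace:workflow}, where $start$ and sink are inherited unchanged from $W$ and every auxiliary place $p_{e_0},\dots,p_{e_n}$ lies on the path from $start_t$ to $end_t$; consequently any final marking of $W^\tau$ carries a single token in the sink and leaves all auxiliary places empty, exactly matching the final state $(\mrk_e,\eta')$ of the statement. No new combinatorial argument is required—the entire substance of the result already resides in Theorem~\ref{Th:traceworkflow}, and the corollary is essentially a repackaging of it in the vocabulary of reachability.
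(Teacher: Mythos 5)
Your argument is correct and is essentially the paper's own proof: the paper likewise observes that reachability of a final state of $W^\tau$ from its initial state is equivalent to the existence of a case of $W^\tau$, and then invokes the two directions of Theorem~\ref{Th:traceworkflow}. You simply spell out the unfolding of the definitions in more detail than the paper's one-line proof does.
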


\begin{proof}
	This is immediate from Theorem~\ref{Th:traceworkflow} because a final state of $W^\tau$ can be reached from its initial state iff there exists at least a case of $W^{\tau}$.
\end{proof}

%% 
%% Forse inserire questo
%%
% It immediately follows that each of the above case is a finished and possible completion for $\tau$. Consider indeed the border case in which $\tau$ is empty: in such a way we obtain a check on the soundness of the model itself, i.e., we check if there is (at least) one way to correctly execute it from the start to the end.

We conclude the section by stating that the transformation generating $W^\tau$ is preserving the safeness properties of the original workflow (proof in~\ref{apx:encoding:traces}).

\begin{restatable}{theorem}{lemmaKSafe}
	\label{lemma:ksafe}
  Let $W$ be a \ournet and $\tau$ a trace of $W$. If $W$ is $k$-safe then $W^\tau$ is $k$-safe as well.
\end{restatable}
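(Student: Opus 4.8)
The plan is to exploit the structural decomposition of the places of $W^\tau$ into the \emph{original} (``black'') places $P$ and the \emph{new} (``red'') places $R = \{p_{e_0}\}\cup\{p_{e}\mid e\in\tau\}$ introduced in Definition~\ref{def:trace:workflow}, and to bound the token count of each group by a separate argument. Since $k$-boundedness concerns only markings, and markings of a \ournet evolve exactly according to the underlying WF-net firing rule (condition~1 of Definition~\ref{def:dwnet:firing}), the data guards and updates only \emph{restrict} which firings occur and hence only shrink the set of reachable markings; it therefore suffices to bound the marking component $\mrk^\tau$ of every reachable state of $W^\tau$. Note also that $k\ge 1$, since the initial marking of $W^\tau$ (one token in $start$) must itself be $k$-bound.

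For the black places I would reuse the projection $\wkfProj_\tau$ and Lemma~\ref{lemma:validFiringMap}. First I would observe that each new transition $t_{e_i}$ has the same incidence on $P$ as its original $t_i$: its only extra arcs are the red ones $(p_{e_{i-1}},t_{e_i})$ and $(t_{e_i},p_{e_i})$, so firing $t_{e_i}$ moves black tokens exactly as $t_i$ does. Consequently, for any sequence of valid firings of $W^\tau$ reaching a state with marking $\mrk^\tau$, Lemma~\ref{lemma:validFiringMap} yields a corresponding sequence of valid firings of $W$ reaching $\wkfProj_\tau(\mrk^\tau) = \mrk^\tau|_{P}$. As $W$ is $k$-bound, this projected marking is $k$-bound, whence $\mrk^\tau(p)\le k$ for every $p\in P$.

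For the red places I would prove the invariant $\sum_{p\in R}\mrk^\tau(p)\le 1$ at every reachable marking, which immediately gives $\mrk^\tau(p)\le 1\le k$ for each $p\in R$. The key observation is that $R$ forms a linear chain fed only by $start_t$ and drained only by $end_t$: inspecting $F^\tau$, the transition $start_t$ is the unique transition that deposits a token into $R$ without removing one; each $t_{e_i}$ consumes $p_{e_{i-1}}$ and produces $p_{e_i}$, hence is neutral on $\sum_{p\in R}\mrk^\tau(p)$; $end_t$ only removes a token from $R$; and no other (black) transition touches $R$. Since the $start$ place has no incoming arc in $W^\tau$ and carries a single token initially, $start_t$ can fire at most once along any run; thus the red-token sum starts at $0$, is increased only by the (at most one) firing of $start_t$, and is otherwise non-increasing, so it never exceeds $1$.

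Combining the two bounds, every reachable marking of $W^\tau$ assigns at most $k$ tokens to each place, and the initial marking is trivially $k$-bound, so $W^\tau$ is $k$-safe. The one real subtlety — the step I would treat most carefully — is the faithful transfer of the black-place marking through $\wkfProj_\tau$: one must check that $\wkfProj_\tau$ acts stepwise on firing sequences and preserves the restriction to $P$, which is exactly the content of Lemma~\ref{lemma:validFiringMap} together with the incidence observation above. Everything else is a direct structural inspection of $F^\tau$.
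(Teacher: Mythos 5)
Your proof is correct and follows essentially the same route as the paper's: it splits the places of $W^\tau$ into the original ones, bounded via the projection $\wkfProj_\tau$ and Lemma~\ref{lemma:validFiringMap}, and the newly introduced ones, bounded by a token-conservation argument on the chain fed only by $start_t$ (which is exactly the content of the paper's Lemma~\ref{lemma:trace:places}, here re-derived inline). The only difference is presentational: the paper packages the argument as an induction on the firing sequence with a case split by contradiction, whereas you state the two invariants directly.
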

% The preservation of $k$-safeness, and in particular of 1-safeness, \boh{is important in order to employ the proposed automated reasoning techniques described in Section~\ref{sec:encoding:planning}.}

%%% Local Variables:
%%% mode: latex
%%% TeX-master: "main.tex"
%%% save-place: t
%%% End:

\section{Evaluation}
\label{sec:evaluation}
%\todocdfinline{Is it fine to call the three tools solvers? An alternative would be reasoners. And is it fine to focus the evaluation on the three tools - each with its own paradigm?}

In this section we aim at evaluating the three investigated solvers \clingo, \fastdw and \nuxmv, and the respective paradigms answer set programming, automated planning and model checking, on the task of trace completion for
%completing incomplete traces of
 data-aware workflows. In detail, we are interested to answer the following two research questions:
\begin{enumerate}[start=1,label={\textbf{RQ\arabic*}.}]
%\item Are	the three solvers able to return a solution to the task of completing incomplete traces in a reasonable amount of time?
%\item Which of the three solvers does outperform the others by returning a solution to the task of completing incomplete traces in a reasonable amount of time?
\item What are the performance of the three solvers when accomplishing the task of trace completion
%completing incomplete traces
 by leveraging also data payloads?
\item What are the main factors impacting the solvers' capability to deal with the task of trace completion?
%completing incomplete traces?
\end{enumerate}

The first research question aims at evaluating and comparing the performance of the three solvers, both in terms of capability of returning a result within a reasonable amount of time and, when a result is returned, in terms of the time required for solving the task. The second research question, instead, aims at investigating which factors could influence the solvers' capability to deal with such a task (e.g., the level of incompleteness of the execution traces, the characteristics of the traces, the size of the process model, the domain of the data payload associated to the events, or the data payload itself).  

In order to answer the above research questions by evaluating the three investigated solvers in different scenarios, we carried out two different evaluations: one based on synthetic logs and a second one based on real-life logs. Indeed, on the one hand, the synthetic log evaluation (Subsection~\ref{ssec:synthetic_eval}) allows us to investigate the solvers' capabilities on logs with specific characteristics. On the other hand, the real-life log evaluation (Subsection~\ref{ssec:real-life_eval}) allows us to investigate the capability of the solvers to deal with real-life problems. 
In the next subsections, we detail the two evaluations (Subsections~\ref{ssec:synthetic_eval} and \ref{ssec:real-life_eval}) and we investigate some threats to the validity of the obtained results (Subsection~\ref{ssec:threats}). 
%In the next subsections, besides detailing the two evaluations (Subsections~\ref{ssec:synthetic_eval} and \ref{ssec:real-life_eval}), we carry on an overall discussion on the obtained results (Subsection~\ref{ssec:discussion}) and we finally investigate some threats to the validity of the obtained results. 

All the experiments have been carried out on a Kubernetes cluster running over a pool of virtual machines hosted on hardware based on Intel Xeon X5650 2.67GHz processors. For the container running the experiments a single CPU has been allocated since the reasoning systems we used do not exploit multiprocessing. For each run we set up a (real) time limit of one hour, and a memory limit of 8GiB.\footnote{Given the single task nature of the container, the CPU time is only marginally lower than the real time.} 
The code used for the experiments is available for download from~\url{https://doi.org/10.5281/zenodo.3459656}. The directory \texttt{experiments} includes the models and experiments descriptions, as well as a \texttt{README.md} file with the details on how to run them.

\subsection{Synthetic Log Evaluation}
\label{ssec:synthetic_eval}
In order to investigate the performance of the three solvers in different settings, we built synthetic models with different characteristics. For each of them, we identified different traces with different features and, in turn, for each trace, we considered different levels of trace completeness ranging from the complete trace (100\% complete) to the empty trace (0\% complete). 
In the next subsections we describe the datasets, the procedure and the metrics used, as well as the obtained results.
\subsubsection{Dataset, procedure and metrics}
We built five different synthetic models $M1 - M5$ with different characteristics and of different size. In detail, $M1$ is the model reported in Figure~\ref{fig:m1}. The model deals with 5 different variables: $number$, $first$, $second$, $third$, $fourth$ and $fifth$. Activity $A$ is able to write four variables ($first$, $second$, $third$ and $fourth$), activity $I$ writes variable $number$ and, finally, activity $G$ writes variable $fifth$. 

\begin{figure*}[h]
  \centering
    \tikzstyle{place}=[circle,draw=black,thick]
    \tikzstyle{transition}=[text width=.45cm,text centered,rectangle,draw=black!50,fill=black!20,thick]
    \scalebox{0.57}{\begin{tikzpicture}[font=\large]
      \node (start) [place,label=above:$start$] {};
      % \node at ( 0,1) [place] {};
%       \node at ( 0,0) [place] {};
      \node[transition] (a)  [right=.5cm of start, anchor=west] {A};
      \node[place,label=right:$p_1$] (p1) [right=.5cm of a, anchor=west] {};
      \node[transition] (b)  [above right=1cm of p1, anchor=west] {B};
      \node[transition] (c)  [below right=1cm of p1, anchor=west] {C};
      \node[place,label=above:$p_2$] (p2) [right=.5cm of b] {};
      \node[place,label=below:$p_3$] (p3) [right=.5cm of c] {};
      \node[transition] (d)  [right=.5cm of p2, anchor=west] {D};
      \node[transition] (e)  [right=.5cm of p3, anchor=west] {E};
      \node[place,label=left:$p_4$] (p4) [right=5cm of p1, anchor=west] {};
      \node[transition] (f)  [right=.5cm of p4] {F};
      \node[place,label=above:$p_6$] (p6)  [right=.5cm of f] {};
      \node[place,label=above:$p_5$] (p5)  [above=1cm of p6] {};
      \node[place,label=above:$p_7$] (p7)  [below=1cm of p6] {};
      \node[transition] (g)  [right=.5cm of p5] {G};
      \node[transition] (h)  [right=.5cm of p6] {H};
      \node[transition] (i)  [right=.5cm of p7] {I};
      \node[place,label=above:$p_8$] (p8)  [right=.5cm of g] {};
      \node[place,label=above:$p_9$] (p9)  [right=.5cm of h] {};
      \node[place,label=above:$p_{10}$] (p10)  [right=.5cm of i] {};
      \node[transition] (l)  [below=1cm of p10] {L};
      \node[transition] (m)  [right=.5cm of p9] {M};
      \node[place,label=above right:$p_{11}$] (p11) [right=.5cm of m] {};
      \node[transition] (o)  [right=1cm of p11] {O};
      \node[transition] (n)  [above=1cm of o] {N};
      \node[transition] (p)  [below=1cm of o] {P};
      \node[place,label=above:$p_{12}$] (p12)  [right=.5cm of n] {};
      \node[place,label=above:$p_{13}$] (p13)  [right=.5cm of o] {};
      \node[place,label=above:$p_{14}$] (p14)  [right=.5cm of p] {};
      \node[transition] (r)  [right=.5cm of p12] {R};
      \node[transition] (s)  [right=.5cm of p13] {S};
      \node[transition] (t)  [right=.5cm of p14] {T};
      \node[place,label={above right:$p_{15}$}] (p15)  [right=.5cm of s] {};
      \node[transition] (u)  [right=.5cm of p15] {U};
      \node[place,label=above:$end$] (end)  [right=.5cm of u] {};

      \draw [->,thick] (start.east) -- (a.west);
      \draw [->,thick] (a.east) -- (p1.west);
      \draw [->,thick] (p1.north) to [bend left=45] (b.west);
      \draw [->,thick] (p1.south) to [bend right=45] (c.west);
      \draw [->,thick] (b.east) to (p2.west);
      \draw [->,thick] (c.east) to (p3.west);
      \draw [->,thick] (p2.east) to (d.west);
      \draw [->,thick] (p3.east) to (e.west);
      \draw [->,thick] (d.east) to [bend left=45] (p4.north);
      \draw [->,thick] (e.east) to [bend right=45] (p4.south);
      \draw [->,thick] (p4.east) to (f.west);
      \draw [->,thick] (f.east) to (p6.west);
      \draw [->,thick] (f.north) to [bend left=45] (p5.west);
      \draw [->,thick] (f.south) to [bend right=45] (p7.west);
      \draw [->,thick] (p5.east) to (g.west);
      \draw [->,thick] (p6.east) to (h.west);
      \draw [->,thick] (p7.east) to (i.west);
      \draw [->,thick] (g.east) to (p8.west);
      \draw [->,thick] (h.east) to (p9.west);
      \draw [->,thick] (i.east) to (p10.west);
      \draw [->,thick] (p10.south) to node[fill=white,inner sep=1pt, midway] {{num$\neq$5}} (l.north);
      \draw [->,thick] (l.west) to [bend left=30] (p7.south);
      \draw [->,thick] (p8.east) to [bend left=45](m.north);
      \draw [->,thick] (p9.east) to (m.west);
      \draw [->,thick] (p10.east) to [bend right=45] node[fill=white,inner sep=1pt, midway] {{num=5}} (m.south);
      \draw [->,thick] (p9.east) to (m.west);
      \draw [->,thick] (m.east) to (p11.west);
      \draw [->,thick] (p11.north) to [bend left=30] node[fill=white,inner sep=1pt, midway] {{f=1}} (n.west);
      \draw [->,thick] (p11.east) to node[below,fill=white,inner sep=1pt, midway] {{s=2}} (o.west);
      \draw [->,thick] (p11.south) to [bend right=30]node[fill=white,inner sep=1pt, midway] {{t=3}} (p.west);
      \draw [->,thick] (n.east) to (p12.west);
      \draw [->,thick] (o.east) to (p13.west);
      \draw [->,thick] (p.east) to (p14.west);
      \draw [->,thick] (p12.east) to (r.west);
      \draw [->,thick] (p13.east) to (s.west);
      \draw [->,thick] (p14.east) to (t.west);
      \draw [->,thick] (r.east) to [bend left=30] (p15.north);
      \draw [->,thick] (s.east) to  (p15.west);
      \draw [->,thick] (t.east) to [bend right=30](p15.south);
      \draw [->,thick] (p15.east) to (u.west);
      \draw [->,thick] (u.east) to (end.west);  
      \node (bb) [draw,thick,dotted,inner ysep=1.5em,fit=(a) (u) (l) (r)] {};
      \node [anchor=south east, at=(bb.south east)] {\scalebox{1.5}{$K$}};    
    \end{tikzpicture}}
  \caption{Model M1 and its core $K$}
  \label{fig:m1}
\end{figure*}
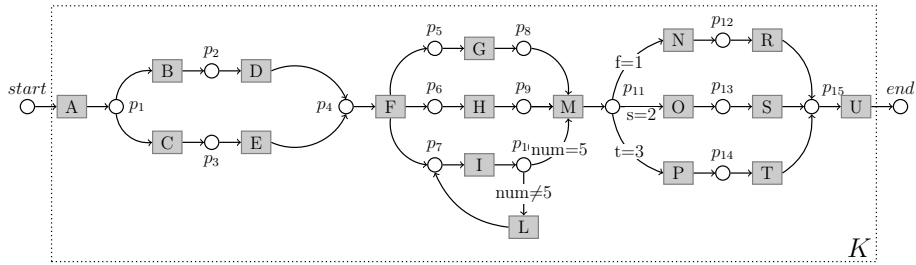

Figure~\ref{fig:m1} also shows the guards related to model $M1$. In detail, the exit condition from the loop ($L - I$) is variable $number$ equals to $5$ at $I$, while whenever the value of $number$ written by $I$ is different from $5$, the loop is exited and $M$ is executed. Moreover, variables $first$, $second$ and $third$ determine the branch to be taken at the end of the model. In detail, if at $M$ $first$ is equal to $1$, the branch with $N$ is taken, if $second$ is equal to $2$, the branch $O$ is taken, while if $third$ is equal to $3$, the branch starting with $P$ is taken\footnote{For the sake of readability $first$, $second$ and $third$ are shortened in $f, s$ and $t$ in Figure ~\ref{fig:m1}.}. Whenever more than one of the three conditions are verified, one of the three branches can be non-deterministically taken. 

\begin{figure}
	\centering
	\subfigure[$M1$]{%
	\label{fig:m1b}%
	\includegraphics[scale=0.3]{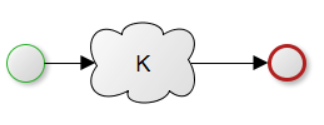}}%
	\qquad
	\subfigure[$M2$]{%
	\label{fig:m2}%
	\includegraphics[scale=0.3]{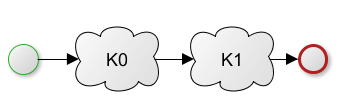}}%
	\qquad
	\subfigure[$M3$]{%
	\label{fig:m3}%
	\includegraphics[scale=0.3]{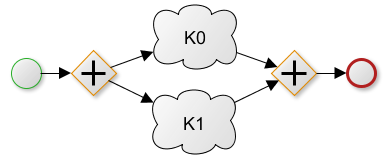}}
	\\%
	\subfigure[$M4$]{%
	\label{fig:m4}%
	\includegraphics[scale=0.3]{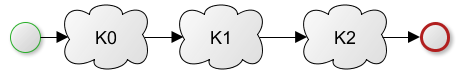}}%
	\qquad
	\subfigure[$M5$]{%
	\label{fig:m5}%
	\includegraphics[scale=0.3]{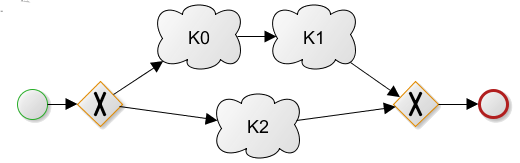}}%
	\caption{Models $M1,\ldots, M5$ used in the evaluation}
	\label{fig:models}
\end{figure}

We built the synthetic models $M2$ $\ldots$ $M5$ starting from model $M1$ and creating replicas of $M1$. For notation simplicity, we denote with $K$ the core part of $M1$, i.e., the model obtained by excluding the start and the end event (see Figure~\ref{fig:m1b}) and we enumerate the replicas of $K$ in $M2$ $\ldots$ $M5$ with a progressive number. In detail, $M2$ concatenates two replicas of $K$, labelled as $K0$ and $K1$ (see Figure~\ref{fig:m2}), $M3$ uses the parallel construct to compose together two replicas of $K$, namely $K0$ and $K1$ (see Figure~\ref{fig:m3}). $M4$ concatenates three replicas of $K$, namely $K0$, $K1$ and $K2$ (see Figure~\ref{fig:m4}). Finally, in $M5$, three replicas of $K$ are composed in an exclusive choice. In detail, one of the two alternative branches concatenates two replicas of $K$, i.e., $K0$ and $K1$, while the other exclusive branch is composed of a single replica of $K$, named $K2$ (see Figure~\ref{fig:m5}). Different replicas of $K$ do not share variables, which are renamed in order to avoid interactions that could limit the compatible traces. 

For each of the five models $M1 \ldots M5$, we generated $8$ different execution traces of different length and with different characteristics. In detail, we generated $4$ compliant traces and $4$ non-compliant traces. Out of the $4$ compliant traces, in two traces (trace types $T3$ and $T4$)
%$t3$ and $t4$)
 the loop(s) of $K$ replica(s) is(are) never executed, while in the remaining two, the loop is executed twice. Moreover, in two (trace types $T1$ and $T3$)
%$t1$ and $t3$)
 out of the four cases of the compliant traces, the path followed by the trace is deterministically driven by the values associated by the activity $A$ to the variables $\mathit{first}$, $\mathit{second}$ and $\mathit{third}$, while in the remaining two traces more than one branch could be activated based on the values of the variables at activity $M$. Also in the case of non-compliant traces, two traces (trace types $T5$ and $T7$)
%$t5$ and $t7$)
 do not execute any loop iteration, while the other $2$ traces iterate over the loop two times. Finally, out of the four non-compliant traces, two (trace type $T5$ and $T6$)
%$t5$ and $t6$)
 are not compliant only because of control flow violations (mutually exclusive paths are executed in the same trace), while the remaining two traces are non-compliant due to data constraint violations (a path that is not activated by the corresponding guard is executed). Table~\ref{tab:synthetic_traces} reports, for each trace $t_{ij}$ of model $M_i$ and trace type $T_j$, some information about its characteristics.

\definecolor{Gray}{gray}{0.9}
\newcolumntype{a}{>{\columncolor{Gray}}c}
\definecolor{LightGray}{gray}{0.9}
\newcolumntype{b}{>{\columncolor{LightGray}}c}
\begin{table}
	\centering
	\scalebox{0.43}{
  \begin{tabular}{|a|c c|c c c c c c|c c|c c c c c c|}
  \toprule
	\textbf{Model} & \textbf{Trace type} & \textbf{Trace} & \textbf{Cycle It.} & \textbf{Det.} & \textbf{Conf.} & \textbf{Cause} & \textbf{Compl.} & \textbf{Length} & \textbf{Trace type} & \textbf{Trace} & \textbf{Cycle It.} & \textbf{Det.} & \textbf{Conf.} & \textbf{Cause} & \textbf{Compl.} & \textbf{Length}\\	\hline
	& & & & & & & $100\%$ & 11 & & & & & & & $100\%$ & 13 \\
	& & & & & & & $75\%$ & 8 & & & & & & & $25\%$ & 10 \\
	& & & & & & & $50\%$ & 5 & & & & & & & $50\%$ & 7 \\ 
	& \multirow{-4}{*}{$T_{1}$} & \multirow{-4}{*}{$t_{11}$} & \multirow{-4}{*}{N} & \multirow{-4}{*}{Y} & \multirow{-4}{*}{C} & & $25\%$ & 3 & \multirow{-4}{*}{$T_{5}$} & \multirow{-4}{*}{$t_{15}$} & \multirow{-4}{*}{Y} & \multirow{-4}{*}{N} & \multirow{-4}{*}{NC} & \multirow{-4}{*}{CF} & $75\%$ & 4 \\ \cline{2-17}
	& & & & & & & $100\%$ & 11 & & & & & & & $100\%$ & 17 \\
	& & & & & & & $75\%$ & 8 & & & & & & & $75\%$ & 13 \\ 
	& & & & & & & $50\%$ & 5 & & & & & & & $50\%$ & 10 \\
	& \multirow{-4}{*}{$T_{2}$}& \multirow{-4}{*}{$t_{12}$} & \multirow{-4}{*}{N} & \multirow{-4}{*}{N} & \multirow{-4}{*}{C} & & $25\%$ & 3 & \multirow{-4}{*}{$T_{6}$}& \multirow{-4}{*}{$t_{16}$} & \multirow{-4}{*}{2} & \multirow{-4}{*}{Y} & \multirow{-4}{*}{NC} & \multirow{-4}{*}{CF} & $25\%$ & 6 \\ \cline{2-17}
	& & & & & & & $100\%$ & 15 & & & & & & & $100\%$ & 11 \\ 
	& & & & & & & $75\%$ & 11 & & & & & & & $75\%$ & 8 \\
	& & & & & & & $50\%$ & 8 & & & & & & & $50\%$ & 5 \\
	& \multirow{-4}{*}{$T_{3}$} & \multirow{-4}{*}{$t_{13}$} & \multirow{-4}{*}{2} & \multirow{-4}{*}{Y} & \multirow{-4}{*}{C} & & $25\%$ & 4 & \multirow{-4}{*}{$T_{7}$}& \multirow{-4}{*}{$t_{17}$} & \multirow{-4}{*}{N} & \multirow{-4}{*}{Y} & \multirow{-4}{*}{NC} & \multirow{-4}{*}{CF+DF} & $25\%$ & 3 \\ \cline{2-17}
	& & & & & & & $100\%$ & 15 & & & & & & & $100\%$ & 15 \\
	& & & & & & & $75\%$ & 11 & & & & & & & $75\%$ & 11\\ 
	& & & & & & & $50\%$ & 8 & & & & & & & $50\%$ & 8 \\
	\multirow{-16}{*}{$M1$} & \multirow{-4}{*}{$T_{4}$} & \multirow{-4}{*}{$t_{14}$} & \multirow{-4}{*}{2} & \multirow{-4}{*}{N} & \multirow{-4}{*}{C} & & $25\%$ & 4 & \multirow{-4}{*}{$T_{8}$} &\multirow{-4}{*}{$t_{18}$} & \multirow{-4}{*}{2} & \multirow{-4}{*}{Y} & \multirow{-4}{*}{NC} & \multirow{-4}{*}{CF+DF} & $25\%$ & 4 \\ \hline
	& & & & & & & $100\%$ & 22 & & & & & & & $100\%$ & 26 \\ 
	& & & & & & & $75\%$ & 16 & & & & & & & $75\%$ & 20 \\ 
	& & & & & & & $50\%$ & 10 & & & & & & & $50\%$ & 14 \\ 
	& \multirow{-4}{*}{$T_{1}$} & \multirow{-4}{*}{$t_{21}$} & \multirow{-4}{*}{N} & \multirow{-4}{*}{Y} & \multirow{-4}{*}{C} & & $25\%$ & 6 & \multirow{-4}{*}{$T_{5}$} & \multirow{-4}{*}{$t_{25}$} & \multirow{-4}{*}{N} & \multirow{-4}{*}{Y} & \multirow{-4}{*}{NC} & \multirow{-4}{*}{CF} & $25\%$ & 8 \\ \cline{2-17}
	& & & & & & & $100\%$ & 22 & & & & & & & $100\%$ & 34 \\ 
	& & & & & & & $75\%$ & 16 & & & & & & & $75\%$ & 26 \\ 
	& & & & & & & $50\%$ & 10 & & & & & & & $50\%$ & 20 \\ 
	& \multirow{-4}{*}{$T_{2}$} & \multirow{-4}{*}{$t_{22}$} & \multirow{-4}{*}{N} & \multirow{-4}{*}{N} & \multirow{-4}{*}{C} & & $25\%$ & 6 & \multirow{-4}{*}{$T_{6}$} & \multirow{-4}{*}{$t_{26}$} & \multirow{-4}{*}{2} & \multirow{-4}{*}{Y} & \multirow{-4}{*}{NC} & \multirow{-4}{*}{CF} & $25\%$ & 12 \\ \cline{2-17}
  &	& & & & & & $100\%$ & 30 & & & & & & & $100\%$ & 22 \\ 
  & & & & & & & $75\%$ & 22 & & & & & & &$75\%$ & 16 \\ 
	& & & & & & & $50\%$ & 16 & & & & & & & $50\%$ & 10 \\ 
	& \multirow{-4}{*}{$T_{3}$} & \multirow{-4}{*}{$t_{23}$} & \multirow{-4}{*}{2} & \multirow{-4}{*}{Y} & \multirow{-4}{*}{C} & & $25\%$ & 8 & \multirow{-4}{*}{$T_{7}$} & \multirow{-4}{*}{$t_{27}$} & \multirow{-4}{*}{N} & \multirow{-4}{*}{Y} & \multirow{-4}{*}{NC} & \multirow{-4}{*}{CF+DF} &  $25\%$ & 6 \\ \cline{2-17}
	& & & & & & & $100\%$ & 30 & & & & & & & $100\%$ & 30 \\ 
	& & & & & & & $75\%$ & 22 & & & & & & & $75\%$ & 22 \\ 
	& & & & & & & $50\%$ & 16 & & & & & & & $50\%$ & 16 \\ 
	\multirow{-16}{*}{$M2$} & \multirow{-4}{*}{$T_{4}$} & \multirow{-4}{*}{$t_{24}$} & \multirow{-4}{*}{2} & \multirow{-4}{*}{N} & \multirow{-4}{*}{C} & & $25\%$ & 8 & \multirow{-4}{*}{$T_{8}$} & \multirow{-4}{*}{$t_{28}$} & \multirow{-4}{*}{2} & \multirow{-4}{*}{Y} & \multirow{-4}{*}{NC} & \multirow{-4}{*}{CF+DF}& $25\%$ & 8 \\	\hline
  & & & & & & & $100\%$ & 22 & & & & & & & $100\%$ & 26 \\
  & & & & & & & $75\%$ & 16 & & & & & & & $75\%$ & 20 \\
  & & & & & & & $50\%$ & 10 & & & & & & & $50\%$ & 14 \\
& \multirow{-4}{*}{$T_{1}$} & \multirow{-4}{*}{$t_{31}$} & \multirow{-4}{*}{Y} & \multirow{-4}{*}{Y} & \multirow{-4}{*}{C} & & $25\%$ & 6 & \multirow{-4}{*}{$T_{5}$} & \multirow{-4}{*}{$t_{35}$} & \multirow{-4}{*}{N} & \multirow{-4}{*}{Y} & \multirow{-4}{*}{NC} & \multirow{-4}{*}{CF} & $25\%$ & 8 \\ \cline{2-17}	
 & & & & & & & $100\%$ & 22  & & & & & & & $100\%$ & 34 \\ 
 & & & & & & & $75\%$ & 16 & & & & & & & $75\%$ & 26 \\ 
 & & & & & & & $50\%$ & 10 & & & & & & & $50\%$ & 20 \\ 
 & \multirow{-4}{*}{$T_{2}$} & \multirow{-4}{*}{$t_{32}$} & \multirow{-4}{*}{N} & \multirow{-4}{*}{Y} & \multirow{-4}{*}{C} & & $25\%$ & 6 & \multirow{-4}{*}{$T_{6}$} & \multirow{-4}{*}{$t_{36}$} & \multirow{-4}{*}{2} & \multirow{-4}{*}{Y} & \multirow{-4}{*}{NC} & \multirow{-4}{*}{CF} & $25\%$ & 12 \\ \cline{2-17}
	& & & & & & & $100\%$ & 30 & & & & & & & $100\%$ & 22 \\ 
	& & & & & & & $75\%$ & 22 & & & & & & &$75\%$ & 16 \\ 
	& & & & & & & $50\%$ & 16 & & & & & & & $50\%$ & 10 \\ 
	& \multirow{-4}{*}{$T_{3}$} & \multirow{-4}{*}{$t_{33}$} & \multirow{-4}{*}{2} & \multirow{-4}{*}{Y} & \multirow{-4}{*}{C} & & $25\%$ & 8 & \multirow{-4}{*}{$T_{7}$} & \multirow{-4}{*}{$t_{37}$} & \multirow{-4}{*}{N} & \multirow{-4}{*}{Y} & \multirow{-4}{*}{NC} & \multirow{-4}{*}{CF+DF} &  $25\%$ & 6 \\ \cline{2-17}
	& & & & & & & $100\%$ & 30 & & & & & & & $100\%$ & 30 \\ 
	& & & & & & & $75\%$ & 22 & & & & & & & $75\%$ & 22 \\ 
	& & & & & & & $50\%$ & 16 & & & & & & & $50\%$ & 16 \\ 
	\multirow{-16}{*}{$M3$} & \multirow{-4}{*}{$T_{4}$} & \multirow{-4}{*}{$t_{34}$} & \multirow{-4}{*}{2} & \multirow{-4}{*}{N} & \multirow{-4}{*}{C} & & $25\%$ & 8 & \multirow{-4}{*}{$T_{8}$} & \multirow{-4}{*}{$t_{38}$} & \multirow{-4}{*}{2} & \multirow{-4}{*}{Y} & \multirow{-4}{*}{NC} & \multirow{-4}{*}{CF+DF}& $25\%$ & 8 \\	\hline	
  & & & & & & & $100\%$ & 33 & & & & & & & $100\%$ & 39 \\
  & & & & & & & $75\%$ & 24 & & & & & & & $75\%$ & 30 \\
  & & & & & & & $50\%$ & 15 & & & & & & & $50\%$ & 21 \\
& \multirow{-4}{*}{$T_{1}$} & \multirow{-4}{*}{$t_{41}$} & \multirow{-4}{*}{Y} & \multirow{-4}{*}{Y} & \multirow{-4}{*}{C} & & $25\%$ & 9 & \multirow{-4}{*}{$T_{5}$} & \multirow{-4}{*}{$t_{45}$} & \multirow{-4}{*}{N} & \multirow{-4}{*}{Y} & \multirow{-4}{*}{NC} & \multirow{-4}{*}{CF} & $25\%$ & 12 \\ \cline{2-17}	
 & & & & & & & $100\%$ & 33 & & & & & & & $100\%$ & 51 \\ 
 & & & & & & & $75\%$ & 24 & & & & & & & $75\%$ & 39 \\ 
 & & & & & & & $50\%$ & 15 & & & & & & & $50\%$ & 30 \\ 
 & \multirow{-4}{*}{$T_{2}$} & \multirow{-4}{*}{$t_{42}$} & \multirow{-4}{*}{N} & \multirow{-4}{*}{Y} & \multirow{-4}{*}{C} & & $25\%$ & 9 & \multirow{-4}{*}{$T_{6}$} & \multirow{-4}{*}{$t_{46}$} & \multirow{-4}{*}{2} & \multirow{-4}{*}{Y} & \multirow{-4}{*}{NC} & \multirow{-4}{*}{CF} & $25\%$ & 18 \\ \cline{2-17}
	& & & & & & & $100\%$ & 45 & & & & & & & $100\%$ & 33 \\ 
	& & & & & & & $75\%$ & 33 & & & & & & &$75\%$ & 24\\ 
	& & & & & & & $50\%$ & 24 & & & & & & & $50\%$ & 15 \\ 
	& \multirow{-4}{*}{$T_{3}$} & \multirow{-4}{*}{$t_{43}$} & \multirow{-4}{*}{2} & \multirow{-4}{*}{Y} & \multirow{-4}{*}{C} & & $25\%$ & 12 & \multirow{-4}{*}{$T_{7}$} & \multirow{-4}{*}{$t_{47}$} & \multirow{-4}{*}{N} & \multirow{-4}{*}{Y} & \multirow{-4}{*}{NC} & \multirow{-4}{*}{CF+DF} &  $25\%$ & 9 \\ \cline{2-17}
	& & & & & & & $0\%$ & 45 & & & & & & & $0\%$ & 45 \\ 
	& & & & & & & $25\%$ & 33 & & & & & & & $25\%$ & 33 \\ 
	& & & & & & & $50\%$ & 24 & & & & & & & $50\%$ & 24 \\ 
	\multirow{-16}{*}{$M4$} & \multirow{-4}{*}{$T_{4}$} & \multirow{-4}{*}{$t_{44}$} & \multirow{-4}{*}{2} & \multirow{-4}{*}{N} & \multirow{-4}{*}{C} & & $75\%$ & 12 & \multirow{-4}{*}{$T_{8}$} & \multirow{-4}{*}{$t_{48}$} & \multirow{-4}{*}{2} & \multirow{-4}{*}{Y} & \multirow{-4}{*}{NC} & \multirow{-4}{*}{CF+DF}& $75\%$ & 12 \\	\hline		
  & & & & & & & $0\%$ & 22 & & & & & & & $0\%$ & 13 \\
  & & & & & & & $25\%$ & 8 & & & & & & & $25\%$ & 10 \\
  & & & & & & & $50\%$ & 10 & & & & & & & $50\%$ & 7 \\
& \multirow{-4}{*}{$T_{1}$} & \multirow{-4}{*}{$t_{51}$} & \multirow{-4}{*}{Y} & \multirow{-4}{*}{Y} & \multirow{-4}{*}{C} & & $75\%$ & 3 & \multirow{-4}{*}{$T_{5}$} & \multirow{-4}{*}{$t_{55}$} & \multirow{-4}{*}{N} & \multirow{-4}{*}{Y} & \multirow{-4}{*}{NC} & \multirow{-4}{*}{CF} & $75\%$ & 4 \\ \cline{2-17}	
 & & & & & & & $0\%$ & 11 & & & & & & & $0\%$ & 34 \\ 
 & & & & & & & $25\%$ & 16 & & & & & & & $25\%$ & 26 \\ 
 & & & & & & & $50\%$ & 10 & & & & & & & $50\%$ & 10 \\ 
 & \multirow{-4}{*}{$T_{2}$} & \multirow{-4}{*}{$t_{52}$} & \multirow{-4}{*}{N} & \multirow{-4}{*}{Y} & \multirow{-4}{*}{C} & & $75\%$ & 6 & \multirow{-4}{*}{$T_{6}$} & \multirow{-4}{*}{$t_{56}$} & \multirow{-4}{*}{2} & \multirow{-4}{*}{Y} & \multirow{-4}{*}{NC} & \multirow{-4}{*}{CF} & $75\%$ & 12 \\ \cline{2-17}
	& & & & & & & $0\%$ & 30 & & & & & & & $0\%$ & 11 \\ 
	& & & & & & & $25\%$ & 11 & & & & & & &$25\%$ & 8\\ 
	& & & & & & & $50\%$ & 16 & & & & & & & $50\%$ & 10 \\ 
	& \multirow{-4}{*}{$T_{3}$} & \multirow{-4}{*}{$t_{53}$} & \multirow{-4}{*}{2} & \multirow{-4}{*}{Y} & \multirow{-4}{*}{C} & & $75\%$ & 8 & \multirow{-4}{*}{$T_{7}$} & \multirow{-4}{*}{$t_{57}$} & \multirow{-4}{*}{N} & \multirow{-4}{*}{Y} & \multirow{-4}{*}{NC} & \multirow{-4}{*}{CF+DF} &  $75\%$ & 6 \\ \cline{2-17}
	& & & & & & & $0\%$ & 30 & & & & & & & $0\%$ & 15 \\ 
	& & & & & & & $25\%$ & 22 & & & & & & & $25\%$ & 22 \\ 
	& & & & & & & $50\%$ & 16 & & & & & & & $50\%$ & 8 \\ 
	\multirow{-16}{*}{$M5$} & \multirow{-4}{*}{$T_{4}$} & \multirow{-4}{*}{$t_{54}$} & \multirow{-4}{*}{2} & \multirow{-4}{*}{N} & \multirow{-4}{*}{C} & & $75\%$ & 8 & \multirow{-4}{*}{$T_{8}$} & \multirow{-4}{*}{$t_{58}$} & \multirow{-4}{*}{2} & \multirow{-4}{*}{Y} & \multirow{-4}{*}{NC} & \multirow{-4}{*}{CF+DF}& $75\%$ & 4 \\	%\hline		
	\bottomrule
	\end{tabular}
	\label{tab:synthetic_traces}
	}
	\caption{Synthetic trace characteristics}
\end{table}

Finally, for each of the $40$ traces, we considered five different levels of completeness\footnote{The execution traces have been made incomplete by removing events so as to preserve the non-compliance of traces (compliance is always preserved), as well as the capability for a solver to reconstruct the missing events.}, ranging from $100\%$ (i.e., the complete trace) to 0\% (i.e., the empty trace). Table~\ref{tab:synthetic_traces}, column \textit{Completeness} shows four level of completeness ($100\%$, $75\%$, $50\%$ and $25\%$) for each of the considered traces and the corresponding characteristics. In the table we did not report the information related to the empty traces ($0\%$ of completeness), which is the same for all the $8$ different types of traces.
For each of the three solvers, hence, we carried on, in total, $165$ runs ($5$ models, $8$ traces for each model and $4$ levels of incompleteness for each trace and model, as well as $5$ empty traces per model). 

We evaluated the first research question (\textbf{RQ1}) by computing, for each solver (i) the number/percentage of runs the solver is able to return; (ii) when a solution is returned, the time required by the solver for returning the solution. For \textbf{RQ2}, instead, we evaluated the metrics reported above, when changing factors, such as the level of incompleteness of the traces, the characteristics of the traces, as well as the model used for generating the traces.

\subsubsection{Results}
Only 2 runs out of the $495$ runs carried out for the synthetic datasets did not return a result because they exceeded the maximum time threshold. 
%Only $455$ runs out of the $495$ ones carried out for the synthetic datasets returned a result. The other runs were not able to provide an outcome, either because they exceeded the maximum time threshold or because of an out-of-memory problem.
Differently from \nuxmv, which exceeded the maximum time limit in $2$ cases ($\sim$$1\%$ of the runs), both \clingo and \fastdw  were always able to return results.
%Differently from \clingo, that was unable to return results in $38$ cases (around $23\%$ of the runs), \nuxmv was almost always and \fastdw always able to return results, respectively. In detail, \nuxmv exceeded the maximum time limit in only $2$ cases ($\sim$$1\%$ of the runs), none of which was among the $38$ cases that were problematic for \clingo.
%, while \fastdw was the only solver able to always return results. 
%\todocdfinline{@Sergio: Do we need/are we able to say something about why does \clingo exceed so often the time limit?}    

%\begin{table}
%	\centering
%		\scalebox{0.7}{
%		\begin{tabular}{c c c c}
%		\rowcolor{lightgray}\textbf{runs}& \clingo & \fastdw &  \nuxmv \\
%		%\rowcolor{lightgray}\multirow{-2}{*}{\textbf{total}} & \textbf{timed-out} & \textbf{timed-out} & \textbf{out-of-memory}\\
%		\toprule
%		\multirow{2}{*}{165} & \multirow{2}{*}{\btext{0}} & \multirow{2}{*}{0} & 2 \\
%		& & & (timed-out) \\ 
%		\bottomrule
%		\end{tabular}
%		}
%	\caption{Timed-out and out-of-memory runs for the three solvers}
%	\label{tab:synthetic}
%\end{table}

Figure~\ref{fig:synthetic}, besides the small percentage of timed-out runs of \nuxmv, 
%\btext{(}/out-of-memory\btext{)} runs,
 shows the average CPU time (and the corresponding variance) required by each of the three solvers to return results.
We can clearly observe that \clingo and \fastdw are the solvers with the best performance. Differently from \nuxmv, which fails in returning results within the time-out threshold for two runs, 
% a  result for two timed-out runs,
 the two solvers are able to return results for each of the runs. Moreover, \clingo and \fastdw have comparable performance in terms of time required for trace completion (see the overlapping lines of \clingo and \fastdw in Figure~\ref{fig:synthetic}). The time required by the two solvers (few seconds on average)
% besides being able to return results for each of the runs, differently from \nuxmv, which reports a time-out $2$ cases,
% the time that the two solvers require for completing incomplete traces (few seconds on average)
 is always lower than the time required by \nuxmv, which takes, on average, about $8$ minutes. 

This analysis allows us to assess that, overall, in case of synthetic datasets (of medium size), \clingo and \fastdw are the most reliable in finding solutions in a reasonable amount of time, and the fastest ones among the three considered solvers. \nuxmv has instead lower performance both in terms of capability to return results in a reasonable amount of time, as well as in terms of the average amount of time required to return computations (\textbf{RQ1}).

%This analysis allows us to assess that, overall, in case of synthetic datasets (of medium size), \fastdw is the most reliable and the fastest one among the three considered solvers. \nuxmv is the second best solver in terms of capability to return results in a reasonable amount of time, while \clingo, considering only the cases in which it returns a result, is the second fastest solver to return computations (\textbf{RQ1}).
%\fastdw is followed by \nuxmv in terms of capability to return results in a reasonable amount of time and by \clingo in terms of time required for returning computations - when it is able to  (\textbf{RQ1}).
%Overall, both in terms of reliability and required time, it is followed by \nuxmv and \clingo. 

In detail, Figure~\ref{fig:synthetic_completeness} shows the performance of the three solvers for different levels of incompleteness. By looking at the figure, we can observe that the level of incompleteness has an impact on \nuxmv: the more the trace is empty, the more the solver is able to return results (i.e., the less timed-out runs occur), the faster it is, and the lower the variance of the runs is. 
%Concerning \clingo and \fastdw, we can only notice, differently from \nuxmv, a decrease \clingo's timed-out runs for increasing levels of completeness. 

%In detail, Figure~\ref{fig:synthetic_completeness} shows the performance of the three solvers for different levels of incompleteness. By looking at the figure, we can observe that the level of incompleteness has an impact especially on \nuxmv: the more the trace is empty, the more the solver is able to return results (i.e., the less number of timed-out runs occur), the fastest it is and the lower the variance of the runs is. 
%Concerning \clingo and \fastdw, we can only notice, differently from \nuxmv, a decrease \clingo's timed-out runs for increasing levels of completeness. 

The plot in Figure~\ref{fig:synthetic_trace}, which reports solvers' performance for traces with different characteristics (the empty trace and the 8 types of traces considered), shows that the only runs for which \nuxmv is unable to return results relate to traces of type $T5$ and $T6$, i.e., traces non-compliant due to issues  related to the control flow. However, the characteristics of these two types of traces, do not seem to impact the average time required by \nuxmv to return results.  Differently from what happens for timed-out runs, the types of traces that require more time to \nuxmv seem to be $T3$ and $T8$.
%However, these are not the traces requiring on average more time to the \nuxmv solver.
%
% The average time required by the other two solvers, instead, is almost constant for each of the traces. Overall, we cannot identify a pattern related to the trace characteristics on none of the three solvers.}

%The plot in Figure~\ref{fig:synthetic_trace}, which reports solvers' performance for traces with different characteristics, shows that while all solvers are able to deal with trace $t7$, $t3$ and $t8$ are difficult to manage for \clingo (both in terms of timed-out runs and required time) and for \nuxmv (in terms of required time). In particular, the high variance reported for these two traces suggests that the time required highly depends on the specific run. Overall, we cannot identify a pattern related to the trace characteristics on none of the three solvers.

%Differently from \nuxmv and \clingo, none of the three investigated factors affects the performance of \fastdw, which is always able to return a solution in few seconds.

%Finally, Figure~\ref{fig:synthetic_model} investigates the impact of different models on the solvers. The plot suggests that all of them are able to manage well small models (e.g., $M1$), while both \clingo and \nuxmv have difficulties with larger models. In particular, these two solvers have a higher number of timed-out runs and require more time for returning results with model $M4$, which is the largest model, i.e., the model whose traces are always the longest traces (see Table~\ref{tab:synthetic_traces}).  
Finally, Figure~\ref{fig:synthetic_model} investigates the impact of different models on the solvers. The plot suggests that all the solvers are able to manage well small and medium models (e.g., $M1$, $M2$ and $M3$). \nuxmv has instead difficulties with larger models. In particular, both the two timed-out runs relate to model $M4$, which is the largest model, i.e., the model whose traces are always the longest traces (see Table~\ref{tab:synthetic_traces}). Moreover, the runs related to this model that are able to complete, require more time than other runs for returning results.

Differently from \nuxmv, none of the three investigated factors affects the performance of \clingo and \fastdw, which are always able to return a solution in few seconds.

\begin{figure}
\centering
\subfigure[Trace completeness]{
\label{fig:synthetic_completeness}
\includegraphics[width=0.45\textwidth]{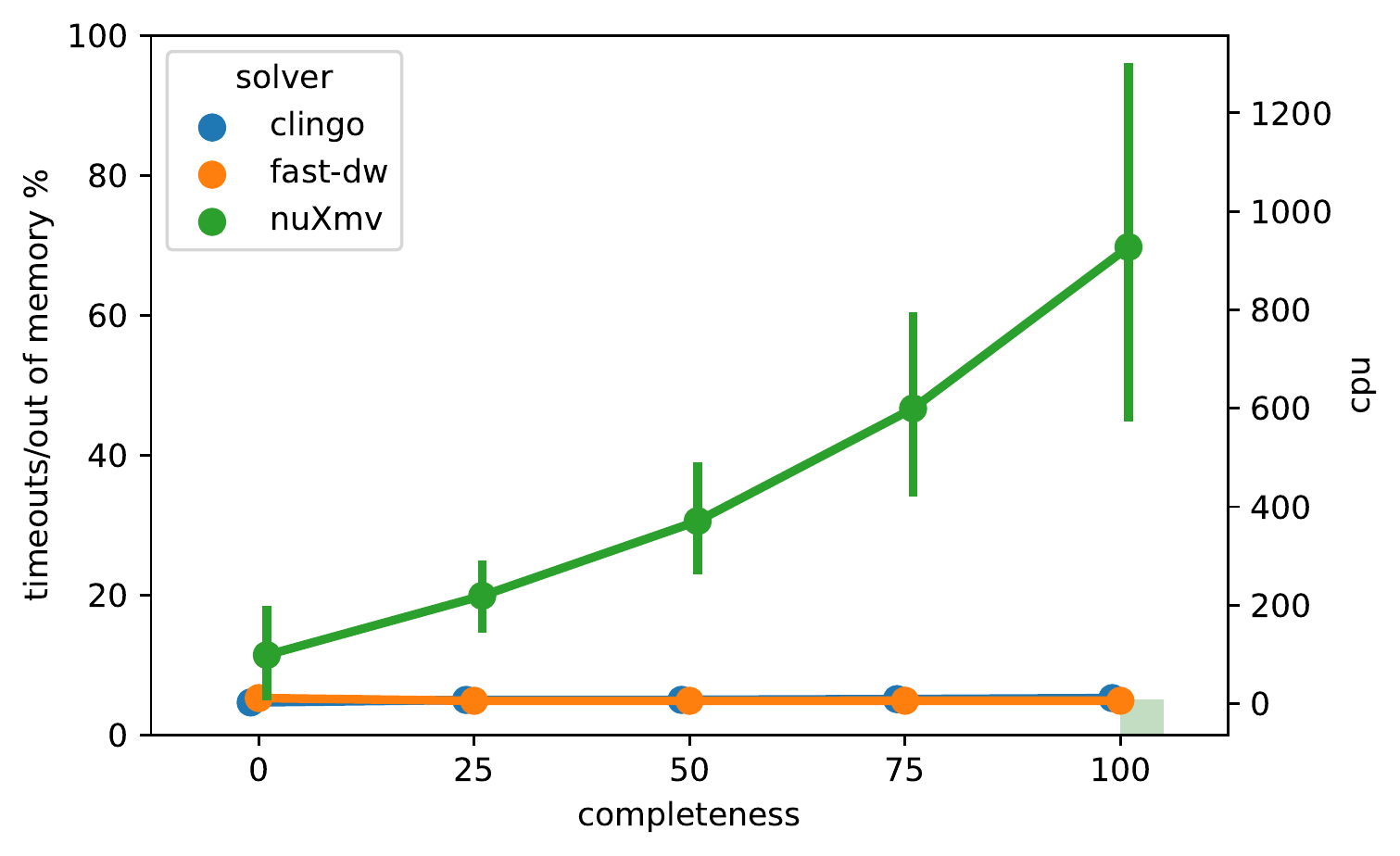}}%
\qquad
\subfigure[Trace type]{%
\label{fig:synthetic_trace}%
\includegraphics[width=0.45\textwidth]{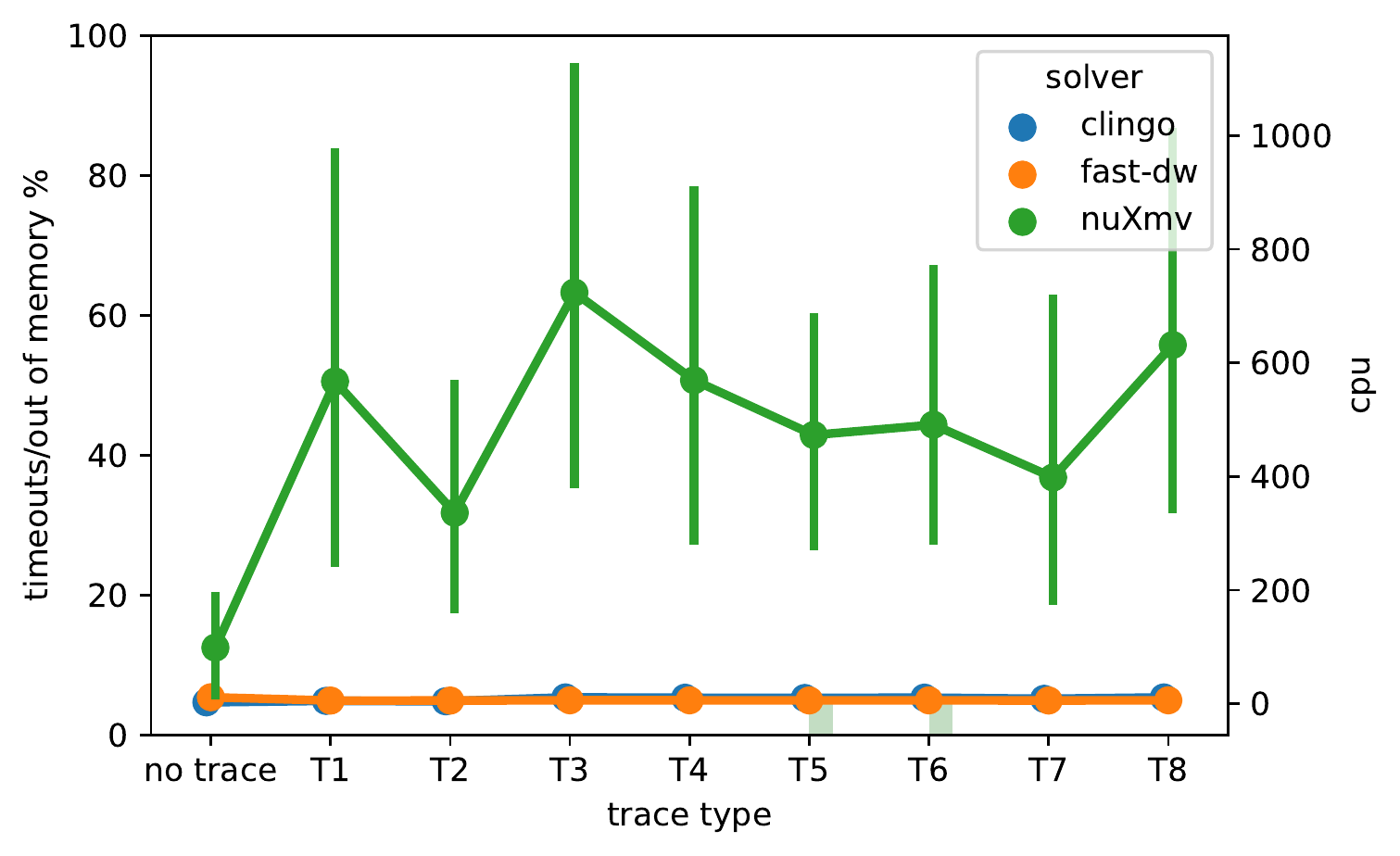}}\\%
\subfigure[Model type]{%
\label{fig:synthetic_model}%
\includegraphics[width=0.45\textwidth]{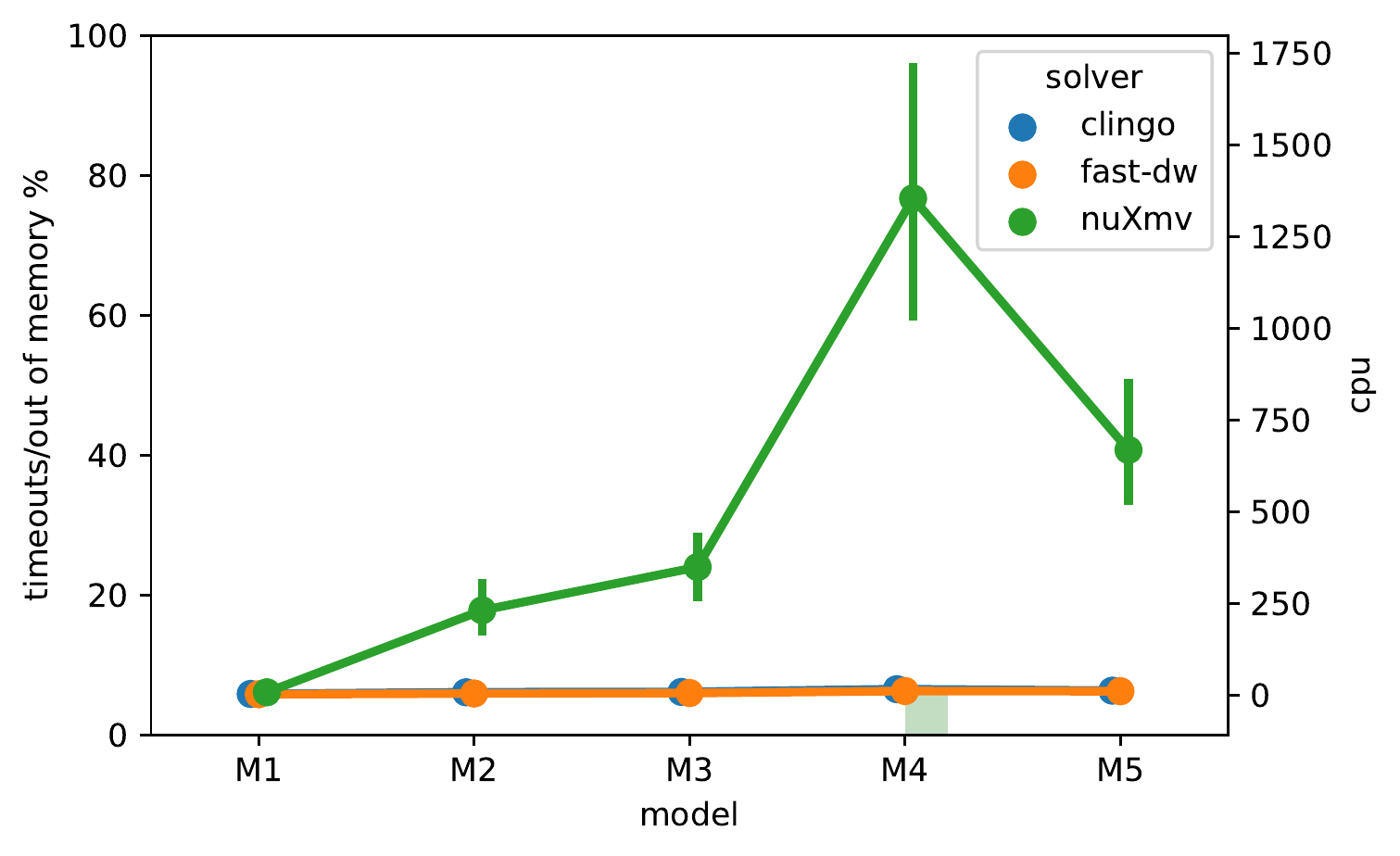}}%
\caption{Solver performance for different levels of completeness, different traces and different models. In the plots, lines correspond to CPU time (right scale) and bars to interrupted computations.}\label{fig:synthetic}
\end{figure}

Overall, we can assess that, in case of synthetic datasets (of medium size), trace completeness has a strong impact on \nuxmv, which performs better on empty traces. \nuxmv performance are also influenced by the model size and, in particular, by the length of the traces that it generates: the larger the model and the longer the traces, the worse the performance of the solver. On the contrary, none of the investigated factors has an impact on the \clingo and \fastdw performance (\textbf{RQ2}).

\subsection{Real-life Log Evaluation}
\label{ssec:real-life_eval}
In order to evaluate the capability of the three solvers to deal with real-life settings, we exercised them on a real world event log. In the next subsections we first describe the considered dataset, the procedure and the metrics used for its evaluation; we then show the obtained results.

\subsubsection{Dataset, procedure and metrics}
In order to exercise the three solvers in a real-life setting, we resorted to the BPI Challenge 2011~\cite{bpichallenge2011} (BPIC2011) event log. The log pertains to a healthcare process and, in particular, contains the executions of a process related to the treatment of patients diagnosed with cancer in a large Dutch academic hospital. The whole event log contains $1143$ execution cases and $150291$ events distributed across $623$ event classes (activities). Each case refers to the treatment of a different patient. The event log contains domain specific attributes that are both case attributes and event attributes. For example, $Age$, $Diagnosis$, and $Treatment\_code$ are case attributes and $Activity\_code$, $Number\_of\_executions$, $Producer\_code$, $Section$ and $Group$ are event attributes.

%In order to exercise the three reasoners on a real-life setting, we resorted to the BPI Challenge 2012~\cite{bpichallenge2012} event log. The event log is the log of an application process for a personal loan or overdraft of a Dutch Financial Institute. The whole event log contains $13087$ cases, i.e., execution traces, and $262200$ events distributed across $35$ event classes (activities).
%The event log contains domain specific attributes that are both case attributes and event attributes. For example, $Age$, $Diagnosis$, and $Treatment\_code$ are case attributes and $Activity\_code$, $Number\_of\_executions$, $Specialism\_code$, and $Group$ are event attributes.

%In order to be able to use the BPI Challenge 2011 event log in our evaluation, we extracted the corresponding data-aware Petri Net model, as well as a set of incomplete traces by applying the following procedure:
The inputs required for accomplishing the task of trace completion are a process model and incomplete traces. We hence extracted a data-aware Petri Net model and we generated a set of incomplete traces by applying the following procedure:
%In order to let the three solvers dealing with the task of completing incomplete traces, we need a process model and incomplete traces. We hence extracted a data-aware Petri Net model and we generated a set of incomplete traces by applying the following procedure:
\begin{enumerate}
  \setlength\itemsep{0em}
\item We discovered the data-aware Petri net from the BPIC2011 event log by applying the ProM \textsc{Data-flow Discovery} plugin~\cite{deLeoni2013}. \footnote{As input for the \textsc{Data-flow Discovery} plugin we used the Petri Net discovered with the \textsc{Inductive Miner} algorithm by setting the \emph{Noise threshold} parameter of the plugin to $0.20$.} The discovered data-aware Petri Net has $355$ transitions, $61$ places and $710$ edges and $4$ variables ($Activity\_code$, $Producer\_code$, $Section$ and $Group$), which are read and written by the Petri net transitions and constrained in $25$ guards throughout the data-aware Petri net. An example of a discovered guard is reported in~\eqref{eq}.
\begin{equation}\label{eq}
{\scriptstyle(Producer\_code="CHE2") \&\& (Activity\_code=="370422")}
\end{equation}
\item For the testing set generation, we randomly selected $9$ complete traces - details on the traces and their length are reported in columns \emph{Trace ID} and \emph{Trace length} in Table~\ref{tab:bpi_traces} - from the dataset.
\item For each of the $9$ randomly selected traces, $3$ incomplete traces - obtained by randomly removing $25\%$, $50\%$ and $75\%$ of the events 
%- details on the length of the incomplete traces are reported in Table~\ref{tab:bpi_traces} -
 are added to the testing set.
\item The empty trace is added to the testing set. 
\end{enumerate}
%For evaluating the three solvers we executed in total $37$ runs per solver. Indeed, we tested $9$ different traces for $4$ different levels of completeness (plus the empty trace) for each solver. 
We evaluated the $3$ solvers on each trace of the testing set for a total of $37$ runs per solver ($9$ different traces for $4$ different levels of completeness and the empty trace).

\begin{table}
\parbox{.3\linewidth}{
\centering
	\scalebox{0.7}{
		\begin{tabular}{|a|c|}
		\toprule
		\rowcolor{lightgray}\textbf{Trace ID} & \textbf{Trace length} \\	\hline
		$t207$ & 14  \\ \hline
		$t296$ & 515 \\ \hline
		$t365$ & 30  \\ \hline
		$t533$ & 565 \\ \hline
		$t610$ & 127 \\ \hline
		$t679$ & 5  \\ \hline
		$t729$ & 3  \\ \hline
		$t847$ & 56  \\ \hline
		$t1132$ & 34 \\
		\bottomrule
		\end{tabular}
		}
	\caption{BPIC2011 testing set trace length}
	\label{tab:bpi_traces}
	}
	\hfill
\parbox{.65\linewidth}{
		\centering
		\scalebox{0.7}{
		\begin{tabular}{l c c c c}
		\rowcolor{lightgray}\textbf{Scenario} & \textbf{total}& \clingo & \fastdw & \nuxmv \\
		%\rowcolor{lightgray}\multirow{-2}{*}{\textbf{total}} & \textbf{timed-out} & \textbf{timed-out} & \textbf{out-of-memory}\\
		\toprule
		\multirow{2}{*}{\full} & \multirow{2}{*}{37} & 37 & 37 & 22\\
		& & (out-of-memory) & (out-of-memory) & (timed-out)\\ \hline
		\multirow{2}{*}{\restricted} & \multirow{2}{*}{37} & 35 & 37 & 12\\
		& & (out-of-memory) & (out-of-memory) & (timed-out\\ \hline
		\multirow{2}{*}{\nodata} & \multirow{2}{*}{37} & 20 & \multirow{2}{*}{0} & 12 \\
		& &(out-of-memory) &  & (timed-out)\\
		\bottomrule
		\end{tabular}
		}
	\caption{Timed-out and out-of-memory runs for the three solvers on the BPIC2011}
	\label{tab:bpi}
	}
	
\end{table}

%\begin{table}
%\centering
%	\scalebox{0.7}{
%		\begin{tabular}{|a|c|c|c|c|}
%		\toprule
%		& \multirow{2}{*}{\textbf{Trace length}} & \textbf{75\%-complete} & \textbf{50\%-complete} & \textbf{25\%-complete} \\	
%		\multirow{-2}{*}{\textbf{Trace ID}}& & \textbf{trace length} & \textbf{trace length} & \textbf{trace length} \\	\hline
%		$t207$ & 14 & 10 & 7 & 3 \\ \hline
%		$t296$ & 515 & 386 & 257 & 129 \\ \hline
%		$t365$ & 30 & 22 & 15 & 7 \\ \hline
%		$t533$ & 565 & 424 & 282 & 141 \\ \hline
%		$t610$ & 127 & 95 & 63 & 32 \\ \hline
%		$t679$ & 5 & 4 & 2 & 1 \\ \hline
%		$t729$ & 3 & 2 & 1 & 1 \\ \hline
%		$t847$ & 56 & 42 & 28 & 14 \\ \hline
%		$t1132$ & 34 & 25 & 17  & 8 \\
%		\bottomrule
%		\end{tabular}
%		}
%	\caption{BPIC2011 testing set trace length}
%	\label{tab:bpi_traces}
%\end{table}

Also in this case, we evaluated \textbf{RQ1} by computing, for each solver (i) the number/percentage of runs the solver is able to return; (ii) the time required for returning the solution. For the second research question (\textbf{RQ2}), instead,  
%we evaluated the metrics above, while changing factors, such as the incompleteness of the traces, the type of traces, as well as the model generating the traces. 
%
%Being interested in evaluating the factors impacting the results, and, in particular, with a large real-life dataset as the BPI Challenge 2011 the impact of the data domain and data in general, we executed the experiments in three different settings:
%In the evaluation,
 we focused on the factors that could impact the results. Besides looking at the degree of trace completeness and at different traces, we also investigated the impact of data and of its domain on the performance of the solvers on large real-life datasets. To this aim, we evaluated the three solvers by computing the metrics reported above in three different settings: 
\begin{itemize} 
  \setlength\itemsep{0em}
\item \full: in this setting we considered the data payloads associated to events and we did not apply any restriction to their domain;
\item \restricted: in this setting we reduced the domain of the data to the only constant values occurring in the model;~\footnote{Please notice that we were able to reduce the domain of the variables to the sole constant values, as we only have guards with equality constraints.}
\item \nodata: in this setting we do not use data at all.
\end{itemize}
For each setting, we performed the $37$ runs described above, we collected the metrics and we compared the results.

\subsubsection{Results}

%\begin{table}
%	\centering
%		\scalebox{0.7}{
%		\begin{tabular}{l c c c c}
%		\rowcolor{lightgray}\textbf{Scenario} & \textbf{total}& \clingo & \nuxmv & \fastdw \\
%		%\rowcolor{lightgray}\multirow{-2}{*}{\textbf{total}} & \textbf{timed-out} & \textbf{timed-out} & \textbf{out-of-memory}\\
%		\toprule
%		\multirow{2}{*}{\full} & 37 & 22 & 37 & 37\\
%		& & (timed-out) & (timed-out) & (out-of-memory)\\ \hline
%		\multirow{2}{*}{\restricted} & 37 & 12 & 37 & 37\\
%		& & (timed-out) & (timed-out) & (out-of-memory)\\ \hline
%		\multirow{2}{*}{\nodata} & 37 & 12 & 12 & \multirow{2}{*}{0}\\
%		& & (timed-out) & (timed-out) & \\
%		\bottomrule
%		\end{tabular}
%		}
%	\caption{Timed-out and out-of-memory runs for the three solvers on the BPIC2011}
%	\label{tab:bpi}
%\end{table}

Table~\ref{tab:bpi} reports, for each of the three scenarios, the number of timed-out and out-of-memory runs for the three solvers on the BPIC2011. When applying the three solvers to the task of completing incomplete traces of a large real-life event log in the \full scenario, the only solver able to return results (in $15$ runs out of the $37$ ones carried out) is \nuxmv; \clingo and \fastdw fail in each of the executed runs due to out-of-memory issues.
%while \clingo fails in each of the executed runs because of timeout violations, \fastdw fails due to out-of-memory issues.
%, both \clingo and \fastdw were unable to return any result. While the former failed in each of the executed runs due to timeout violations, the latter failed for out-of-memory issues.
%Indeed, when applied to large datasets, the \fastdw grounding phase, which likely allows the solver to complete the runs in short time for small datasets, is unable to complete due to a memory space explosion.  
Indeed, when applied to large datasets, the \clingo and \fastdw grounding phase, which likely allows the solver to complete the runs in short time for small datasets, is unable to complete due to a memory space explosion.  

%\todocdfinline{@Sergio: Could you check if this is fine?}
%that on large datasets, the memory of \fastdw, which leverages grounding that , is unable to complete the grounding phase, thus failing before starting the exploration on the grounded variables. 
%It seems that the causes for this out-of-memory problems in \fastdw, which is a reasoner based on a grounding mechanism, fails before completing 

\begin{figure}[t]
\centering
\includegraphics[width=.45\textwidth]{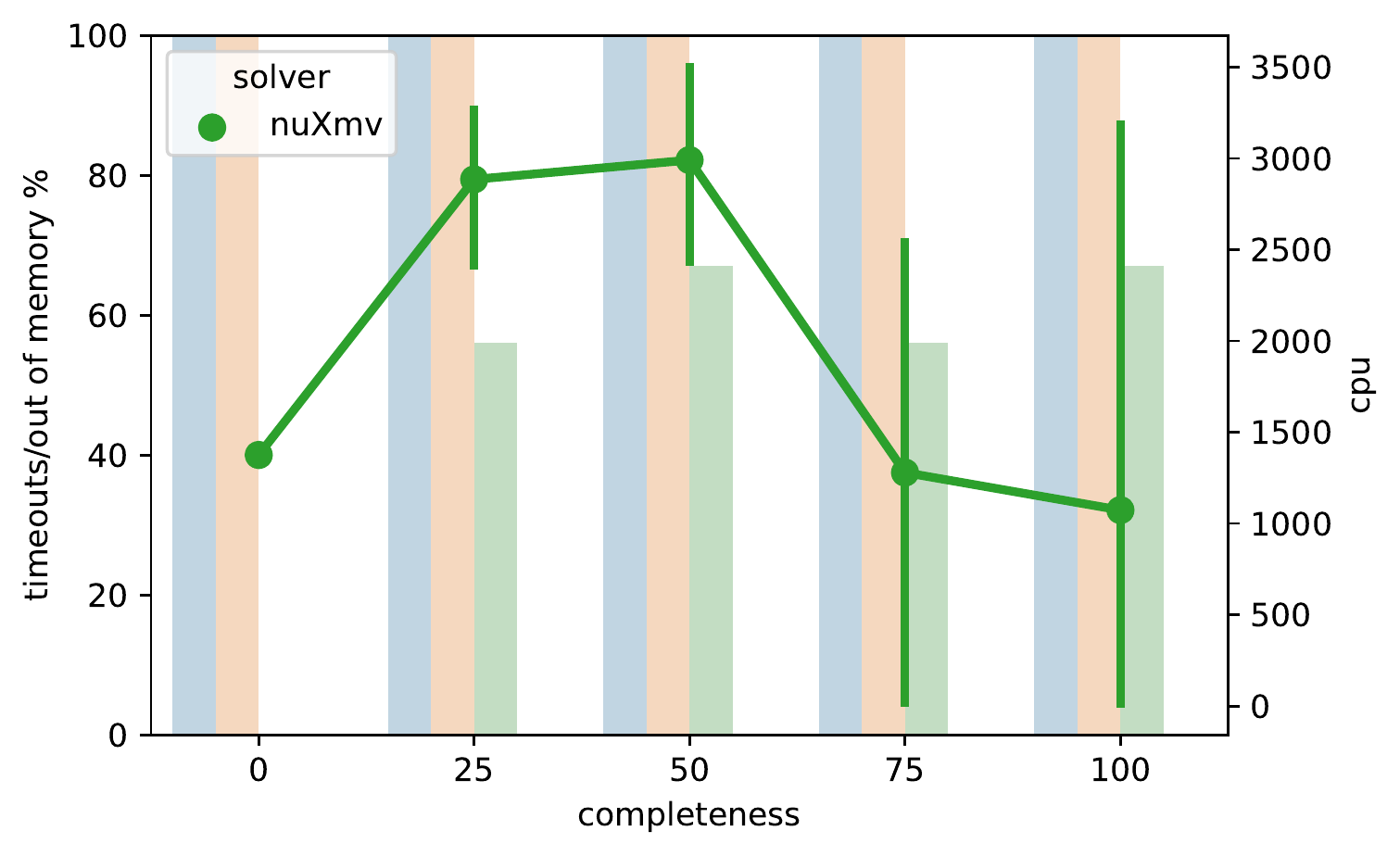}
\includegraphics[width=.45\textwidth]{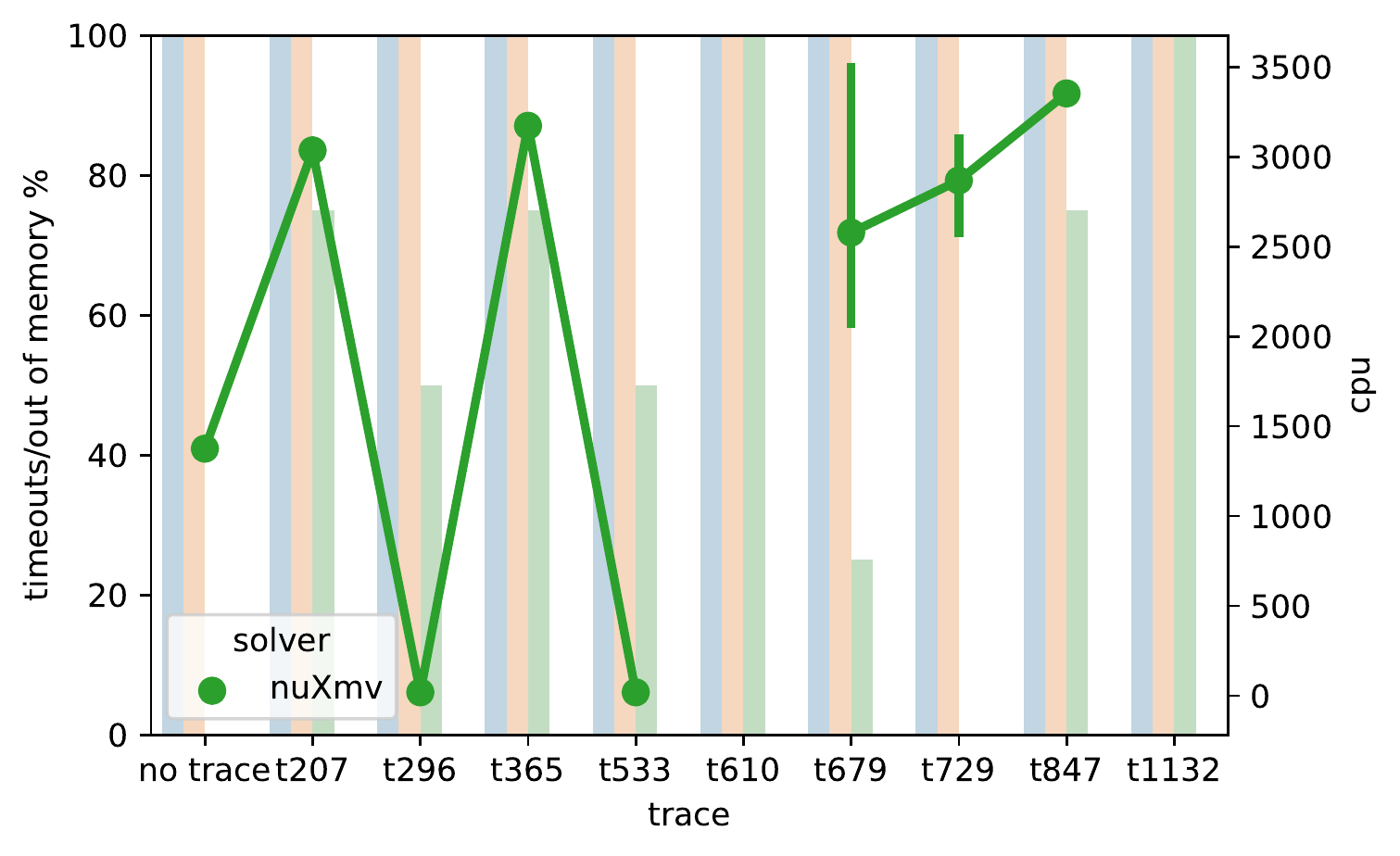}
\caption{Solver performance for different levels of completeness and different traces of the BPIC2011 event log in the \full scenario}
\label{fig:bpi_full}
\end{figure}

Figure~\ref{fig:bpi_full} plots the performance (percentage of timed-out/out-of-memory runs and average time) of the three solvers for different levels of trace completeness and for the different considered traces (the empty trace and the $9$ traces of the BPIC2011). Overall, no general trend or pattern related to the trace completeness and to the trace type can be identified for the $15$ runs for which \nuxmv was able to return a result and the required time ranges from $\sim$$20$ seconds to one hour. 
%Figure~\ref{fig:bpi_full} shows the performance of the three solvers on the real-life dataset for different levels of trace completeness and for the $9$ traces and the empty trace in the \full scenario. The plot shows that the only solver able to return results is \nuxmv: it was able to complete incomplete traces only in $17$ runs out of the $37$ ones carried out for each solver. While \clingo was not able to return any result due to timeout violations, all runs carried out with \fastdw registered an out-of-memory. 
%By focusing on the time required by \nuxmv to return results in these $17$ cases, no general trends or patterns related to the trace completeness and to the trace type can be observed.
Differently from the synthetic datasets, an opposite trend can be observed with large real-life logs: performance improves with more complete traces. For instance, the timed-out runs and the average time required for 75\%-complete traces are less than the timed-out runs and the average time required for the 50\%-complete traces. Moreover, we can notice that for a couple of traces - $t610$ and $t1132$ -\nuxmv is also not able to provide any result, as the other two solvers. 
\begin{figure}
\centering
\includegraphics[width=.45\textwidth]{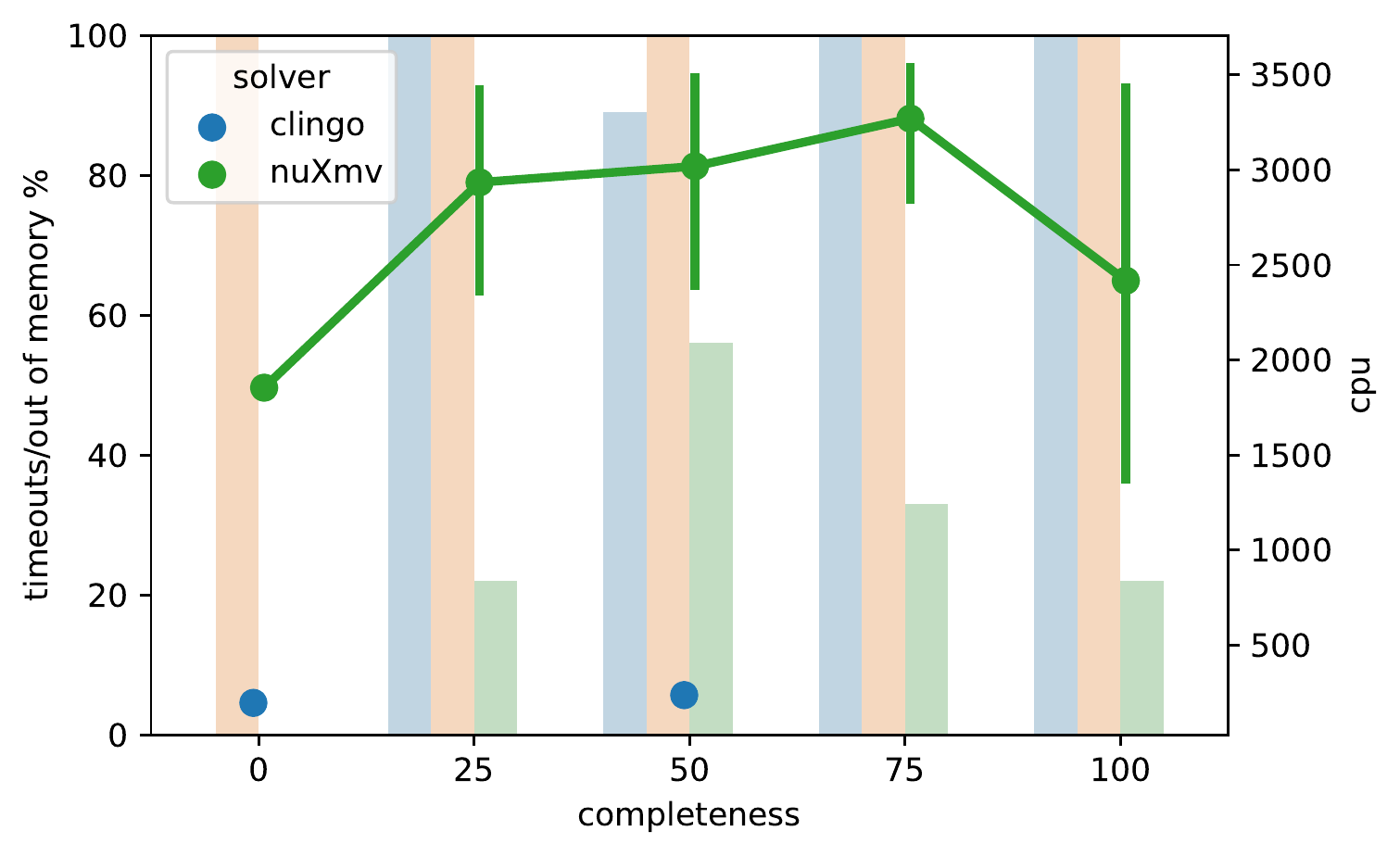}
\includegraphics[width=.45\textwidth]{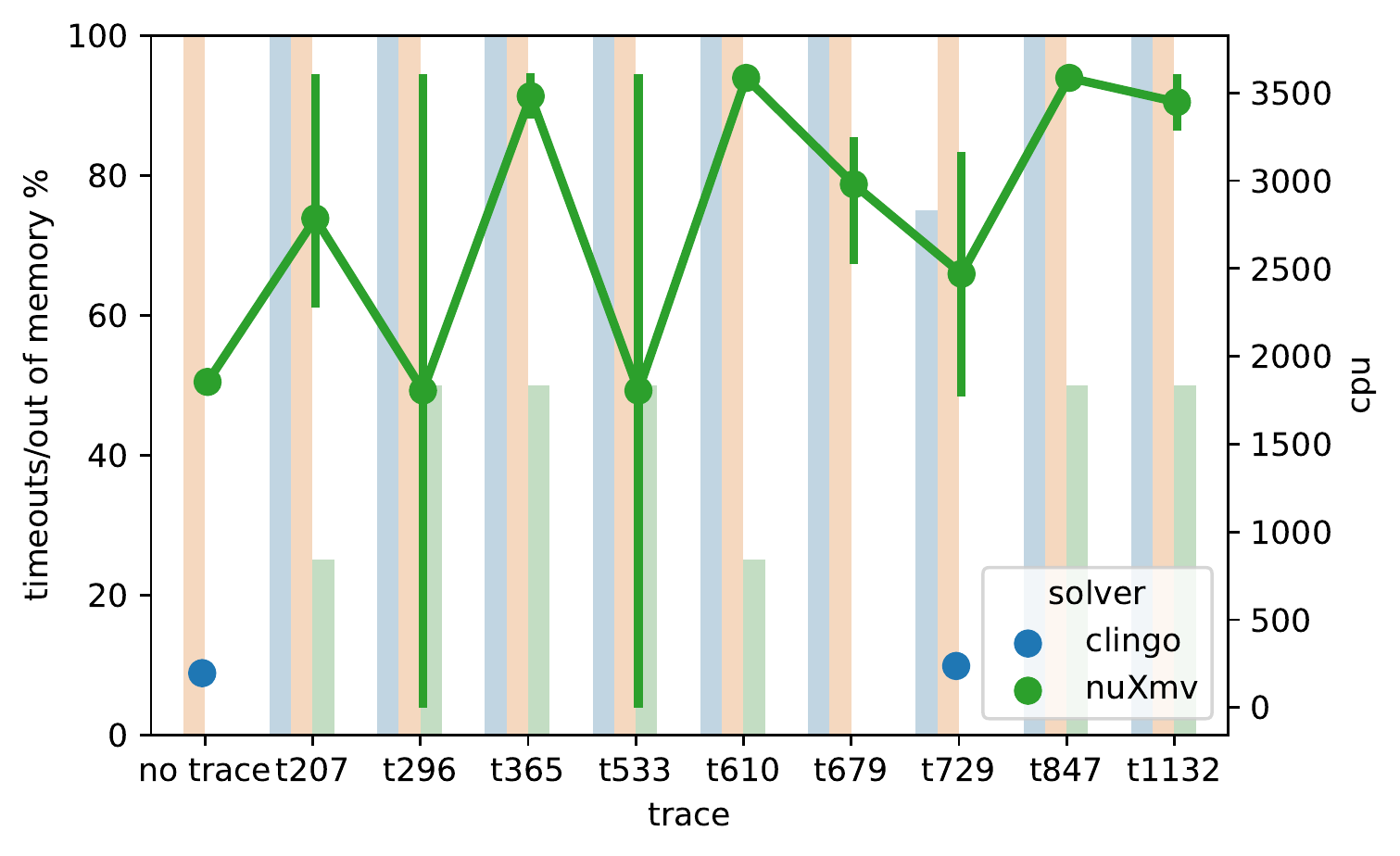}
\caption{Solver performance for different levels of completeness and different traces of the BPIC2011 event log in the \restricted scenario}
\label{fig:bpi_restricted}
\end{figure}

%Hypothesizing that the reasons behind such an inability of the reasoners to deal with real-life log could be to be reconducted to the huge domain characterizing the data attributes of the event log, we evaluated the performance of the three reasoners with a restriction on the domain. In detail, since we have only equality conditions, we were able to restrict the domain to the only constant variables. 
Figure~\ref{fig:bpi_restricted} shows the obtained results for the \restricted scenario.  Despite the domain restriction, \clingo and \fastdw are still unable to return results, while an improvement can be observed in the capability of \nuxmv to return results: the number of timed-out runs decreases of about $50\%$ (from $22$ to $12$ runs), as reported in Table~\ref{tab:bpi}. The trend observed for the timed-out runs is similar to the one of the \full scenario (the percentage of timed-out runs is null for the empty traces and reaches its peak for $50\%$-complete traces), while some differences are registered in terms of required time (e.g., the average time required for $75\%$-complete traces increases almost of a factor of $3$ with respect to the \full scenario). If we look at different traces, trends are instead quite different from the \full scenario. For instance, in the \restricted scenario, \nuxmv was unable to deal only with $20\%$ of the runs for the trace $t610$, which was a problematic trace in the \full scenario. On the contrary, the percentage of timed-out runs for trace $t296$ ($50\%$) remains unchanged moving from the \full to the \restricted scenario.

\begin{figure}
\centering
\includegraphics[width=.45\textwidth]{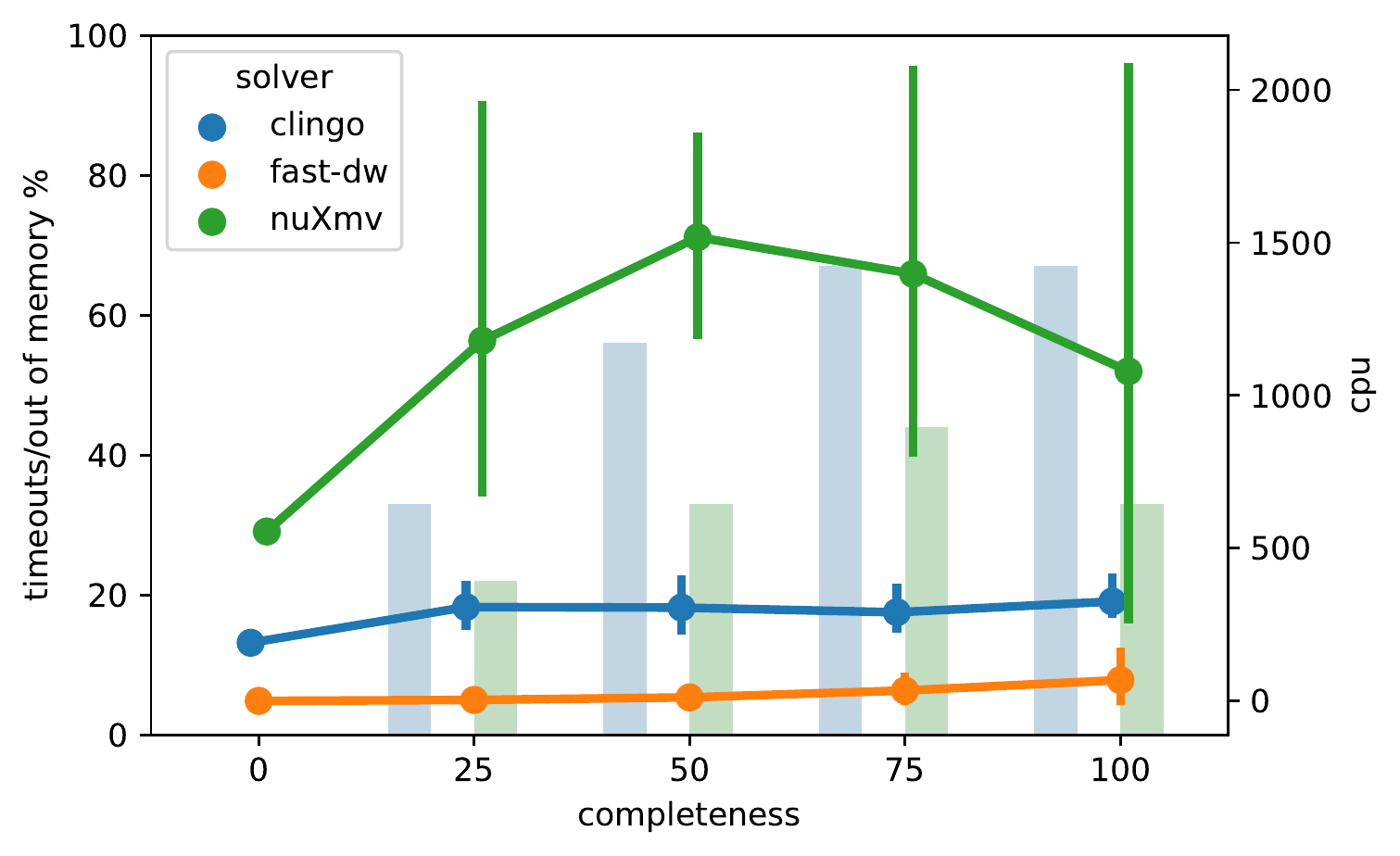}
\includegraphics[width=.45\textwidth]{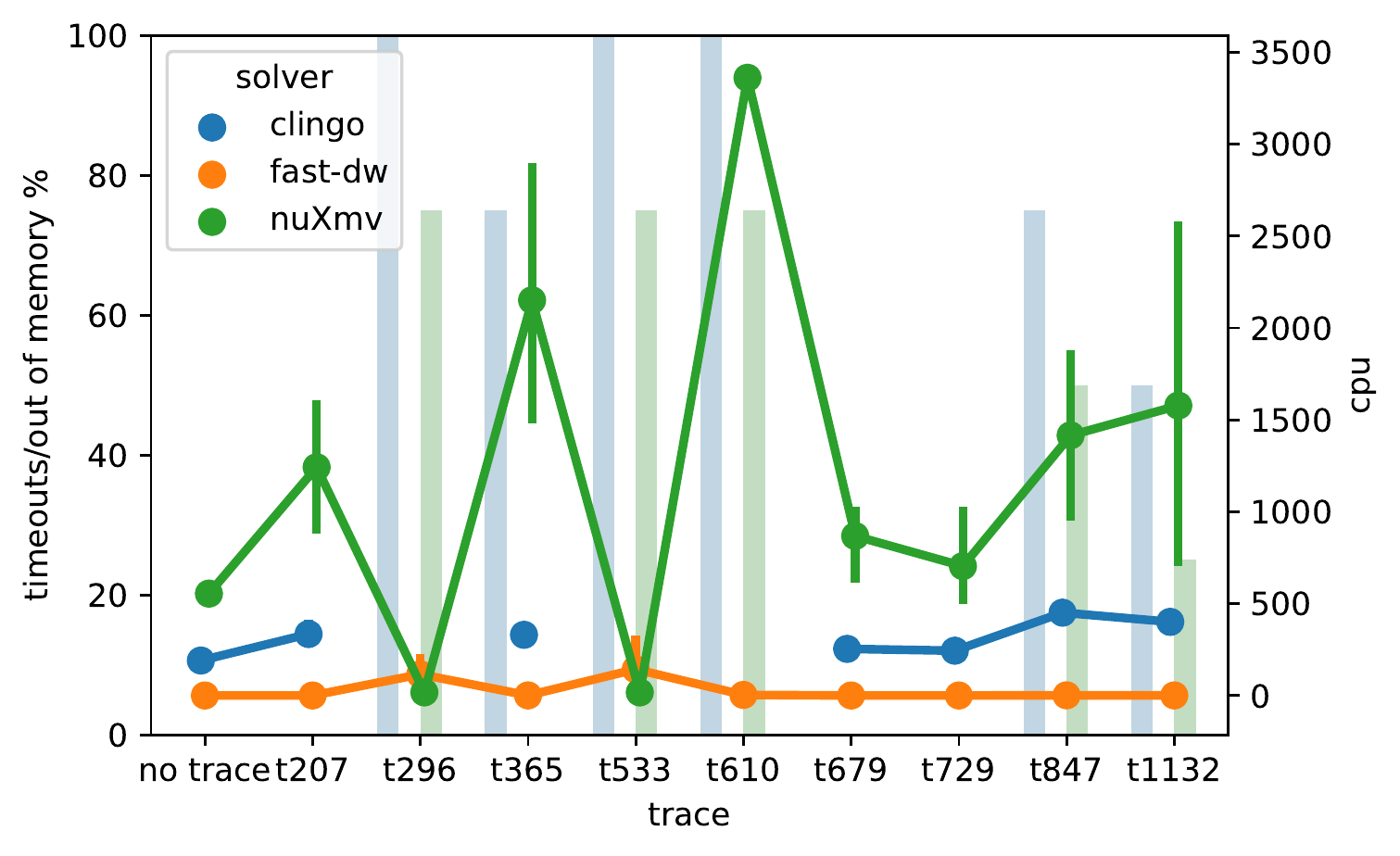}\caption{Solver performance for different levels of completeness and different traces of the BPIC2011 event log in the \nodata scenario}
\label{fig:bpi_no_data}
\end{figure}

%\begin{table}
%	\centering
%		\scalebox{0.7}{
%		\begin{tabular}{c c c c}
%		\rowcolor{lightgray}\textbf{total}& \clingo & \nuxmv & \fastdw \\
		%\rowcolor{lightgray}\multirow{-2}{*}{\textbf{total}} & \textbf{timed-out} & \textbf{timed-out} & \textbf{out-of-memory}\\
%		\toprule
%		37 & 12 & 12 & \multirow{2}{*}{0}\\
%		& (timed-out) & (timed-out) & \\
%		\bottomrule
%		\end{tabular}
%		}
%	\caption{Timed-out and out-of-memory runs for the three solvers in the \nodata scenario}
%	\label{tab:bpi_no_data}
%\end{table}

Finally, the three solvers have been evaluated in the \nodata scenario. In this setting, each of the three solvers is able to return some results (see the last row in Table~\ref{tab:bpi}). When data are not taken into account, the performance of the solvers are close to the results obtained with the syntetic datasets: \fastdw is the only one able to deal with all the $37$ runs, followed by \nuxmv registering $12$ timed-out runs and, finally, by \clingo with $20$ out-of-memory runs. Moving from a \restricted to a \nodata scenario has hence a strong impact on the capability of \fastdw and \clingo to return results, while this is not the case for \nuxmv, whose number of timed-out runs remains the same of the \restricted scenario. 
%\fastdw is the only one able to deal with all the $37$ runs, while both \clingo and \nuxmv register $12$ timed-out runs. Moving from a \restricted to a \nodata scenario has hence a strong impact on the capability of \fastdw and \clingo to return results, while this is not the case for \nuxmv, whose number of timed-out runs remains the same of the \restricted scenario. 

The plots in Figure~\ref{fig:bpi_no_data} report, besides the percentage of timed-out and out-of-memory runs, also the time required by the solvers for the returned runs. As for the synthetic datasets, \fastdw is the solver with the best performance (it takes from few seconds to few minutes) followed by \clingo ($\sim$$300$ seconds, for the runs it is able to complete) and \nuxmv ($\sim$$20$ minutes with a high variance).  The time required by \fastdw seems to slightly depend on the level of trace incompleteness: the more the trace is complete, the larger the encoded net, the more time \fastdw requires to complete the task. 

Overall, when dealing with real-life datasets enriched with data (both with and without domain restrictions), \nuxmv is the best solver, while \clingo and \fastdw are unable to deal with this scenario. When, instead, data are not taken into account, \fastdw outperforms the other two solvers both in terms of returned results and of required time. Although \nuxmv fails in returning results in less runs (about half) than \clingo, \clingo has better performance than \nuxmv in terms of required time (\textbf{RQ1}).
%Although both \nuxmv and \clingo fail in returning results in about half of the runs, \clingo has the best performance in terms of required time (\textbf{RQ1}).
Moreover, although no general pattern and trend can be identified with respect to trace incompleteness and trace type, data has a strong impact on  \clingo and \fastdw and the data domain affects the capability of \nuxmv of returning results (\textbf{RQ2}).

%\subsection{Discussion}
%\label{ssec:discussion}
%Given the above considerations we can provide the following answers to the two research questions. 
Summarizing the above results, we can conclude that \nuxmv is the most robust approach as it is able to deal reasonably well with very complex settings as real-life event logs with data. On the other hand, \fastdw, which on large search spaces ends up in out-of-memory errors, on synthetic and on real-life medium-size search spaces
%both synthetic and real-life medium-size search spaces
 outperforms its competitors, carrying on the trace completion task in few seconds (\textbf{RQ1}).
Concerning \textbf{RQ2}, for each of the three solvers, no global trends or patterns related to the level of trace completeness and to the characteristics of the trace have been identified. The only exception is \nuxmv and its capability to better deal with incomplete rather than with complete traces for synthetic event logs and in  part viceversa for large real-life event logs. The size of the resulting search space and, in particular, leveraging data, has a huge impact on the difficulties of \clingo and \fastdw to solve the trace completion task.

We remark that, although the focus of the evaluation is on the specific task of trace completion for
%completing incomplete traces of
 data-aware workflows, 
%since the proposed solution is based on reachability
%since the proposed solution relies on the transformation of the completion problem into a rechability one 
since the task is recasted to a reachability problem
 (see Section~\ref{sec:encoding:traces}), the problem we are tackling is actually a wider one.
%, i.e., the reachability problem.
 The evaluation carried out shows the performance of the three solvers on the reachability problem for large automata, as for instance, the one obtained from the analysis of the \ournet discovered from the BPI2011 event log.

\subsection{Threats to Validity}
\label{ssec:threats}
The main threats affecting the validity of the reported results are \emph{external} validity issues, which hamper the generalization of the  findings. Indeed, we evaluated the solvers on a single real-life log and tested them only on a small number of log traces. Nevertheless, we tried to mitigate the above threat, by carrying on a systematic evaluation on the synthetic datasets and randomly selecting the subset of traces in the real-life event log used in the evaluation. A second threat to the external validity of the results relates to the choice of the data and of the guards. However, also in this case, we tried to reduce the threat by considering different data attributes and values in the synthetic evaluation, and by relying on data and guards discovered by the ProM \textsc{Data-flow Discovery} plugin. 
Finally, 
%we have an \emph{internal} validity threat. We indeed 
we used the solvers as informed users, i.e., we did not apply to them special optimizations. 
%without applying special optimizations to them.
 Dedicated encodings and tuning strategies could improve the performance of the solvers. However, the threat is mitigated by the fact that none of the three solvers has been optimized.
% we did not apply optimizations to none of the three solvers.}

%!TEX root = ./main.tex

\section{Related Work}
\label{sec:related_works}

%verification and reachability
In the field of verification techniques for data-aware processes, a number of theoretical works exist both in the area of data-aware processes and of (variants of) PNs. Unfortunately, when combining processes and data, verification problems suddenly become undecidable \cite{calvanese-13-PODS-keynote}.
We can divide this literature in two streams. In the first stream, PNs are enriched, by making tokens able to carry various forms of data and by making transitions aware of such data, such in CPNs~\cite{vanderAalst:2011:MBP:2000715} or in data variants such as (Structured) Data Nets~\cite{Badouel:2015aa,Lazic:2007aa}, $\nu$-PNs~\cite{RosaVelardo20114439} and Conceptual WF-nets with data~\cite{sidorovastahletal:2011}. 
%variants of PNs are enriched, by making tokens able to carry various forms of data, and by making transitions aware of such data, such in CPNs~\cite{vanderAalst:2011:MBP:2000715} or data variants such as (Structured) Data Nets~\cite{Badouel:2015aa,Lazic:2007aa}, $\nu$-PNs~\cite{RosaVelardo20114439} and Conceptual WF-nets with data~\cite{sidorovastahletal:2011}. 
For full CPNs, reachability is undecidable and usually obtained by imposing finiteness of color domains. Data variants instead weaken data-related aspects. Specifically Data Nets and $\nu$-PNs consider data as unary relations, while semistructured data tokens are limited to tree-shaped data structures. Also, for these models coverability is decidable, but reachability is not. The work in \cite{sidorovastahletal:2011} considers data elements (e.g., \emph{Price}) that can be used on transitions' preconditions. 
However, these nets do not consider data values (e.g., in the example of Section~\ref{sec:trace_repair} we would not be aware of the values of the variable $request$ that \emph{T4} is enabled to write) but only whether the value is ``defined'' or ``undefined'', thus limiting the reasoning capabilities that can be provided on top of them. For instance, in the example of Section~\ref{sec:trace_repair}, we would not be able to discriminate between the worker and the student loan for the trace in \eqref{eq:trace-data}, as we would only be aware that $request$ is \texttt{defined} after \emph{T4}. 
The second stream contains proposals that take a different approach: instead of making the control-flow model increasingly data-aware, they consider standard data models (such as relational databases and XML repositories) and make them increasingly ``dynamics-aware''.
Notable examples are relational transducers~\cite{DBLP:journals/jcss/AbiteboulVFY00}, active XML~\cite{AbSV09}, the artifact-centric paradigm \cite{Geredeetal2007,DaDV11,bagheri-13-PODS-dcds}, and  DCDSs~\cite{bagheri-13-PODS-dcds}.
Such works differ on the limitations imposed to achieve decidability, but they all lack an intuitive control-flow perspective.
% In \cite{DaDV11} decidability is achieved by restricting the query language over the part of database where new data can accumulate. Interestingly, the system can be verified independently from the initial data state.
%%
A further recent work presents \rawsys~\cite{De-Masellis:2017:aaai}, a framework that joins the two streams above by directly combining a control-flow model based on PNs and standard data models (\`{a} la DCDS) as first class citizens, in which activities/tasks are expressed in a STRIP-like fashion. Verification in \rawsys is shown to be decidable when the number of objects accumulated in the same state is bounded (but still infinitely many values may occur in a run) and first-order quantification is restricted on the active domain of the current state. The definition of \ournet follows the approach of \rawsys in combining control-flow model based on PNs and standard data models (\`{a} la DCDS) as first class citizens. However, the current work differentiates from that of \rawsys in that we consider a data model with variables only, which, on the one hand, does not require complex restrictions to achieve decidability of reasoning tasks and, on the other hand, is still expressive enough to address the trace completion problem with events carrying data. We indeed remark that the payload of XES standard for traces is a set of attribute-value pairs, thus making data-aware process models with relational data structures needlessly cumbersome for solving reasoning tasks on concrete logs.

The VERIFAS system~\cite{liVERIFASPracticalVerifier2017a} leverages model checking techniques for the verification of temporal properties of infinite-state transition systems arising from processes that carry and manipulate unbounded data. Although in principle the system can be used to verify reachability properties of \ournet models, it's difficult to compare because the representation language for the models is more expressive, and the language for specifying the properties that can be verified can be used for more than reachability. Moreover the software used for their experiments~\cite{liHasverifierImplementationVerifiers2017} cannot be used directly for our purposes and it would require nontrivial modifications in order to be included in our experiments.

Moving to the exploitation of verification techniques in the BPM field, 
%in the context of business processes,
 we can notice that planning techniques have already been used in this context, e.g., for verifying process constraints~\cite{RegisRAM12}, for accomplishing business process reengineering~\cite{expertsystems07}, for conformance checking~\cite{deLeoniM17} as well as for the construction and adaptation of autonomous process models~\cite{SilvaL11,CoopIS2012}.
In~\cite{DeGiacomo2016,deLeoniLM18} automated planning techniques have been applied for aligning execution traces and process models.
%Intuitively,
%the declarative model and the current trace are encoded in terms of a planning domain and problem instance, respectively, and  
%a solution plan corresponds to the actions to be performed in order to align the model and the trace.
%, is computed.
 In~\cite{Di-Francescomarino-C.:2015aa} and~\cite{DBLP:conf/bpm/MasellisFGT17}, planning techniques have been used for addressing the problem of incomplete execution traces with respect to procedural models. 
 Compared to these two works, where the focus was on an ad hoc encoding of the problem of trace repair using a specific \emph{Action Language}, this paper tackles the more general problem of formal verification of reachability properties on imperative data-aware business processes, it introduces a more general encoding technique based on finite transition systems and provides an extensive evaluation of different solvers. The new technique highlights the core dynamic properties of Workflow Nets, and enables a seamless encoding of the decision problems under investigation in a variety of formal frameworks.
In~\cite{giordano_enriched_2018} an action language similar to \bclng (see Section~\ref{sec:bc:encoding}), and its encoding in ASP, is exploited for the verification of Business Processes enriched with a lightweight ontology language for describing semantic relations among data objects. 

%\todo[inline]{Reference to data-centric processes, e.g.\ GSM}

%\todo[inline]{Add~\cite{Di-Francescomarino-C.:2015aa,DeGiacomo:etal:ICAPS:2016}} 

%\todo{Add sidorova}
%trace repair via alignment
The problem of trace completion, which we used for the empirical investigation, has been explored in a number of works of trace alignment in the field of process mining.
Several works have addressed the problem of aligning event logs and procedural models, without~\cite{Adriansyahetal:2011} and with~\cite{deLeonietal:2012b,de_leoni:2013,Mannhardt2016} data. 
All these works, however, explore the search space of possible moves in order to find the best one aligning the log and the model.
Differently from them, in this work\begin{inparaenum}[\it(i)] \item we assume that the model is correct and we focus on the repair of incomplete execution traces; \item we want to exploit state-of-the-art formal verification techniques to solve the problem as a reachability problem on control and data flow rather than solving an optimisation problem\end{inparaenum}.
\section{Conclusions}
%\todo[inline]{completion vs alignment}
%\todo[inline]{$\C$ can be used instead of \klng (wider range of tools)}
This work provides a concrete solution for formal verification of reachability properties on imperative data-aware business processes and contributes to advancing the state-of-the-art on the concrete exploitation of formal verification techniques on business processes in two different ways: first it presents a rigorous encoding of a data-aware workflow nets based language into Action Languages, Classical Planning, and Model Checking based on a common interpretation in terms of transition systems; second it provides a first comprehensive assessment of the performance of different solvers, one for each language, in terms of reasoning with data-aware workflow net languages on both synthetic and real-life data. The evaluation shows that \nuxmv is the most robust approach as it is able to deal reasonably well with diverse types of event logs with data. On the other hand, \fastdw, outperforms its competitors on medium-size search spaces. 

In the future, we plan to investigate the impact of optimized encodings and tuning strategies to further improve the performance of the solvers.
Also, we believe that it is crucial to push forward the general research direction of this paper, namely providing real verification support for data-aware workflows models. We plan at fulfilling this objective by moving to more expressive models, such as the one in~\cite{De-Masellis:2017:aaai}, and by supporting not only reachability properties, but ideally a full-fledged temporal language.

%%Despite this work mainly focuses on the problem of trace completion, the proposed automated planning approach can easily exploit reachability for model satisfiability and trace compliance and furthermore can be easily extended also for aligning data-aware procedural models and execution traces.
%% Moreover, the presented encoding in the planning language \klng, can be directly adapted to other action languages with an expressiveness comparable to $\C$~\cite{lif99}.
 %Another restriction of this work is the encoding provided in the \klng  planning language. However, also in this case, the encoding is easily extensible to other planning languages, as for instance to the classical language $\C$.
 %% In the future, we would like to explore these extensions and implement the proposed approach and its variants in a prototype. 
  %We indeed believe that the exploitation of mature and scalable planning techniques on data-aware processes comes after, as it relies on, a clear theoretical investigation.
  %empirically evaluate the efficiency of the proposed solutions. 
%will explore these extensions in order to make the approach as much as general as possible. Moreover, we would also be interested to empirically evaluate the efficiency of the proposed approach

% \newpage
\appendix
%!TEX root = ./main.tex

\section{Technical details of Section~\ref{sec:bc:encoding}}\label{apx:bc:encoding}

\lemmaBCquery*
\begin{proof}
	Without loss of generality we can assume that $\Phi$ is in the DNF form described in Section~\ref{sec:bc:encoding}; i.e.\ $\bigvee_{i=1}^k t_1^i\land\ldots \land t_{\ell_i}^i$ where each term $t_j^i$ is either of the form $v = o$ or $\neg\deff(v)$. In this case to prove the lemma is sufficient to show that for the base cases $\dmodel_i, \assign_i \models t$ iff $i:\bcqlang{t}\in\dwnetbc_i(\rho)$:
	\begin{description}
		\item[($v = o$)] $\dmodel_i, \assign_i \models v = o$ iff $\assign_i(v) = o$ iff (by Definition~\ref{def:dwnet:bc:map}) $\{i:v=o\}\subseteq \dwnetbc_i^\nu(\rho)\subseteq\dwnetbc_i(\rho)$;
		\item[($\neg\deff(v)$)] $\dmodel_i, \assign_i \models \neg\deff(v)$ iff $\assign_i(v)$ is undefined iff (by Definition~\ref{def:dwnet:bc:map}) $\{i:v=\nullv\}\subseteq \dwnetbc_i^\nu(\rho)\subseteq\dwnetbc_i(\rho)$.
	\end{description}
\end{proof}

\lemmaBCencComplete*
\begin{proof}
    We prove the lemma by induction on the length of the case $\rho$ that $\bigcup_{i=0}^\ell\dwnetbc_i(\rho)$ is a stable model of $P_\ell(\bclng(\dwnmodel))$.
    For $\ell = 0$ there is only the initial state and $P_0(\bclng(\dwnmodel))$ includes only rules derived from the static laws (none), the \lstinline{initially} statements:
    \begin{align*}
        0:start=\true &\\
        0:p=\false & & & \text{for \lstinline|p|}\in P \text{ and \lstinline|p|}\neq \lstmath{start}\\
        0:v=\nullv & & & \text{for \lstinline|v|}\in \V'\\
        0: trans=\true
    \end{align*}
    and the encoding of \bclng into ASP:
    \begin{itemize}
        \item for all the places and the additional fluent for enforcing actions $p\in P\cup\{trans\}$
    \begin{align*}
        0:p=d \lor \neg (0:p=d) & & & \text{for $d\in\{\true, \false\}$} \\
        \neg 0:p=d' &\leftarrow 0:p=d & & \text{for $d,d'\in\{\true, \false\}$ and $d\neq d'$}
    \end{align*}
    \begin{align*}
        &\leftarrow 0:p=d, \neg 0:p=d & & \text{for $d\in\{\true, \false\}$}\\
        &\leftarrow \naf (0:p=\true), \naf (0:p=\false)
    \end{align*}
    \item for all the variables $v\in V'$
     \begin{align*}
        0:v=d \lor \neg (0:v=d) & & & \text{for $d\in\adom(v)\cup \{\nullv\}$} \\
        \neg 0:v=d' &\leftarrow 0:v=d & & \text{for $d,d'\in\adom(v)\cup \{\nullv\}$ and $d\neq d'$}
    \end{align*}
    \begin{align*}
        &\leftarrow 0:v=d, \neg 0:v=d & & \text{for $d\in\adom(v)\cup\{\nullv\}$}\\
        &\leftarrow \naf (0:v=d_1), \ldots, \naf (0:v=d_k),\naf (0:v=\nullv) && \text{s.t.\ $\{d_1,\ldots, d_k\} = \adom(v)$}
    \end{align*}
   \end{itemize}
  By the definition of $(\mrk_0,\assign_0)$
    \begin{align*}
        \dwnetbc_0^\nu((\mrk_0,\assign_0)) = {}
        &\{ \feqlit{0}{start}{\true}, \neg\feqlit{0}{start}{\false} \} \cup {} \\
        &\{ \feqlit{0}{p}{\false}, \neg\feqlit{0}{p}{\true}\mid p\in P\setminus\{start\}  \} \cup {} \\
        &\{ \feqlit{0}{v}{\nullv} \mid v\in\V'\} \cup  \{ \neg\feqlit{0}{v}{o} \mid v\in\V', o\in\adom(v)\} \cup {} \\
        &\{ \feqlit{0}{trans}{\true}, \neg\feqlit{0}{trans}{\false}\}\\
        \dwnetbc_0^\tau((\mrk_0,\assign_0)) = {} &\emptyset
    \end{align*}
   It's not difficult to see that all the above rules are satisfied by $\dwnetbc_0((\mrk_0,\assign_0))$. We need to show also that it's the minimal model of the reduct $P_0(\bclng(\dwnmodel))^{\dwnetbc_0((\mrk_0,\assign_0))}$ of $P_0(\bclng(\dwnmodel))$ w.r.t.\ $\dwnetbc_0((\mrk_0,\assign_0))$. To this end we note that in the reduct all the rules including NAF literals are removed because each fluent is assigned to a value, and the remaining constraints \[\leftarrow \feqlit{0}{f}{o}, \neg\feqlit{0}{f}{o}\] are satisfied by construction because $\{\feqlit{0}{f}{o}, \neg\feqlit{0}{f}{o}\}\not\subseteq \dwnetbc_0((\mrk_0,\assign_0))$.\footnote{Remember that a constraint $\leftarrow \ell_1,\ldots, \ell_n$ corresponds to the rule $f \leftarrow \naf f, \ell_1,\ldots, \ell_n$.}
   
   In $\dwnetbc_0((\mrk_0,\assign_0))$ there are just literals $\feqlit{0}{f}{c}$ and $\neg\feqlit{0}{f}{c}$. Removing any positive (i.e.\ $\feqlit{0}{f}{c}$) literal would falsify the the \lstinline{initially} statements, and removing the negative ones would contradict the corresponding rules: \[\neg\feqlit{0}{f}{o'}\leftarrow\feqlit{0}{f}{o}\] where $\feqlit{0}{f}{o}\in\dwnetbc_0((\mrk_0,\assign_0))$ and $o'\neq o$.
   This concludes that $\dwnetbc_0((\mrk_0,\assign_0))$ is a minimal model for $P_0(\bclng(\dwnmodel))^{\dwnetbc_0((\mrk_0,\assign_0))}$.
   
   For the inductive step we assume that for an arbitrary case $\rho$ of length $\ell$, $\bigcup_{i=0}^\ell\dwnetbc_i(\rho)$ is a stable model of $P_\ell(\bclng(\dwnmodel))$ and we show that $\bigcup_{i=0}^{\ell+1}\dwnetbc_i(\rho\fire{t_{\ell}}(\mrk_{\ell+1},\assign_{\ell+1}))$ is a stable model of $P_{\ell+1}(\bclng(\dwnmodel))$.
   
   Note that, by construction, \[\bigcup_{i=0}^{\ell+1}\dwnetbc_i(\rho\fire{t_{\ell}}(\mrk_{\ell+1},\assign_{\ell+1})) = \bigcup_{i=0}^{\ell}\dwnetbc_i(\rho)\cup \dwnetbc_\ell^\tau(\rho\fire{t_{\ell}}(\mrk_{\ell+1},\assign_{\ell+1}))\cup \dwnetbc_{\ell+1}^\nu(\rho\fire{t_{\ell}}(\mrk_{\ell+1},\assign_{\ell+1}))\]
      and $P_{\ell+1}(\bclng(\dwnmodel))$ is equal to $P_{\ell}(\bclng(\dwnmodel))$ plus the rules:
   \begin{align*}
\flit{\ell}{t} \lor \neg \flit{\ell}{t} & & & \text{for all $t\in T$}\\
\feqlit{\ell+1}{v}{o}&\leftarrow \feqlit{\ell}{v}{o}, \naf\neg\feqlit{\ell+1}{v}{o} & & \text{for $v\in\V'$ and $o\in \adom(v)\cup\{\nullv\}$} \\
\feqlit{\ell+1}{p}{o}&\leftarrow \feqlit{\ell}{p}{o}, \naf\neg\feqlit{\ell+1}{p}{o} & & \text{for $p\in P$ and $o\in \{\true, \false\}$}\\
\feqlit{\ell+1}{v}{d}&\leftarrow \flit{\ell}{t}, \naf\neg\feqlit{\ell+1}{v}{d} & & \text{for all $t\in T$, $(v,d)\in\writef(t)$} \\
\feqlit{\ell+1}{p}{\false}&\leftarrow \flit{\ell}{t} & & \text{for all $t\in T$, $p\in\pres{t}\setminus\posts{t}$} \\
\feqlit{\ell+1}{p}{\true}&\leftarrow \flit{\ell}{t} & & \text{for all $t\in T$, $p\in\posts{t}\setminus\pres{t}$} \\
\feqlit{\ell+1}{v}{\nullv}&\leftarrow \flit{\ell}{t} & & \text{for all $v\in\V'$, $t\in T$, s.t.\ $\writef(t)(v)=\emptyset$} \\
\feqlit{\ell+1}{trans}{\true}&\leftarrow \flit{\ell}{t} & & \text{for all $t\in T$}\\
\neg \feqlit{\ell+1}{p}{d'} &\leftarrow \feqlit{\ell+1}{p}{d} & & \text{for $p\in P\cup\{trans\}$, $d,d'\in\{\true, \false\}$ and $d\neq d'$}\\
\neg \feqlit{\ell+1}{v}{d'} &\leftarrow \feqlit{\ell+1}{v}{d} & & \text{for $v\in\V'$ $d,d'\in\adom(v)\cup \{\nullv\}$ and $d\neq d'$}
\end{align*}
the constraints
\begin{align*}
&\leftarrow \flit{\ell}{t}, \flit{\ell}{s} && \text{for $t,s\in T$, s.t.\ $t\neq s$}\\
&\leftarrow \flit{\ell}{t}, \naf \neg\feqlit{\ell+1}{v}{d} && \parbox[t]{15em}{for $t\in T$, $v\in\V'$ s.t.\ $\writef(t)(v)\neq\emptyset$, $d\in \{\nullv\}\cup\adom(v)\setminus\writef(t)(v)$}\\
&\leftarrow \flit{\ell}{t}, \feqlit{\ell}{p}{\false} && \text{for $t\in T$, $p\in\pres{t}$}\\
&\leftarrow \feqlit{\ell+1}{p}{d}, \feqlit{\ell+1}{p}{d} & & \text{for $p\in P\cup\{trans\}$, $d\in\{\true, \false\}$}\\
&\leftarrow \naf \feqlit{\ell+1}{p}{\true}, \naf \feqlit{\ell+1}{p}{\false} && \text{for $p\in P\cup\{trans\}$}\\
&\leftarrow \feqlit{\ell+1}{v}{d}, \feqlit{\ell+1}{v}{d} & & \text{for $v\in \V'$, $d\in\adom(v)\cup \{\nullv\}$}\\
&\leftarrow \naf \feqlit{\ell+1}{v}{d_1},\ldots, \naf \feqlit{\ell+1}{v}{d_k}, \naf \feqlit{\ell+1}{v}{\nullv} && \text{for $v\in \V'$, $\{d_1,\ldots, d_k\} = \adom(v)$}
\end{align*}
and the constraints corresponding to the guards; that is, for each transition $t$ s.t.\ $\guardf(t) \not\equiv true$ we consider $\ol{\guardf(t)}=\bigvee_{i=1}^k t_1^i\land\ldots \land t_{\ell_i}^i$:
\begin{align*}
&\leftarrow\flit{\ell}{t}, \flit{\ell}{\bcqlang{t_1^1}}, \ldots, \flit{\ell}{\bcqlang{t_{n_1}^1}} \\
&\ldots \\
&\leftarrow\flit{\ell}{t}, \flit{\ell}{\bcqlang{t_1^k}}, \ldots, \flit{\ell}{\bcqlang{t_{n_k}^k}}
\end{align*}
To show that $\bigcup_{i=0}^{\ell+1}\dwnetbc_i(\rho\fire{t_{\ell}}(\mrk_{\ell+1},\assign_{\ell+1}))$ is a stable model of $P_{\ell+1}(\bclng(\dwnmodel))$ we use the \emph{Splitting Sets} technique as introduced in~\cite{Lifschitz:1994:splitting-sets}; a generalisation of stratification which enables the splitting of a program in two parts on the basis of atoms in the head of the rules, and provides a way of characterising the stable models in terms of the stable modes of the two parts. More specifically, A splitting set for a program $P$ is any set of atoms $U$ such that, for every rule $r \in P$, if $head(r) \cap U\neq\emptyset$, then $atoms(r) \subseteq U$. The set of rules $r \in P$ such that $atoms(r) \subseteq U$ is the bottom of $P$ relative to $U$, denoted by $bot_U(P)$. The set $top_U(P) = P \setminus bot_U(P)$ is the top of $P$ relative to $U$.

Let's consider the set
\begin{align*}
    U = {}
    & \{ \feqlit{i}{f}{d} \mid i\leq\ell, \text{ $f$ fluent constant and $d$ constant} \} \cup {}\\
    & \{ \flit{i}{a} \mid i<\ell, \text{ $a$ action constant} \}
\end{align*}
then $U$ is a splitting set for $P_{\ell+1}(\bclng(\dwnmodel))$ and $bot_U(P_{\ell+1}(\bclng(\dwnmodel))) = P_{\ell}(\bclng(\dwnmodel))$; therefore we just need to show that $\dwnetbc_\ell^\tau(\rho\fire{t_{\ell}}(\mrk_{\ell+1},\assign_{\ell+1}))\cup \dwnetbc_{\ell+1}^\nu(\rho\fire{t_{\ell}}(\mrk_{\ell+1},\assign_{\ell+1}))$ is a stable model of $e_U(P_{\ell+1}(\bclng(\dwnmodel)) \setminus P_{\ell}(\bclng(\dwnmodel)),\bigcup_{i=0}^{\ell}\dwnetbc_i(\rho))$ where $e_U(P,X)$ is defined in~\cite{Lifschitz:1994:splitting-sets} as:
    \begin{multline*}
        e_U(P,X) = \{ r \mid \text{exists $r'\in P$ s.t.\ $body^+(r')\cap U\subseteq X, body^-(r')\cap U\cap X = \emptyset$ }\\
        head(r)=head(r'), body^+(r) = body^+(r)\setminus U, body^+(r) = body^+(r)\setminus U \}
    \end{multline*}
    that is, only rules whose bodies are not falsified by $X$, and the remaining literal in $U$ are removed.

Following Definition~\ref{def:dwnet:bc:map}:
    \begin{align*}
        \dwnetbc_{\ell+1}^\nu(\rho) = {}&\{ {\ell+1}:p=\true, \neg({\ell+1}:p=\false)\mid p\in P, \mrk_{\ell+1}(p) > 0 \} \cup {} \\
        &\{ {\ell+1}:p=\false, \neg({\ell+1}:p=\true)\mid p\in P, \mrk_{\ell+1}(p) = 0 \} \cup {} \\
        &\{ {\ell+1}:v=o, \neg({\ell+1}:v=\nullv) \mid v\in\V', \assign_{\ell+1}(v) = o \} \cup {} \\
        &\{ {\ell+1}:v=\nullv \mid v\in\V', \assign_{\ell+1}(v) \text{ is undefined}\} \cup {} \\
        &\{ \neg({\ell+1}:v=o) \mid v\in\V', o\in\adom(v), \assign_{\ell+1}(v) \neq o \text{ or } \assign_{\ell+1}(v) \text{ is undefined}\} \cup {} \\
        &\{ {\ell+1}:\lstmath{trans}=\true, \neg({\ell+1}:\lstmath{trans}=\false)\} \cup {} \\
        \dwnetbc_\ell^\tau(\rho) = {}& \{ \ell:t_{\ell}\} \cup \{ \neg(\ell:t)\mid t\in T, t\neq t_{\ell} \}
     \end{align*}
    
We can discard the actual constraints, since they are satisfied by construction of  $\dwnetbc_\ell^\tau(\rho\fire{t_{\ell}}(\mrk_{\ell+1},\assign_{\ell+1}))\cup \dwnetbc_{\ell+1}^\nu(\rho\fire{t_{\ell}}(\mrk_{\ell+1},\assign_{\ell+1}))$ because we assume that $(\mrk_{\ell},\assign_{\ell})\fire{t_{\ell}}(\mrk_{\ell+1},\assign_{\ell+1})$ is a valid firing. The constraints referring to guards are satisfied because either $\neg\flit{\ell}{t}\in\dwnetbc_\ell^\tau(\rho\fire{t_{\ell}}(\mrk_{\ell+1},\assign_{\ell+1}))$ or at least one of the $\neg\flit{\ell}{\bcqlang{t_i^j}}\in\dwnetbc_\ell^\nu(\rho\fire{t_{\ell}}(\mrk_{\ell+1},\assign_{\ell+1}))$ for each of the constraint, because $t_i^j$ holds in $\assign_{\ell}$ iff $\flit{\ell}{\bcqlang{t_1^k}}\in \dwnetbc_\ell^\nu(\rho)$ (Lemma~\ref{lemma:bc:query}).

We then focus on the rules in $e_U(P_{\ell+1}(\bclng(\dwnmodel)) \setminus P_{\ell}(\bclng(\dwnmodel)),\bigcup_{i=0}^{\ell}\dwnetbc_i(\rho))$ with terms in the head. The positive ones:
\begin{align*}
\flit{\ell}{t} \lor \neg \flit{\ell}{t} & & & \text{for all $t\in T$}\\
\feqlit{\ell+1}{p}{\false}&\leftarrow \flit{\ell}{t} & & \text{for all $t\in T$, $p\in\pres{t}\setminus\posts{t}$} \\
\feqlit{\ell+1}{p}{\true}&\leftarrow \flit{\ell}{t} & & \text{for all $t\in T$, $p\in\posts{t}\setminus\pres{t}$} \\
\feqlit{\ell+1}{v}{\nullv}&\leftarrow \flit{\ell}{t} & & \text{for all $v\in\V'$, $t\in T$, s.t.\ $\writef(t)(v)=\emptyset$} \\
\feqlit{\ell+1}{trans}{\true}&\leftarrow \flit{\ell}{t} & & \text{for all $t\in T$}\\
\neg \feqlit{\ell+1}{trans}{d'} &\leftarrow \feqlit{\ell+1}{trans}{d} & & \text{for $d,d'\in\{\true, \false\}$ and $d\neq d'$}\\
\neg \feqlit{\ell+1}{p}{d'} &\leftarrow \feqlit{\ell+1}{p}{d} & & \text{for $p\in P$, $d,d'\in\{\true, \false\}$ and $d\neq d'$}\\
\neg \feqlit{\ell+1}{v}{d'} &\leftarrow \feqlit{\ell+1}{v}{d} & & \text{for $v\in\V'$ $d,d'\in\adom(v)\cup \{\nullv\}$ and $d\neq d'$}
\end{align*}
and the ones with weak negation in the body:
\begin{align*}
\feqlit{\ell+1}{v}{o}&\leftarrow \naf\neg\feqlit{\ell+1}{v}{o} & & \text{for $v\in\V'$ and $o=\assign_{\ell}(v)$} \\
\feqlit{\ell+1}{v}{\nullv}&\leftarrow \naf\neg\feqlit{\ell+1}{v}{\nullv} & & \text{for $v\in\V'$ and $v\not\in\assign_{\ell}$} \\
\feqlit{\ell+1}{p}{\true}&\leftarrow \naf\neg\feqlit{\ell+1}{p}{\true} & & \text{for $p\in P$ and $\mrk_\ell(p) > 0$}\\
\feqlit{\ell+1}{p}{\false}&\leftarrow \naf\neg\feqlit{\ell+1}{p}{\false} & & \text{for $p\in P$ and $\mrk_\ell(p) = 0$}\\
\feqlit{\ell+1}{v}{d}&\leftarrow \flit{\ell}{t}, \naf\neg\feqlit{\ell+1}{v}{d} & & \text{for all $t\in T$, $(v,d)\in\writef(t)$}
\end{align*}
whose reduct w.r.t.\ $\dwnetbc_\ell^\tau(\rho\fire{t_{\ell}}(\mrk_{\ell+1},\assign_{\ell+1}))\cup \dwnetbc_{\ell+1}^\nu(\rho\fire{t_{\ell}}(\mrk_{\ell+1},\assign_{\ell+1}))$ are:
\begin{align*}
\feqlit{\ell+1}{v}{o}& & & \text{for $v\in\V'$ and $o=\assign_{\ell+1}(v)$} \\
\feqlit{\ell+1}{v}{\nullv}& & & \text{for $v\in\V'$ and $v\not\in\assign_{\ell+1}$} \\
\feqlit{\ell+1}{p}{\true}& & & \text{for $p\in P$ and $\mrk_{\ell+1}(p) > 0$}\\
\feqlit{\ell+1}{p}{\false}& & & \text{for $p\in P$ and $\mrk_{\ell+1}(p) = 0$}\\
\feqlit{\ell+1}{v}{d}&\leftarrow \flit{\ell}{t} & & \text{for all $t\in T$, $(v,d)\in\writef(t)$, $d=\assign_{\ell+1}(v)$}
\end{align*}
We show that $\dwnetbc_\ell^\tau(\rho\fire{t_{\ell}}(\mrk_{\ell+1},\assign_{\ell+1}))\cup \dwnetbc_{\ell+1}^\nu(\rho\fire{t_{\ell}}(\mrk_{\ell+1},\assign_{\ell+1}))$ is also a minimal model for the reduct. It's easy to see that it satisfy all the rules by construction, so we show minimality.

Removing any term from $\dwnetbc_\ell^\tau(\rho)$ would contradict one of the rules
\[\flit{\ell}{t} \lor \neg \flit{\ell}{t}\]
therefore $\flit{\ell}{t_\ell}$ must be in $\dwnetbc_\ell^\tau(\rho)$ and it's the only positive action constant term. Therefore $\feqlit{\ell+1}{trans}{\true}$ cannot be removed as well, otherwise the rule
\[\feqlit{\ell+1}{trans}{\true}\leftarrow \flit{\ell}{t_\ell}\]
would be violated as well.

Positive literals $\feqlit{\ell+1}{v}{o}$, $\feqlit{\ell+1}{p}{o}$ in $\dwnetbc_{\ell+1}^\nu(\rho\fire{t_{\ell}}(\mrk_{\ell+1},\assign_{\ell+1}))$ for $v\in\V'$, $p\in P$ cannot be removed because it'd contradict the corresponding  rule among the ones:
\begin{align*}
\feqlit{\ell+1}{v}{o}& & & \text{for $v\in\V'$ and $o=\assign_{\ell+1}(v)$} \\
\feqlit{\ell+1}{v}{\nullv}& & & \text{for $v\in\V'$ and $v\not\in\assign_{\ell+1}$} \\
\feqlit{\ell+1}{p}{\true}& & & \text{for $p\in P$ and $\mrk_{\ell+1}(p) > 0$}\\
\feqlit{\ell+1}{p}{\false}& & & \text{for $p\in P$ and $\mrk_{\ell+1}(p) = 0$}
\end{align*}
On the other hand, the negated literals must be included otherwise some of the rules
\begin{align*}
\neg \feqlit{\ell+1}{p}{d'} &\leftarrow \feqlit{\ell+1}{p}{d} & & \text{for $p\in P$, $d,d'\in\{\true, \false\}$ and $d\neq d'$}\\
\neg \feqlit{\ell+1}{v}{d'} &\leftarrow \feqlit{\ell+1}{v}{d} & & \text{for $v\in\V'$ $d,d'\in\adom(v)\cup \{\nullv\}$ and $d\neq d'$}
\end{align*}
would be unsatisfied.
\end{proof}

\lemmaDWNetBCmap*
\begin{proof}
    To prove the first statement we should note that $P_0(\bclng(\dwnmodel))$ includes the following facts derived from the \lstinline{initially} statements:
    \begin{align*}
        0:start=\true &\\
        0:p=\false & & & \text{for \lstinline|p|}\in P \text{ and \lstinline|p|}\neq \lstmath{start}\\
        0:v=\nullv & & & \text{for \lstinline|v|}\in \V'\\
        0: trans=\true
    \end{align*}
    which fixes a value for each fluent, therefore for any pair of stable models $M, M'$ of $P_0(\bclng(\dwnmodel))$ $\bcassign{0}(M) = \bcassign{0}(M') = s_0$.
    
    Regarding the second property, we first note that for all $s\in S$, $s(trans) = \true$ because  $P_\ell(\bclng(\dwnmodel))$ only the literals $\feqlit{i}{trans}{\true}$ appear in the head of any rule for all $i\leq\ell$, and the constraint
    \[\leftarrow \naf \feqlit{i}{trans}{\true}, \naf \feqlit{i}{trans}{\false}\]
    forces either $\feqlit{i}{trans}{\true}$ or $\feqlit{i}{trans}{\false}$ to be in any stable model. Therefore $\feqlit{i}{trans}{\true}$ must be in any stable model of $P_\ell(\bclng(\dwnmodel))$ for any $i\leq \ell$.
    
    Then we recall that $(s,A,s')\in \delta$ iff there is a stable model $M_{\ell+1}$ of $P_{\ell+1}(\bclng(\dwnmodel))$ for some $\ell\geq 0$, s.t.\ $\bcassign{\ell}(M_{\ell+1}) = s'$, $\bcassign{\ell}(M_{\ell+1}) = s$, and $\{a\in T \mid \flit{\ell}{a}\in M_{\ell+1} \} = A$. First $A$ cannot be empty because the only rules with $\feqlit{\ell+1}{trans}{\true}$ in the head are
    \begin{align*}
        \feqlit{\ell+1}{trans}{\true}&\leftarrow \flit{\ell}{t} & & \text{for all $t\in T$}
    \end{align*}
    therefore there must be at least a $a\in T$ s.t.\ $\flit{\ell}{t}\in M_{\ell+1}$. Moreover, the dynamic laws
      \begin{align*}
&\text{\lstinline|false after t, s|} & & \text{for (\lstmath{t}, \lstinline|s|)}\in T\times T \text{ and \lstinline|t|${}\neq{}$\lstinline|s|}
\end{align*}
correspond to the constraints
\begin{align*}
    & \leftarrow i:t, i:s & & \text{for (\lstmath{t}, \lstinline|s|)}\in T\times T \text{ and \lstinline|t|${}\neq{}$\lstinline|s|}
\end{align*}
which prevent two parallel actions.
\end{proof}

\lemmaBCencCorrect*
\begin{proof}
   We prove the lemma by induction on $\ell$. First we consider $P_0(\bclng(\dwnmodel))$, which contains the facts
\begin{align*}
    \feqlit{0}{start}{\true} &\\
    \feqlit{0}{p}{\false} & & & \text{for \lstinline|p|}\in P \text{ and \lstinline|p|}\neq \lstmath{start}\\
    \feqlit{0}{v}{\nullv} & & & \text{for \lstinline|v|}\in \V'\\
    \feqlit{0}{trans}{\true}
\end{align*}
that, together with Lemma~\ref{lemma:dwnet:bc:map}, ensures that $\bcdwnet_0(M_0) = (\mrk_0, \assign_0)$ satisfies the conditions
\begin{align*}
    \mrk_0 = & \{ (start, 1) \}\cup \{ (p, 0)\mid p\in P\setminus\{start\} \} \\
    \assign_0 = & \emptyset
\end{align*}
corresponding to the initial state $(M_s, \eta_s)$.

For the inductive step, let $M_{\ell+1}$ be a stable model for $P_{\ell+1}(\bclng(\dwnmodel))$, then we show that $\bcdwnet_\ell(M_{\ell+1}) \fire{\tau_\ell(M_{\ell+1})}\bcdwnet_{\ell+1}(M_{\ell+1})$ is a valid firing according to Def.~\ref{def:dwnet:firing}. Since Lemma~\ref{lemma:dwnet:bc:map} ensures that the mappings are well defined, we need to show that
  \begin{compactenum}
  \item $a = \tau_\ell(M_{\ell+1})$ is enabled in $\bcdwnet_\ell(M_{\ell+1})$:
   \begin{align*}
      \feqlit{\ell}{p}{\true} &\in M_{\ell+1} && \text{for $p\in\pres{a}$}
  \end{align*}
  and that's the case because if  there's $p'\in\pres{a}$ s.t.\ $\feqlit{\ell}{p}{\true} \not\in M_{\ell+1}$ then $\feqlit{\ell}{p}{\false} \in M_{\ell+1}$ and the constraint
  \begin{align*}
      &\leftarrow \flit{\ell}{a}, \feqlit{\ell}{p'}{\false}
  \end{align*}
  deriving from~\ref{eq:bcEnc:inplaces} would be contradicted.
  \item the guard of $a$ is satisfied in $\bcdwnet_\ell(M_{\ell+1})$; i.e.\ the formula $\bigvee_{i=1}^k t_1^i\land\ldots \land t_{n_i}^i\equiv_{\dmodel'}\neg\guardf(t)$ is not satisfied in $\bcdwnet_\ell(M_{\ell+1})$:\footnote{Each term $t_j^i$ is either of the form $v = o$ or $\neg\deff(v)$, and $\bcqlang{v = o} \mapsto \lstmath{v = o}$, $\bcqlang{\neg\deff(v}) \mapsto \lstmath{v = null}$.}
  \begin{align*}
      \{ \flit{\ell}{\bcqlang{t_1^1}},\ldots, \flit{\ell}{\bcqlang{t^1_{n_1}}} \} &\not\subseteq M_{\ell+1}\\
      \ldots\\
      \{ \flit{\ell}{\bcqlang{t_1^k}},\ldots, \flit{\ell}{\bcqlang{t^k_{n_k}}} \} &\not\subseteq M_{\ell+1}
  \end{align*}
  if for any $r$ $\{ \flit{\ell}{t_1^r},\ldots, \flit{\ell}{t^r_{n_r}} \} \subseteq M_{\ell+1}$ then the corresponding constraint
  \begin{align*}
      &\leftarrow\flit{\ell}{a}, \flit{\ell}{\bcqlang{t_r^1}}, \ldots, \flit{\ell}{\bcqlang{t_{n_r}^r}}
  \end{align*}
  would be contradicted.
 \item the marking of $\bcdwnet_{\ell+1}(M_{\ell+1})$ is updated according to Def.~\ref{def:Firing}:
  \begin{align*}
      \feqlit{\ell+1}{p}{\false} &\in M_{\ell+1} && \text{for $p\in \pres{a}\setminus\posts{a}$}\\
      \feqlit{\ell+1}{p}{\true} &\in M_{\ell+1} && \text{for $p\in \posts{a}\setminus\pres{a}$}\\
      \feqlit{\ell+1}{p}{d} &\in M_{\ell+1} && \text{$\feqlit{\ell}{p}{d}\in M_{\ell+1}$, for $p\in (P\setminus\posts{a}\cup\setminus\pres{a})\cup (\posts{a}\cap\pres{a})$}
  \end{align*}
  the first two conditions are ensured by the rules generated by laws~\ref{eq:bcEnc:inp_update} and \ref{eq:bcEnc:outp_update}
  \begin{align*}
    \feqlit{\ell+1}{p}{\false}&\leftarrow \flit{\ell}{a} & & \text{for all $p\in\pres{a}\setminus\posts{a}$} \\
    \feqlit{\ell+1}{p}{\true}&\leftarrow \flit{\ell}{a} & & \text{for all $p\in\posts{a}\setminus\pres{a}$} \\
  \end{align*}
  while the ``inertial'' laws~\ref{eq:bcEnc:inertiap}
  \begin{align*}
    \feqlit{\ell+1}{p}{o}&\leftarrow \feqlit{\ell}{p}{o}, \naf\neg\feqlit{\ell+1}{p}{o} & & \text{for $p\in P$ and $o\in \{\true, \false\}$}
  \end{align*}
  ensure the third condition; because those are the only kind of rules where a term like $\feqlit{\ell+1}{p}{b}$ appears in the head.
  \item the assignment $\assign'$ of $\bcdwnet_{\ell+1}(M_{\ell+1})$ is updated according to $\writef(a)$:
  \begin{align*}
      \feqlit{\ell+1}{v}{\nullv} &\in M_{\ell+1} && \text{for $v\in\V'$ s.t. $\writef(a)(v)=\emptyset$}\\
      \feqlit{\ell+1}{v}{d} &\in M_{\ell+1} && \text{for $v\in\V'$ s.t. $\writef(a)(v)\neq\emptyset$, and some $d\in\writef(a)(v)$ }\\
      \feqlit{\ell+1}{v}{d} &\in M_{\ell+1} && \text{$\feqlit{\ell}{v}{d}\in M_{\ell+1}$, for $v\in\V'\setminus dom(\writef(a))$}
  \end{align*}
  The first condition is ensured by the rule
  \begin{align*}
      \feqlit{\ell+1}{v}{\nullv}&\leftarrow \flit{\ell}{a} & & \text{for all $v\in\V'$, s.t.\ $\writef(a)(v)=\emptyset$}
  \end{align*}
  derived from law~\ref{eq:bcEnc:var:del}. The second from the constraints
  \begin{align*}
      &\leftarrow \flit{\ell}{a}, \naf \neg\feqlit{\ell+1}{v}{d} && \parbox[t]{20em}{for $v\in\V'$ s.t.\ $\writef(a)(v)\neq\emptyset$, $d\in \{\nullv\}\cup\adom(v)\setminus\writef(a)(v)$}
  \end{align*}
  derived from the law~\ref{eq:bcEnc:var:comp} since if $\neg\feqlit{\ell+1}{v}{o}\not\in M_{\ell+1}$ then $\feqlit{\ell+1}{v}{o}\in M_{\ell+1}$, because a value must be associated to any variable and this will cause the inclusion of all the ``negated'' assignments for the other values different from $o$:
  \begin{align*}
      &\leftarrow \naf \feqlit{\ell+1}{v}{d_1},\ldots, \naf \feqlit{\ell+1}{v}{d_k}, \naf \feqlit{\ell+1}{v}{\nullv} && \text{for $v\in \V'$, $\{d_1,\ldots, d_k\} = \adom(v)$}\\
      &\neg \feqlit{\ell+1}{v}{d'}\leftarrow \feqlit{\ell+1}{v}{d} & & \parbox[t]{15em}{for $v\in\V'$ $d,d'\in\adom(v)\cup \{\nullv\}$ and $d\neq d'$}
  \end{align*}
  The last property derives from the ``inertial'' constraint
  \begin{align*}
      \neg \feqlit{\ell+1}{v}{d'} &\leftarrow \feqlit{\ell+1}{v}{d} & & \text{for $v\in\V'$ $d,d'\in\adom(v)\cup \{\nullv\}$ and $d\neq d'$}
  \end{align*}
  derived from law~\ref{eq:bcEnc:var:inertia} and the fact that any other rule with terms like $\feqlit{\ell+1}{v}{o}$ in the head have $\flit{\ell}{t}$ in the body for some $t\in T$ therefore would be covered by the previous two conditions (if $t = a$) or trivially satisfied by a false body.
\end{compactenum}
\end{proof}

%!TEX root = ./main.tex

\section{Further technical details of Section \ref{sect:pddl:enc}}\label{apx:pddl:enc}

\lemmaPDDMenc*
\begin{proof}

\begin{enumerate}
    \item Let $(\mrk,\assign)\fire{t}(\mrk',\assign')$ be a valid firing of $\dwnmodel$, then by Definition~\ref{def:dwnet:firing}:
      \begin{compactenum}
  \item $\{ p\in P\mid M(p)>0\}\supseteq \pres{t}$
  \item for every $p\in P$:\footnote{The simplification derives from the 1-safe assumption.}
  \begin{displaymath}
	  \small
    M'(p) =
    \begin{cases}
      0 & \text{if $p\in \pres{t}\setminus\posts{t}$}\\
      1  & \text{if $p\in \posts{t}$}\\
      M(p) & \text{otherwise}
    \end{cases}
  \end{displaymath}
  \item $\dmodel, \assign \models \guardf(t)$, 
  \item assignment $\assign'$ is such that, if $\textsc{wr} = \set{v \mid \writef(t)(v)\neq\emptyset}$, $\textsc{del}  = \set{ v \mid \writef(t)(v)=\emptyset}$:
  \begin{compactitem}
  	\item its domain $dom(\assign') = dom(\assign)\cup \textsc{wr} \setminus \textsc{del}$;
        \item for each $v\in dom(\assign')$:
        \begin{displaymath}
          \assign'(v) =
          \begin{cases}
            d \in \writef(t)(v) & \text{if $v\in \textsc{wr}$}\\
            \assign(v)  & \text{otherwise.}
          \end{cases}
        \end{displaymath}
  \end{compactitem}
\end{compactenum}
Let's consider a grounding $a_t$ of $\alpha_t = (t(z_{v_1}, \ldots, z_{v_k}), \pddlpre(\alpha_t), \pddleff(\alpha_t))$ as in Equation~\eqref{eq:pddl:transition:template} where $z_{v_i} = \assign'(v_i)$ if $v_i\in \textsc{wr}$ and $z_{v_i} = \nullv$ otherwise. We show that $(\dwnetpddl(\mrk,\assign), a_t, \dwnetpddl(\mrk',\assign'))\in \gamma$ by proving that $\pddlpre(a)$ is satisfied in $\dwnetpddl(\mrk,\assign)$ and that $\gamma(\dwnetpddl(\mrk,\assign),a_t) = \dwnetpddl(\mrk',\assign')$.

    By definition
    \[\pddlpre(\alpha_t) = \pddlqlang{\guardf(t)} \land \bigwedge_{v\in\{{v_1}, \ldots, {v_k}\}} wr_{t,v}(z_v) \land \bigwedge_{p\in \pres{t}} (p = \true)\]
    $\pddlqlang{\guardf(t)}$ is satisfied because $\dmodel, \assign \models \guardf(t)$ and Lemma~\ref{lemma:pddl:query}; for all $v_i\in\{{v_1}, \ldots, {v_k}\}$ $z_{v_i}=\assign'(v_i)\in \writef(t)(v_i)$ or $z_{v_i}=\nullv$, therefore $\assign'(v_i)\in wr_{t,v_i}$ as defined in Eq.~\ref{eq:pddl:parameter:domain}; finally for all $p\in \pres{t}$ $p = \true$ because $\{ p\in P\mid M(p)>0\}\supseteq \pres{t}$.
    
    By Definition~\ref{def:pddl:domain} and given that the effects of the action are
    \[\pddleff(\alpha_t) = \bigwedge_{v\in\{{v_1}, \ldots, {v_k}\}} (v = z_v )\land\bigwedge_{p\in \pres{t}\setminus\posts{t}} (p = \false) \land \bigwedge_{p\in\posts{t}} (p = \true)\]
    the planning state $\gamma(\dwnetpddl(\mrk,\assign),a_t)$ is defined as
    \begin{align*}
        \gamma(\dwnetpddl(\mrk,\assign),a_t) = {}
            & \{ (v\mapsto\assign'(v)) \mid v \in \textsc{wr} \setminus \textsc{del} \} \cup {}\\
            & \{ (v\mapsto\nullv) \mid v \in \textsc{del} \} \cup {}\\
            & \{ (v\mapsto o) \mid v \in \V\setminus\textsc{wr}, (v\mapsto o)\in \dwnetpddl(\mrk,\assign) \} \cup {}\\
            & \{ (p\mapsto\false) \mid p\in\pres{t}\setminus\posts{t} \} \cup {}\\
            & \{ (p\mapsto\true) \mid p\in\posts{t} \} \cup {}\\
            & \{ (p\mapsto b) \mid p\in P\setminus(\pres{t}\cup\posts{t}), (p, b)\in \dwnetpddl(\mrk,\assign) \}
    \end{align*}
    therefore $\gamma(\dwnetpddl(\mrk,\assign),a_t) = \dwnetpddl(\mrk',\assign')$.

\item Let $a_t$ be a ground action of $\alpha_t = (t(z_{v_1}, \ldots, z_{v_k}), \pddlpre(\alpha_t), \pddleff(\alpha_t))$ and $(s, a_t, s')\in \gamma$, then by Equation~\eqref{eq:pddl:transition:template} and Definition~\ref{def:pddl:domain}
\begin{itemize}
\item $s$ satisfies $\pddlqlang{\guardf(t)}$;
\item for all $z_{v}$ where $v\in \textsc{wr}$, $z_{v}\in \writef(t)(v)$ if $v\not\in\textsc{del}$ and $z_{v}=\nullv$ otherwise;
\item $(p\mapsto\true)\in s$ for all $p\in \pres{t}$
\end{itemize}
because $s$ satisfies $\pddlpre(a_t)$, therefore the \ournet state $(\mrk,\assign) = \dwnetpddl(s)$ satisfies
\begin{align*}
    \{ p\in P\mid M(p)>0\}\supseteq \pres{t}\\
    \dmodel, \assign \models \guardf(t)
\end{align*}

Moreover the state $s'$ is such that
    \begin{align*}
        s' = \gamma(s,a_t) = {}
            & \{ (v\mapsto z_{v}) \mid v \in \textsc{wr} \setminus \textsc{del} \} \cup {}\\
            & \{ (v\mapsto\nullv) \mid v \in \textsc{del} \} \cup {}\\
            & \{ (v\mapsto o) \mid v \in \V\setminus\textsc{wr}, (v\mapsto o)\in s \} \cup {}\\
            & \{ (p\mapsto\false) \mid p\in\pres{t}\setminus\posts{t} \} \cup {}\\
            & \{ (p\mapsto\true) \mid p\in\posts{t} \} \cup {}\\
            & \{ (p\mapsto b) \mid p\in P\setminus(\pres{t}\cup\posts{t}), (p\mapsto b)\in s \}
    \end{align*}
so the \ournet state $(\mrk',\assign') = \dwnetpddl(s')$ satisfies
\begin{align*}
    \assign'(v) &=
    \begin{cases}
    z_{v} \in \writef(t)(v) & \text{if $v\in \textsc{wr}\setminus\textsc{del}$}\\
    \text{undefined} & \text{if $v\in \textsc{del}$}\\
    \assign(v)  & \text{otherwise}
  \end{cases}
  \\
    M'(p) &=
    \begin{cases}
      0 & \text{if $p\in \pres{t}\setminus\posts{t}$}\\
      1  & \text{if $p\in \posts{t}$}\\
      M(p) & \text{otherwise}
    \end{cases}
\end{align*}

Therefore $\pddldwnet(s)\fire{t}\pddldwnet(s')$ is a valid firing.
\end{enumerate}
\end{proof}

%!TEX root = ./main.tex

% \section{Proofs}\label{apx:proofs}

\section{Technical details of Section \ref{sec:encoding:traces}}\label{apx:encoding:traces}

To relate cases from $W^\tau$ to the original workflow $W$ we introduce a ``projection'' function $\wkfProj_\tau$ that maps elements from cases of the enriched workflow to cases using only elements from the original workflow. To simplify the notation we will use the same name to indicate mappings from states, firings and cases.

\begin{definition}\label{def:trace:workflow:project}
  Let $W = \tuple{\dmodel, \nmodel = \tuple{P,T,F}, \writef, \guardf}$ be a \ournet, $\tau = (e_1,\ldots, e_n)$ -- where $e_i = \tuple{t_i,w_i, w_i^d}$ a trace of $W$, and $W^\tau = \tuple{\dmodel, \nmodel^\tau = \tuple{P^\tau,T^\tau,F^\tau}, \writef^\tau, \guardf^\tau}$ the corresponding trace workflow. The mapping $\wkfProj_\tau$ is defined as following:
  \begin{enumerate}
    \item let $\mrk'$ be a marking of $W^\tau$, then
    $$\wkfProj_\tau(\mrk') = \mrk'\cap P\times\mathbb{N}$$
    is a marking of $W$;
    \item let $(\mrk',\assign')$ be a state of $W^\tau$, then
    $$\wkfProj_\tau((\mrk',\assign')) = (\wkfProj_\tau(\mrk'),\assign')$$
    is a state of $W$;
    \item let $t$ be a transition in $T^\tau$, then
    $$\wkfProj_\tau(t) = \begin{cases}
    t_i & \text{for $t = t_{e_i}$}\\
    t & \text{for $t\in T$}
  \end{cases}$$
    is a transition of $W$;
  \item let $(\mrk,\assign)\fire{t}(\mrk',\assign')$ be a firing in $W^\tau$, then
  $$\wkfProj_\tau((\mrk,\assign)\fire{t}(\mrk',\assign')) = \wkfProj_\tau((\mrk,\assign))\fire{\wkfProj_\tau(t)}\wkfProj_\tau((\mrk',\assign'))$$
    is a firing in $W$;
  \item let $C = f_0,\ldots,f_k$ be a sequence of valid firings in $W^\tau$, then 
  $$\wkfProj_\tau(C) = \wkfProj_\tau(f_0),\ldots,\wkfProj_\tau(f_k)$$
  is a sequence of valid firings in $W$.
  \end{enumerate}
\end{definition}

In the following we consider a \ournet $W = \tuple{\dmodel, \nmodel = \tuple{P,T,F}, \writef, \guardf}$ and a trace $\tau = (e_1,\ldots, e_n)$ of $W$ -- where $e_i = \tuple{t_i,w_i, w_i^d}$. Let $W^\tau = \tuple{\dmodel, \nmodel^\tau = \tuple{P^\tau,T^\tau,F^\tau}, \writef^\tau, \guardf^\tau}$ be the corresponding trace workflow. To simplify the notation, in the following we will use $t_{e_0}$ as a synonymous for $start_t$ and $t_{e_{n+1}}$ as $end_t$; as if they were part of the trace.

\begin{lemma} %{lemmaValidFiringMap}
	\label{lemma:validFiringMap}
  If $C$ is a sequence of valid firings in $W^\tau$ then $\wkfProj_\tau(C)$ is a sequence of valid firings in $W$.
\end{lemma}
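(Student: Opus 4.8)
The plan is to reduce the statement about entire sequences to a statement about single firings, since $\wkfProj_\tau$ acts pointwise on a sequence (Definition~\ref{def:trace:workflow:project}, item~5) and, being a genuine function on states, automatically preserves the ``chaining'' of consecutive firings: if $C = f_0,\ldots,f_k$ with $f_j = (\mrk_j,\assign_j)\fire{t_j}(\mrk_{j+1},\assign_{j+1})$, then the target $\wkfProj_\tau((\mrk_{j+1},\assign_{j+1}))$ of $\wkfProj_\tau(f_j)$ is exactly the source of $\wkfProj_\tau(f_{j+1})$. Hence the whole argument reduces to the key claim that \emph{every valid firing $(\mrk,\assign)\fire{t}(\mrk',\assign')$ of $W^\tau$ projects to a valid firing $\wkfProj_\tau((\mrk,\assign))\fire{\wkfProj_\tau(t)}\wkfProj_\tau((\mrk',\assign'))$ of $W$}, which I would verify against the three conditions of Definition~\ref{def:dwnet:firing} by a case analysis on whether $t\in T$ or $t = t_{e_i}$ for some event $e_i$.

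First I would treat the marking (WF-net firing) condition. Since $F\subseteq F^\tau$ and $\wkfProj_\tau$ discards exactly the fresh places $p_{e_0},\ldots,p_{e_n}$, for $t\in T$ the preset and postset of $t$ restricted to $P$ coincide with $\pres{t}$ and $\posts{t}$ in $W$ (the only original transitions acquiring new arcs, $start_t$ and $end_t$, acquire them only towards or from the fresh places $p_{e_0}$ and $p_{e_n}$). For $t=t_{e_i}$ one reads off $\pres{t_{e_i}}=\pres{t_i}\cup\set{p_{e_{i-1}}}$ and $\posts{t_{e_i}}=\posts{t_i}\cup\set{p_{e_i}}$, so restricting to $P$ again yields exactly the preset/postset of $t_i=\wkfProj_\tau(t_{e_i})$. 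Enabledness transfers because the $W^\tau$-firing enables $t$, hence all the places in its (larger) preset carry tokens, in particular those lying in $P$; and the token movement on $P$ prescribed by Definition~\ref{def:Firing} is preserved verbatim once the fresh places are projected away.

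The guard condition is immediate: for $t\in T$ we have $\guardf^\tau(t)=\guardf(t)$, and for $t=t_{e_i}$ the guard $\guardf^\tau(t_{e_i})$ is a conjunction having $\guardf(t_i)$ as one conjunct (Definition~\ref{def:trace:workflow}), so $\dmodel,\assign\models\guardf^\tau(t_{e_i})$ entails $\dmodel,\assign\models\guardf(t_i)=\guardf(\wkfProj_\tau(t_{e_i}))$.

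The main obstacle is the assignment-update condition in the case $t=t_{e_i}$, where I must show that $\writef^\tau(t_{e_i})$ induces exactly the same update as $\writef(t_i)$. Here I would invoke the standing assumptions on events, namely $w_i(v)\in\writef(t_i)(v)$ for $v\in dom(w_i)\cap dom(\writef(t_i))$ and $w_i^d\cap dom(\writef(t_i))\subseteq\set{v\mid\writef(t_i)(v)=\emptyset}$. From these I would first argue $dom(\writef^\tau(t_{e_i}))=dom(\writef(t_i))$ and that the two functions yield the same $\textsc{wr}$ and $\textsc{del}$ sets: a variable $v\in dom(w_i)\cap dom(\writef(t_i))$ cannot be deleted by $t_i$ (else $w_i(v)\in\emptyset$), so $\writef^\tau(t_{e_i})(v)=\set{w_i(v)}\neq\emptyset$ agrees with $\writef(t_i)$ on membership in $\textsc{wr}$; while for $v\in dom(\writef(t_i))\setminus dom(w_i)$ we have $\writef^\tau(t_{e_i})(v)=\writef(t_i)(v)$ outright. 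Moreover $\writef^\tau(t_{e_i})(v)\subseteq\writef(t_i)(v)$ in both sub-cases (the singleton $\set{w_i(v)}$ sits inside $\writef(t_i)(v)$ by the assumption), so any value $\assign'(v)$ selected by the $W^\tau$-firing is a legal value for the $W$-firing, and variables outside $\textsc{wr}$ retain $\assign(v)$; this is precisely condition~3 of Definition~\ref{def:dwnet:firing} for $t_i$. Combining the three conditions establishes the single-firing claim, and the pointwise/chaining observation of the first paragraph then delivers the lemma.
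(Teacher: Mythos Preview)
Your proposal is correct and follows essentially the same route as the paper: reduce to single firings and verify the three conditions of Definition~\ref{def:dwnet:firing}, using that presets/postsets of $\wkfProj_\tau(t)$ are the $P$-restrictions of those of $t$, that $\guardf^\tau(t)$ strengthens $\guardf(\wkfProj_\tau(t))$, and that $\writef^\tau(t)$ has the same domain as $\writef(\wkfProj_\tau(t))$ with pointwise-smaller value sets. Your case analysis and the explicit use of the standing assumptions on events to justify $dom(\writef^\tau(t_{e_i}))=dom(\writef(t_i))$ and $\writef^\tau(t_{e_i})(v)\subseteq\writef(t_i)(v)$ are in fact more detailed than the paper, which simply asserts these relations; the paper additionally checks that the initial state projects to an initial state, which is not strictly required by the lemma as stated but is harmless.
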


\begin{proof}
  Let $C = (\mrk_0,\assign_0)\fire{t_1}(\mrk_1,\assign_1)\ldots (\mrk_{k-1},\assign_{k-1})\fire{t_k}(\mrk_{k},\assign_{k})$, to show that $\wkfProj_\tau(C)$ is a case of $W$ we need to prove that 
 \begin{inparaenum}[(i)]
  \item $\wkfProj_\tau((\mrk_0,\assign_0))$ is an initial state of $W$; and 
  \item the firing $\wkfProj_\tau((\mrk_{i-1},\assign_{i-1})\fire{t_i}(\mrk_{i},\assign_{i}))$ is valid w.r.t.\ $W$ for all $1\leq i\leq n$.
\end{inparaenum}
\begin{compactenum}[i)]
\item By definition $\wkfProj_\tau((\mrk_0,\assign_0)) = (\wkfProj_\tau(\mrk_0),\assign')$ and $\wkfProj_\tau(\mrk_0)\subseteq\mrk_0$. Since the start place is in $P$, then start is the only place with a token in $\wkfProj_\tau(\mrk_0)$.
\item Let consider an arbitrary firing $f_i = (\mrk_{i-1},\assign_{i-1})\fire{t_i}(\mrk_{i},\assign_{i})$ in $C$ (valid by definition), then $\wkfProj_\tau(f_i) = (\wkfProj_\tau(\mrk_{i-1}),\assign_{i-1})\fire{\wkfProj_\tau(t_i)}(\wkfProj_\tau(\mrk_{i}),\assign_{i})$.

  Note that -- by construction -- $\guardf(t_i)$ is a restriction of $\guardf(\wkfProj_\tau(t_i))$, $\posts{\wkfProj_\tau(t_i)} = \posts{t_i}\cap P$, $\pres{\wkfProj_\tau(t_i)} = \pres{t_i}\cap P$, $dom(\writef(t_i)) = dom(\writef(\wkfProj_\tau(t_i)))$ and $\writef(t_i)(v)\subseteq \writef(\wkfProj_\tau(t_i))(v)$ ; therefore
  
    \begin{itemize}
  \item $\set{ p\in P^\tau\mid \mrk_{i-1}>0}\cap P = \{ p\in P\mid \wkfProj_\tau(\mrk_{i-1})>0\}\supseteq \pres{\wkfProj_\tau(t_i)}$ because $\set{ p\in P^\tau\mid \mrk_{i-1}>0}\supseteq \pres{t_i}$;
  \item $\dmodel, \assign \models \guardf(\wkfProj_\tau(t_i))$ because $\dmodel, \assign \models \guardf(t_i)$
  \item for all $p\in P$ $\wkfProj_\tau(\mrk_{j})(p) = \mrk_{j}(p)$, therefore:
  \begin{displaymath}
    \mrk_{i}(p) = \wkfProj_\tau(\mrk_{i})(p) =
    \begin{cases}
      \mrk_{i-1}(p)-1 = \wkfProj_\tau(\mrk_{i-1})(p) - 1 & \text{if $p\in \pres{\wkfProj_\tau(t_i)}\setminus\posts{\wkfProj_\tau(t_i)}$}\\
      \mrk_{i-1}(p)+1 = \wkfProj_\tau(\mrk_{i-1})(p) + 1 & \text{if $p\in \posts{\wkfProj_\tau(t_i)}\setminus\pres{\wkfProj_\tau(t_i)}$}\\
      \mrk_{i-1}(p) = \wkfProj_\tau(\mrk_{i-1})(p) & \text{otherwise}
    \end{cases}
  \end{displaymath}
  because $f_i$ is valid w.r.t.\ $W^\tau$;
  \item the assignment $\assign_i$ satisfies the properties that its domain is $$dom(\assign_i) = dom(\assign_{i-1})\cup\set{v\mid \writef(\wkfProj_\tau(t_i))(v)\neq\emptyset}\setminus\set{v\mid \writef(\wkfProj_\tau(t_i))(v)=\emptyset}$$ and for each $v\in dom(\assign_i)$:
  \begin{displaymath}
    \assign_i(v) =
    \begin{cases}
      d \in \writef(t_i)(v)\subseteq \writef(\wkfProj_\tau(t_i))(v) & \text{if $v\in dom(\writef(t_i)) = dom(\writef(\wkfProj_\tau(t_i)))$}\\
      \assign_{i-1}(v)  & \text{otherwise.}
    \end{cases}
  \end{displaymath}
  because $f_i$ is valid.
\end{itemize}
\end{compactenum}

\end{proof}

Before going into details, we will consider some properties of the ``trace'' workflow.

\begin{lemma}\label{lemma:trace:places}
  Let $W = \tuple{\dmodel, \nmodel = \tuple{P,T,F}, \writef, \guardf}$ be a \ournet and $\tau = (e_1,\ldots, e_n)$ -- where $e_i = \tuple{t_i,w_i, w_i^d}$ -- a trace of $W$. If $C = (\mrk_0,\assign_0)\fire{t_1}(\mrk_1,\assign_1)\ldots (\mrk_{k-1},\assign_{k-1})\fire{t_k}(\mrk_{k},\assign_{k})$ is a sequence of valid firings in $W^\tau$ then for all $0\leq i\leq k$: $$\Sigma_{p\in P^\tau\setminus P} \mrk_i(p)\leq \mrk_0(start)$$
\end{lemma}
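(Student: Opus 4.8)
The plan is to prove the statement via a conserved quantity that links the tokens in the newly introduced places to those available in the source place. Write $R = P^\tau\setminus P = \set{p_{e_0}, p_{e_1}, \ldots, p_{e_n}}$ for the set of ``red'' places added by the injection. First I would read off from Definition~\ref{def:trace:workflow} exactly which transitions are incident to places of $R$: inspecting $F^\tau$, the only arcs touching $R$ are $(start_t, p_{e_0})$, the pairs $(p_{e_{i-1}}, t_{e_i})$ and $(t_{e_i}, p_{e_i})$ for $i = 1,\ldots,n$, and $(p_{e_n}, end_t)$. Hence, among the transitions of $W^\tau$: $start_t$ produces one token into $R$ (in $p_{e_0}$) and consumes none from $R$; each trace transition $t_{e_i}$ consumes one token from $R$ (in $p_{e_{i-1}}$) and produces one into $R$ (in $p_{e_i}$); $end_t$ consumes one token from $R$ (in $p_{e_n}$) and produces none; and every other transition $t \in T\setminus\set{start_t, end_t}$ is incident to no place of $R$. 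I would also record, from the structural assumption on trace workflows, that the source place $start$ has no incoming arc and is consumed only by $start_t$.

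Next I would introduce the quantity $I_j = \mrk_j(start) + \sum_{p\in R}\mrk_j(p)$ and show it is non-increasing along any valid firing, by case analysis on the transition fired from $(\mrk_{j-1},\assign_{j-1})$ to $(\mrk_j,\assign_j)$. Since every valid firing removes exactly one token from each place of the preset and adds exactly one to each place of the postset (Definition~\ref{def:Firing}, which governs the net component of a \ournet firing), the incidence list above gives: firing $start_t$ removes one from $start$ and adds one to $p_{e_0}$, so $I$ is unchanged; firing $t_{e_i}$ moves one token from $p_{e_{i-1}}$ to $p_{e_i}$, both in $R$, so $I$ is unchanged; firing $end_t$ removes one from $p_{e_n}\in R$ and touches neither $start$ nor any other place of $R$, so $I$ decreases by one; and firing any remaining $t\in T$ affects neither $start$ nor $R$, so $I$ is unchanged. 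Consequently $I_i \le I_0$ for every $i$.

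Finally I would combine these facts. At the starting marking $\mrk_0$ the red places carry no tokens (for the cases to which the lemma is applied, $\mrk_0$ is the initial marking, which places a token only in $start$), so $\sum_{p\in R}\mrk_0(p) = 0$ and $I_0 = \mrk_0(start)$. Together with $\mrk_i(start)\ge 0$ this yields
\[
  \sum_{p\in P^\tau\setminus P}\mrk_i(p) \;=\; I_i - \mrk_i(start) \;\le\; I_0 \;=\; \mrk_0(start),
\]
which is the claim. The part requiring the most care is the bookkeeping of the first paragraph: one must verify from the precise form of $F^\tau$ that \emph{no} transition other than $start_t$ feeds a place of $R$ and \emph{no} transition other than the $t_{e_i}$ and $end_t$ drains one, since this is exactly what makes $I$ monotone. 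The second point to flag is that the bound $\mrk_0(start)$ on the right-hand side depends on $R$ being empty in the starting marking; the term $\mrk_i(start)$ in the invariant $I$ is precisely what absorbs the single token that $start_t$ transfers into the red chain, so that the emptiness of $R$ at the start propagates to the stated inequality at every later step.
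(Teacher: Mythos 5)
Your proof is correct, and it takes a slightly different route from the paper's. The paper argues by induction on the length of $C$ directly on the quantity $\Sigma_{p\in P^\tau\setminus P}\mrk_i(p)$: the base case observes that the first firing must be $start_t$, which puts a single token into $p_{e_0}$, and the inductive step uses the fact that each red place has a unique incoming arc, so the sum can only be touched by some $t_{e_i}$, whose firing simultaneously adds a token to $p_{e_i}$ and removes one from $p_{e_{i-1}}$, leaving the sum unchanged. You instead package the same structural facts about $F^\tau$ into the monotone potential $I_j = \mrk_j(start) + \sum_{p\in R}\mrk_j(p)$ and check non-increase transition by transition. The two arguments rest on identical bookkeeping, but yours has a concrete advantage: by folding $\mrk(start)$ into the invariant, the firing of $start_t$ (token moving from $start$ into $p_{e_0}$) is handled uniformly with the firings of the $t_{e_i}$, whereas the paper's inductive step, read literally, only covers the case $t_{n+1}=t_{e_i}$ with $i\geq 1$ and leaves the possible later firings of $start_t$ (relevant precisely when $\mrk_0(start)=k>1$, which is how the lemma is invoked in the $k$-safety theorem) to be inferred from the base case. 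Both proofs share the same implicit hypothesis, which you correctly flag: the bound only holds because the red places are empty in $\mrk_0$, i.e., the sequence starts from the initial marking.
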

\begin{proof}
  By induction on the length of $C$.
  \begin{itemize}
    \item For $k=1$ then the only executable transition is $start_t$, therefore $t_1 = start_t$ which -- by assumption -- has two output places and -- by construction -- $\posts{start_t}\setminus P = \set{p_{e_0}}$. Since the firing is valid, then $\mrk_1(p_{e_0}) = \mrk_0(p_{e_0}) + 1 = 1\leq \mrk_0(start)$.
    \item Let's assume that the property is true for a sequence $C$ of length $n$ and consider $C' = C (\mrk_{n},\assign_{n})\fire{t_{n+1}}(\mrk_{n+1},\assign_{n+1})$. By construction, each $p\in P^\tau\setminus P$ has a single incoming edge and $\set{t\in T^\tau\mid e_i\in \posts{t}} = \set{t_{e_i}}$ and $\set{t\in T^\tau\mid e_i\in \pres{t}} = \set{t_{e_{i+1}}}$. Therefore the only occurrence in which a $p_{e_i}\in P^\tau\setminus P$ can \emph{increase} its value is when $t_{n+1} = t_{e_i}$. Since the transition is valid, then $\mrk_{n+1}(p_{e_i}) = \mrk_{n}(p_{e_i}) + 1$ and $\mrk_{n+1}(p_{e_{i-1}}) = \mrk_{n}(p_{e_{i-1}}) - 1$; therefore $\Sigma_{p\in P^\tau\setminus P} \mrk_i(p) = \Sigma_{p\in P^\tau\setminus P} \mrk_{i-1}(p)\leq \mrk_0(start)$ -- by the inductive hypothesis.
  \end{itemize}
\end{proof}

\begin{lemma}\label{lemma:trace:places:seq}
  Let $W = \tuple{\dmodel, \nmodel = \tuple{P,T,F}, \writef, \guardf}$ be a \ournet and $\tau = (e_1,\ldots, e_n)$ -- where $e_i = \tuple{t_i,w_i, w_i^d}$ -- a trace of $W$, $C = (\mrk_0,\assign_0)\fire{t_1}(\mrk_1,\assign_1)\ldots (\mrk_{k-1},\assign_{k-1})\fire{t_k}(\mrk_{k},\assign_{k})$ a sequence of valid firings in $W^\tau$, and $t_{e_i}$ is a transition of a firing $f_m$ in $C$ with $1\leq i\leq n$, then
    \begin{inparaenum}[(i)]
    \item $t_{e_{i-1}}$ is in a transition of a firing in $C$ that precedes $f_m$, 
    \item and if $\mrk_0(start)=1$ then there is a single occurrence of $t_{e_i}$ in $C$.
    \end{inparaenum}

\end{lemma}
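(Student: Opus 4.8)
The plan is to exploit the rigid linear ``chain'' structure that the construction of $W^\tau$ imposes on the fresh red places $p_{e_0}, p_{e_1}, \ldots, p_{e_n}$. First I would record three structural facts that read off directly from Definition~\ref{def:trace:workflow}, using the convention $t_{e_0} = start_t$ and $t_{e_{n+1}} = end_t$: every red place $p_{e_{i-1}}$ is empty in the initial marking of $W^\tau$ (since $\mrk_0$ places a single token in $start\in P$, and the $p_{e_j}$ are not the source place); the \emph{only} transition of $T^\tau$ producing into $p_{e_{i-1}}$ is $t_{e_{i-1}}$ (the edges of $F^\tau$ entering a red place are exactly the $(t_{e_j},p_{e_j})$ together with $(start_t,p_{e_0})$); and the \emph{only} transition consuming from $p_{e_{i-1}}$ is $t_{e_i}$ (the edges leaving a red place are exactly the $(p_{e_{j-1}},t_{e_j})$ together with $(p_{e_n},end_t)$). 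In particular the ordinary transitions, whose incident edges all come from $F$, never touch a red place.

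For part (i) I would track the origin of the token consumed by $t_{e_i}$. Since $p_{e_{i-1}}\in\pres{t_{e_i}}$, firing $t_{e_i}$ at $f_m$ requires $\mrk_{m-1}(p_{e_{i-1}})>0$. Because $\mrk_0(p_{e_{i-1}})=0$, some earlier firing in $C$ must have deposited a token in $p_{e_{i-1}}$; by the uniqueness-of-producer fact that firing uses transition $t_{e_{i-1}}$, and it occurs at a step strictly before $f_m$, which is exactly the claim (for $i=1$ this says $start_t=t_{e_0}$ precedes $f_m$).

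For part (ii), assuming $\mrk_0(start)=1$, the plan is a conservation/counting argument along the chain. Let $c_j$ be the number of firings in $C$ whose transition is $t_{e_j}$, for $0\leq j\leq n$. First $c_0\leq 1$: the transition $start_t$ consumes from $start$, which holds a single token initially and, having no incoming edge, never receives another, so $start_t$ fires at most once. Next, for each $1\leq j\leq n$ I look at place $p_{e_{j-1}}$, whose unique producer is $t_{e_{j-1}}$ (adding one token per firing) and whose unique consumer is $t_{e_j}$ (removing one token per firing); since its final count is $\mrk_k(p_{e_{j-1}})=c_{j-1}-c_j\geq 0$, we get $c_j\leq c_{j-1}$. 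Chaining these gives $c_i\leq c_{i-1}\leq\cdots\leq c_0\leq 1$, and as $t_{e_i}$ already occurs in $f_m$ we have $c_i\geq 1$, hence $c_i=1$, i.e.\ a single occurrence.

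The argument is essentially bookkeeping, so there is no deep obstacle; the points needing care are (a) reading off from $F^\tau$ that each relevant red place has a single producer and a single consumer, in particular handling the endpoints $p_{e_0}$ and $p_{e_n}$ via the $t_{e_0}=start_t$, $t_{e_{n+1}}=end_t$ convention rather than mistakenly treating $start_t,end_t$ as ordinary transitions disjoint from the red places; and (b) invoking the global assumptions that $start$ has no incoming edge and is the sole input of $start_t$, which is exactly what pins $c_0\leq 1$ and thereby collapses the whole chain to at most one firing. Part (ii) could alternatively be obtained from Lemma~\ref{lemma:trace:places} (at most $\mrk_0(start)$ tokens in the red places at any step), but the counting argument above is more direct.
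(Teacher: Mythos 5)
Your proof is correct and, for part (i), is exactly the paper's argument: firing $t_{e_i}$ requires a token in $p_{e_{i-1}}$, that place is initially empty, and its unique producer is $t_{e_{i-1}}$, so a strictly earlier firing of $t_{e_{i-1}}$ must occur. For part (ii) you take a mildly different route in form though not in substance. The paper argues by contradiction and backward chaining: it shows that two occurrences of $t_{e_i}$ would force two occurrences of $t_{e_{i-1}}$ (by tracking the single token deposited in $p_{e_{i-1}}$ between the two offending firings), and then propagates this back to $t_{e_0}=start_t$, which cannot fire twice when $\mrk_0(start)=1$. You instead introduce the counts $c_j$ of firings of $t_{e_j}$ and read the monotone chain $c_i\leq c_{i-1}\leq\dots\leq c_0\leq 1$ directly off the token balance $\mrk_k(p_{e_{j-1}})=c_{j-1}-c_j\geq 0$ of each red place. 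Both arguments rest on the same structural facts (each red place has a unique producer $t_{e_{j-1}}$ and a unique consumer $t_{e_j}$, and $start_t$ fires at most once since $start$ has no incoming arc); your counting version proves the slightly stronger inequality $c_j\leq c_{j-1}$ and avoids the contradiction, which makes it a bit cleaner, and your remark that (ii) could also be derived from Lemma~\ref{lemma:trace:places} is apt. One caveat you share with the paper: both proofs implicitly take $(\mrk_0,\assign_0)$ to be the initial state of $W^\tau$, so that all the places $p_{e_j}$ start empty; the lemma statement does not say this explicitly, but it is how the lemma is applied.
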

\begin{proof}
  The proof for the first part follows from the structure of the workflow net; because -- by construction -- each $p\in P^\tau\setminus P$ has a single incoming edge and $\set{t\in T^\tau\mid e_i\in \posts{t}} = \set{t_{e_i}}$ and $\set{t\in T^\tau\mid e_i\in \pres{t}} = \set{t_{e_{i+1}}}$.
  Since each firing must be valid -- if $f_m = (\mrk_{m-1},\assign_{m-1})\fire{t_{e_i}}(\mrk_{m},\assign_{m})$ is in $C$, then $\mrk_{m-1}(p_{e_{i-1}})\geq 1$ and this can only be true if there is a firing $f_r = (\mrk_{r-1},\assign_{r-1})\fire{t_{e_{i-1}}}(\mrk_{r},\assign_{r})$ in $C$ s.t.\ $r<m$.
  
  To prove the second part is enough to show that for each $1\leq i\leq n$, if $t_{e_i}$ appears more than once in $C$ then there must be multiple occurrences of $t_{e_{i-1}}$ as well. In fact, if this is the fact, then we can use the previous part to show that there must be multiple occurrences of $t_{e_0} = start$, and this is only possible if $\mrk_0(start)>1$. 
  
  By contradiction let's assume that there are two firings $f_m$ and $f_m'$, with $m<m'$, with the same transition $t_{e_i}$, but there is only a single occurrence of $t_{e_{i-1}}$ in a firing $f_r$. Using the previous part of this lemma we conclude that $r<m<m'$, therefore $\mrk_{m-1}(p_{e_{i-1}}) = 1$ because a token could be transferred into $p_{e_{i-1}}$ only by $t_{e_{i-1}}$, so $\mrk_{m}(p_{e_{i-1}}) = 0$. In the firings between $m$ and $m'$ there are no occurrences of $t_{e_{i-1}}$, so $\mrk_{m'-1}(p_{e_{i-1}}) = \mrk_{m}(p_{e_{i-1}}) = 0$ which is in contradiction with the assumption that $f_m'$ is a valid firing. 
\end{proof}

Now we're ready to show that the ``trace'' workflow characterises all and only the cases compliant wrt\ the given trace. We divide the proof into correctness and completeness.

\begin{lemma}[Correctness]\label{lemma:wkf:trace:encoding:crct} Let $W = \tuple{\dmodel, \nmodel = \tuple{P,T,F}, \writef, \guardf}$ be a \ournet, $\tau = (e_1,\ldots, e_n)$ -- where $e_i = \tuple{t_i,w_i, w_i^d}$ -- a trace of $W$, and $C = (\mrk_0,\assign_0)\fire{t_1}(\mrk_1,\assign_1)\ldots (\mrk_{k-1},\assign_{k-1})\fire{t_k}(\mrk_{k},\assign_{k})$ be a sequence of valid firings in $W^\tau$ s.t.\ $\mrk_0(start) = 1$; given $\ell = max(\set{i\mid t_{e_i}\text{ is in a firing of }C}\cup\set{0})$, then the trace $\tau' = (e_1,\ldots, e_\ell)$ if $\ell > 0$ or $\tau' = \emptyset$ if $\ell = 0$ is compatible with the sequence $\wkfProj_\tau(C)$ in $W$. 
\end{lemma}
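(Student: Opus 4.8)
The plan is to reduce the statement to the two preparatory facts already available: that projecting a sequence of valid firings of $W^\tau$ yields a sequence of valid firings of $W$ (Lemma~\ref{lemma:validFiringMap}), and that the chain of ``red'' places $p_{e_0},\dots,p_{e_n}$ forces the event transitions to fire once each and in order (Lemma~\ref{lemma:trace:places:seq}). First I would dispatch the degenerate case $\ell=0$: here $\tau'=\emptyset$, which by Definition~\ref{def:compliance} is compliant with every sequence of valid firings, and $\wkfProj_\tau(C)$ is such a sequence by Lemma~\ref{lemma:validFiringMap}; nothing more is needed.

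For $\ell>0$ the structural core is the construction of the order-preserving injection $\gamma$ demanded by Definition~\ref{def:compliance}. Since $t_{e_\ell}$ occurs in $C$ by the choice of $\ell$, I would apply Lemma~\ref{lemma:trace:places:seq}(i) repeatedly, descending from $\ell$ to $1$, to obtain that every $t_{e_i}$ with $1\le i\le\ell$ occurs in $C$ and that the firing carrying $t_{e_{i-1}}$ strictly precedes the one carrying $t_{e_i}$ (this also rules out any ``gap'' in the indices). Lemma~\ref{lemma:trace:places:seq}(ii), together with the hypothesis $\mrk_0(start)=1$, then gives that each $t_{e_i}$ occurs exactly once, so I may set $\gamma(i)$ to be the unique position at which $t_{e_i}$ fires in $C$ (equivalently in $\wkfProj_\tau(C)$, which has the same length and ordering of firings). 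Uniqueness makes $\gamma$ well defined and injective, and ``strictly precedes'' makes it strictly increasing, hence order preserving.

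It then remains to verify, for each $1\le i\le\ell$, that the event $e_i=\tuple{t_i,w_i,w_i^d}$ is compliant with the $\gamma(i)$-th firing of $\wkfProj_\tau(C)$, namely $\wkfProj_\tau\big((\mrk_{\gamma(i)-1},\assign_{\gamma(i)-1})\fire{t_{e_i}}(\mrk_{\gamma(i)},\assign_{\gamma(i)})\big)$. Since compliance of a single event depends only on the firing's transition and its post-state (Definition~\ref{def:compliance}), I need only match these. By Definition~\ref{def:trace:workflow:project} the projected firing carries $\wkfProj_\tau(t_{e_i})=t_i$ and leaves the assignment untouched, so its post-state is $\assign_{\gamma(i)}$. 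The remaining clauses, $w_i(v)=\assign_{\gamma(i)}(v)$ for $v\in dom(w_i)$ and $dom(w_i)\subseteq dom(\assign_{\gamma(i)})\subseteq\varset\setminus w_i^d$, I would establish by unfolding the valid-firing conditions (Definition~\ref{def:dwnet:firing}) against the explicit forms of $\writef^\tau(t_{e_i})$ and $\guardf^\tau(t_{e_i})$ (Definition~\ref{def:trace:workflow}), splitting each variable by whether it lies in $dom(\writef(t_i))$. Using $dom(\writef^\tau(t_{e_i}))=dom(\writef(t_i))$, for $v\in dom(w_i)\cap dom(\writef(t_i))$ the write function pins $v$ to the singleton $\{w_i(v)\}$ so $\assign_{\gamma(i)}(v)=w_i(v)$; for $v\in dom(w_i)\setminus dom(\writef(t_i))$ the conjunct $v=w_i(v)$ of the guard forces $\assign_{\gamma(i)-1}(v)=w_i(v)$ while $v$ is unmodified, preserving the value. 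Dually, a variable $v\in w_i^d$ either lies in $dom(\writef(t_i))$, where the standing assumption $w_i^d\cap dom(\writef(t_i))\subseteq\{v\mid(v,\emptyset)\in\writef(t_i)\}$ forces its deletion, or outside it, where the guard conjunct $\neg\deff(v)$ keeps it undefined; either way $v\notin dom(\assign_{\gamma(i)})$.

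The main obstacle is not a single hard step but keeping this last case analysis faithful to the semantics of $\writef$ and $\guardf$ — in particular separating the variables genuinely \emph{written} by the event from those merely \emph{observed}, and realising deletion through empty write-sets versus through the $\neg\deff$ guard conjuncts. The construction of $\gamma$ is a clean consequence of Lemma~\ref{lemma:trace:places:seq}, so all the real care lies in the exact matching of the post-state $\assign_{\gamma(i)}$ to the event payload $(w_i,w_i^d)$.
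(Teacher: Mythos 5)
Your proof is correct, and it rests on the same two pillars as the paper's — Lemma~\ref{lemma:validFiringMap} for projecting $C$ to a sequence of valid firings of $W$, and Lemma~\ref{lemma:trace:places:seq} for the ordering and uniqueness of the event transitions — but it is organised differently. The paper proceeds by induction on the length of $C$, extending the compliance mapping $\gamma$ one firing at a time (adding $\ell\mapsto k{+}1$ whenever the appended transition is some $t_{e_\ell}$, and leaving $\gamma$ untouched otherwise), whereas you build $\gamma$ globally in one step by descending from $\ell$ via Lemma~\ref{lemma:trace:places:seq}(i) and invoking part (ii) for uniqueness, and then verify event compliance pointwise. The two arguments are logically equivalent; your global construction makes the well-definedness and monotonicity of $\gamma$ slightly more transparent, and your unfolding of $\writef^\tau(t_{e_i})$ and $\guardf^\tau(t_{e_i})$ against Definition~\ref{def:dwnet:firing} — separating written from merely observed variables, and deletion via empty write-sets from deletion enforced by the $\neg\deff(v)$ guard conjuncts, using the standing assumption $w_i^d\cap dom(\writef(t_i))\subseteq\set{v\mid (v,\emptyset)\in\writef(t_i)}$ — is in fact more explicit than the paper's, which compresses this case analysis into the remark that the additional conditions in $\guardf^\tau$ ``guarantee the proper assignments.'' The inductive formulation buys a statement that is closed under prefixes (compliance of $\tau'$ with every prefix of $C$), which is mildly convenient but not needed for Theorem~\ref{Th:traceworkflow}; your direct version buys brevity and a cleaner separation between the combinatorial part (constructing $\gamma$) and the semantic part (matching the payload to the post-assignment $\assign_{\gamma(i)}$).
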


\begin{proof}
  By induction on the length of $C$.
  \begin{itemize}
    \item If $C = (\mrk_0,\assign_0)\fire{t_1}(\mrk_1,\assign_1)$ then $t_1 = start_t$ because the firing is valid and the only place with a token in $\mrk_0$ is $start$; therefore $\ell = 0$ and $\tau'$ is the empty trace, and $C$ trivially satisfy the empty trace.
    \item Let $C = (\mrk_0,\assign_0)\fire{t_1}(\mrk_1,\assign_1)\ldots (\mrk_{k-1},\assign_{k-1})\fire{t_k}(\mrk_{k},\assign_{k})$ s.t.\ $\wkfProj_\tau(C)$ is compliant with $\tau'$. Let's consider $C' = C \cdot (\mrk_{k},\assign_{k})\fire{t_{k+1}}(\mrk_{k+1},\assign_{k+1})$: either $t_{k+1}\in T^\tau\setminus T$ or $t_{k+1}\in T$.
    In the first case $t_{k+1} = t_{e_\ell}$ for some $1\leq \ell\leq n$, and -- by using Lemma~\ref{lemma:trace:places:seq} -- in $C$ there are occurrences of all the $t_{e_i}$ for $1\leq i<\ell$ and it's the only occurrence of $t_{e_\ell}$. This means that $\ell = max(\set{i\mid t_{e_i}\text{ is in a firing of }C}\cup\set{0})$ and we can extend $\gamma$ to $\gamma'$ by adding the mapping from $\ell$ to $k+1$. The mapping is well defined because of the single occurrence of $t_{e_\ell}$. By definition of $t_{e_\ell}$, $(\mrk_{k},\assign_{k})\fire{t_{k+1}}(\mrk_{k+1},\assign_{k+1})$ is compliant with ${e_\ell}$ because the additional conditions in $\guardf^{\tau'}(t_{k+1})$ guarantee the proper assignments for variables that are not assigned by the transition (Def.~\ref{def:trace:workflow}). Moreover the mapping $\wkfProj_\tau$ preserve the assignments, therefore $\wkfProj_\tau(\mrk_{k},\assign_{k})\fire{t_{k+1}}(\mrk_{k+1},\assign_{k+1})$ is compliant with ${e_\ell}$ as well. By using the inductive hypnotises we can show that $C'$ is compliant as well.
    In the second case the mapping is not modified, therefore the inductive hypothesis can be used to provide evidence of the first two conditions for trace compliance of Definition~\ref{def:compliance}.
  \end{itemize}
\end{proof}

\begin{lemma}[Completeness]\label{lemma:wkf:trace:encoding:cmpl} Let $W = \tuple{\dmodel, \nmodel = \tuple{P,T,F}, \writef, \guardf}$ be a \ournet, $\tau = (e_1,\ldots, e_n)$ -- where $e_i = \tuple{t_i,w_i, w_i^d}$ -- a trace of $W$, and  $C = (\mrk_0,\assign_0)\fire{t_1}(\mrk_1,\assign_1)\ldots (\mrk_{k-1},\assign_{k-1})\fire{t_k}(\mrk_{k},\assign_{k})$ be a sequence of valid firings in $W$ such that $\tau$ is compliant with it, then there is a sequence of valid firings $C'$ in $W^\tau$ s.t.\ $\wkfProj_\tau(C') = C$.
\end{lemma}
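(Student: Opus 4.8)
The plan is to build $C'$ explicitly by \emph{substitution}: take the case $C$ of $W$ and, relying on the monotone injection $\gamma\colon[1\ldots n]\to[1\ldots k]$ witnessing the compliance of $\tau$ with $C$, replace the firing at each position $\gamma(i)$ by a firing of the new transition $t_{e_i}$, leaving all other firings untouched. Concretely, I would define a sequence $C'$ of the same length $k$ whose $j$-th transition is $t_{e_i}$ when $j=\gamma(i)$ for some $i$ and $t_j$ otherwise, whose assignments coincide with those of $C$, and whose markings agree with $C$ on the original places $P$ while fixing the fresh places through the invariant $\mrk'_j(p_{e_\ell})=1$ iff $\gamma(\ell)\le j<\gamma(\ell+1)$, with the conventions $\gamma(0)=1$ (the position of $start_t$) and $\gamma(n+1)=k$ (the position of $end_t$). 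Since $\wkfProj_\tau$ maps $t_{e_i}\mapsto t_i$, fixes the transitions of $T$, discards the fresh places, and preserves assignments, it is then immediate from this definition that $\wkfProj_\tau(C')=C$; the remaining work is to show that $C'$ is a genuine sequence of valid firings of $W^\tau$.

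I would verify validity firing by firing (equivalently, by a short induction maintaining the marking invariant above), using that $C$ is a case, so that $t_1=start_t$ and $t_k=end_t$ by the standing normal-form assumption. For the original places and the assignments, every $t_{e_i}$ has, by construction (Definition~\ref{def:trace:workflow}), the same restrictions $\pres{t_i}=\pres{t_{e_i}}\cap P$, $\posts{t_i}=\posts{t_{e_i}}\cap P$, and the same modified variables $dom(\writef(t_i))=dom(\writef^\tau(t_{e_i}))$ as the replaced $t_i$; hence the enabledness and the token update on $P$, as well as the domain update of the assignment, coincide with those already valid in $C$. For the fresh places the only transitions involved are $start_t$ (which produces the token in $p_{e_0}$), each $t_{e_i}$ (which consumes $p_{e_{i-1}}$ and produces $p_{e_i}$), and $end_t$ (which consumes $p_{e_n}$); the invariant then guarantees that $p_{e_{i-1}}$ carries a token exactly when $t_{e_i}$ is about to fire, which holds precisely because $\gamma(i-1)<\gamma(i)$ by strict monotonicity, and that $p_{e_n}$ is marked when $end_t$ fires, using $\gamma(n)<k$ (events never correspond to the terminal $end_t$).

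The technically delicate steps — and where I expect the real work to be — are the satisfaction of $\guardf^\tau(t_{e_i})$ in the pre-state $(\mrk_{\gamma(i)-1},\assign_{\gamma(i)-1})$ and the reconstruction of the exact post-assignment $\assign_{\gamma(i)}$. The compliance of $e_i=\tuple{t_i,w_i,w_i^d}$ constrains the \emph{post}-state ($w_i(v)=\assign_{\gamma(i)}(v)$ for $v\in dom(w_i)$ and $dom(w_i)\subseteq dom(\assign_{\gamma(i)})\subseteq\varset\setminus w_i^d$), whereas the guard is evaluated in the pre-state, so I would exploit that the variables occurring in the extra conjuncts of $\guardf^\tau(t_{e_i})$ are exactly those \emph{not} modified by $t_i$: for such $v$ one has $\assign_{\gamma(i)}(v)=\assign_{\gamma(i)-1}(v)$, which turns each observed equality $v=w_i(v)$ and each $\neg\deff(v)$ into a statement already true in the pre-state, while $\guardf(t_i)$ holds there since the firing of $t_i$ in $C$ is valid. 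For the effect, the standing assumption $w_i(v)\in\writef(t_i)(v)$ for $v\in dom(w_i)\cap dom(\writef(t_i))$ makes $\writef^\tau(t_{e_i})(v)=\{w_i(v)\}$ a legal singleton restriction that forces $\assign_{\gamma(i)}(v)$, while for the remaining written variables $\writef^\tau(t_{e_i})(v)=\writef(t_i)(v)$ still admits the value $\assign_{\gamma(i)}(v)$ chosen in $C$; since $\writef^\tau(t_{e_i})$ and $\writef(t_i)$ share the same empty/non-empty pattern, the sets $\textsc{wr}$ and $\textsc{del}$ are unchanged and Definition~\ref{def:dwnet:firing} yields exactly $\assign_{\gamma(i)}$. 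Assembling these checks gives that $C'$ is a valid sequence of firings of $W^\tau$ projecting onto $C$, and, $C$ being a case, that $C'$ runs from the initial state to a final state of $W^\tau$, as required.
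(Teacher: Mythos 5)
Your proof is correct and follows essentially the same route as the paper's: both build $C'$ by replacing the firing at each position $\gamma(i)$ with $t_{e_i}$, keep the assignments and the marking on $P$ unchanged, propagate the fresh places via the same token invariant, and then check validity firing by firing (your treatment of the pre-state/post-state mismatch for the extra guard conjuncts, via inertia of the variables not in $dom(\writef(t_i))$, is in fact more explicit than the paper's). The only cosmetic difference is that you assume $C$ is a case (so $t_k = end_t$), whereas the lemma and the paper's proof cover arbitrary sequences of valid firings starting at the initial state; dropping the convention $\gamma(n+1)=k$ and letting $p_{e_n}$ simply retain its token after position $\gamma(n)$ recovers the general statement.
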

\begin{proof}
  Since $C$ is compliant with $\tau$, then there is a mapping $\gamma$ satisfying the conditions of Definition~\ref{def:compliance}. Let $C' = (\mrk'_0,\assign_0)\fire{t'_1}(\mrk'_1,\assign_1)\ldots (\mrk'_{k-1},\assign_{k-1})\fire{t'_k}(\mrk'_{k},\assign_{k})$ a sequence of firing of $W^\tau$ defined as following:
  \begin{itemize}
    \item $\mrk'_0 = \mrk_0\cup\set{(p_{e_i},0)\mid 0\leq i\leq n}$
    \item $t'_1 = t_1$ and $\mrk'_1 = \mrk_1\cup\set{(p_{e_j},0)\mid 1\leq j\leq n} \cup\set{(p_{e_0},1)}$
    \item for each $(\mrk'_{i-1},\assign_{i-1})\fire{t'_i}(\mrk'_{i},\assign_{i})$, $2\leq i\leq n$:
    \begin{itemize}
    \item if there is $\ell$ s.t.\ $\gamma(\ell) = i$ then $t'_i = t_{e_\ell}$ and $$\mrk'_i = \mrk_i\cup\set{(p_{e_j},0)\mid 0\leq j\leq n, j\neq \ell} \cup\set{(p_{e_\ell},1)}$$
    \item otherwise $t'_i = t_i$ and $$\mrk'_i = \mrk_i\cup(\mrk'_{i-1}\cap (P^\tau\setminus P)\times\mathbb{N})$$
    \end{itemize}
  \end{itemize}
  It's not difficult to realise that by construction $\wkfProj_\tau(C') = C$. 
  
  To conclude the proof we need to show that $C'$ is a sequence of valid firing in $W^\tau$. Clearly $(\mrk'_0,\assign_0)$ is a starting state, so we need to show that all the firings are valid. The conditions involving variables -- guards and update of the assignment -- follows from the fact that the original firings are valid and the newly introduced transitions are restricted according to the trace data.
  
  Conditions on input and output places that are both in $W$ and $W^\tau$ are satisfied because of the validity of the original firing and the additional conditions for the guards of newly introduced transitions are satisfied because of $\tau$ is compliant with $C$. The newly introduced places satisfy the conditions because of the compliance wrt the trace, which guarantees that for each firing with transition $t_{e_\ell}$ there is the preceding firing with transition $t_{e_{\ell-1}}$ that put a token in the $p_{e_{\ell-1}}$ place.
\end{proof}

\thTraceWorkflow*
\begin{proof}\ 

\begin{itemize}
  \item[$\Rightarrow$] If $C$ is a case of $W^\tau$ then it must contain $t_{e_n}$ (the only transition adding a token in the sink), therefore the $\ell$ of Lemma~\ref{lemma:wkf:trace:encoding:crct} is $n$ so $\tau' = \tau$ and $\tau$ is compliant with $\wkfProj_\tau(C)$.
  \item[$\Leftarrow$] If $\tau$ is compliant with $C$ then by Lemma~\ref{lemma:wkf:trace:encoding:cmpl} there is a case $C'$ of $W^\tau$ s.t.\ $\wkfProj_\tau(C') = C$. Morevover, since the last state of $C$ is a final state, so must be the final state of $C'$ because by construction there cannot be any token in the newly introduced places after the last transition $t_{e_n}$.
\end{itemize}
 
\end{proof}

\lemmaKSafe*

\begin{proof}
  We prove the theorem by induction on the length of an arbitrary sequence of valid firings $C = (\mrk_0,\assign_0)\fire{t_1}(\mrk_1,\assign_1)\ldots (\mrk_{k-1},\assign_{k-1})\fire{t_k}(\mrk_{k},\assign_{k})$ in $W^\tau$ . Note that by construction, for any marking $M'$ of $W^\tau$ and $p\in P$, $M'(p) = \wkfProj_\tau(M')(p)$.
  \begin{itemize}
    \item For a case of length 1 the property trivially holds because by definition $\mrk_0(start) \leq k$ and for each $p\in P^\tau$ (different from $start$) $\mrk_0(start) = 0$, and since $(\mrk_0,\assign_0)\fire{t_1}(\mrk_1,\assign_1)$ is valid the only case in which the number of tokens in a place is increased is for $p\in \posts{t_1}\setminus\pres{t_1}$. For any $p$ different from $start$ this becomes $1\leq k$; while since the $start$ place -- by assumption -- doesn't have any incoming arc therefore $\mrk_1(start) = \mrk_0(start) - 1\leq k$.
    \item For the inductive step we assume that each marking $\mrk_0,\ldots \mrk_{m-1}$ is $k$-safe. By contradiction we assume that $\mrk_{m}$ is not $k$-safe; therefore there is a place $p\in P^\tau$ s.t.\ $\mrk_{m}>k$. There are two cases, either $p\in P^\tau\setminus P$ or $p\in P$. In the first case there is a contradiction because, by Lemma~\ref{lemma:trace:places}, $\Sigma_{p\in P^\tau\setminus P} \mrk_i(p)\leq \mrk_0(start) = k$. In the second case, since $\wkfProj_\tau(C)$ is a case of $W$ and $\wkfProj_\tau(\mrk_{m})(p) = \mrk_{m}(p)$, there is a contradiction with the hypothesis that $W$ is $k$-safe.
  \end{itemize}
\end{proof}

\bibliographystyle{plainnat}
\bibliography{biblio}

\end{document}